\def\MODE{1} 
\DeclarePairedDelimiterX{\inp}[2]{\langle}{\rangle}{#1, #2}
\newtheorem{theorem}{Theorem}
\newtheorem{proposition}{Proposition}[section]
\newtheorem{lemma}{Lemma}
\newtheorem{definition}{Definition}
\newcommand{\eat}[1]{}
\newcommand{\gaussian}[2]{\mathcal{N}(#1,#2)}
\newcommand{\ie}{{\emph i.e.,} }
\newcommand{\eg}{{\emph e.g.,} }
\newcommand{\vecx}{\mathbf{x}}
\newcommand{\vecw}{\mathbf{w}}
\newcommand{\gradf}{\nabla f}
\newcommand{\setS}{\mathcal{S}}
\newcommand{\twonms}[1]{\|#1\|_2}
\newcommand{\EXPS}[1]{\mathbb{E}[#1]}
\numberwithin{equation}{section}
\newcommand{\WH}[1]{}
\newcommand{\ST}[1]{}
\newcommand{\XP}[1]{}
\title{The Effect of Network Width on the Performance of  Large-batch Training}
\author{
Lingjiao Chen, Hongyi Wang, Jinman Zhao, Dimitris Papailiopoulos, Paraschos Koutris\\
University of Wisconsin-Madison
} 
\date{}
\begin{document}

\maketitle

\begin{abstract}
	Distributed implementations of mini-batch stochastic gradient descent (SGD)  suffer from communication overheads, attributed to the high frequency of gradient updates inherent in small-batch training. Training with large batches can reduce these overheads; however, large batches can affect the convergence properties and generalization performance of SGD.	
	 In this work, we take a first step towards analyzing how the structure (width and depth) of a neural network affects the performance of large-batch training.
	 We present new theoretical results which suggest that--for a fixed number of parameters--wider networks are more amenable to fast large-batch training compared to deeper ones. 
We provide extensive experiments on residual and fully-connected neural networks which suggest that wider networks can be trained using larger batches without incurring a convergence slow-down, unlike their deeper variants.
\end{abstract}

\section{Introduction}
\label{sec:intro}
Distributed implementations of stochastic optimization algorithms have become the standard in large-scale model training \cite{Tensorflow2016,ScaleParameterServer2014,DistributedDual2010,MXNet15,RevisitDML2016,DistBelief12,caffe2,Draco2018,DracoSysML2018}.
Most machine learning frameworks, including Tensorflow \cite{Tensorflow2016}, MxNet \cite{MXNet15}, and Caffe2 \cite{caffe2}, implement variants of mini-batch SGD as their default distributed training algorithm.
During a distributed iteration of mini-batch SGD a {\em parameter server} (PS) stores the global model, and $P$
{\em compute nodes} evaluate a total of $B$ gradients; $B$ is commonly referred to as the {\em batch size}. 
Once the PS receives the sum of these $B$ gradients from every compute node, it applies them to the global model and sends the model back to the compute nodes, where a new distributed iteration begins.

The main premise of a distributed implementation is speedup gains, \ie how much faster training takes on $P$ vs $1$ compute node.
In practice, these gains usually saturate beyond a few tens of compute nodes \cite{DistBelief12,paleo17,CeZhang_DecentralizedTraining18}. 
This is because communication becomes the bottleneck, \ie for a fixed batch of $B$ examples, as the number of compute nodes increases, these nodes will eventually spend more time communicating gradients to the PS rather than computing them. 
To mitigate this bottleneck, a plethora of recent work has studied low-precision training and gradient sparsification, \eg  \cite{qsgd17, terngrad17, deep_gradient_compression17}.

An alternative approach to alleviate these overheads is to increase the batch size $B$, since $B$ directly controls the communication-computation ratio. 
Recent work develops sophisticated methods that enable large-batch training on state-of-the-art models and data sets \cite{Goyal_LargeBatchFacebook17,Hoffer_LargeBatchBetter_17,You_LargeBatch32KImageNet_2017}. At the same time, several studies suggest that large-batch training can affect the generalizability of the models~\cite{Keskar_LargeBatchSharpMinima_16}, can slow down convergence \cite{Kakade2017, Misha2017, GradientDiversity2018AISTAT}, and is more sensitive to hyperparameter mis-tuning~\cite{Masters_LargeBatchSensitive_2017}.

Several theoretical results \cite{MiniBatch_OptimalSize2012,Misha2017,MiniBatch_LS2016,GradientDiversity2018AISTAT,Kakade2017,Misha2017} suggest that, when 
the batch size $B$ becomes larger than a problem-dependent threshold $B^*$, the total number of iterations to converge significantly increases, rendering the use of larger $B$ a less viable option. Some of these studies, implicitly or explicitly, indicate that the threshold $B^*$ is controlled by the {similarity} of the gradients in the batch. 

In particular, \cite{GradientDiversity2018AISTAT} shows that the  measure of {\it gradient diversity} directly controls the relationship of $B$ and the convergence speed of mini-batch SGD. Gradient diversity measures the similarity of concurrently processed gradients, and \cite{GradientDiversity2018AISTAT} shows theoretically and experimentally that the higher the diversity, the more amenable a problem is to fast large-batch training, and by extent to speedup gains in a distributed setting. 

A large volume of work has focused on how the structure of neural networks can affect the complexity or capacity \cite{DeepComplexity14,Barron_ANNExpressivePower_91,Lu_ExpressivePower_Wide17} of the model, its representation efficiency \cite{DeepwideSumproduct11}, and its prediction accuracy \cite{DenseNet17,ResNetCVPR16}. 
However, there is little work towards understanding how the structure of a neural network affects its amenability to distributed speedup gains.

\begin{wrapfigure}{r}{0.41\columnwidth}
\centering
\includegraphics[width=0.41\textwidth]{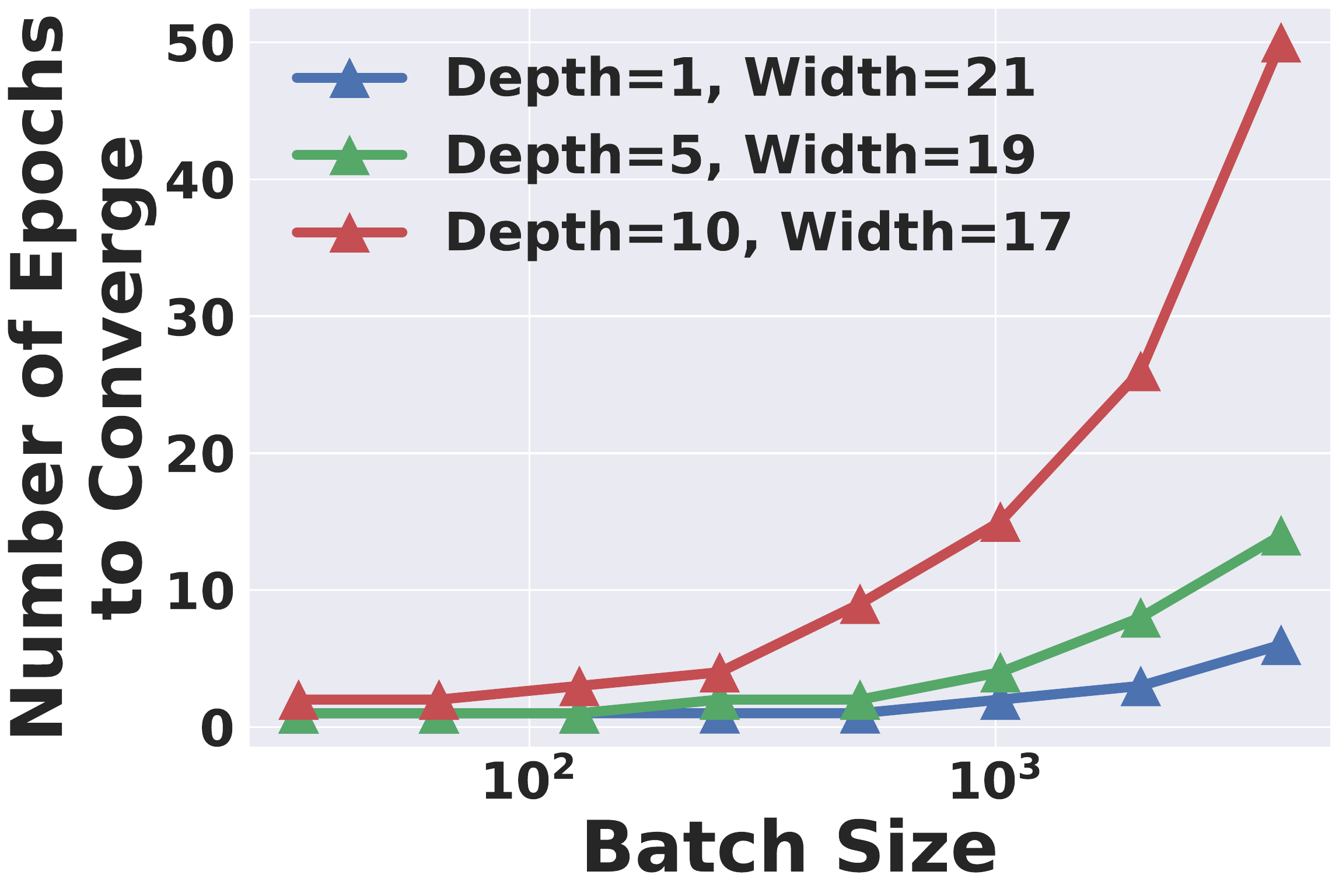}
\vspace{-0.5cm}
	\caption{\small Impact of neural network structure on amenability to large-batch training. This is for fully-connected models with ReLUs on MNIST. For each fully-connected network, we vary the batch size and measure the number of epochs to converge to $96\%$ accuracy. Wider and shallower networks require less epochs to converge than narrower and deeper ones, which suggests that the former are more suitable to scale out to more compute nodes.}\label{fig:Framework}
\end{wrapfigure}
In this work, through analyzing the gradient diversity of different network architectures, we take a step towards addressing the following question: 
\begin{quotation}
\textit{How does the structure of a neural network affect its amenability to fast  large-batch training?}
\end{quotation}

\noindent{\bf{Our contribution}} 
We establish a theoretical connection between the structure (depth and width)  of neural networks and their gradient diversity, which is an indicator of how large batch size can become, without slowing down the speed of convergence \cite{GradientDiversity2018AISTAT}. 
In particular, we prove how gradient diversity varies as a function of width and depth for two types of networks:
2-layer fully-connected linear and non-linear neural networks, and multi-layer fully-connected linear neural networks.
Our theoretical analysis indicates that, perhaps surprisingly, gradient diversity increases monotonically as width increases and depth decreases. 
On a high-level, wider networks provide more space for the gradients to become diverse.
This result suggests that wider and shallower networks are more amenable to fast large-batch training compared to deeper ones. Figure \ref{fig:Framework} provides an illustrative example of this phenomenon.

We provide extensive experimental results that support  our theoretical findings. 
We present experiments on fully-connected and residual networks on CIFAR10, MNIST, EMNIST, Gisette, and synthetic datasets.
In our experimental setting, we fix the number of network parameters, vary the depth and width, and measure (after tuning the step size) how many passes over the data it takes to reach an accuracy of $\epsilon$ with batch size $B$.
We observe that for all networks there exists a threshold $B^*$, and setting the batch size larger than the threshold leads to slower convergence. The observed threshold $B^*$ becomes smaller when the network becomes deeper, validating our theoretical result that deeper networks are less amenable to fast large-batch training.
 
The main message of our work is that  communication bottlenecks in distributed mini-batch SGD can be partially overcome not only by designing communication-efficient algorithms, but also by optimizing the architecture of the neural network at hand in order to enable faster large-batch training.

\section{Related Work}
\label{sec:RelatedWork}

\paragraph{Mini-batch}
The choice of an optimal batch size has been studied for non-strongly convex models~\cite{MiniBatch_OptimalSize2012}, least square regression \cite{MiniBatch_LS2016}, and SVMs~\cite{MiniBatch_SVM2013}. Other works propose methods that automatically choose the batch size on the fly~\cite{MiniBatch_AdaptSize2013,MiniBatch_AutoSize2016}. Mini-batch algorithms can be combined with 
accelerated gradient descent algorithms~\cite{MiniBatch_Accelerate2011}, or using  dual coordinate descent~\cite{MiniBatch_SDCA2013,MiniBatch_SDCA2_2015}. Mini-batch proximal algorithms are presented in 
\cite{MiniBatch_Prox_2017}.
While previous work mainly focuses on (strongly) convex models, or specific models (\eg least square regression, SVMs), our work studies how neural network structure can affect the optimal batch size.

\paragraph{Gradient Diversity}
Previous work indicates that mini-batch can achieve better convergence rates by increasing the diversity of gradient batches, \eg using stratified sampling~\cite{SGDSampling_2014}, Determinantal Point Processes~\cite{Minibatch_DPP2017}, or active sampling~\cite{MiniBatch_Sampling2018}.
The notion of similarity between gradients and how it affects convergence performance has been studied in several papers~\cite{Kakade2017, Misha2017,GradientDiversity2018AISTAT}. A formal definition and analysis of gradient diversity is given in~\cite{GradientDiversity2018AISTAT}, which
establishes the connection between gradient diversity and maximum batch size for convex and nonconvex models.
To the best of our knowledge, none of the existing works relates gradient diversity (and thus the optimal batch size) with the structure of a neural network.

\paragraph{Width vs Depth in Artificial Neural Networks}
There has been an increasing interest and debate on the qualities of deep versus wide  neural networks.

\cite{DeepComplexity14}
suggests that deep networks have larger complexity than wide networks and thus may be able to obtain better models.
\cite{DeepwideSumproduct11}
proves that deep networks can approximate sum products more efficiently than wide networks.
Meanwhile, \cite{WideResNet16} shows that a class of wide ResNets can achieve at least as high accuracy as deep ResNets.
\cite{ShallowDeep_SaliencyPrediction16} presents two classes of networks, one shallow and one deep, that achieve similar prediction error for saliency prediction.
In fact, \cite{ResNet_W_vs_D2016}
shows that well-designed shallow neural networks can outperform many deep neural networks.
More recently, \cite{DenseNet17} shows that using a dense structure, wider yet shallower networks can significantly improve the accuracy compared to deeper networks. While previous work has mainly studied the effect of network structure on prediction accuracy, we focus on its effect on the optimal choice of batch size for distributed computation.

\section{Setup and Preliminaries}
\label{sec:prelim}

In this section, we present the necessary background and problem setup.

\vspace{-2mm}
\paragraph{Mini-batch SGD} 

The process of training a model from data can be cast as an optimization problem known as {\it empirical risk minimization} (ERM): 
$$\min_{{\bf w}} \frac{1}{n}\sum_{i=1}^n \ell({\bf w};(\mathbf{x}_i,y_i))$$
where $\mathbf{x}_i\in\mathbb{R}^m$ represents the $i$th data point, $n$ is the total number of data points, ${\bf w}\in\mathbb{R}^d$ is a parameter vector or model, and $\ell(\cdot;\cdot)$ is a loss function that measures the prediction accuracy of the model on each data point.  One way to approximately solve the above ERM is through mini-batch stochastic gradient descent (SGD), which operates as follows:
\begin{align}
	\vecw_{(k+1)B} = \vecw_{kB} - \gamma \sum_{\ell=kB}^{(k+1)B-1} \gradf_{s_\ell}(\vecw_{kB}), \label{eq:mini-batchupdate}
\end{align}
where each index $s_\ell$ is drawn uniformly at random from $[n]$ with replacement.
We use $\vecw$ with subscript $kB$ to denote the model we obtain after $k$ distributed iterations, \ie a total number of $kB$ gradient updates. 

In related studies there is often a normalization factor included in the batch computation, but here we subsume that in the step size $\gamma$. 

\vspace{-2mm}
\paragraph{Gradient diversity and speed of convergence}
{\it Gradient diversity} measures the degree to which individual 
gradients of the loss function are different from each other. 

\begin{definition}[Gradient Diversity~\cite{GradientDiversity2018AISTAT}]
	We refer to the following ratio as gradient diversity
	\begin{equation*}\label{eq:grad_diversity}
		\begin{split}
			\Delta_\setS(\vecw):&= \frac{ \sum_{i=1}^n \twonms{\gradf_i(\vecw)}^2 }{\twonms{\sum_{i=1}^n \gradf_i(\vecw)}^2} =\frac{\sum_{i=1}^n \twonms{\gradf_i(\vecw)}^2}{\sum_{i=1}^n \twonms{\gradf_i(\vecw)}^2 + \sum_{i\neq j} \langle\gradf_i(\vecw),\gradf_j(\vecw)\rangle}.
		\end{split}
	\end{equation*}
\end{definition}
The gradient diversity $\Delta_\setS(\vecw)$ is large when the inner products between the gradients taken with respect to different data points are small.
Equipped with the notion of gradient diversity, we define a batch size bound $B_\setS(\vecw)$ for each data set $\setS$ and each $\vecw$ as follows: 
\[
B_\setS(\vecw) := n\cdot \Delta_\setS(\vecw).
\]
The following result~\cite{GradientDiversity2018AISTAT} uses the notion of gradient diversity to capture the convergence rate of mini-batch SGD.

\begin{lemma}\label{lem:iteration}[Theorem 3 in \cite{GradientDiversity2018AISTAT},Informal]
	Suppose $B \leq \delta \cdot n \Delta_\setS(\vecw) + 1, \forall \vecw$ in each iteration. If serial SGD achieves an $\epsilon$-suboptimal solution after $T $ gradient updates, then using the same step-size as serial SGD, mini-batch SGD with batch-size $B$ can achieve a $(1 + \frac{\delta}{2})\epsilon$-suboptimal solution after the same
	number of gradient updates/data pass ( i.e., $T/B$ iterations).
\end{lemma}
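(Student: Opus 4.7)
The plan is to couple the mini-batch trajectory over one ``epoch'' (one outer iteration) with a hypothetical serial-SGD trajectory that consumes the same random sample sequence $s_{kB},\ldots,s_{(k+1)B-1}$. Define $\vecv_0 = \vecw_{kB}$ and $\vecv_{\ell+1} = \vecv_\ell - \gamma\, \gradf_{s_{kB+\ell}}(\vecv_\ell)$, so $\vecv_B$ is what serial SGD would produce from $\vecw_{kB}$. By \eqref{eq:mini-batchupdate}, $\vecw_{(k+1)B} = \vecw_{kB} - \gamma\sum_{\ell=0}^{B-1} \gradf_{s_{kB+\ell}}(\vecw_{kB})$, so the only discrepancy between $\vecv_B$ and $\vecw_{(k+1)B}$ comes from evaluating each sampled gradient at the evolving $\vecv_\ell$ rather than at the anchor $\vecw_{kB}$. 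The overall strategy is to show that this discrepancy is small enough that running mini-batch SGD for $T/B$ outer iterations degrades the final suboptimality by at most a multiplicative factor $1+\delta/2$ relative to serial SGD's guarantee after $T$ steps.

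First I would assume standard $L$-smoothness of each $f_i$, which yields $\twonm{\gradf_{s_\ell}(\vecv_\ell) - \gradf_{s_\ell}(\vecw_{kB})} \le L\,\twonm{\vecv_\ell - \vecw_{kB}}$. Expanding $\vecv_\ell - \vecw_{kB}$ as a telescoping sum of $\gamma\, \gradf_{s_j}(\vecv_j)$ and iterating this estimate (a discrete Gronwall argument, choosing $\gamma$ so that $\gamma L B \ll 1$), I would bound $\twonm{\vecv_B - \vecw_{(k+1)B}}^2$ by a small multiple of $\gamma^2 \twonms{\sum_{\ell=0}^{B-1} \gradf_{s_{kB+\ell}}(\vecw_{kB})}^2$. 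Taking expectations over the i.i.d.\ indices $s_{kB+\ell}$ turns this second moment into the diagonal sum $B \cdot \tfrac{1}{n}\sum_i \twonm{\gradf_i(\vecw_{kB})}^2$ plus the off-diagonal cross-term $\tfrac{B(B-1)}{n^2}\twonms{\sum_i \gradf_i(\vecw_{kB})}^2$, which are exactly the two pieces appearing in the definition of $\Delta_\setS(\vecw_{kB})$.

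At this point I would invoke the hypothesis $B \le \delta\, n\, \Delta_\setS(\vecw) + 1$, equivalent to $\tfrac{B-1}{n} \le \delta\, \Delta_\setS(\vecw)$, which forces the diagonal term to dominate and yields a clean bound of the form $\Exp\twonm{\vecv_B - \vecw_{(k+1)B}}^2 \le \tfrac{\delta}{2}\,\Exp\twonm{\vecv_B - \vecw_{kB}}^2$ (up to constants absorbed by the step-size regime already needed for serial SGD). Plugging this drift estimate into whichever convergence potential the serial-SGD analysis uses (the function value $f(\vecw) - f^\star$ in the convex case, or $\twonm{\gradf(\vecw)}^2$ in the nonconvex case), one per-epoch recursion shows that the potential drops by the same amount as $B$ serial steps, up to a $(1+\delta/2)$ inflation. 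Summing $T/B$ such epochs gives the claimed $(1+\delta/2)\epsilon$-suboptimality after $T$ total gradient evaluations.

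The main obstacle is the second step: obtaining a sharp rather than loose bound on $\Exp\twonm{\vecv_B - \vecw_{(k+1)B}}^2$, since any slack there would force the bound on $B$ to be smaller than $n\Delta_\setS$. The crucial accounting is to separate the ``diagonal'' contribution $\tfrac{1}{n}\sum_i \twonm{\gradf_i}^2$ (the numerator of $\Delta_\setS$) from the ``off-diagonal'' contribution $\tfrac{1}{n^2}\twonms{\sum_i \gradf_i}^2$ (the denominator of $\Delta_\setS$) and use the diversity ratio to reduce the entire error budget to a single scalar condition on $B$, so that no extra problem-specific constants contaminate the final bound.
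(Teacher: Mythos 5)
First, note that the paper does not actually prove this lemma: it is an informal restatement of Theorem~3 of the cited gradient-diversity paper, imported as a black box, so there is no in-paper proof to compare against. Judged on its own terms, your proposal takes a genuinely different route from the one used in that reference, and the route has a gap.

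The standard proof does not couple mini-batch SGD with a serial trajectory on the same sample path. It treats one mini-batch update as a single step and expands the usual convergence potential (e.g.\ $\twonms{\vecw_{(k+1)B}-\vecw^*}^2$ or the function value), in which the only place the batch size enters is the second moment $\Exp\twonms{\sum_{\ell}\gradf_{s_\ell}(\vecw_{kB})}^2 = B\cdot\frac{1}{n}\sum_i\twonms{\gradf_i(\vecw_{kB})}^2+\frac{B(B-1)}{n^2}\twonms{\sum_i\gradf_i(\vecw_{kB})}^2$. The hypothesis $B-1\le\delta\, n\,\Delta_\setS(\vecw)$ bounds the off-diagonal term by $\delta$ times the diagonal term, so one mini-batch iteration matches $B$ serial steps up to a $(1+\delta)$ inflation of the variance contribution, which propagates through the rate recursion to the claimed $(1+\frac{\delta}{2})\epsilon$. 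Your diagonal/off-diagonal decomposition of that second moment is exactly the right identity and is the heart of the matter, but you deploy it inside a pathwise coupling that does not close. Two concrete problems: (i) the coupling error $\twonms{\vecv_B-\vecw_{(k+1)B}}$ accumulates across the $T/B$ outer iterations, and a discrete Gronwall bound gives a multiplicative factor of order $\exp(\gamma L T)$ on the accumulated drift, which is not $O(\delta)$ in the regime where serial SGD needs $T$ steps to reach accuracy $\epsilon$; (ii) even granting per-epoch control, closeness of the two trajectories does not by itself yield a multiplicative $(1+\frac{\delta}{2})$ bound on suboptimality --- you would need the drift to be small relative to $\epsilon$ itself, whereas your intermediate estimate $\Exp\twonms{\vecv_B-\vecw_{(k+1)B}}^2\le\frac{\delta}{2}\,\Exp\twonms{\vecv_B-\vecw_{kB}}^2$ compares the drift to the total displacement of $B$ serial steps, which is the wrong yardstick and does not transfer to function values without further (and unstated) assumptions. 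The repair is to abandon the coupling and insert the diversity bound directly into the single-step expansion of the convergence potential, as in the cited work.
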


The above result is true for both convex and non-convex problems, and its main message is that mini-batch SGD does not suffer from speedup saturation as long as the batch size is smaller than $n\cdot \Delta_\setS(\vecw)$ (up to a constant factor). Moreover, \cite{GradientDiversity2018AISTAT} also shows that this is a worst-case optimal bound, \ie (roughly) if the batch size is larger than $n$ times the gradient diversity, there exists some model such that the convergence rate of mini-batch SGD is slower than that of serial SGD. 

The main theoretical question that we study in this work is the following: \textit{how does gradient diversity change as neural networks' structure (depth and width) varies?}

\vspace{-2mm}
\paragraph{Fully-connected Neural Networks}
We consider both linear and non-linear fully connected networks, with $L\geq 2$ layers. 
We denote by $K_\ell$
the {\em width} (number of nodes) of the $\ell$-th layer, where $\ell \in \{0, \dots, L\}$. 
The first layer corresponds to the input of dimension $d$, hence $K_0 = d$. The last layer
corresponds to the single output of the neural network, hence $K_L = 1$. 
The weights of the edges that connect the $\ell$ and $\ell-1$ layers, where $l \in \{1, \dots, L\}$, are 
represented by the matrix $W_{\ell} \in \mathbb{R}^{K_{\ell}\times K_{\ell-1}}$. 
For the sake of simplicity, we will express the collection of weights (\ie the model) as
$\vecw = (W_1, W_2, \dots, W_L)$.

A general neural network (NN) with $L \geq 2$ layers can be described as a collection of matrices
$W_1, \dots, W_L$, where $W_{\ell} \in \mathbb{R}^{K_{\ell}\times K_{\ell-1}}$, together
with a (generally nonlinear) {\em activation function} $\sigma(\cdot)$. The output of  a NN (or LNN) on input data point $\vecx_i$
is then defined as $\hat{y}_i = W_L \cdot \sigma(\cdots \sigma(W_2 \cdot \sigma (W_1 \cdot \vecx_i)))$. 
There are different types of activation that we study,\ie
$\tanh(x)$,  the {\em softsign} function $\frac{x}{1+|x|}$, $\arctan(x)$, and the ReLU function
$\max\{0,x\}$.
For linear neural networks (LNNs), we denote $W = \prod_{\ell=1}^{L} W_{\ell} = W_L \cdot W_{L-1} \cdots W_1$.
We will also write $W_{\ell,p,q}$ to denote the element in the $p$-th row and $q$-th column of matrix 
$W_{\ell}$. 

The output of the neural network with input  $\vecx_i$ is defined as
$\hat{y}_i$. 
Throughout the theory part of this paper, we will use the {\em square loss function} to measure the error, which we denote for the $i$-th data point as $f_i = \frac{1}{2} {\left(\hat{y}_i - y_i\right)^2}$. 
\section{Main Results}
\label{sec:theory}

In this section, we present a theoretical analysis on how structural properties of a
neural network, and in particular the {\em depth} and {\em width}, 
influence the gradient diversity, and hence the convergence rate of mini-batch SGD for varying
batch size $B$. All proofs are left to the Appendix.

In the following derivations, we will assume that the labels $\{y_1, \dots, y_n\}$ of the $n$ data points are {\em realizable}, \ie there exist a network of $L$ layers that on input $x_i$ outputs $y_i$.

Our results are presented as probabilistic statements, and for almost all weight matrices.

\vspace{-2mm}
\paragraph{Warmup: 2-Layer Linear Neural Networks}

Our first result concerns the case of a simple 2-layer linear neural network with one hidden layer. 
To simplify notation, we will denote the width of the hidden layer with $K = K_1$.
The main result can be stated as follows:	

\begin{restatable}{theorem}{twoLNN}
Consider a 2 LNN. Let the weights $W_{l,p,q}, W^*_{l,p,q}$ for $l \in \{1,2\}$ and $\vecx_i$ be independently drawn  
random variables, such that their $k$-th order moments for $k \leq 4$ are in $[c_1, c_2]$, where $c_1, c_2$
are two positive constants.
  
Then, with arbitrary constant probability, the following holds:
\begin{align*}
B_\setS(\vecw)  \geq \frac{ \Theta(n Kd) }{\Theta(Kn+dn +Kd)} 
\end{align*}
\end{restatable}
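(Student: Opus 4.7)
My plan is to compute the per-sample gradients of the 2-LNN in closed form, reduce the diversity ratio to functionals of the conditional first and second moments of a single gradient (taken over the data), estimate those functionals under the moment hypothesis, and apply concentration to obtain a constant-probability statement.

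\textit{Step 1 (Gradients and splitting).} For the square loss $f_i = \tfrac12(W_2 W_1 \vecx_i - y_i)^2$, writing $r_i = W_2 W_1 \vecx_i - y_i$ and $h_i = W_1 \vecx_i$, a direct calculation gives $\nabla_{W_2} f_i = r_i h_i^\intercal$ and $\nabla_{W_1} f_i = r_i W_2^\intercal \vecx_i^\intercal$. Hence
\begin{align*}
N := \sum_i \|\nabla f_i\|^2 &= \sum_i r_i^2 \bigl( \|h_i\|^2 + \|W_2\|^2 \|\vecx_i\|^2\bigr), \\
D := \Bigl\|\sum_i \nabla f_i\Bigr\|^2 &= \Bigl\|\sum_i r_i h_i\Bigr\|^2 + \|W_2\|^2 \Bigl\|\sum_i r_i \vecx_i\Bigr\|^2.
\end{align*}

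\textit{Step 2 (Conditional iid reduction).} Let $W = (W_1, W_2, W_1^*, W_2^*)$. Conditional on $W$ the samples $\vecx_i$ are iid; setting $a = W_2 W_1 - W_2^* W_1^*$, we have $r_i = a \vecx_i$. With $\Sigma = \E[\vecx \vecx^\intercal] = \sigma^2 I + \mu^2 \mathbf{1}\mathbf{1}^\intercal$ (where $\sigma,\mu$ are constants depending only on $c_1,c_2$), one computes $\E[r_i \vecx_i^\intercal \mid W] = a\Sigma$ and $\E[r_i h_i^\intercal \mid W] = a\Sigma W_1^\intercal$. Expanding $D = N + \sum_{i\ne j} r_i r_j\bigl(\langle h_i, h_j\rangle + \|W_2\|^2 \langle \vecx_i, \vecx_j\rangle\bigr)$ and using conditional independence of $(r_i,\vecx_i)$ across $i\ne j$,
\begin{align*}
\E[D \mid W] = \E[N \mid W] + n(n-1)\bigl( \|a\Sigma W_1^\intercal\|^2 + \|W_2\|^2 \|a\Sigma\|^2\bigr).
\end{align*}

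\textit{Step 3 (Moment estimates and concentration).} Using $c_1 \le \E[W_{\ell,p,q}^k], \E[\vecx_{i,q}^k] \le c_2$ for $k \le 4$, I expand each of $\E[r_i^2 \|h_i\|^2 \mid W]$, $\E[r_i^2\|\vecx_i\|^2 \mid W]$, $\|a\Sigma W_1^\intercal\|^2$, and $\|a\Sigma\|^2$ as polynomials in the independent scalar random variables $\{W_{\ell,p,q}\}, \{W^*_{\ell,p,q}\}, \{\vecx_{i,q}\}$, using the isotropic-plus-rank-one decomposition of $\Sigma$ to reduce everything to the basic functionals $\|a\|^2,(a\mathbf{1})^2,\|W_1\|_F^2, \|W_1\mathbf{1}\|^2,\|W_2\|^2$. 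Term-by-term enumeration (exploiting the crucial cancellation $\E[a] = 0$) yields estimates from which $n\E[N]/\E[D] = \Theta(nKd/(Kn+dn+Kd))$ follows. To convert to the constant-probability statement, I apply Markov's inequality to $D$ and Chebyshev's (or the Paley-Zygmund) inequality to $N$; the required second-moment control of $N$ and $D$ comes from carrying the same moment enumeration out to fourth order.

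\textit{Main obstacle.} The chief difficulty lies in Step 3. Because $r_i = a\vecx_i$ is itself quadratic in the weights, the four functionals above expand into sums of products of up to eight independent scalar random variables, and their leading ``all-means'' contributions cancel on account of $\E[a]=0$ (symmetry between $W$ and $W^*$). The answer is determined by the surviving variance-level terms, and their systematic identification---tractable only because the rank-one-plus-identity form of $\Sigma$ collapses every quadratic form to a handful of familiar functionals of the weights---is the technical core of the argument.
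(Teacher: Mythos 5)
Your plan follows essentially the same route as the paper's proof: both reduce the claim to showing $\E[\sum_i\|\nabla f_i\|^2]=\Theta(nK^2d^2)$ and $\E[\sum_{i\ne j}\langle\nabla f_i,\nabla f_j\rangle]=\Theta(n^2Kd(K+d))$ by expanding the two gradient blocks entrywise under the independence and moment assumptions, and then transfer to the random ratio via second-moment (Chebyshev-type) concentration of the numerator and denominator together with a union bound. The only difference is organizational: you condition on the weights and write the off-diagonal sum as $n(n-1)$ times the squared conditional-mean gradient (a clean way to expose the cancellation), whereas the paper computes $\E\langle\nabla f_i,\nabla f_j\rangle$ for $i\ne j$ by direct expansion; the surviving moment terms and the final bound are the same.
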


For sufficiently large $n$, the above ratio on the batch size scales like
$  \frac{ \Theta(Kd) }{\Theta(K+d)} $. This ratio is always increasing as a function of the {\em width} of the hidden layer, which implies that larger width allows for a larger batch size.

\vspace{-2mm}
\paragraph{2-Layer Nonlinear Neural Networks}

As a next step in our theoretical analysis, we analyze general 2-layer NNs with a nonlinear activation function $\sigma$.

\begin{restatable}{theorem}{TwoLNNNs}\label{Thm:TwoLNNNs}
	Consider a 2-layer NN with a monotone activation function $\sigma$ such that 
	for every $x$ we have: $-\sigma(x) = \sigma(-x)$, $|\sigma(x)| \leq c_{max}$, and $\sup_x \{x \sigma'(x)\} \leq c_{sup}$ for two constants $c_{max}, c_{sup}$. 
Let the weights $W_{l,p,q}, W^*_{l,p,q}$ for $l \in \{1,2\}$ and $\vecx_i$ be i.i.d.   random variables from $\mathcal{N}(0,1)$.	
Then, with high probability, the following holds:
		\begin{align*}
		\frac{\EXPS{n \sum_{i=1}^{n} ||\nabla f_i ||_2^2 }}{ \EXPS{|| \sum_{i=1}^{n} \nabla f_i ||_2^2} } 
		&\geq 
		\Omega (\frac{Kd^2}{Kd+K+d }).
		\end{align*}
where the expectation is over $W_2, W_2^*$. 
\end{restatable}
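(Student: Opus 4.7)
The plan is to compute the expectation over $W_2, W_2^*$ of both numerator and denominator in closed form as functions of the hidden activations $z_i := \sigma(W_1\vecx_i)$, target activations $z_i^* := \sigma(W_1^*\vecx_i)$, derivative vectors $u_i$ whose entries are $\sigma'(W_{1,p,:}\vecx_i)$, and the inputs $\vecx_i$; and then to use concentration over the remaining randomness in $(W_1, W_1^*, \{\vecx_i\})$ to extract the stated ratio. By the chain rule,
\[
\nabla_{W_2} f_i = (\hat y_i - y_i)\,z_i^\top, \qquad \nabla_{W_{1,p,q}} f_i = (\hat y_i - y_i)\,W_{2,p}\,u_{i,p}\,x_{i,q},
\]
so $\twonms{\nabla f_i}^2 = (\hat y_i - y_i)^2\bigl(\twonms{z_i}^2 + \twonms{\vecx_i}^2 \sum_p W_{2,p}^2 u_{i,p}^2\bigr)$, and the residual $\hat y_i - y_i = W_2 z_i - W_2^* z_i^*$ is linear in $(W_2, W_2^*)$.

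The first key reduction is that, because $W_2$ and $W_2^*$ are independent centered Gaussians, every monomial of odd total degree in their entries vanishes in expectation. Combined with the fourth-moment identity $\mathbb{E}[W_{2,p}W_{2,q}W_{2,r}^2]\in\{0,1,3\}$, this collapses $\mathbb{E}_{W_2,W_2^*}[\twonms{\nabla f_i}^2]$ to
\[
(\twonms{z_i}^2 + \twonms{z_i^*}^2)\bigl(\twonms{z_i}^2 + \twonms{\vecx_i}^2\twonms{u_i}^2\bigr) + 2\twonms{\vecx_i}^2\sum_p z_{i,p}^2 u_{i,p}^2,
\]
and $\mathbb{E}_{W_2,W_2^*}[\innerps{\nabla f_i}{\nabla f_j}]$ (for $i\ne j$) to an analogous expression built from the cross-sample inner products $\innerps{z_i}{z_j}$, $\innerps{z_i^*}{z_j^*}$, $\innerps{u_i}{u_j}$, and $\innerps{\vecx_i}{\vecx_j}$. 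The dominant per-sample contribution is $\twonms{\vecx_i}^2\twonms{u_i}^2\twonms{z_i^*}^2$, which concentrates to $\Theta(K^2 d)$ using $\twonms{\vecx_i}^2\approx d$, $|\sigma|\le c_{\max}$ giving $\twonms{z_i^*}^2=\Theta(K)$, and a lower bound on $\mathbb{E}[\twonms{u_i}^2]$ obtained from the assumptions on $\sigma$.

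The final step is to bound the off-diagonal contribution in the denominator. By oddness of $\sigma$ and independence of $\vecx_i,\vecx_j$, each of $\innerps{z_i}{z_j}$, $\innerps{z_i^*}{z_j^*}$, and $\innerps{\vecx_i}{\vecx_j}$ has zero mean for $i\ne j$, while $\innerps{u_i}{u_j}$ concentrates around $K\,(\mathbb{E}[\sigma'(\xi)])^2$. Hanson--Wright-type concentration controls the fluctuations of these Gaussian chaoses at scales $\sqrt K$ and $\sqrt d$, so the cross-sample contribution to the denominator is of strictly smaller order than the diagonal piece $n\cdot\Theta(K^2 d)$ by the factor that will produce the $Kd + K + d$ in the denominator of the claimed ratio. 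Dividing numerator by denominator and simplifying then yields $\Omega(Kd^2/(Kd+K+d))$.

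I expect the main obstacle to be the step that bounds $\mathbb{E}[\twonms{u_i}^2]$ and the mixed expectation $\mathbb{E}[\sum_p z_{i,p}^2 u_{i,p}^2]$ using only the tail constraint $\sup_x x\sigma'(x)\le c_{\sup}$ rather than a pointwise bound on $\sigma'$. This forces a truncation argument that splits on whether $|W_{1,p,:}\vecx_i|$ is small (where $\sigma'$ is handled via continuity at the origin) or large (where the constraint supplies a $1/|x|$ envelope that is integrable against the Gaussian tail). The Hanson--Wright step is more routine but must avoid losing polynomial factors in $K$ and $d$ that would spoil the final ratio.
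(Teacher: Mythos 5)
Your overall architecture matches the paper's: integrate out $W_2,W_2^*$ first (using that the residual $\hat y_i-y_i$ is linear in them and that odd Gaussian moments vanish), identify the diagonal contribution $n\sum_i\twonms{x_i}^2\twonms{u_i}^2\twonms{z_i^*}^2=\Theta(n^2K^2d)$ as the dominant numerator term, and then control the cross terms in the denominator. The gap is in how you control those cross terms. You argue that $\innerps{z_i}{z_j}$, $\innerps{z_i^*}{z_j^*}$, $\innerps{\vecx_i}{\vecx_j}$ are zero-mean for $i\neq j$ and that Hanson--Wright controls their fluctuations. But after integrating out $W_2,W_2^*$, the cross-sample contribution to $\EXPS{\twonms{\sum_i\nabla f_i}^2}$ is a sum over $i\neq j$ of \emph{products} of these quantities, e.g.\ (schematically) $\sum_p\bigl(\innerps{z_i}{z_j}+\innerps{z_i^*}{z_j^*}\bigr)u_{i,p}u_{j,p}\innerps{\vecx_i}{\vecx_j}$, and since $\vecx_i\perp\vecx_j$ the expectation of such a product factorizes into $\sum_{r,p,q}\bigl(\E_x[\sigma(\langle W_{1,r,:},x\rangle)\,\sigma'(\langle W_{1,p,:},x\rangle)\,x_q]\bigr)^2$. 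This is a \emph{bias}, not a fluctuation: each factor is an even functional of $x$ (odd $\sigma$ times even $\sigma'$ times odd $x_q$), so symmetry gives no cancellation, and the trivial bound $O(1)$ per term yields $O(n^2K^2d)$ for this block --- the same order as the numerator, killing the claimed $\Omega(d)$ ratio. Hanson--Wright bounds deviations of quadratic forms around their means; it cannot show these means are small.

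The paper's proof lives or dies on exactly this point: its Lemma on $|E(f(X)f'(Y)Z)|$ conditions $Z=x_q$ on the jointly Gaussian pair $(X,Y)=(\langle W_{1,r,:},x\rangle,\langle W_{1,p,:},x\rangle)$ and uses that all pairwise correlations are $O(1/\sqrt d)$ (near-orthogonality of random rows) together with $|\sigma|\le c_{max}$ and $\sup_x x\sigma'(x)\le c_{sup}$ to get the triple expectation down to $O(1/\sqrt d)$, whence the $W_1$-block of the denominator is $O(n^2K^2)$; a companion lemma bounds $|E(\sigma(Z_1)\sigma(Z_2))|$ by $O(\sqrt{\rho})$ for correlation $\rho=O(1/\sqrt d)$ to handle the $W_2$-block. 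Note also that $c_{sup}$ is consumed precisely in these two denominator lemmas, not (as you anticipate) mainly in lower-bounding $\E\twonms{u_i}^2$ --- the paper simply treats $\sigma'$ as bounded in the numerator. Without an analogue of the conditional-Gaussian bias bound, your proof does not go through.
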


We should remark here that the above bound is weaker than the one obtained for the case of
2-layer LNNs, since it bounds the ratio of the expectations, and not the expectation of the ratio (the batch size bound).
Nevertheless, we conjecture that the batch size bound concentrates, and thus the above theorem can approximate the batch size bound well. 

Another remark is that several commonly used activation functions in NNs, such
as $\tanh$, $\arctan$, and the softsign function satisfy the assumptions of the above theorem. The same
trends can be observed here as in the case of 2-layer  LNNs: $(i)$ larger width leads to
a larger gradient diversity, and thus faster convergence of distributed mini-batch SGD, and
$(ii)$ the ratio can never exceed $\Omega(d)$.

\vspace{-2mm}
\paragraph{Multilayer Linear Neural Networks}

We generalize here our result for 2-layer LNNs to general multilayer LNNs of arbitrary depth $L \geq 2$.
Below is our main result.

\begin{theorem}\label{LNN:bound}
Let the weight values $W_{l,p,q}$ for $l \in \{1, \dots, L\}$ and $\vecx_i$ be independently drawn  
random variables from $\mathcal{N}(0,1)$. Consider a multilayer LNN where $f_i = \frac{1}{2} {(W \vecx_i- W^* \vecx_i )^2} = \frac{1}{2} {(\prod_{\ell=1}^{L} W_{\ell} \vecx_i- \prod_{\ell=1}^{L} W_{\ell}^* \vecx_i )^2}$. Assuming that $K_\ell \geq 2$ for every $\ell \in \{0, \dots, L-1\}$,
and that $n$ is sufficiently large, then we have:
\begin{align} \label{eq:LNN}
\rho = \frac{\EXPS{n \sum_{i=1}^{n} ||\nabla f_i ||_2^2 }}{ \EXPS{|| \sum_{i=1}^{n} \nabla f_i ||_2^2} } 
\geq \frac{L}{\sum_{\phi = 1}^{L-1} \frac{L-\phi}{K_{\phi}-1} + \frac{2L}{d-1}}.     
\end{align}
\end{theorem}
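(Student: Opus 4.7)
The plan is to reduce $\rho$ to a ratio of Frobenius moments of products of i.i.d.\ Gaussian matrices, and then evaluate those moments layer-by-layer via the identity $\EXPS{W^{\top}MW}=\tr(M)I$ for a standard Gaussian $W$. Let $A_\ell=W_L\cdots W_{\ell+1}$ (empty for $\ell=L$), $B_\ell=W_{\ell-1}\cdots W_1$ (identity for $\ell=1$), and $g=W-W^{*}\in\mathbb{R}^{1\times d}$. The chain rule gives $\nabla_{W_\ell}f_i=(g\vecx_i)\,A_\ell^{\top}(B_\ell\vecx_i)^{\top}$, so
\[
\|\nabla_{W_\ell}f_i\|_F^2=(g\vecx_i)^2\|A_\ell\|^2\|B_\ell\vecx_i\|^2,\qquad \innerps{\nabla_{W_\ell}f_i}{\nabla_{W_\ell}f_j}_F=(g\vecx_i)(g\vecx_j)\|A_\ell\|^2\innerps{B_\ell\vecx_i}{B_\ell\vecx_j}.
\]

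Next I would integrate out $\vecx_i,\vecx_j\sim\mathcal N(0,I_d)$ using the Gaussian quadratic-form identities $\EXPS{\vecx^{\top}G\vecx\cdot\vecx^{\top}M\vecx}=2\tr(GM)+\tr(G)\tr(M)$ and, for $i\neq j$, $\EXPS{\vecx_i^{\top}G\vecx_j\cdot\vecx_i^{\top}M\vecx_j}=\tr(G^{\top}M)$, applied with $G=g^{\top}g$ and $M=B_\ell^{\top}B_\ell$. Setting $\alpha_\ell:=\EXPS{\|A_\ell\|^2\|g\|^2\|B_\ell\|_F^2}$ and $\beta_\ell:=\EXPS{\|A_\ell\|^2\|B_\ell g^{\top}\|^2}$, this gives $\mu_1:=\EXPS{\|\nabla f_i\|^2}=\sum_\ell(\alpha_\ell+2\beta_\ell)$ and $\mu_2:=\EXPS{\innerps{\nabla f_i}{\nabla f_j}}=\sum_\ell\beta_\ell$. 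Since the $\vecx_i$ are i.i.d., $\rho=n\mu_1/(\mu_1+(n-1)\mu_2)\to\mu_1/\mu_2$ as $n\to\infty$, so it suffices to lower-bound $\mu_1/\mu_2$.

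I would then evaluate $\alpha_\ell,\beta_\ell$ by decomposing $g=A_\ell W_\ell B_\ell-A_\ell^{*}W_\ell^{*}B_\ell^{*}$. Cross terms vanish because $W_\ell$ and $W_\ell^{*}$ are zero-mean and independent of the remaining factors, and the surviving terms factor over the independent blocks $(A_\ell,A_\ell^{*})$, $(W_\ell,W_\ell^{*})$, $(B_\ell,B_\ell^{*})$. Iterating $\EXPS{W^{\top}MW}=\tr(M)I$ layer by layer reduces everything to the Gaussian-matrix moments $\EXPS{\|A_\ell\|^{2k}}$, $\EXPS{\|B_\ell\|_F^{2k}}$ for $k\in\{1,2\}$, together with the mixed trace $\EXPS{\|B_\ell B_\ell^{\top}\|_F^2}$. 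Each of these satisfies a transparent recursion: appending a new Gaussian layer $W_\phi$ multiplies a second moment by $K_\phi$ and essentially multiplies a fourth moment by $K_\phi(K_\phi+2)$, whereas the mixed trace picks up a slightly smaller factor, and this discrepancy is the source of the $1/(K_\phi-1)$ terms.

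After cancelation of the dominant $\prod K_\phi^2$ factors in $\mu_2/\mu_1$, the ratio collapses to a sum over $\ell$ of $\EXPS{\|B_\ell B_\ell^{\top}\|_F^2}/\EXPS{\|B_\ell\|_F^4}$, whose expansion yields $\sum_{\phi=0}^{\ell-1}1/(K_\phi-1)$ at leading order (with $K_0=d$). Summing over $\ell$, layer $\phi$ is counted in the $L-\phi$ summands with $\ell>\phi$, producing the coefficient $(L-\phi)/(K_\phi-1)$; the additional factor of $2$ on the $d$-term arises because both $W$ and $W^{*}$ feed into $g$. Inverting gives the claimed lower bound on $\rho$. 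The main obstacle is carrying out this bookkeeping \emph{exactly}: the fourth-moment recursion produces subleading corrections that must combine to give the clean denominator $K_\phi-1$ rather than $K_\phi$ or $K_\phi+O(1)$. I would handle this either by induction with an exact closed form for $\EXPS{\tr((M_\ell M_\ell^{\top})^2)}$ as a polynomial in $K_0,\dots,K_{\ell-1}$, or by replacing the exact moment identities with Cauchy--Schwarz-type inequalities tight enough to preserve the $K_\phi-1$ denominator.
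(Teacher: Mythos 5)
Your proposal follows essentially the same route as the paper's proof: the same decomposition of $\nabla_{W_\ell} f_i$ into a "suffix" product $A_\ell$, a "prefix" product $B_\ell$, and the residual $g=W-W^*$; the same large-$n$ reduction of $\rho$ to the ratio $\mu_1/\mu_2$ of per-sample second moments to cross-moments; and the same layer-by-layer Gaussian moment recursion (second moments scaling by $K_\phi$, fourth moments by $K_\phi(K_\phi+2)$, and the mixed trace $\EXPS{\fbnorms{B_\ell B_\ell^\T}^2}$ by a slightly smaller factor), which is exactly the mechanism behind the paper's recursion producing the $1/(K_\phi-1)$ denominators and the $(L-\phi)$ multiplicities. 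The remaining bookkeeping you flag is precisely what the paper's helper lemmas carry out, so the plan is sound and matches the published argument.
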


Again, note that the above bound is weaker than the one obtained for the case of
2-layer LNNs, since it bounds the ratio of the expectations, and not the expectation of the ratio.

We next discuss the implications of Theorem~\ref{LNN:bound} on the convergence rate of mini-batch SGD.
To analyze the behavior of the bound, consider the simple case where all the hidden layers ($l =1,\dots, L-1$) have exactly the
same width $K$. In this case, the ratio in Eq.~\eqref{eq:LNN} becomes:
\begin{align*}
\rho \geq \frac{1}{\frac{L-1}{2(K-1)}+\frac{2}{d-1}} = \Theta\left(\frac{dK}{dL+K}\right)
\end{align*}
There are three takeaways from the above bound. First, by increasing the width $K$ of the LNN, the ratio
increases as well, which implies that the convergence rate increases. Second, the effect of the depth $L$
is the opposite: by increasing the depth, the ratio decreases. Third, the ratio can never exceed
$\Theta(d)$, but it can be arbitrarily small.
Suppose now that we fix the total number of weights in the LNN, and then start increasing the width of
each layer (which means that the depth will decrease). In this case, the ratio will also increase. 

We conclude by noting that the same behavior of the bound w.r.t. width and depth can be observed if we drop the simplifying assumption that all layers have the same width.

\section{Experiments}
\label{sec:exp}
In this section, we provide empirical results on how the structure of a neural network (width and depth) impacts its amenability to large-batch training using various datasets and network architectures.
Our main findings are three-fold:
\begin{enumerate}
	\item For all neural networks we used, there exists a threshold $B^*$, such that using batch size larger than this threshold induces slower convergence;
	\item  The threshold of wider neural networks is often larger than that of  deeper ones; 
	\item When using the same large batch size,  almost all wider neural networks need much fewer epochs to converge compared to their deeper counterparts.
\end{enumerate}

Those findings validate our theoretical analysis and suggest that 
wider neural networks are indeed more amenable to large-batch training and thus more suitable to scale out.

\textbf{Implementation and Setup} We implemented our experimental pipeline in Keras \cite{Keras2015}, and conducted  all experiments on p2.xlarge instances on Amazon EC2. All results reported are averaged from 5 independent runs.

\begin{table}[t]
	{\small
	\centering
	\begin{tabular}{|c||c|c|c|c|c|}
		\hline Dataset & Synthetic & MNIST & Cifar10 & EMNIST & Gisette \bigstrut\\
		\hline
		\# datapoints & 10,000 & 70,000 & 60,000 &  131,600 & 6,000 \bigstrut\\
		\hline
		Model & linear FC & FC/LeNet & ResNet-18/34 & FC & FC \bigstrut\\
		\hline
		\# Classes & $+\infty$ & 10 & 10 & 47 & 2 \bigstrut\\
		\hline
		\# Parameters & 16k & 16k / 431k & 11m / 21m & 16k & 262k \bigstrut\\
		\hline
		Converged Accuracy & $10^{-12}$ (loss) & 96\% / 98\%  & 95\% & 65\% & 95\% \bigstrut\\
		\hline
	\end{tabular}
\caption{The datasets used and their associated learning models and hyper-parameters.}
	\label{Tab:DataStat}%
}
\end{table}

\paragraph{Datasets and Networks} Table~\ref{Tab:DataStat} summarizes the datasets and networks used in the experiments.
In the synthetic dataset, all data points were independently drawn from $\mathcal{N}(0,1)$ as described by our theory results. 
A deep linear fully connected neural network (FC) whose weights were generated from $\mathcal{N}(0, 1)$ independently was used to produce the true labels.
The task on the synthetic data is a regression task.
We train linear FCs on the synthetic dataset.
The real-world datasets we used include MNIST \cite{lecun2010mnist}, EMNIST\cite{cohen2017emnist}, Gisette \cite{chang2011libsvm}, and CIFAR-10 \cite{CIRFA10}, with appropriate networks ranging from linear, to non-linear fully connected ones, and to LeNet \cite{lecun1998gradient} and ResNet \cite{ResNetCVPR16}.

For each network, we fix the total number of parameters and vary its depth/number of layers $L$ and width $K$.
For fully connected networks and LeNet, we vary depth $L$ from $1$ to $10$ and change $K$ accordingly to ensure the total number of parameters are approximately fixed.
More precisely, we fix the total number of parameters $p$, and solve the following equations 
\begin{align*}\label{eq/fixed-p}
d_{\text{in}} \times K + (L - 1) \times K^2  +  K\times d_{\text{out}} = p.
\end{align*}
where $d_{in}$ is the dimension of the data and $d_{out}$ is the size of output.
For ResNet, we vary two parameters  separately.
We first vary the width and depth of the fully connected layers without changing the residual blocks.
Next we fix the fully connected layers and change the number of blocks and convolution filters in each chunk. We refer to the building block in a residual function described in \cite{ResNetCVPR16} as chunk. For ResNet-18/34 architecture, we use $[s_1,s_2,s_3,s_4]$ to denote a particular structure, where $s_1$ represents the number of blocks stacked in the first chunk, $s_2$ is the number of blocks stacked in the second chunk,  etc. For varying depths, we incrementally increase or decrease one block in each chunk and adjust the number of convolutional filters in each block to meet the fixed number of parameters requirement.

For each combination of depth and width of a NN architecture, we train the model by setting a constant threshold on training accuracy for classification tasks, or loss for regression tasks.
We then train the NN for a variety of batch sizes, in range of $2^i, \text{for } i \in \{5, \cdots, 12\}$.
We tune the step size in the following way: (i) for all learning rates $\eta$ from a grid of candidate values, we run the training process with $\eta$ for 2 passes over the data; and then (ii) we choose $\hat \eta$ which leads to the lowest training loss after two epochs. An epoch represents a full pass over the data.
  
\vspace{-2mm}
\paragraph{Experimental Results} 
\begin{figure*}[t]
	\centering
	\subfigure[Gradient Diversity]{\includegraphics[width=0.28\linewidth]{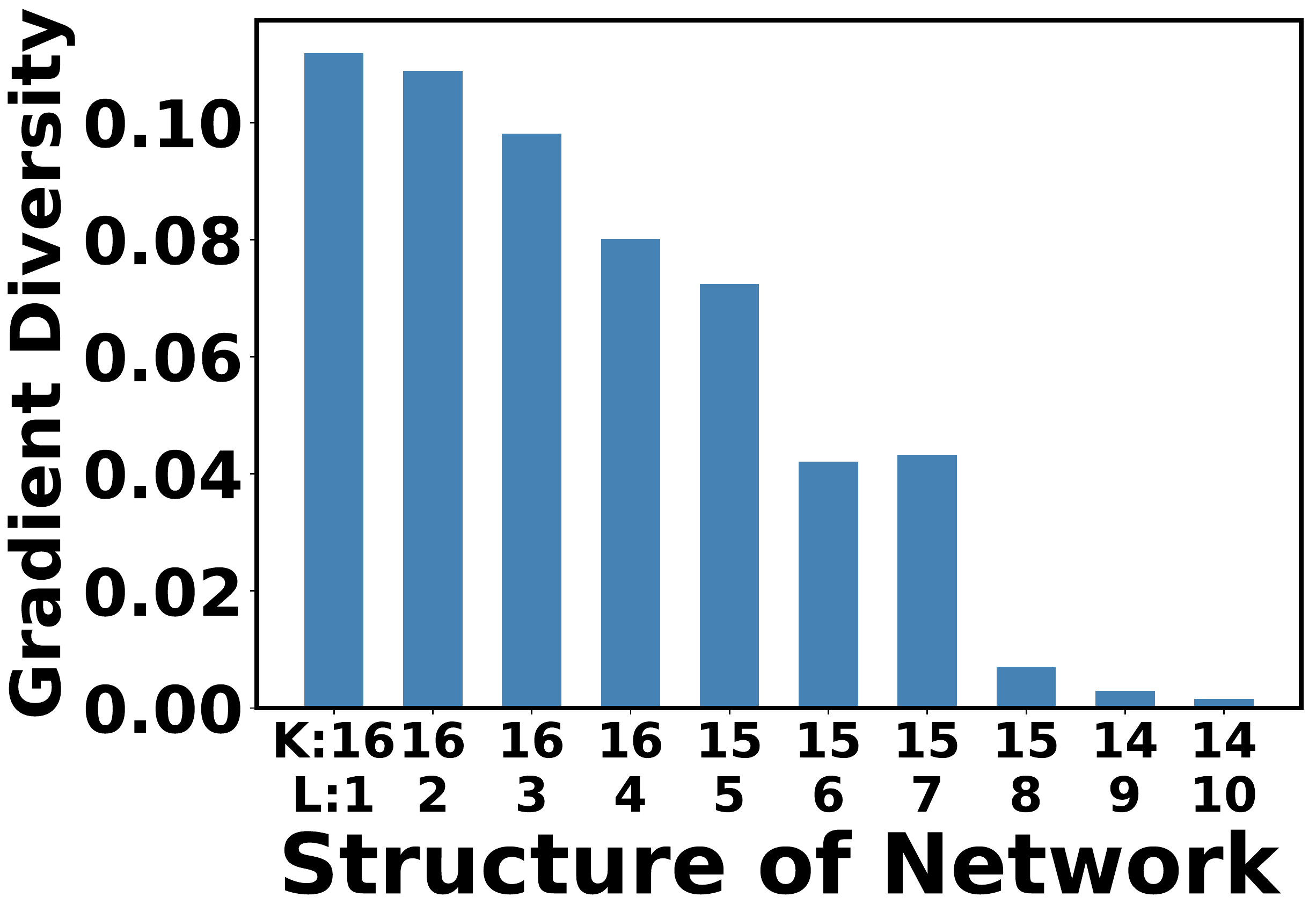}}
	\subfigure[Largest Batch Size]{\includegraphics[width=0.28\linewidth]{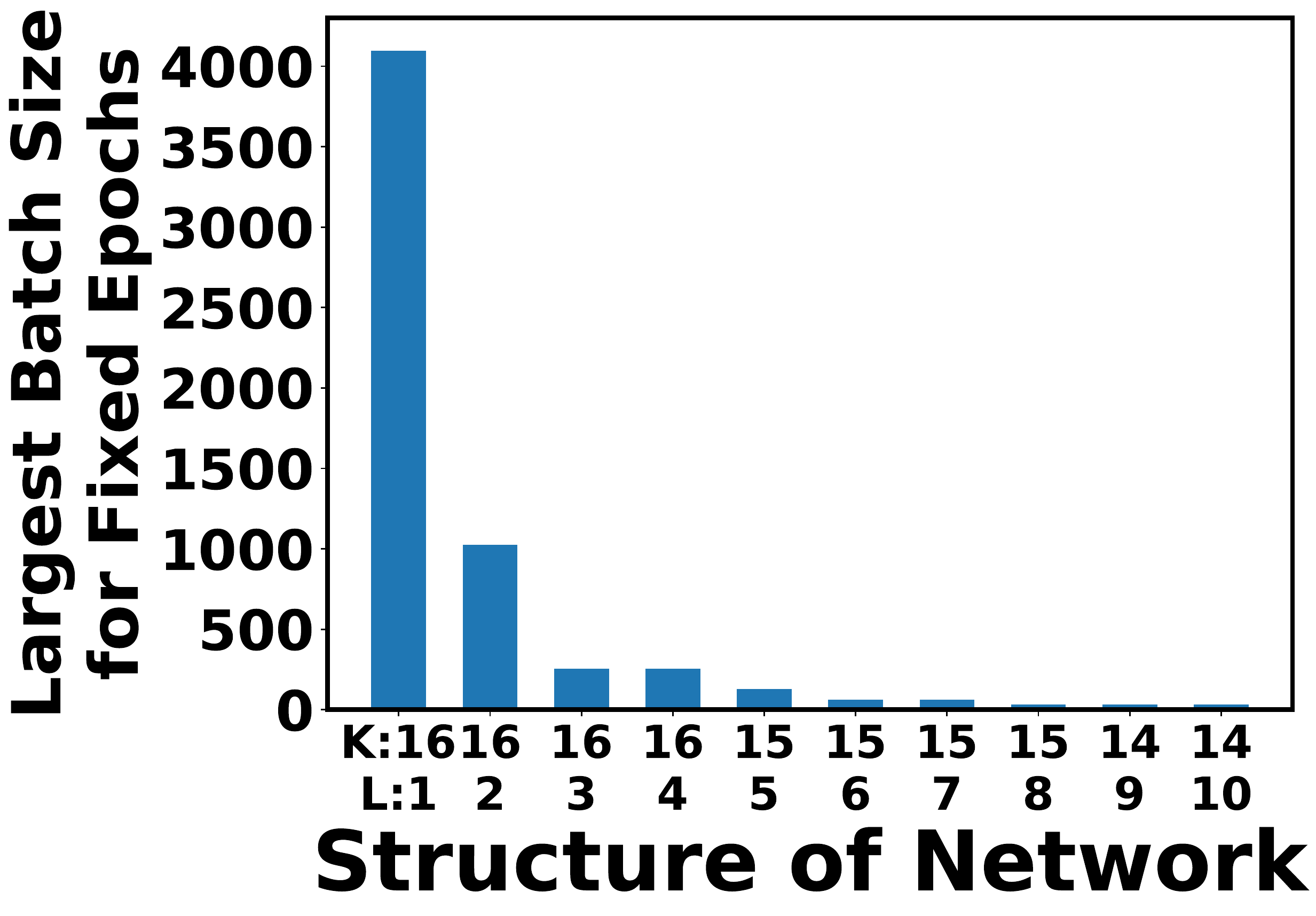}}
	\subfigure[Diversity vs Batch Size]{\includegraphics[width=0.28\linewidth]{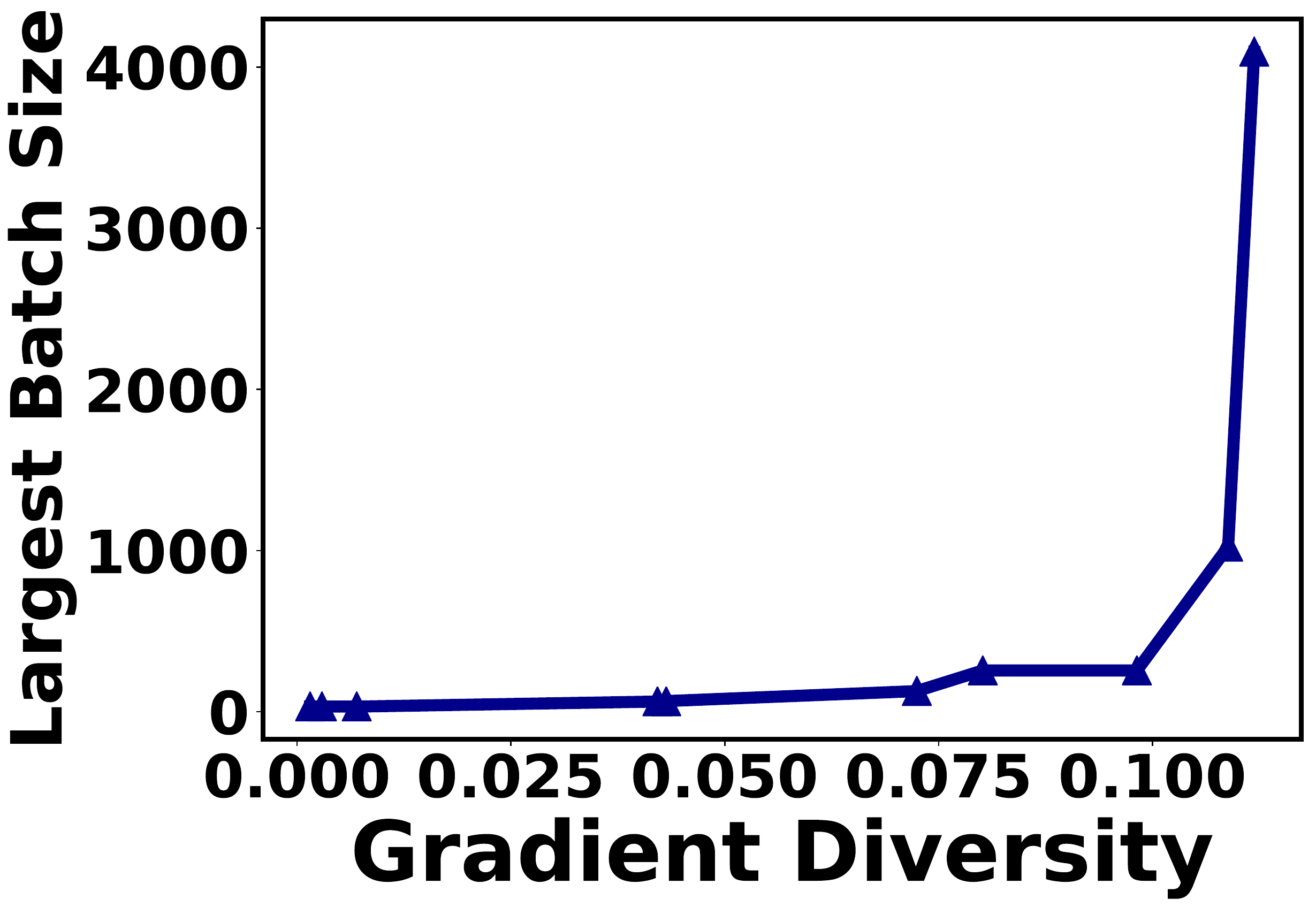}}
	\vspace{-0.3cm}
	\caption{\small The effect of gradient diversity for linear FCs trained on the synthetic dataset for a regression task. (a) Gradient diversity for different width/depth (b) Largest batch size to converge to loss $10^{-12}$,  within a pre-set number (\ie 14) of epochs. (c) Largest batch size v.s. gradient diversity. }
	\label{fig:gradientdiversity}
\end{figure*}

We first verify whether gradient diversity reflects the amenability to large batch training. 
For each linear FC network with fixed width and depth, we measure its gradient diversity every ten epochs and compute the average. 
Figure \ref{fig:gradientdiversity}(a) shows how the averaged gradient diversity varies as depth/width changes, while
Figure \ref{fig:gradientdiversity}(b) presents the largest batch to converge for each network within a pre-set number of epochs. 
Both of them increase as the width $K$ of the networks increases.
In fact, as shown in Figure \ref{fig:gradientdiversity}(c), the largest batch size that does not
impact the convergence rate grows monotonically w.r.t the gradient diversity. 
This validates our theoretical analysis that gradient diversity can be used to capture 
the amenability to large batch training.
\label{exp-structures}
\begin{figure*}[t]
	\centering
	\subfigure[Synthetic, Linear FC]{\includegraphics[width=0.255\linewidth]{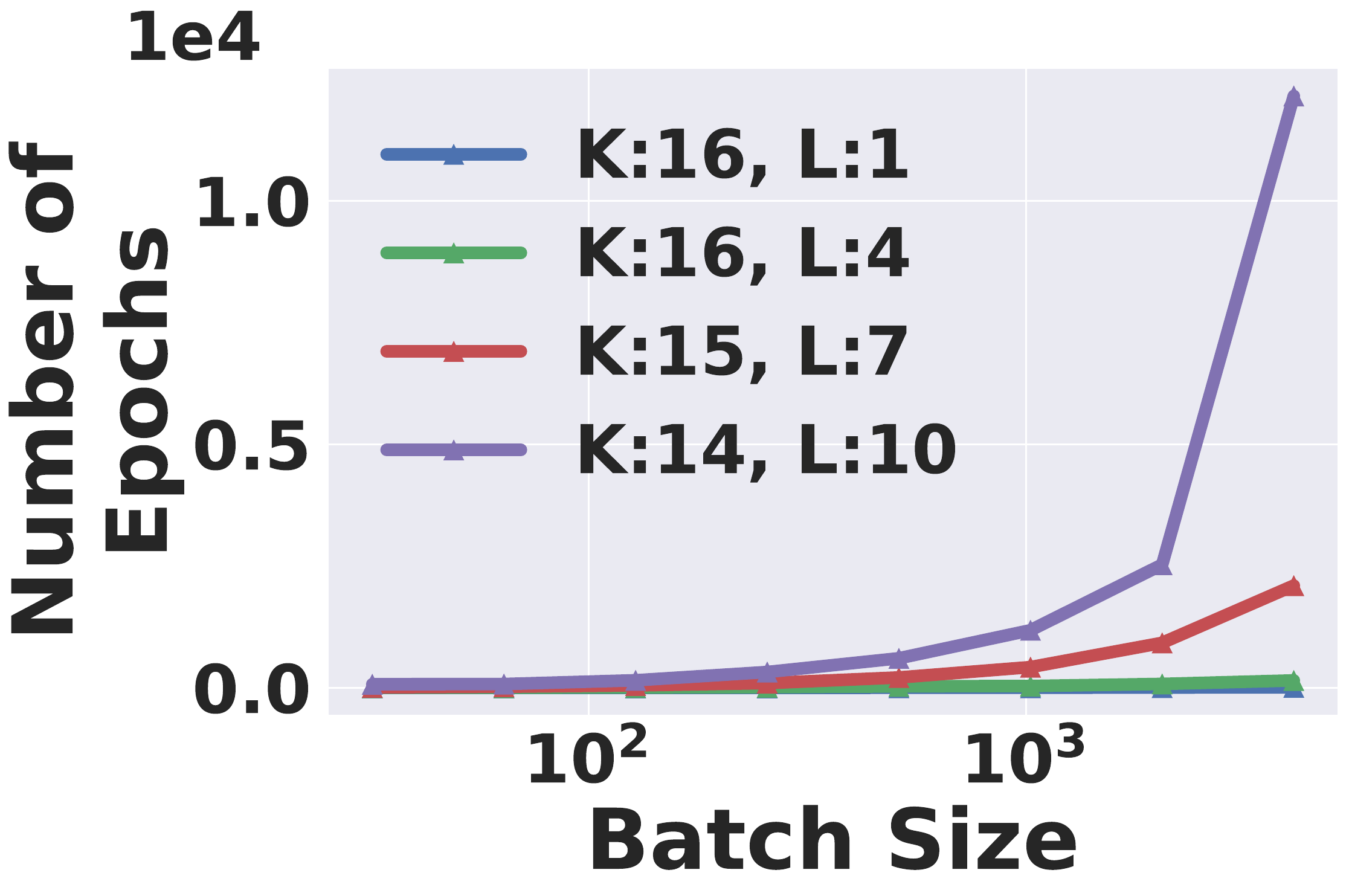}}
	\subfigure[MNIST, FC]{\includegraphics[width=0.24\linewidth]{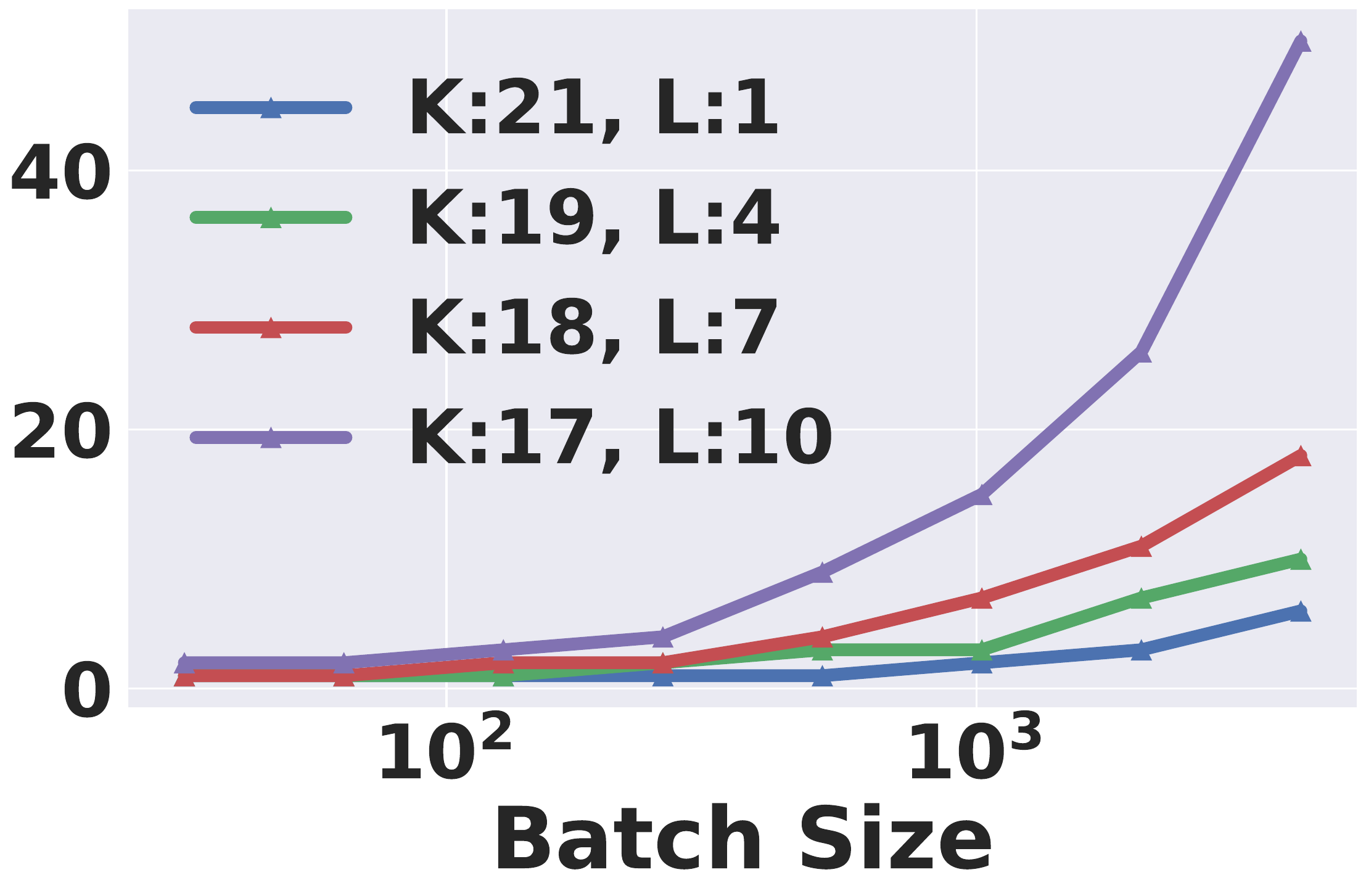}}
	\subfigure[EMNIST, FC]{\includegraphics[width=0.24\linewidth]{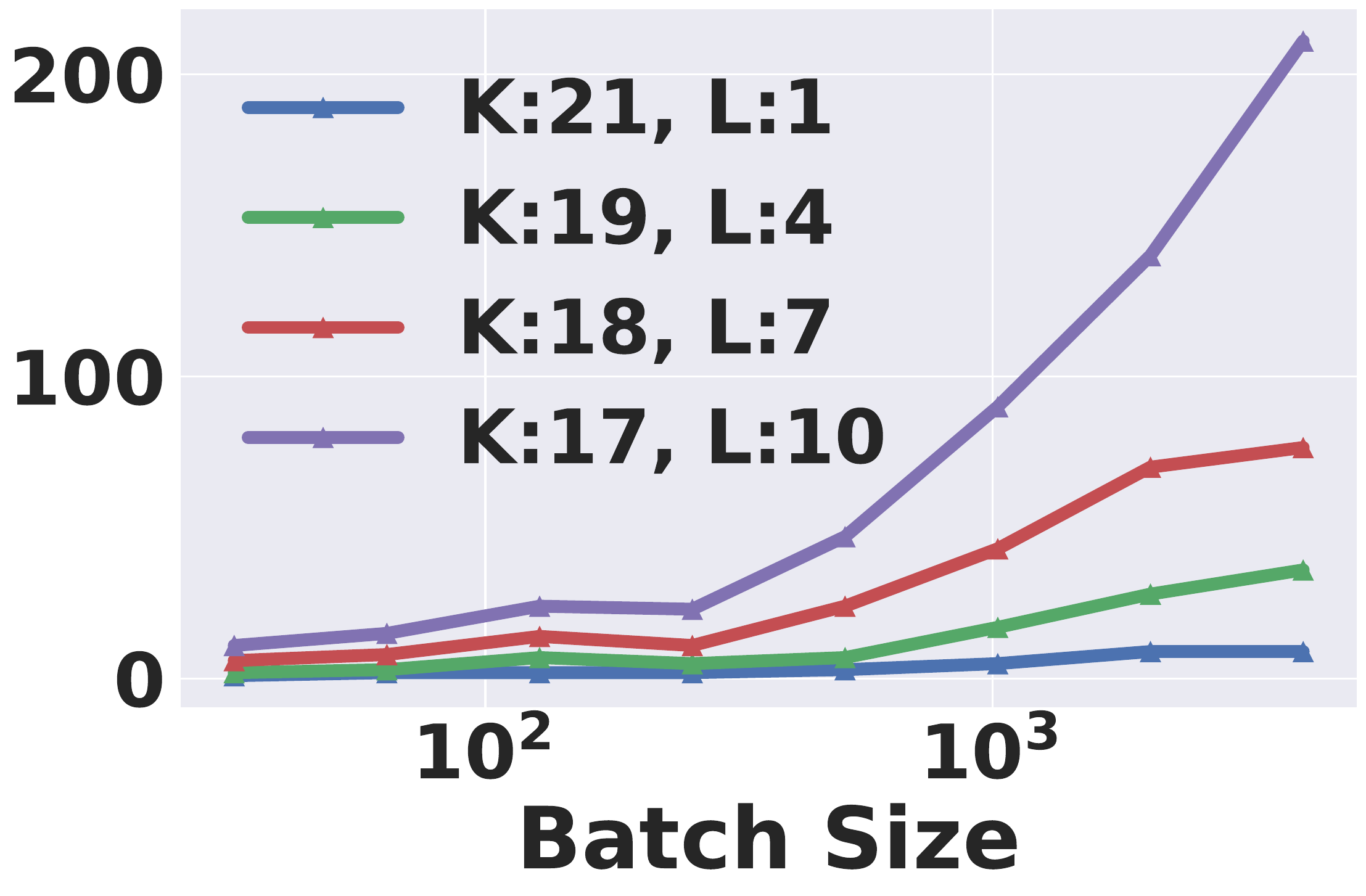}}
	\subfigure[Gisette, FC]{\includegraphics[width=0.24\linewidth]{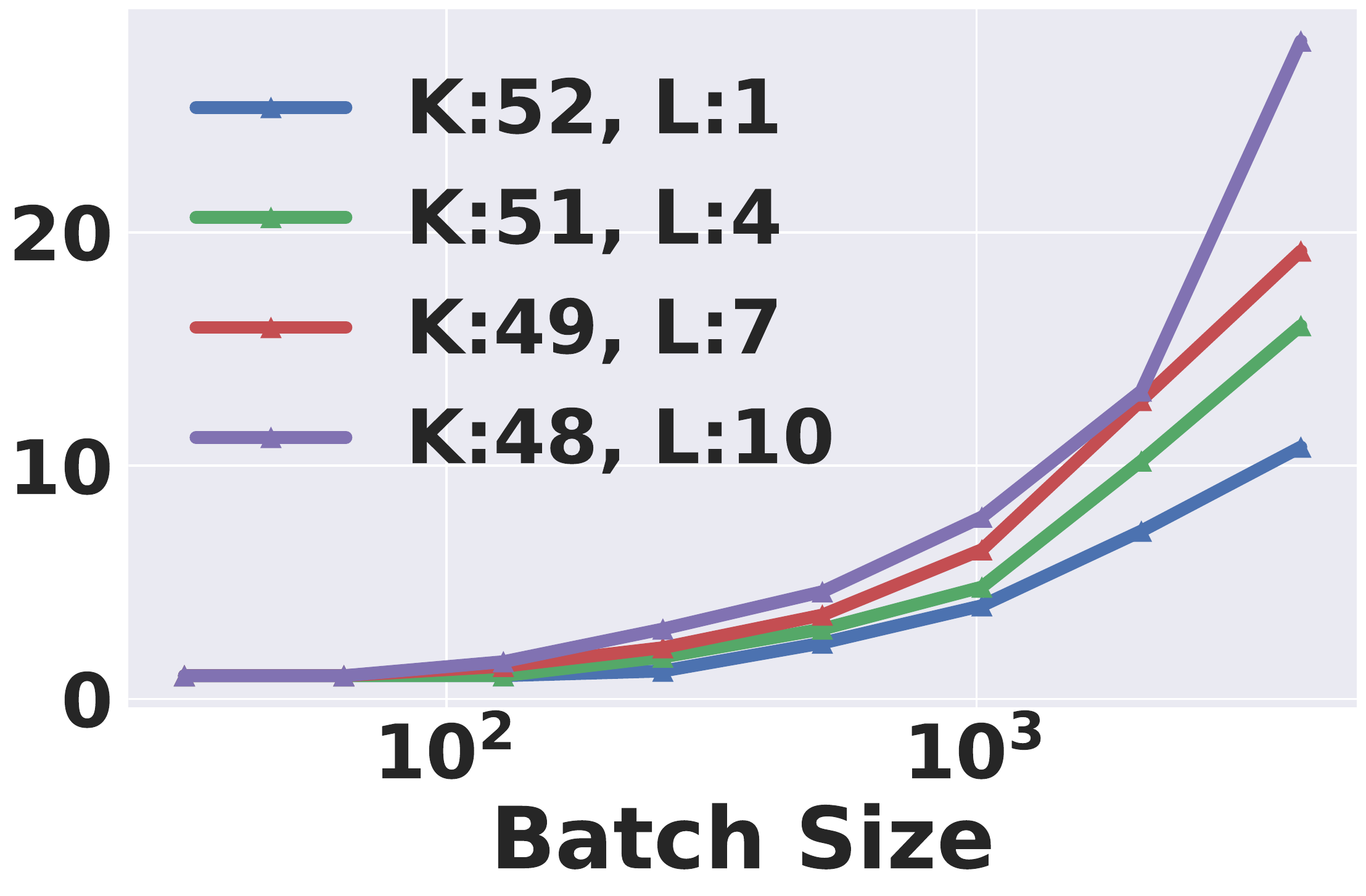}}
	\\
	\subfigure[MNIST,  LeNet]{\includegraphics[width=0.245\linewidth]{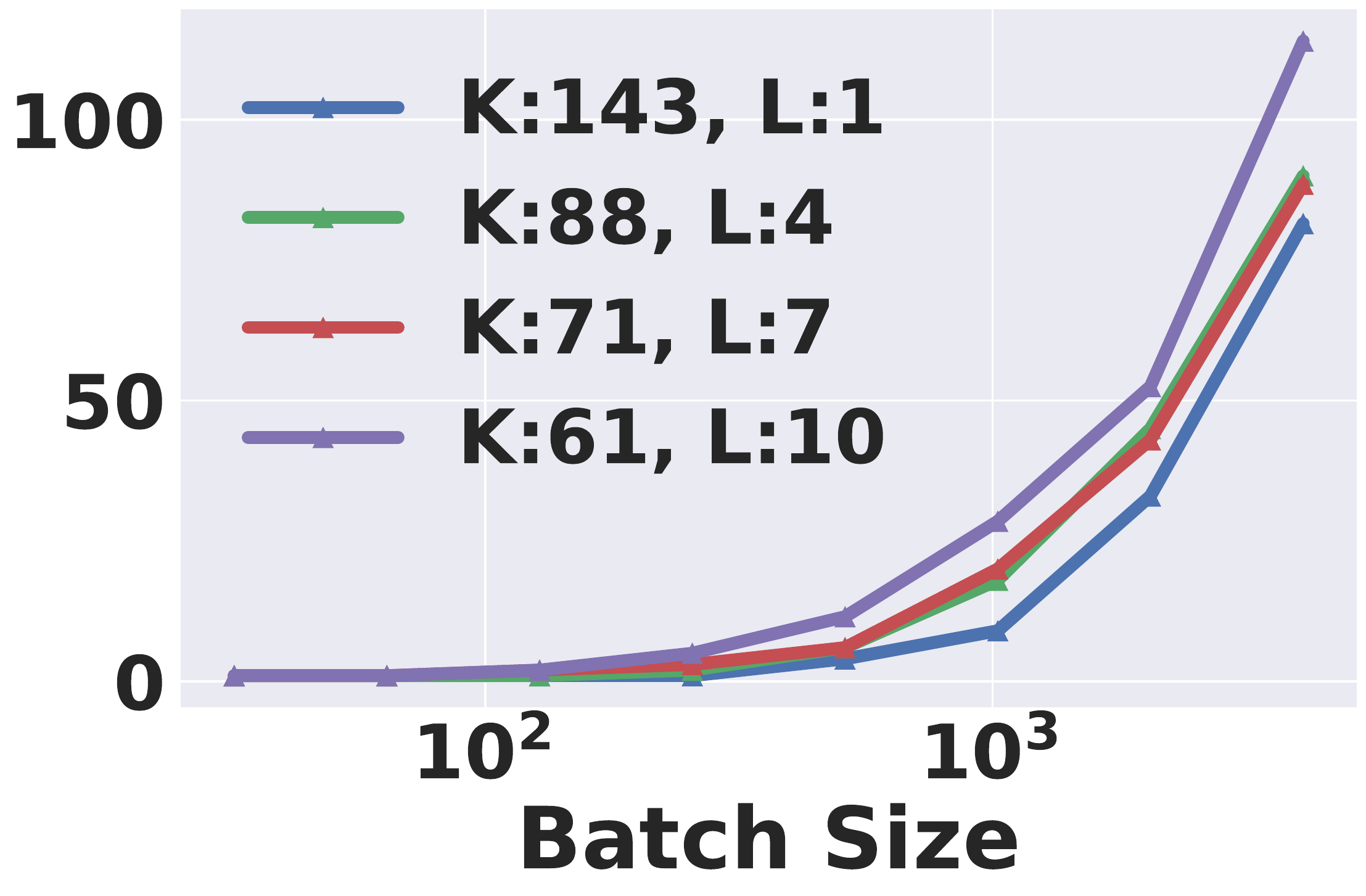}}
	\subfigure[Cifar10, ResNet18, FC]{\includegraphics[width=0.245\linewidth]{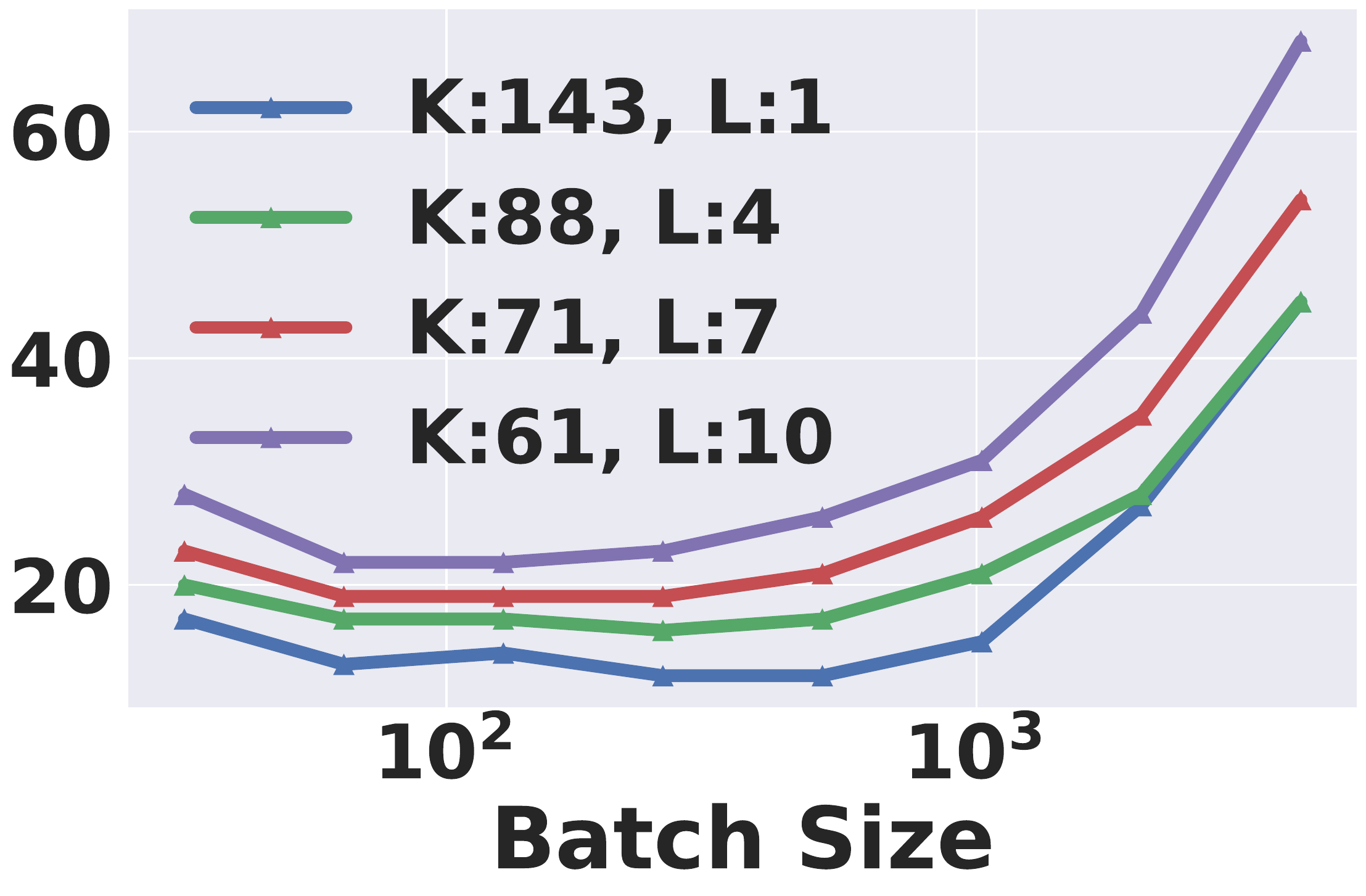}}
	\subfigure[Cifar10, ResNet18, Res]{\includegraphics[width=0.245\linewidth]{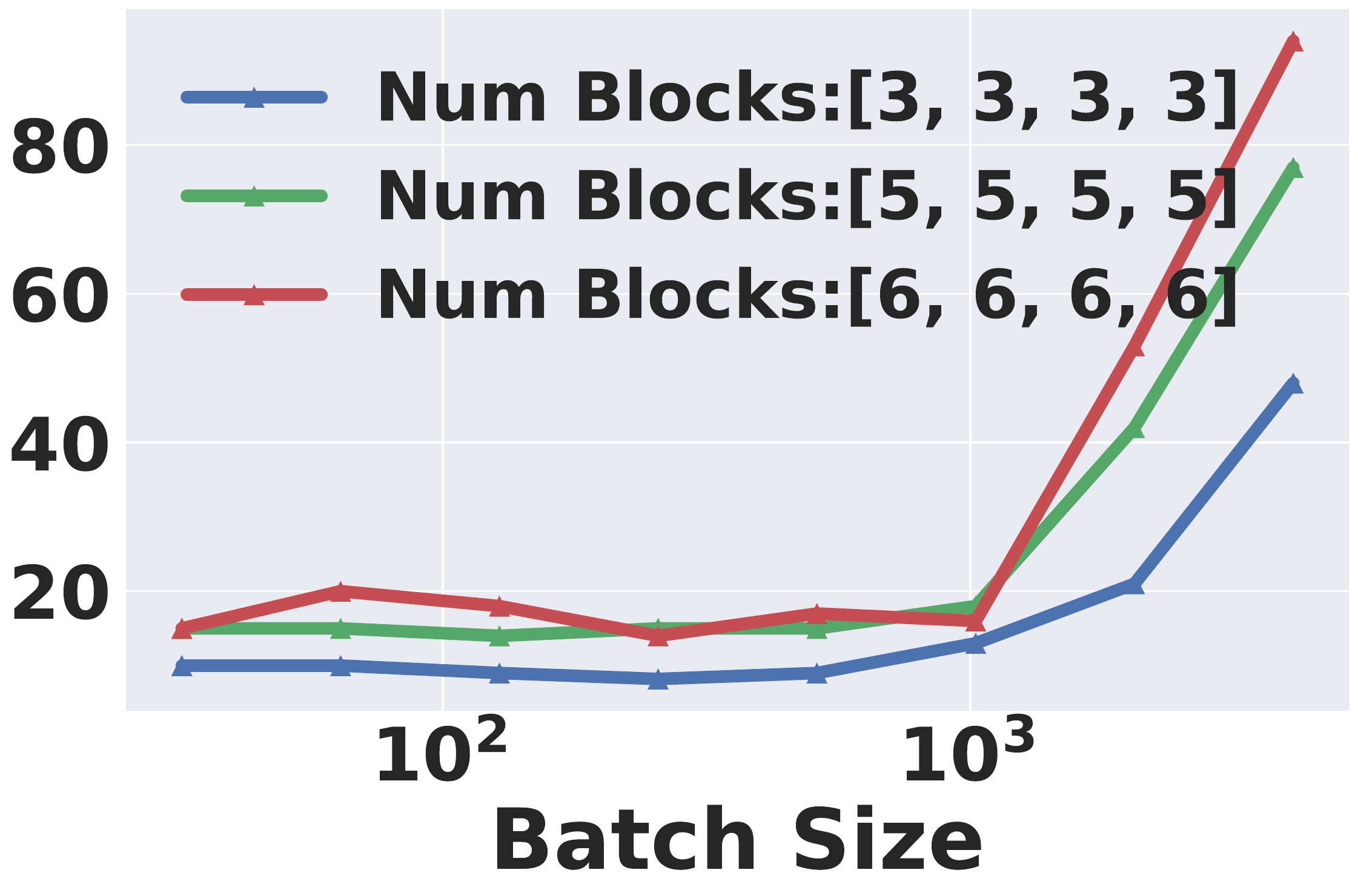}}
	\subfigure[Cifar10, ResNet34, Res]{\includegraphics[width=0.245\linewidth]{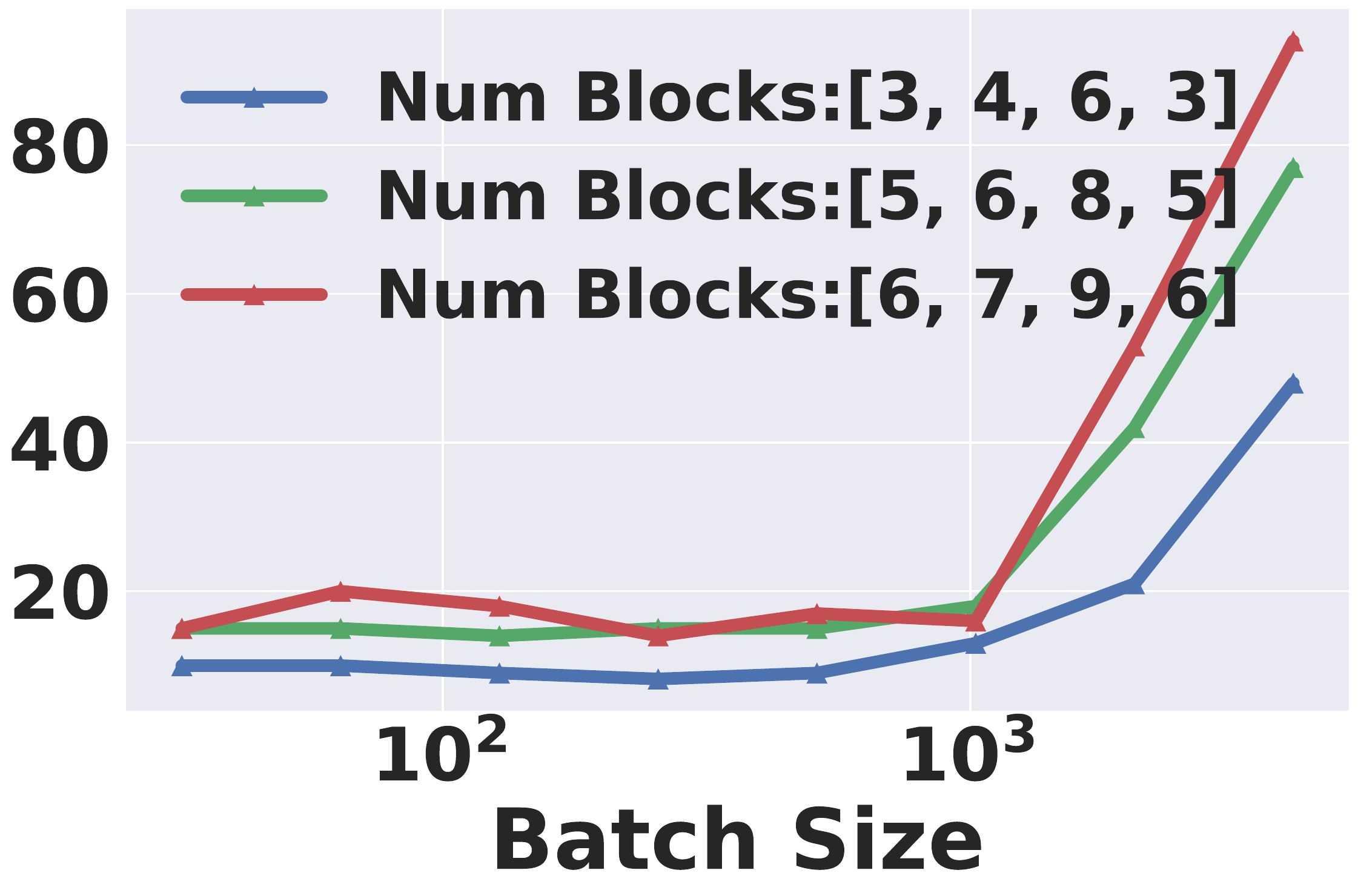}}
		\vspace{-0.3cm}
	\caption{\small Number of epochs needed to converge to the same loss / accuracy given in Table \ref{Tab:DataStat}. $K$ represents width, and $L$ depth. 
		In (f) We fix the residual blocks of ResNet 18 and only vary the fully-connected parts.
		In (g) and (h), we fix the fully connected layers and vary the residual blocks of ResNet 18 and ResNet 34.
	}
	\label{fig-trend}
\end{figure*}
\begin{figure*}[t]
	\centering
	\subfigure[Synthetic, Linear FC]{\includegraphics[width=0.245\linewidth]{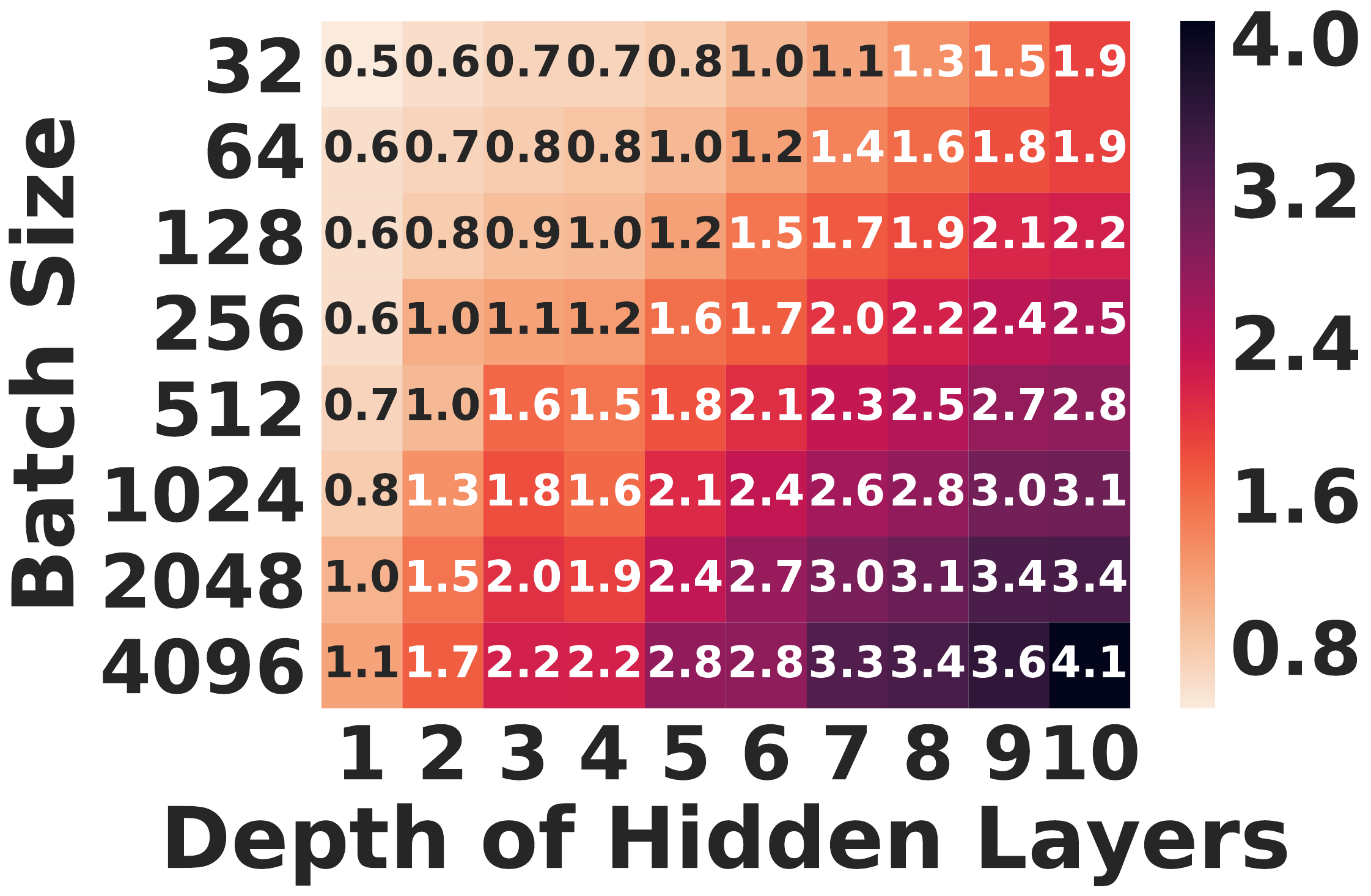}}
	\subfigure[MNIST, Linear FC]{\includegraphics[width=0.245\linewidth]{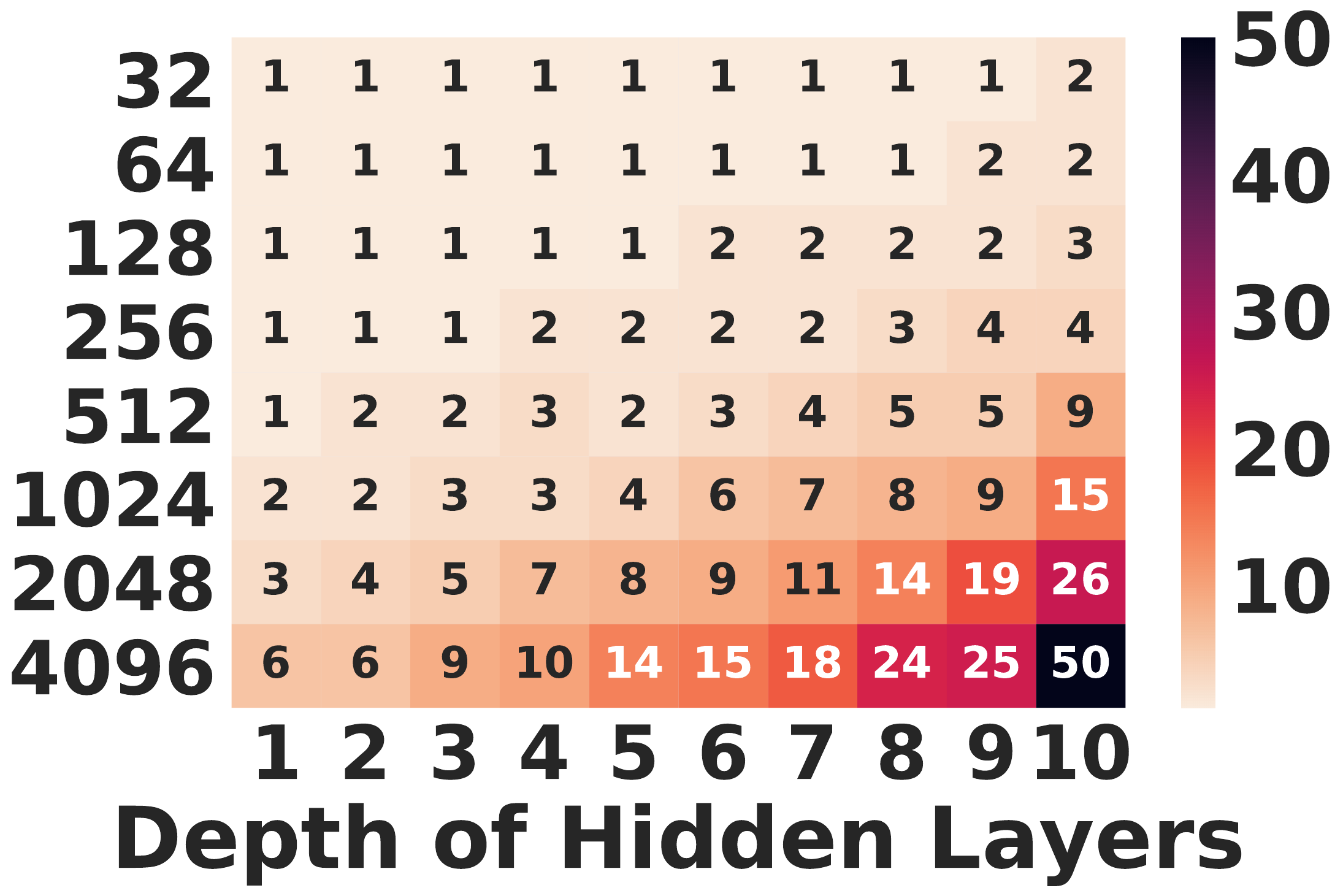}}
	\subfigure[EMNIST, FC]{\includegraphics[width=0.245\linewidth]{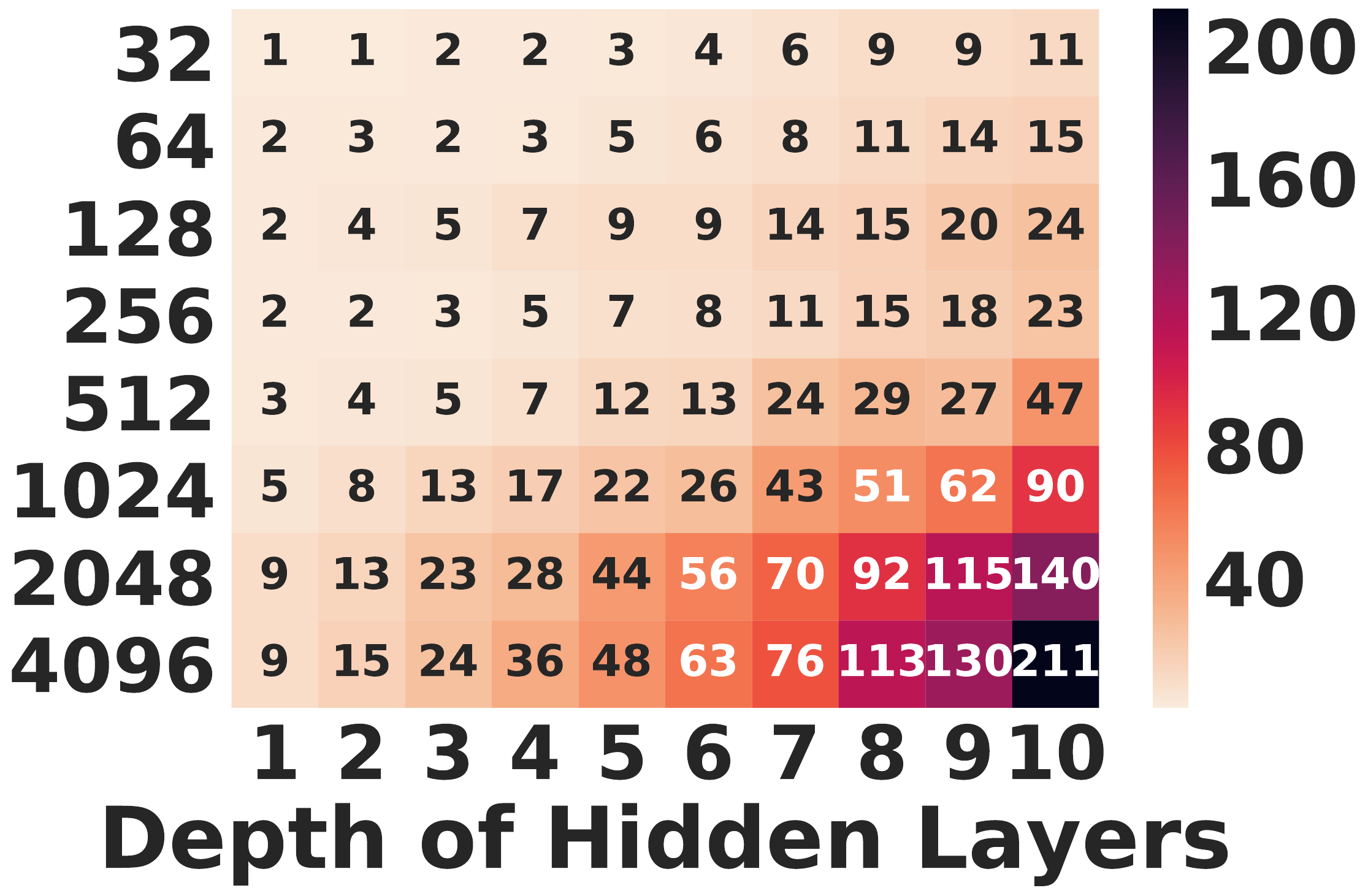}}
	\subfigure[Gisette, FC]{\includegraphics[width=0.245\linewidth]{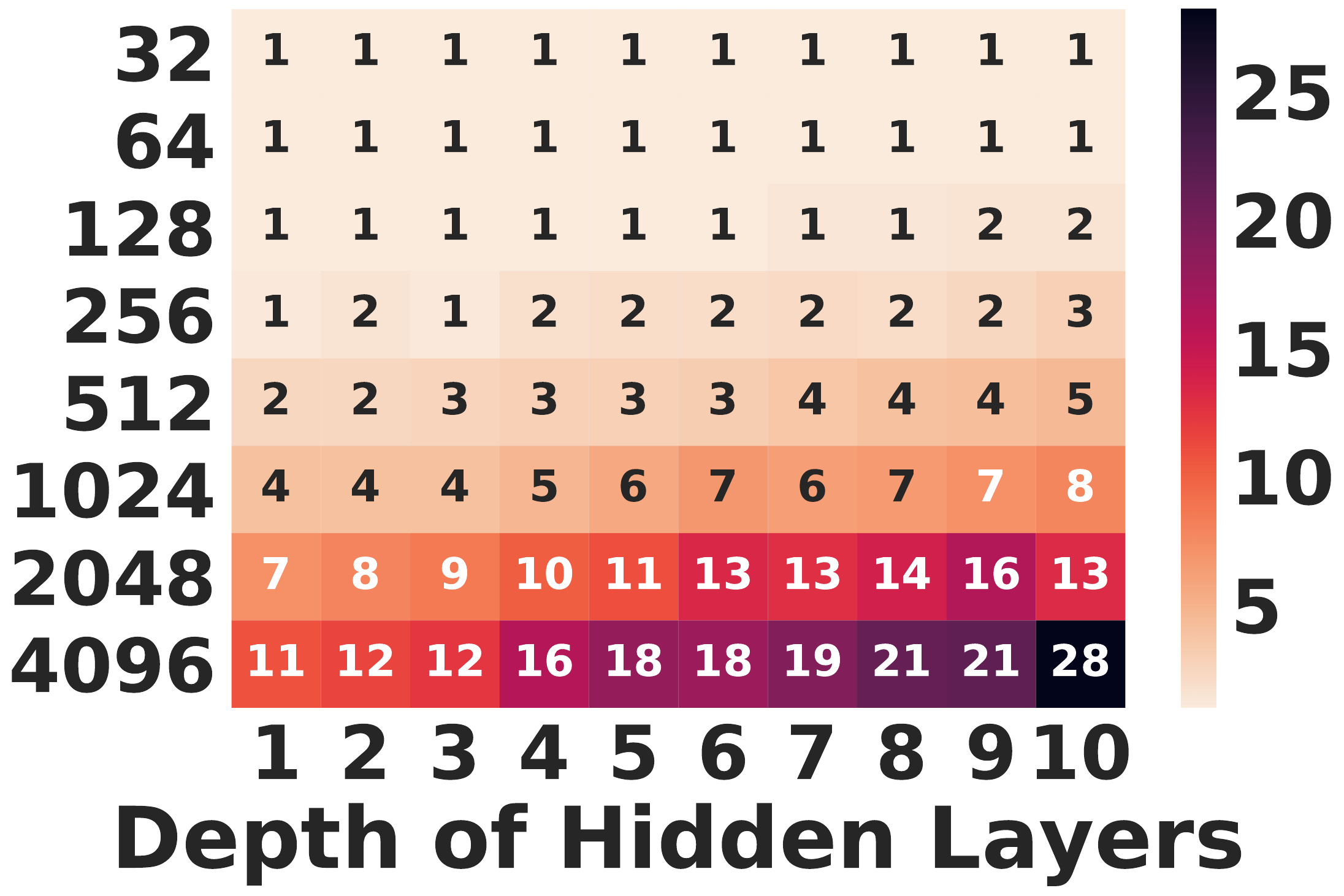}}
	\\
	\subfigure[MNIST,  LeNet]{\includegraphics[width=0.245\linewidth]{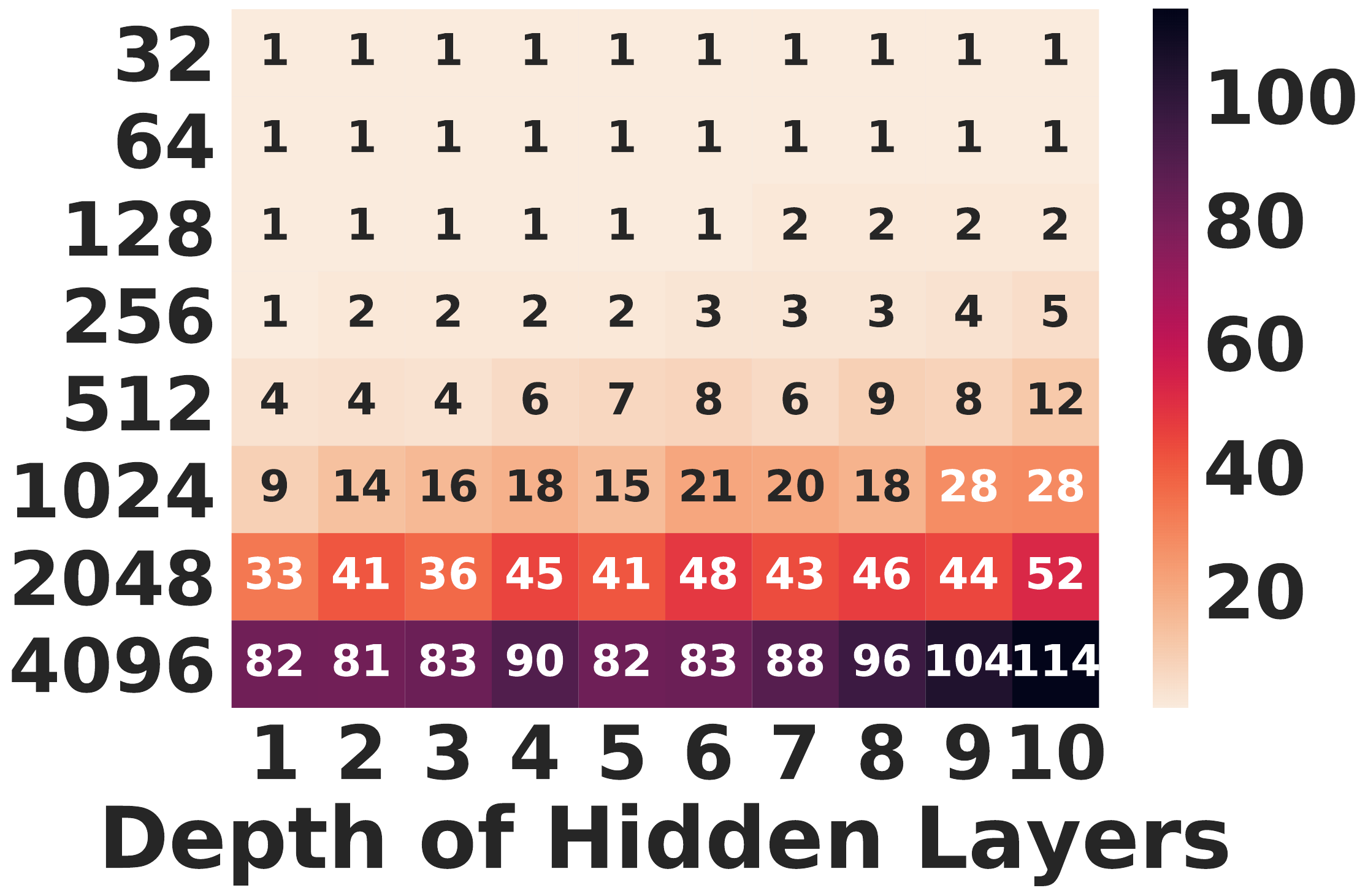}}
	\subfigure[Cifar10, ResNet18, FC]{\includegraphics[width=0.245\linewidth]{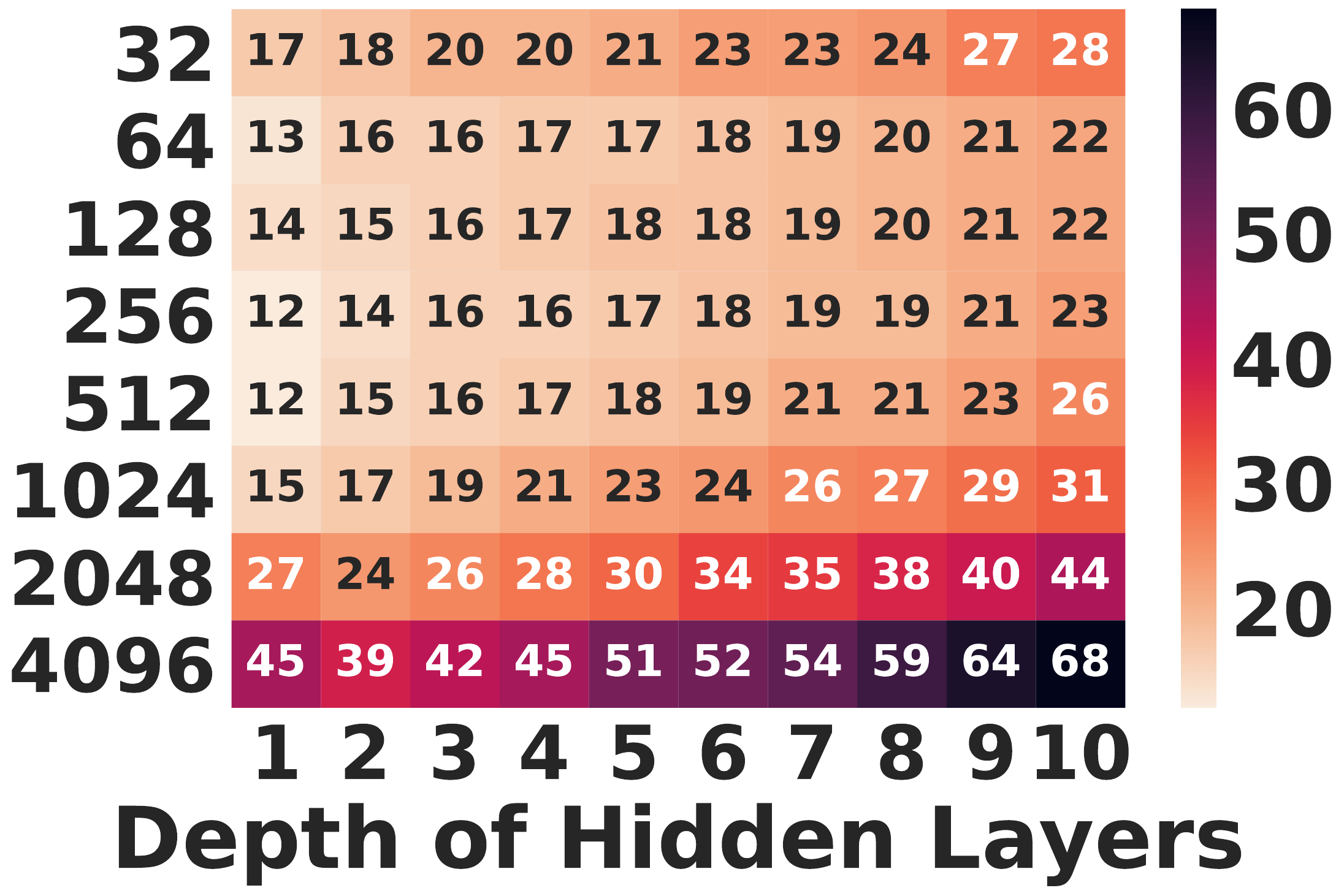}}
	\subfigure[Cifar10, ResNet18, Res]{\includegraphics[width=0.245\linewidth]{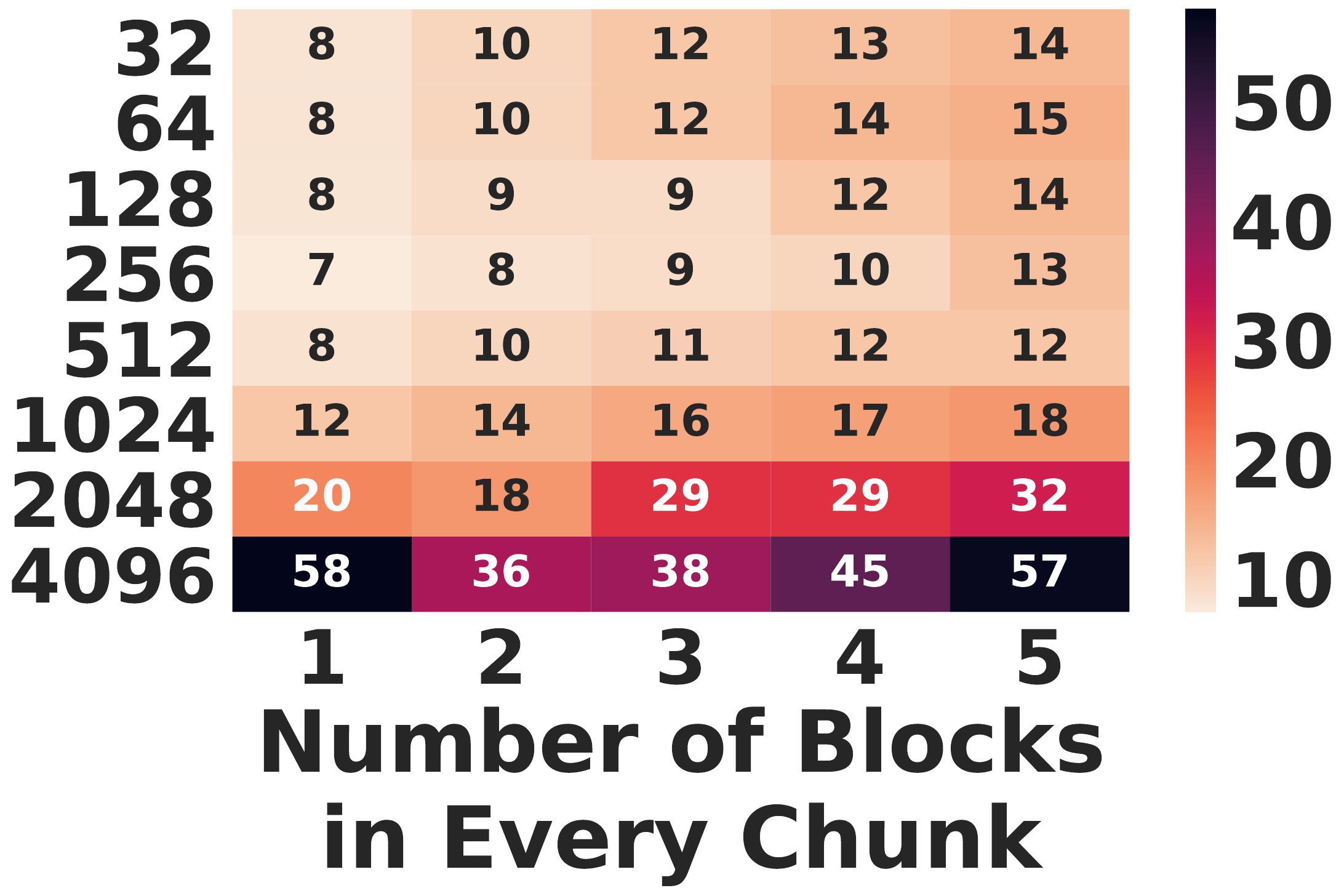}}
	\subfigure[Cifar10, ResNet34, Res]{\includegraphics[width=0.245\linewidth]{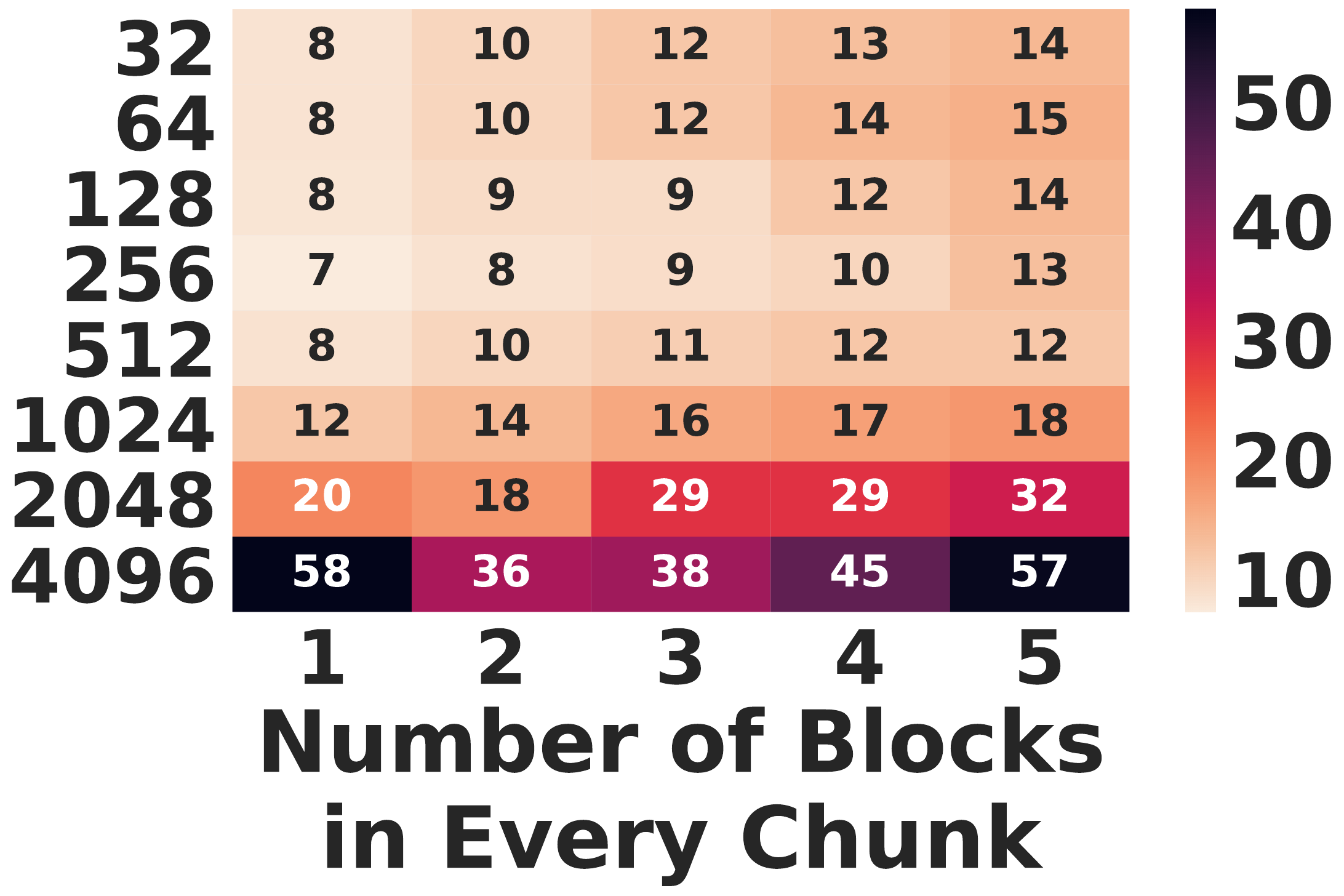}}
	\caption{\small Heatmap on number of epochs
		needed to converge to loss / accuracy defined in Table \ref{Tab:DataStat}. 
		We report the $\log_{10}$ of the epochs for (a) and the real epochs for the others.
		}
	\label{fig-heatmaps}
\end{figure*}

Next, we study the number of epochs needed to converge when different batch sizes are used for real-world datasets.
First, for almost all network architectures, there exists a batch size threshold, such that using a batch size larger than this, requires more epochs for convergence, consistent with the observations in \cite{GradientDiversity2018AISTAT}.
For example, in Figure \ref{fig-trend}(b), when the batch size is smaller than 256, the FC network 
with width $K=17$ and depth $L=10$ needs a small number (2 to 3) of epochs to converge.
But when the batch size becomes larger than $256$, the number of epochs necessary for convergence increases significantly, \eg it takes 50 epochs to converge when batch size is $4096$. Moreover, we observe that this the threshold increases as width increases.
Again as shown in Figure \ref{fig-trend}(b), the batch-size threshold for the FC network with $L=10$ is 256, but goes up to 1024 with $L=1$.
Furthermore, when using the same large batch size, wider networks tend to require fewer epochs to converge than the deeper ones.
In Figure \ref{fig-trend}(c), for instance, using the same batch size of 4096, the required epochs to converge decreases from 211 to 9 as width $K$ increases from 17 to 21. 
Those trends are similar for all FC networks we used in the experiments.

When it comes to ResNets and LeNet, the trends are not always as sharp.
This is expected since our theoretical analysis does not cover such cases, but the main trend  can still be observed.
For example, as shown in Figures \ref{fig-trend}(e) and  \ref{fig-trend}(f), for a fixed batch size, increasing the width almost always leads to a decrease in number of epochs for convergence. 
Figure \ref{fig-heatmaps}, depicts the exact number of epochs to converge for each network architecture, and plots them as a heatmap.
It is interesting to see that for ResNet, there is a small fraction of cases where increase of depth can also reduce the number of epochs for convergence.

In many practical applications, only a reasonable and limited number of data passes is performed due to time and resources constraints. Thus,
we also study how the structure of a network affects the largest possible batch size to converge within a fixed number of epochs/data passes to a pre-specified accuracy. 
As shown in Figure~\ref{fig-largest-batchsizes}, neural networks with larger width $K$ usually allow much larger batch sizes to converge within a small, pre-set number of total epochs. 
This is especially beneficial in the scenarios of large-scale distributed learning, since increasing the batch size can result in more speedup gains due to a reduction in the total amount of communication.
Finally, we should note that the largest batch size differs among different networks, as well as different datasets. This is because gradient diversity is both data-dependent and model-dependent.   

\begin{figure*}[t]
	\centering
	\subfigure[Synthetic, Linear FC]{\includegraphics[width=0.245\linewidth]{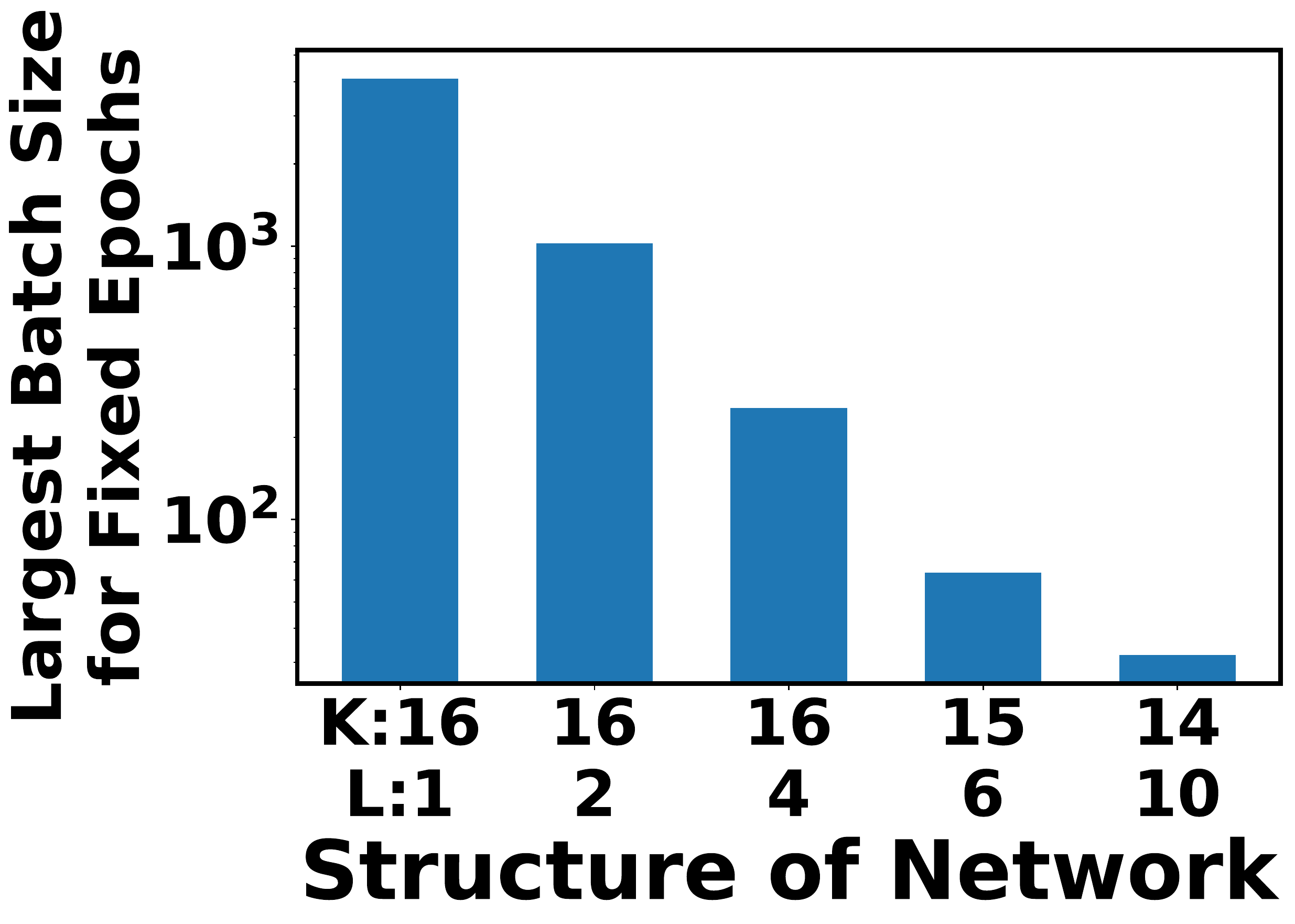}}
	\subfigure[MNIST, FC]{\includegraphics[width=0.245\linewidth]{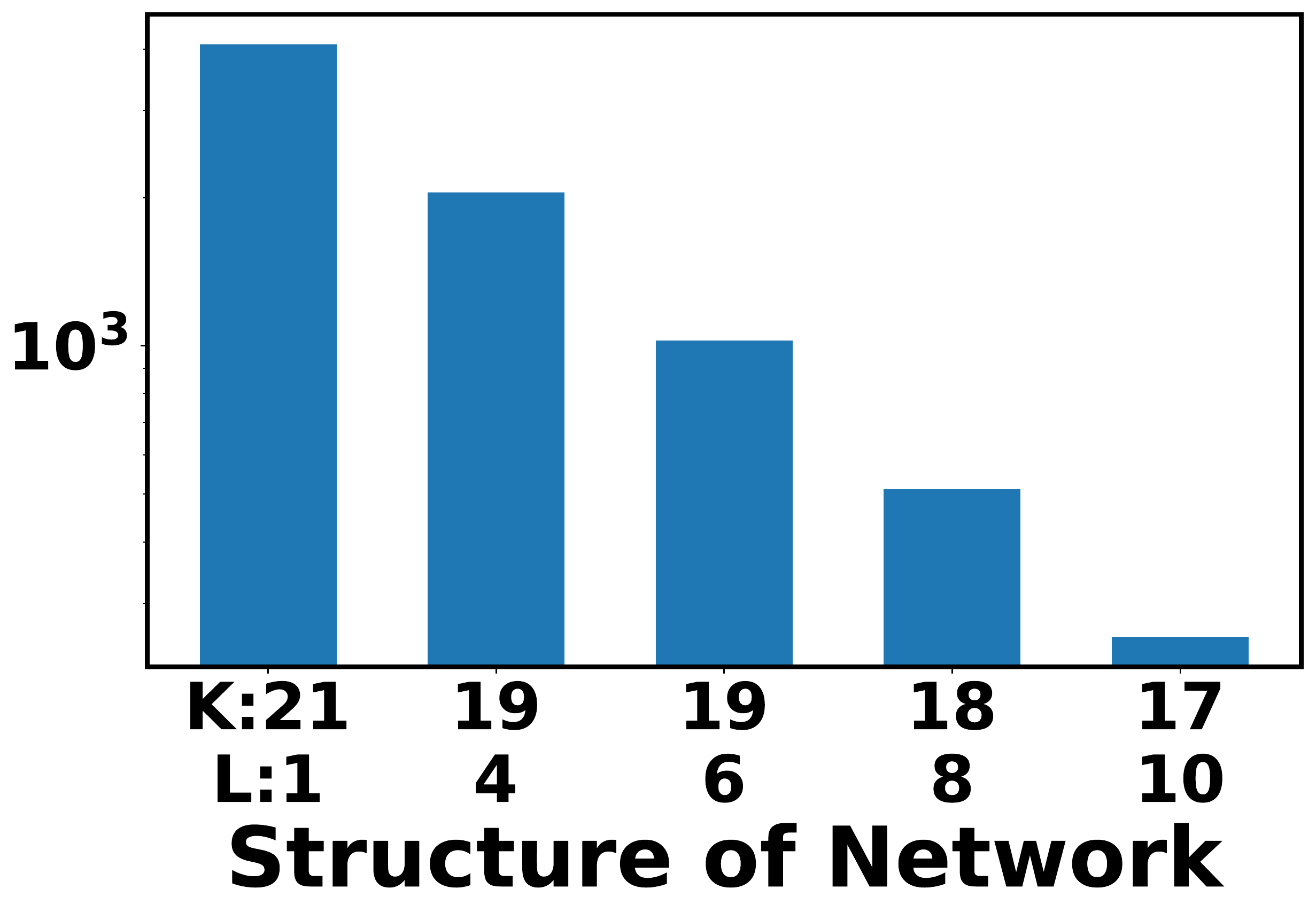}}
	\subfigure[EMNIST, FC]{\includegraphics[width=0.245\linewidth]{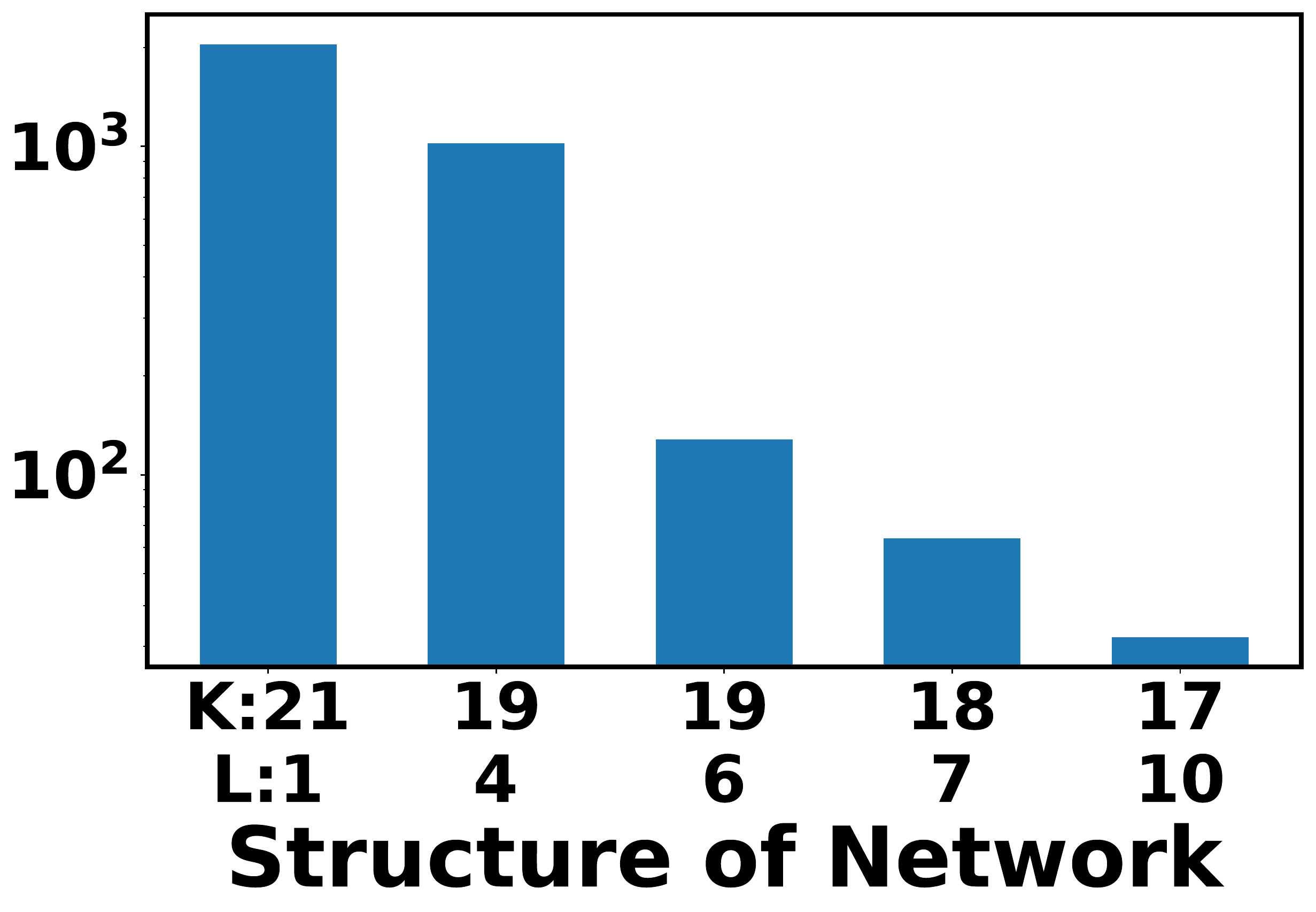}}
	\subfigure[Gisette, FC]{\includegraphics[width=0.245\linewidth]{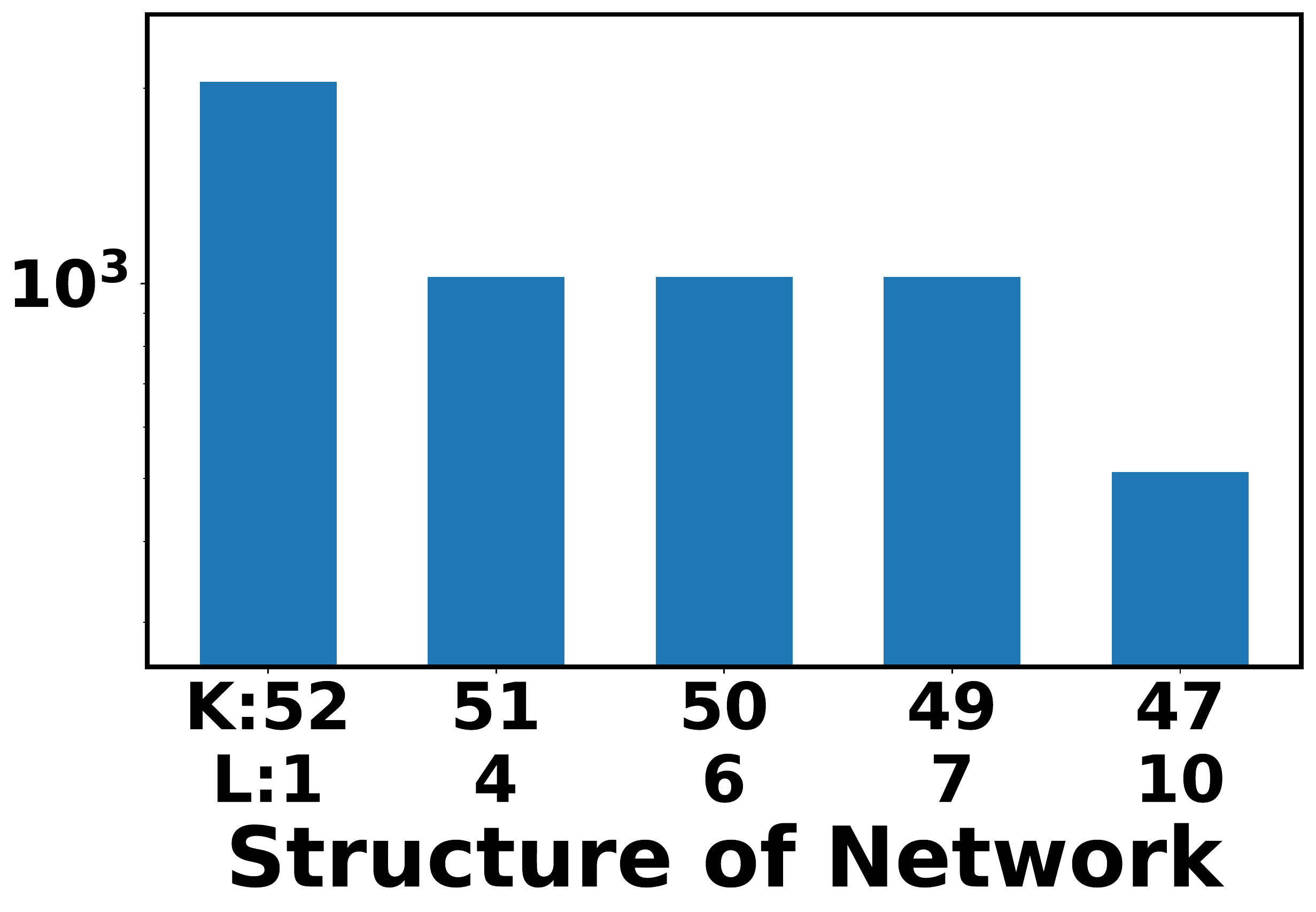}}
	\\
	\subfigure[MNIST on LeNet]{\includegraphics[width=0.245\linewidth]{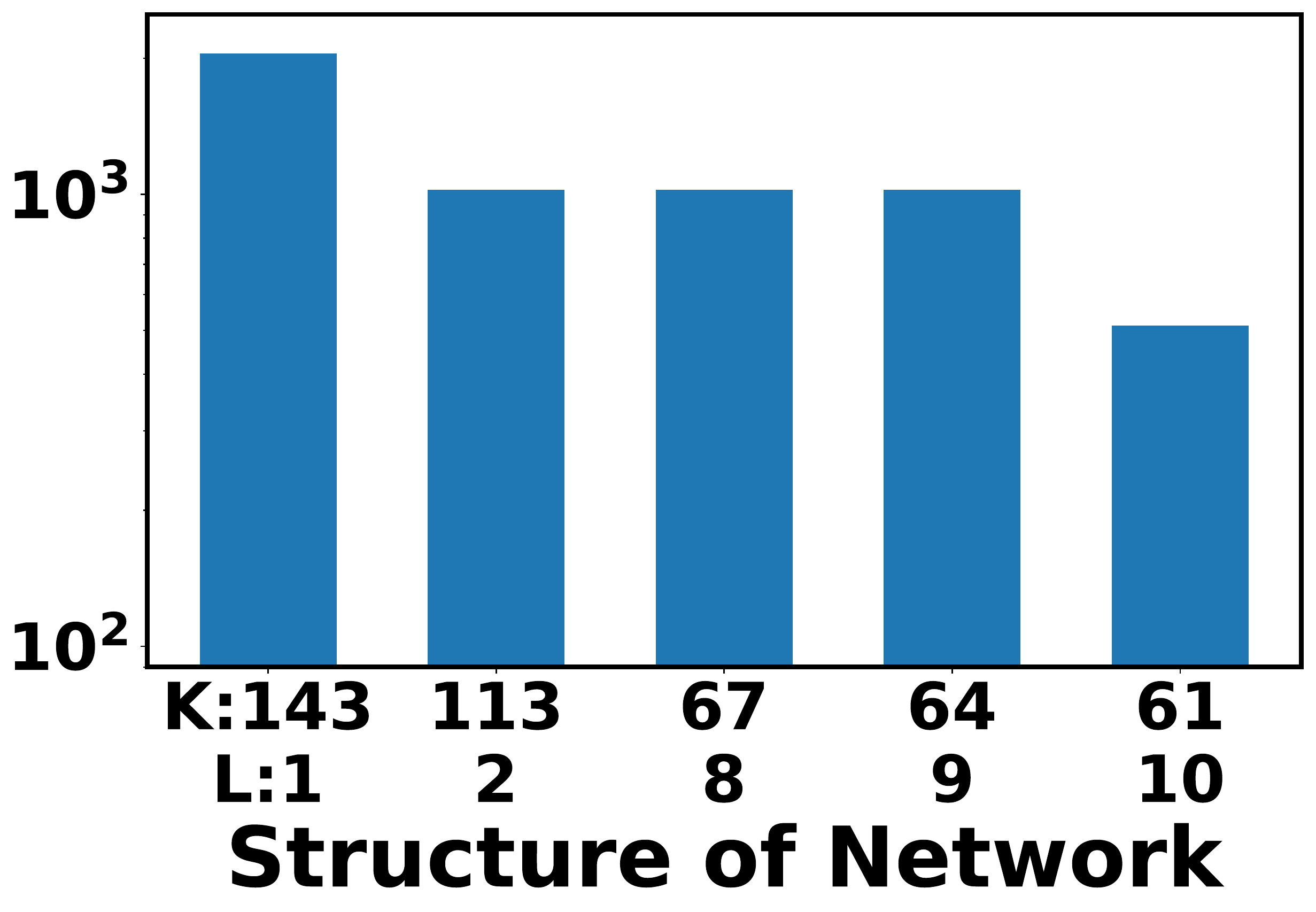}}
	\subfigure[Cifar10, ResNet18, FC]{\includegraphics[width=0.245\linewidth]{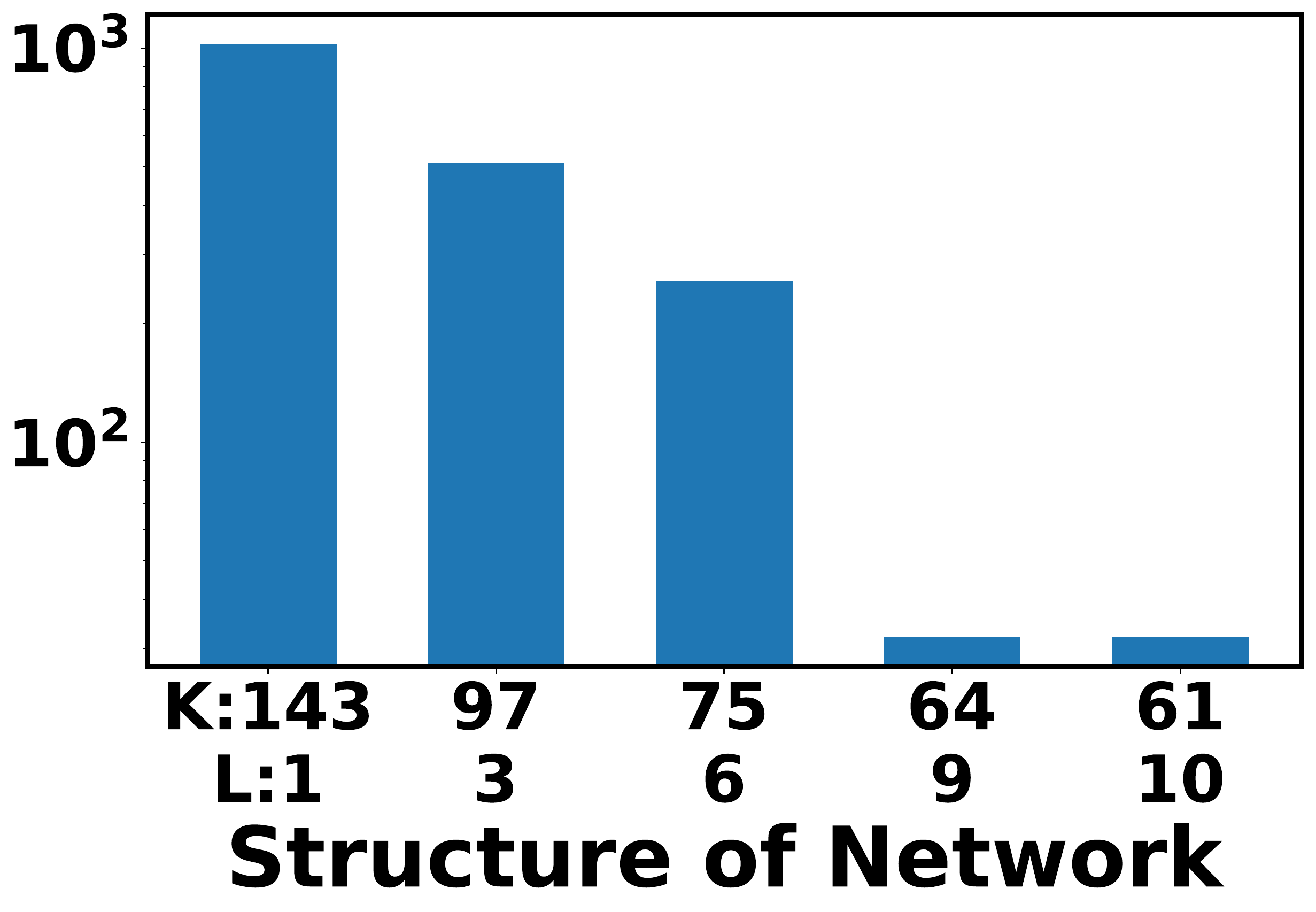}}
	\subfigure[Cifar10, ResNet18, Res]{\includegraphics[width=0.245\linewidth]{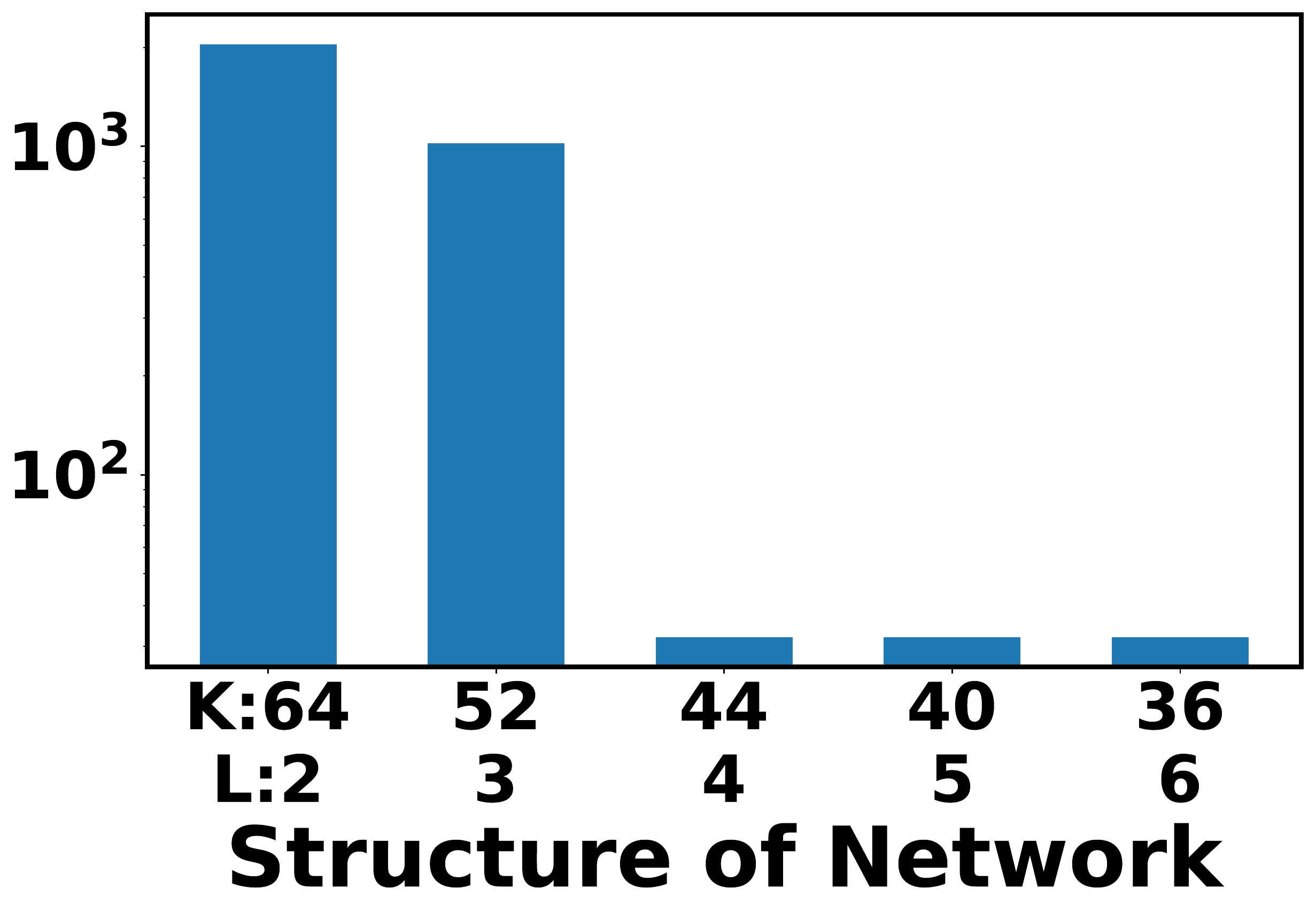}}
	\subfigure[Cifar10, ResNet34, Res]{\includegraphics[width=0.245\linewidth]{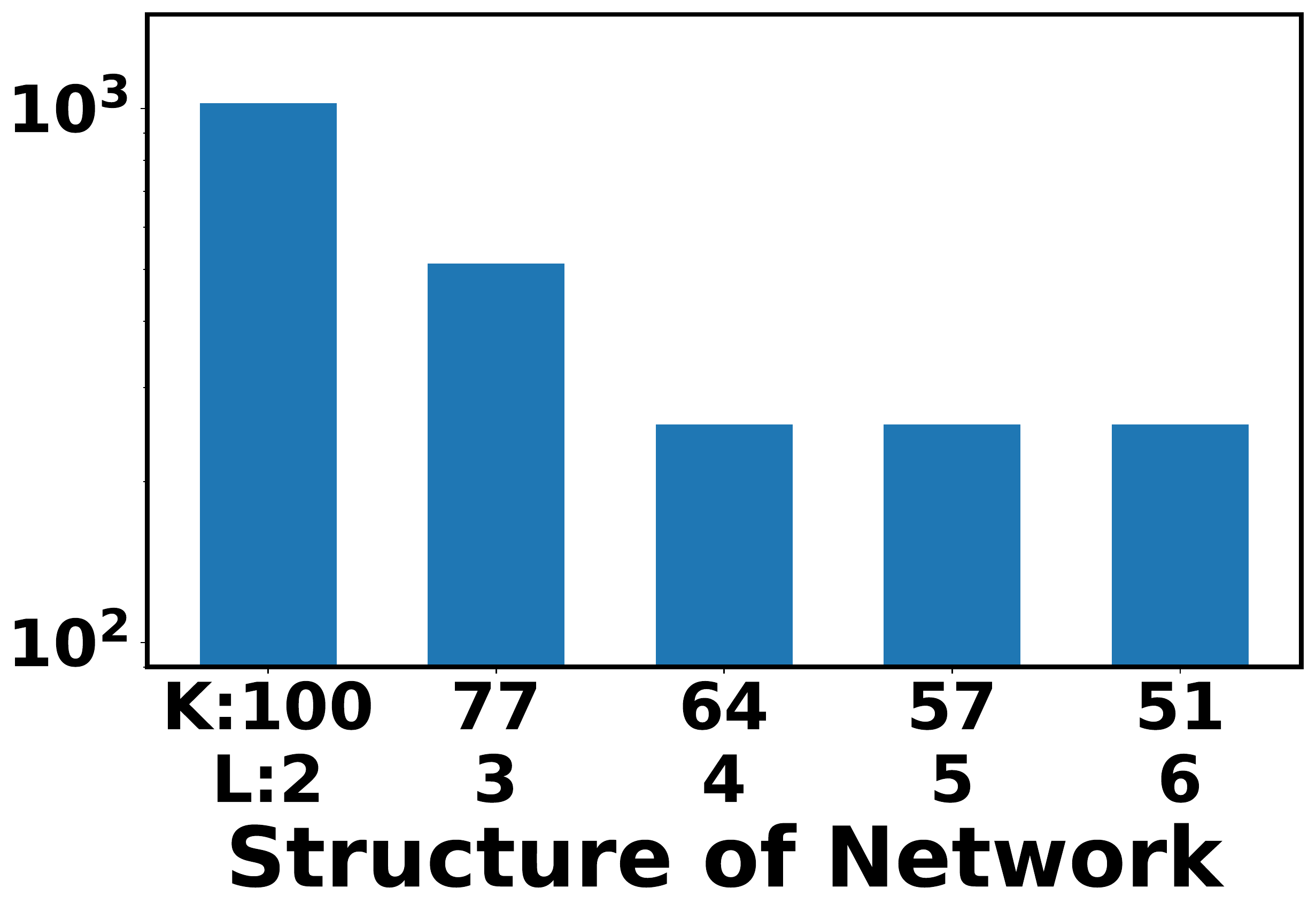}}
	\caption{\small Largest possible batch size to converge within a fixed number of epochs. 
		}
	\label{fig-largest-batchsizes}
\end{figure*}

\section{Conclusion}
\label{sec:conclusion}
In this paper, we study how the structure of a neural network affects the performance of large-batch training.
Through the lens of gradient diversity, we quantitatively connect a network's amenability to larger batches during training with its depth and width. 
Extensive experimental results, along with theoretical analysis, demonstrate that for a large class of neural networks, increasing width leads to larger gradient diversity and thus allows for a larger batch training that is always beneficial for distributed computation.

In the future,  we plan to explore how a particular structure, \eg convolutional filters, residual blocks, etc, affects gradient diversity.
From a practical perspective, we argue that it is important to consider the architecture of a network with regards to its amenability for speedups in a distributed setting. Hence, we plan to explore how one can fine-tune a network so that large-batch training is enabled, and communication bottlenecks are minimized.
\section*{Acknowledgement}\label{Sec:Aknowledge}
This work was supported in part by a gift from Google and  AWS Cloud Credits for Research from Amazon. We thank Jeffrey Naughton for  invaluable discussions.

\bibliography{ParallelismDNN}
\bibliographystyle{unsrt}

\newpage
\appendix

\section{Proofs for 2-Layer LNNs}

We will start by proving the following proposition.

\begin{proposition} \label{thm:2LNN:exp} 
Assume that all weight values and the data points are independent random variable. And Further assume that their k-th order moments are bounded when $k\leq 4$. Within a weight matrix, all entries have the same moment value. All data points also have the same moment value. Then for any pair $i,j \in \{1, \dots, n\}$: 
\begin{equation*}
E( \inp{\nabla f_i}{\nabla f_j} )=
 \begin{cases}
 \Theta(K^2d^2 ) & \text{if } i = j \\ 	
 \Theta(Kd (K+d) ) & \text{if } i \neq j 
\end{cases}
\end{equation*}	 
\end{proposition}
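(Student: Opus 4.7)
The plan is to compute $\mathbb{E}[\inp{\nabla f_i}{\nabla f_j}]$ for a 2-layer LNN by direct expansion. Writing $e_i := (W_2 W_1 - W_2^* W_1^*)\vecx_i$, one has $\nabla_{W_1} f_i = W_2^\top e_i \vecx_i^\top$ and $\nabla_{W_2} f_i = e_i (W_1 \vecx_i)^\top$, so
\[
\inp{\nabla f_i}{\nabla f_j} = e_i e_j \bigl(\|W_2\|^2 \inp{\vecx_i}{\vecx_j} + \vecx_i^\top W_1^\top W_1 \vecx_j\bigr).
\]
Substituting the definitions of $e_i, e_j$ makes the right-hand side a polynomial of total degree at most $8$ in the mutually independent entries of $W_1, W_2, W_1^*, W_2^*, \vecx_i, \vecx_j$. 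By independence the expectation of each monomial factorizes into a product of per-entry moments, each of which lies in $[c_1, c_2]$ by hypothesis and depends only on the matrix or vector it comes from.

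I would then reorganize the polynomial by index-contraction patterns: each monomial is identified with a partition of its multi-indices specifying which coincide, and summing over valid index assignments yields a polynomial in $K$ and $d$ whose degree equals the number of distinct indices. The crucial structural observation is the identity $e_i e_j = \hat{y}_i \hat{y}_j - \hat{y}_i y_j - y_i \hat{y}_j + y_i y_j$ with $\hat{y}_i := W_2 W_1 \vecx_i$ and $y_i := W_2^* W_1^* \vecx_i$. Since $\hat{y}$ and $y$ have identical marginal distributions, the fully unrestricted contraction (all indices distinct across factors) produces the same leading contribution of order $\Theta(K^3 d^3)$ from each of the four signed terms, which cancel exactly. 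The surviving leading contributions come from patterns with at least one coincident index pair; each coincidence costs one factor of $K$ or $d$ while contributing a variance-like factor such as $m_2 - \mu^2$ that is bounded below by a positive constant under the moment hypotheses.

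Collecting the surviving patterns yields the two scalings. For $i = j$ the two $\vecx$-factors coincide, allowing self-contractions up to the fourth power of individual $\vecx_i$-entries; the dominant surviving pattern loses only one factor of $K$ and one factor of $d$ relative to the cancelled leading term, yielding $\Theta(K^2 d^2)$. For $i \neq j$ the two $\vecx$-factors are independent, so the surviving contractions must pair indices within the $W$-factors or within the $\vecx$-factors themselves; two pattern families dominate, one pairing two $W_2$-row indices (contributing $\Theta(K^2 d)$) and one pairing two $\vecx$-coordinate indices (contributing $\Theta(K d^2)$), summing to $\Theta(Kd(K+d))$.

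The main obstacle is the combinatorial bookkeeping. With up to eight indexed factors per monomial, four sign-alternating sub-products of $\hat{y}, y$, and two separate bilinear operators ($\|W_2\|^2 \inp{\vecx_i}{\vecx_j}$ and $\vecx_i^\top W_1^\top W_1 \vecx_j$), the enumeration of contraction patterns is extensive, and one must verify both the exact cancellation of the $\Theta(K^3 d^3)$ leading terms and the strict positivity of the surviving coefficients so that the lower bound in $\Theta$ (rather than merely an upper bound in $O$) holds. A convenient organizing device is to parametrize each contraction by the bipartite graph of coincidence relations among $W_2$-, $W_1$-, and $\vecx$-indices, then track the moment-product weight and the index-assignment count for each graph and sum the results.
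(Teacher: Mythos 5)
Your structural setup is right and matches the paper's: the identity $\inp{\nabla f_i}{\nabla f_j} = e_i e_j\bigl(\norms{W_2}^2\innerps{\vecx_i}{\vecx_j} + \vecx_i^\T W_1^\T W_1 \vecx_j\bigr)$ is exactly the split into the $W_1$-block and $W_2$-block that the paper's four lemmas compute, and the two dominant pattern families you name for $i\neq j$ (a $\Theta(K^2d)$ family from the $W_1$-block and a $\Theta(K^2d+Kd^2)$ family from the $W_2$-block) are precisely what Lemmas~\ref{lem:2LNN:exp3} and~\ref{lem:2LNN:exp4} produce after summing over the $Kd$ and $K$ entries respectively. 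But as written the argument has a genuine gap: the two load-bearing claims --- exact cancellation of the fully unrestricted contraction across the four signed terms $\hat{y}_i\hat{y}_j - \hat{y}_i y_j - y_i\hat{y}_j + y_iy_j$, and strict positivity of the coefficients of the surviving patterns --- are asserted rather than established, and they are where essentially all of the work lives. The cancellation in particular is delicate because the four terms are \emph{not} symmetric: $\hat{y}_i\hat{y}_j$ shares the variables $W_1,W_2$ with the bilinear factor while $y_iy_j$ does not, so the leading contributions do not obviously coincide term-by-term; this must be checked, not inferred from the equality of marginal distributions. Your leading-order count also appears off: the block $e_ie_j\,\vecx_i^\T W_1^\T W_1\vecx_j$ has $K^3d^4$ unrestricted index assignments, not $K^3d^3$, which matters if you intend to argue that everything above a certain order cancels.

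The paper avoids the cancellation issue entirely: it (implicitly) exploits vanishing odd moments, so that all cross terms between starred and unstarred products, and all off-diagonal terms inside squared sums such as $(W_{1,s,:}\vecx_i)^2=\sum_t W_{1,s,t}^2 x_{i,t}^2$, are identically zero rather than merely cancelling at leading order. It then evaluates each surviving entry expectation in closed form in the moments $M_{1,2}, M_{2,2}, M_{2,4}, M_{x,2}, M_{x,4}$ (Lemmas~\ref{lem:2LNN:exp1}--\ref{lem:2LNN:exp4}), which yields the positive leading constants for free. If you want to carry out your more general nonzero-mean version, you need to (i) verify the claimed cancellation explicitly for each of the two bilinear blocks, and (ii) exhibit at least one surviving contraction pattern in each regime whose coefficient is a product of second-moment (variance-like) factors bounded below by the hypothesis, to secure the $\Theta$ lower bound. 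Until those are done, what you have is a correct answer with a plausible but unverified route to it.
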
	

To prove the above proposition, we first define some notations. 
Let $M_{i,j}$ be the $j$-th moment of the entries of $W_i$, and $W_{x,j}$ be the $j$-th moment of the entries of data point $x_i$.

Let us prove the two cases separately. We first consider
the case where $i=j$. We can write the inner product as
\begin{align*}
E\left(||\nabla f_i||^2\right) = 
		\sum_{p=1}^{K}\sum_{q=1}^{d}E(|| \frac{\partial f_i}{\partial W_{1,p,q}} ||^2) +
		\sum_{q=1}^{K}E(|| \frac{\partial f_i}{\partial W_{2,1,q}} ||^2)
\end{align*}

We will show in Lemma~\ref{lem:2LNN:exp1} that the first expectation is $\Theta(Kd)$, and in
Lemma~\ref{lem:2LNN:exp2} that the second expectation is $\Theta(Kd^2)$. Plugging these
results into the above equation gives the desired result.

\begin{lemma} \label{lem:2LNN:exp1}
$E( || \frac{\partial f_i}{\partial W_{1,p,q}} ||^2  ) =  \Theta(Kd)$.
\end{lemma}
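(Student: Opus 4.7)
Since $f_i = \tfrac{1}{2} e_i^2$ with $e_i := \hat y_i - y_i = (W_2 W_1 - W_2^* W_1^*)\vecx_i$, the chain rule gives
\[
\frac{\partial f_i}{\partial W_{1,p,q}} = e_i \cdot W_{2,1,p} \cdot x_{i,q},
\]
so the lemma is equivalent to $E[e_i^2\, W_{2,1,p}^2\, x_{i,q}^2] = \Theta(Kd)$. The plan is to isolate the factors that share an index with $(p,q)$, and then invoke pairwise independence of the weight and data entries to reduce each expectation to a product of moments, each of which is $\Theta(1)$ by the hypothesis of Proposition~\ref{thm:2LNN:exp}.

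Concretely, I would decompose $e_i = A_p + R_p - C$, where $A_p := W_{2,1,p}\, u_p$ with $u_p := \sum_{q'=1}^{d} W_{1,p,q'}\, x_{i,q'}$ (the part of $\hat y_i$ that contains $W_{2,1,p}$), $R_p := \sum_{p' \neq p} W_{2,1,p'}\, u_{p'}$ (the remaining unstarred contribution, which is independent of $W_{2,1,p}$), and $C := W_2^* W_1^* \vecx_i = y_i$ (the starred part, independent of all unstarred weights). Expanding $e_i^2$ and multiplying by $W_{2,1,p}^2 x_{i,q}^2$ produces the six cross-terms
\[
A_p^2,\ R_p^2,\ C^2,\ 2 A_p R_p,\ -2 A_p C,\ -2 R_p C,
\]
each weighted by $W_{2,1,p}^2 x_{i,q}^2$. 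I would then take the expectation of each term separately.

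For each term the expectation factors into products of $k$-th moments of individual entries ($k\le 4$), via the independence of all weights and data coordinates. A short case analysis on whether the running indices in $u_p, R_p,$ or $u^*_\cdot$ coincide with $q$ identifies the dominant contributions. Specifically:
\[
E[R_p^2\, W_{2,1,p}^2\, x_{i,q}^2]
 = M_{2,2} \cdot \sum_{p'\neq p} E[u_{p'}^2\, x_{i,q}^2]
 = M_{2,2}\cdot (K-1)\cdot M_{1,2}\bigl(M_{x,4} + (d-1) M_{x,2}^2\bigr) = \Theta(Kd),
\]
and an identical calculation with starred moments gives $E[C^2\, W_{2,1,p}^2\, x_{i,q}^2] = \Theta(Kd)$. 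The term $E[A_p^2\, W_{2,1,p}^2\, x_{i,q}^2] = M_{2,4}\, E[u_p^2\, x_{i,q}^2] = \Theta(d)$ is lower-order, and the three cross-terms $E[A_p R_p \cdots]$, $E[A_p C \cdots]$, $E[R_p C \cdots]$ are either lower-order or are cancelled by the $(A-A^*)$ structure of $e_i$: because starred and unstarred entries are independent and share moments, products such as $E[W_{2,1,p'}\, W^*_{2,1,p''}]$ reduce to $M_{2,1}^2$ and match exactly against the corresponding pieces of $E[R_p^2\cdots]$ and $E[C^2\cdots]$. Summing the surviving pieces yields $\Theta(Kd)$, with the lower bound guaranteed by the assumption that every relevant moment lies in $[c_1, c_2]$ with $c_1 > 0$.

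The main obstacle is the bookkeeping for the mixed terms in the non-centered regime: naively, quantities like $E[R_p^2]$ and $E[R_p C]$ can each scale like $\Theta(K^2 d)$, and so individually they threaten to overwhelm the target $\Theta(Kd)$ bound. The saving grace is that the matching-moment assumption on starred versus unstarred weights forces these inflated contributions to cancel in pairs when one expands $(A - A^*)(A - A^*)$, leaving only the ``variance-type'' residues that are $\Theta(Kd)$. Carrying out this cancellation carefully — tracking precisely which of the six cross-terms contribute which moments — is the only non-routine part of the argument.
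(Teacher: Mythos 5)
Your reduction to $E[e_i^2\,W_{2,1,p}^2\,x_{i,q}^2]$ and your diagonal computations are exactly what the paper does: keeping only the terms in which every independent entry appears to an even power gives $M_{2,2}\bigl((K-1)M_{2,2}+M_{2,4}\bigr)M_{1,2}\bigl(M_{x,4}+(d-1)M_{x,2}^2\bigr)$ plus the analogous starred piece, which is $\Theta(Kd)$ (you drop one factor of $M_{2,2}$ from $W_{2,1,p}^2$ in the $R_p^2$ line, but that only changes constants). The genuine gap is in the part you yourself flag as the non-routine step: the claim that in the non-centered regime the mixed terms cancel down to $\Theta(Kd)$ ``variance-type residues.'' First, the hypotheses do not force the starred and unstarred moments to agree --- the paper carries $M_{1,2}$ and $M_{1^*,2}$ as distinct quantities --- so the leading mean-product terms
\[
\bigl[(K-1)^2\mu_2^2\mu_1^2+K^2(\mu_2^*\mu_1^*)^2-2K(K-1)\mu_2\mu_1\mu_2^*\mu_1^*\bigr]\,E\Bigl[\bigl(\textstyle\sum_t x_{i,t}\bigr)^2x_{i,q}^2\Bigr]
\]
need not collapse and can be $\Theta(K^2d^2)$. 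Second, even if one grants $\mu_2\mu_1=\mu_2^*\mu_1^*$, the residue after the cancellation still contains
\[
(K-1)\,\operatorname{Var}(W_{2,1,\cdot})\,\mu_1^2\;E\Bigl[\bigl(\textstyle\sum_t x_{i,t}\bigr)^2x_{i,q}^2\Bigr]=\Theta(Kd^2)
\]
whenever $E[x_{i,t}]\neq 0$, because $E[(\sum_t x_{i,t})^2x_{i,q}^2]=\Theta(d^2)$ in that case. So the cancellation does not rescue the $\Theta(Kd)$ bound; the statement is simply false for non-centered data and weights.

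The resolution --- and what the paper's proof silently does when it discards the $W$--$W^*$ cross term and all off-diagonal $s\neq s'$, $t\neq t'$ terms --- is to assume the weight and data distributions are symmetric (all odd moments vanish; the literal ``all $k\le 4$ moments in $[c_1,c_2]$ with $c_1>0$'' in the theorem statement cannot be meant as written, since it would force a positive first moment). Under that reading every one of your six cross-terms contains some independent entry to an odd power and vanishes identically, no cancellation bookkeeping is needed, and only your diagonal pieces survive, giving $\Theta(Kd)$ with the lower bound from $c_1>0$ on the even moments. You should state that assumption explicitly rather than attempt the non-centered cancellation, which as written would not go through.
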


\begin{proof}
Note that $\frac{\partial f_i}{\partial W_{1,p,q}} = (\hat{y}_i - y_i) W_{2,1,p} x_{i,q} = (W_2 W_1 - W^*_2 W^*_1)x_i W_{2,1,p} x_{i,q}$. We have
		\begin{equation*}
		\begin{split}
	&	E(|| \frac{\partial f_i}{\partial W_{1,p,q}} ||^2) =  E\left(\left(W_2 W_1 - W^*_2 W^*_1\right)x_i W_{2,1,p} x_{i,q}\right)^2\\
		&=  E\left(\left(W_2 W_1 \right) x_i W_{2,1,p} x_{i,q}\right)^2 +   E\left(\left(W^*_2 W^*_1\right)x_i W_{2,1,p} x_{i,q}\right)^2 \\
		&=  E\left( \sum_{s=1}^{K} W_{2,1,s}^2 \left(W_{1,s,:} x_i\right)^2  W_{2,1,p}^2 x_{i,q}^2\right) + E\left( \sum_{s=1}^{K} {W_{2,1,s}^*}^2 \left(W^*_{1,s,:} x_i\right)^2  W_{2,1,p}^2 x_{i,q}^2\right) 
		\\
		&=  E\left( \sum_{s=1}^{K} W_{2,1,s}^2 \left( \sum_{t=1}^{d}W_{1,s,t}^2 x_{i,t}^2\right)  W_{2,1,p}^2 x_{i,q}^2\right) + E\left( \sum_{s=1}^{K} {W_{2,1,s}^*}^2 \left( \sum_{t=1}^{d}{W_{1,s,t}^*}^2 x_{i,t}^2    \right)  W_{2,1,p}^2 x_{i,q}^2\right) 
		\\
		&=  M_{1,2} \left( \left(K-1\right)M_{2,2}^2 + M_{2,4}\right) \left( \left(d-1\right)M_{x,2}^2 + M_{x,4} \right) +  M_{1^*,2} KM_{2^*,2}^2  \left( \left(d-1\right)M_{x,2}^2 + M_{x,4} \right) 
		\end{split}
		\end{equation*}	
This concludes the proof.		
\end{proof}

\begin{lemma}  \label{lem:2LNN:exp2}
$ E(|| \frac{\partial f_i}{\partial W_{2,1,q}} ||^2)  = \Theta(Kd^2)$.
\end{lemma}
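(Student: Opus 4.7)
The plan is to mirror the expansion carried out in Lemma~\ref{lem:2LNN:exp1}, adapted to an outer-layer derivative. First I would compute
\[
\frac{\partial f_i}{\partial W_{2,1,q}} \;=\; (\hat{y}_i - y_i)\cdot W_{1,q,:}\,x_i \;=\; (\hat{y}_i - y_i)\sum_{t=1}^{d} W_{1,q,t}\,x_{i,t},
\]
which differs from the inner-layer derivative in that a full row of $W_1$ (a sum of $d$ products) enters instead of a single product $W_{2,1,p}x_{i,q}$. Squaring and taking expectations, and reusing the same independence-of-starred-vs-unstarred observation that collapsed the cross term in Lemma~\ref{lem:2LNN:exp1}, I would reduce the computation to
\[
\mathbb{E}\!\left[\left\|\tfrac{\partial f_i}{\partial W_{2,1,q}}\right\|^2\right] \;=\; \mathbb{E}\!\left[\hat{y}_i^{\,2}(W_{1,q,:}x_i)^2\right] + \mathbb{E}\!\left[y_i^{\,2}(W_{1,q,:}x_i)^2\right].
\]

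Next I would expand each term in multi-index form, using that each entry of $W_1,W_2,W_1^*,W_2^*,x_i$ is independent so only monomials in which every atomic random variable appears to an even power have nonzero expectation. For the first summand, write $\hat{y}_i^{\,2}=\sum_{s,s'}W_{2,1,s}W_{2,1,s'}(W_{1,s,:}x_i)(W_{1,s',:}x_i)$; only $s=s'$ survives, leaving $\sum_s W_{2,1,s}^2(W_{1,s,:}x_i)^2(W_{1,q,:}x_i)^2$. I would then split on whether $s\ne q$ or $s=q$. For $s\ne q$ the two inner products involve \emph{disjoint} rows of $W_1$, so the contribution reduces to a product of $M_{2,2}$ with $\mathbb{E}[(W_{1,s,:}x_i)^2(W_{1,q,:}x_i)^2]$, which, after expansion in $t,u$, evaluates to $M_{1,2}^2\bigl(dM_{x,4}+d(d-1)M_{x,2}^2\bigr)=\Theta(d^2)$; summed over the $K-1$ values of $s$ this yields $\Theta(Kd^2)$. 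The degenerate case $s=q$ produces a fourth-moment expansion of a single row and contributes only $\Theta(d^2)$, absorbed into the leading order. The second summand is even cleaner: since $W_{1,s,:}^*$ is independent of $W_{1,q,:}$ for \emph{every} $s$, no $s$ is degenerate, and the same inner count $\Theta(d^2)$ holds uniformly, giving $K\cdot\Theta(d^2)=\Theta(Kd^2)$.

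Finally I would combine the two pieces into a closed-form polynomial in the moments $M_{1,j}, M_{2,j}, M_{1^*,j}, M_{2^*,j}, M_{x,j}$ for $j\le 4$, analogous to the expression at the end of Lemma~\ref{lem:2LNN:exp1}. Since every such moment lies in $[c_1,c_2]$ with $c_1>0$, the coefficient of $Kd^2$ is bounded away from both $0$ and $\infty$, delivering the two-sided $\Theta(Kd^2)$ estimate. The only real obstacle is disciplined index-bookkeeping: one must track which matching patterns among $\{s,s'\}$, $\{t,t'\}$, $\{u,u'\}$ are forced by vanishing odd moments versus which merely reduce multiplicity. This is a direct adaptation of the pattern already used in Lemma~\ref{lem:2LNN:exp1}; the one new feature is that $q$ here indexes an \emph{outer}-matrix row, so the single data entry $x_{i,q}$ present in the inner-layer derivative is replaced by the row-inner-product $\sum_t W_{1,q,t}x_{i,t}$, which is precisely what converts the leading order from $\Theta(Kd)$ to $\Theta(Kd^2)$.
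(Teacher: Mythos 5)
Your proposal follows essentially the same route as the paper's proof: the same splitting of $E[(\hat{y}_i-y_i)^2(W_{1,q,:}x_i)^2]$ into the unstarred and starred contributions via independence, the same diagonal collapse over $W_2$, the same case distinction $s\neq q$ versus $s=q$, and the same fourth-moment bookkeeping yielding $\Theta(d^2)$ per index $s$. Your accounting of the first summand as $K\cdot\Theta(d^2)=\Theta(Kd^2)$ is in fact slightly more careful than the paper's displayed ``combining the two cases'' line (which states $\Theta(d^2)$ for that sum, an apparent typo that does not affect the final $\Theta(Kd^2)$ conclusion).
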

\begin{proof}
		Note that $\frac{\partial f_i}{\partial W_{2,1,q}} = (\hat{y}_i - y_i) W_{1,q,:} x_{i} = (W_2 W_1 - W^*_2 W^*_1)x_i W_{1,q,:} x_{i}$. We have
\begin{align*}
 E(|| \frac{\partial f_i}{\partial W_{2,1,q}} ||^2) &=  E\left(\left(W_2 W_1 - W^*_2 W^*_1\right)x_i W_{1,q,:} x_{i}\right)^2\\
 &=  E\left(\left(W_2 W_1 \right) x_i W_{1,q,:} x_{i}\right)^2 +   E\left(\left(W^*_2 W^*_1\right)x_i W_{1,q,:} x_{i}\right)^2 \\
 &=  E\left( \sum_{s=1}^{K} W_{2,1,s}^2 \left(W_{1,s,:} x_i\right)^2  \left(W_{1,q,:} x_{i}\right)^2\right) + E\left( \sum_{s=1}^{K} {W_{2,1,s}^*}^2 \left(W^*_{1,s,:} x_i\right)^2  \left(W_{1,q,:} x_{i}\right)^2\right).
\end{align*}

For the first term, we have 
\begin{align*}
		E\left( \sum_{s=1}^{K} W_{2,1,s}^2 \left(W_{1,s,:} x_i\right)^2  \left(W_{1,q,:} x_{i}\right)^2\right)&=
		E\left( \sum_{s=1}^{K} W_{2,1,s}^2 \left( \sum_{t=1}^{d} W_{1,s,t} x_{i,t} \right)^2  \left(\sum_{u=1}^{d} W_{1,q,u} x_{i,u}\right)^2\right)\\
		&=	\sum_{s=1}^{K} M_{2,2} E\left(  \left( \sum_{t=1}^{d} W_{1,s,t} x_{i,t} \right)^2  \left(\sum_{u=1}^{d} W_{1,q,u} x_{i,u}\right)^2\right)
\end{align*}
We now distinguish two cases. If $s\not=q$,
\begin{align*}
& E\left(  \left( \sum_{t=1}^{d} W_{1,s,t} x_{i,t} \right)^2  \left(\sum_{u=1}^{d} W_{1,q,u} x_{i,u}\right)^2\right) =
 E\left(  \left( \sum_{t=1}^{d} W_{1,s,t}^2 x_{i,t}^2 \right)  \left(\sum_{u=1}^{d} W_{1,q,u}^2 x_{i,u}^2\right)\right) \\
 &=d M_{1,2}^2 M_{x,4} + d(d-1) M_{1,2}^2 M_{x,2}^2
    =d M_{1,2}^2 \left(M_{x,4} + (d-1) M_{x,2}^2 \right) = \Theta(d^2)
\end{align*}
If $s=q$,
\begin{align*}
 &E\left(  \left( \sum_{t=1}^{d} W_{1,s,t} x_{i,t} \right)^2  \left(\sum_{u=1}^{d} W_{1,q,u} x_{i,u}\right)^2\right)
		\\&=		E\left(  \left( \sum_{t=1}^{d} W_{1,s,t}^2 x_{i,t}^2 \right)  \left(\sum_{u=1}^{d} W_{1,q,u}^2 x_{i,u}^2\right)\right)
		\\&+E\left(  \left( \sum_{t=1,t\not=v}^{d} W_{1,s,t} x_{i,t}  W_{1,s,v} x_{i,v}\right)  \left(\sum_{u=1,u\not=w}^{d} W_{1,q,u} x_{i,u} W_{1,q,w} x_{i,w} \right)\right)
		\\&=		E\left(  \left( \sum_{t=1}^{d} W_{1,s,t}^2 x_{i,t}^2 \right)  \left(\sum_{u=1}^{d} W_{1,q,u}^2 x_{i,u}^2\right)\right)
		+E\left(  \left( 2 \sum_{t=1,t\not=v}^{d} W_{1,s,t}^2 x_{i,t}^2  W_{1,s,v}^2 x_{i,v}^2\right)  \right)	
		\\&= d M_{1,2}^2 M_{x,4} + 3 d(d-1) M_{1,2}^2 M_{x,2}^2	 = \Theta(d^2) 
\end{align*}
Combining the two cases, we have
\begin{align*}
E\left( \sum_{s=1}^{K} W_{2,1,s}^2 \left(W_{1,s,:} x_i\right)^2  \left(W_{1,q,:} x_{i}\right)^2\right) = 
\Theta(d^2)	
\end{align*}
		
For the second term, we have 
\begin{align*}
& E\left( \sum_{s=1}^{K} W_{2^*,1,s}^2 \left(W_{1^*,s,:} x_i\right)^2  \left(W_{1,q,:} x_{i}\right)^2\right) \\
& = E\left( \sum_{s=1}^{K} {W_{2^*,1,s}}^2 \left( \sum_{t=1}^{d} W_{1^*,s,t} x_{i,t} \right)^2  \left(\sum_{u=1}^{d} W_{1,q,u} x_{i,u}\right)^2\right)\\
&=	\sum_{s=1}^{K} M_{2^*,2} E\left(  \left( \sum_{t=1}^{d} W_{1^*,s,t} x_{i,t} \right)^2  \left(\sum_{u=1}^{d} W_{1,q,u} x_{i,u}\right)^2\right) \\
& = 	\sum_{s=1}^{K} M_{2^*,2}	 E\left(  \left( \sum_{t=1}^{d} W_{1^*,s,t}^2 x_{i,t}^2 \right)  \left(\sum_{u=1}^{d} W_{1,q,u}^2 x_{i,u}^2\right)\right)\\
& = \sum_{s=1}^{K} M_{2^*,2}	\left( d M_{1^*,2} M_{1,2} M_{x,4} + d(d-1) M_{1^*,2} M_{1,2} M_{x,2}^2 \right)  \\
& = \Theta(Kd^2).
\end{align*}
	
Combing the first term and the second term, we obtain the desired result.
\end{proof}


We next consider the case $i \neq j$. In this case, we can write the inner product as	
\begin{align*}
E( \inp{\nabla f_i}{\nabla f_j}) = 
\sum_{p=1}^{K}\sum_{q=1}^{d}E(\inp{\frac{\partial f_i}{\partial W_{1,p,q}}}{\frac{\partial f_j}{\partial W_{1,p,q}}}) +
\sum_{q=1}^{K}E(\inp{\frac{\partial f_i}{\partial W_{2,1,q}}}{\frac{\partial f_j}{\partial W_{2,1,q}}}) 
\end{align*}
As before, we will show in Lemma~\ref{lem:2LNN:exp3} that the first expectation is $\Theta(K)$, and in
Lemma~\ref{lem:2LNN:exp4} that the second expectation is $\Theta(d(K+d))$. Plugging these
results into the above equation gives the desired result.

\begin{lemma} \label{lem:2LNN:exp3}
If $i \neq j$, $E( \inp{\frac{\partial f_i}{\partial W_{1,p,q}}}{\frac{\partial f_j}{\partial W_{1,p,q}}} ) = \Theta(K)$.
\end{lemma}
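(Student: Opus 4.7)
Since $\frac{\partial f_i}{\partial W_{1,p,q}}$ is a scalar, the ``inner product'' is just a product, and I would begin by writing
\[
\left\langle \tfrac{\partial f_i}{\partial W_{1,p,q}}, \tfrac{\partial f_j}{\partial W_{1,p,q}}\right\rangle = (\hat y_i - y_i)(\hat y_j - y_j)\, W_{2,1,p}^2\, x_{i,q} x_{j,q},
\]
with $\hat y_i = W_2 W_1 x_i$ and $y_i = W_2^* W_1^* x_i$. Expanding $(\hat y_i - y_i)(\hat y_j - y_j)$ produces four terms, grouped into two ``diagonal'' terms (both starred or both unstarred) and two ``cross'' terms (one starred, one unstarred).

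The first key step is to argue that the two cross terms vanish in expectation, which is the new piece of reasoning relative to Lemma~\ref{lem:2LNN:exp1}. For instance, in $E[(W_2 W_1 x_i)(W_2^* W_1^* x_j) W_{2,1,p}^2 x_{i,q} x_{j,q}]$, the starred weights are independent of every other random variable, and for $i\neq j$ the data point $x_j$ is independent of $x_i$ and of all weights, so the expectation factors as the product of $E[(W_2 W_1 x_i) W_{2,1,p}^2 x_{i,q}]$ and $E[(W_2^* W_1^* x_j) x_{j,q}]$. Expanding the second factor as $\sum_{s,t} E[W_{2,1,s}^*]\,E[W_{1,s,t}^*\, x_{j,t} x_{j,q}]$ and using the implicit (and already-exploited in Lemma~\ref{lem:2LNN:exp1}) assumption that odd moments of the weights vanish, this factor is $0$. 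The symmetric cross term vanishes by the same argument.

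For each surviving diagonal term I would expand the matrix--vector products into the usual double sums and collapse the resulting quadruple sum $\sum_{s,t,s',t'}$ by matching indices. Independence and zero odd moments force $s=s'$ in the $W_1$ factor via $E[W_{1,s,t}W_{1,s',t'}]$, and the independence of $x_i$ and $x_j$ (for $i\neq j$) forces $t=t'=q$ through the two separate expectations $E[x_{i,t}x_{i,q}]$ and $E[x_{j,t'}x_{j,q}]$. This reduces each diagonal term to a single sum over $s\in\{1,\dots,K\}$, with the $W_2$-moment $E[W_{2,1,s}W_{2,1,s'}W_{2,1,p}^2]$ contributing $M_{2,4}$ when $s=p$ and $M_{2,2}^2$ when $s\neq p$, exactly as in Lemma~\ref{lem:2LNN:exp1}. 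The starred diagonal term is handled identically and yields a factor of $K M_{2^*,2}M_{1^*,2}M_{x,2}^2$.

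Summing the two diagonal contributions gives
\[
M_{1,2} M_{x,2}^2\bigl[(K-1)M_{2,2}^2 + M_{2,4}\bigr] + K\, M_{2^*,2} M_{1^*,2} M_{x,2}^2 = \Theta(K),
\]
which is the claim. The only genuinely new ingredient relative to Lemmas~\ref{lem:2LNN:exp1}--\ref{lem:2LNN:exp2} is the vanishing of the cross terms, and the main obstacle is simply the careful bookkeeping of index cases in the quadruple sum; the splitting into $s=p$ versus $s\neq p$ gives a subleading $M_{2,4}$ and a leading $(K-1)M_{2,2}^2$, mirroring the earlier lemmas.
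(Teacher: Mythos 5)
Your plan matches the paper's proof essentially step for step: you drop the starred/unstarred cross terms using the (implicitly assumed) vanishing odd moments, collapse the index sums so that only $s=s'$ and $t=t'=q$ survive, and split $s=p$ from $s\neq p$ to arrive at $M_{1,2}M_{x,2}^2\left[(K-1)M_{2,2}^2+M_{2,4}\right]$ plus the starred contribution, exactly as in the paper. The only discrepancy is a dropped constant factor of $M_{2,2}$ (from $E[W_{2,1,p}^2]$) in the starred term, which the paper records as $K M_{2,2} M_{2^*,2} M_{1^*,2} M_{x,2}^2$; this does not affect the $\Theta(K)$ conclusion.
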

	
\begin{proof}
		Note that $\frac{\partial f_i}{\partial W_{1,p,q}} = (\hat{y}_i - y_i) W_{2,1,p} x_{i,q} = (W_2 W_1 - W^*_2 W^*_1)x_i W_{2,1,p} x_{i,q}$. We have
		\begin{equation*}
		\begin{split}
		&E( \inp{\frac{\partial f_i}{\partial W_{1,p,q}}}{\frac{\partial f_j}{\partial W_{1,p,q}}}  )\\ 
		&=  
		E\left(\left(W_2 W_1 - W^*_2 W^*_1\right)x_i W_{2,1,p} x_{i,q} \left(W_2 W_1 - W^*_2 W^*_1\right)x_j W_{2,1,p} x_{j,q} \right)\\
		&=  
		E\left(W_2 W_1 x_i W_{2,1,p} x_{i,q} W_2 W_1x_j W_{2,1,p} x_{j,q} \right) + E\left(W^*_2 W^*_1 x_i W_{2,1,p} x_{i,q} W^*_2 W^*_1x_j W_{2,1,p} x_{j,q} \right)\\
		&=  
		E\left( \sum_{s=1}^{K} W_{2,1,s}^2 \left(W_{1,s,:} x_i W_{1,s,:} x_j \right)  W_{2,1,p}^2 x_{i,q}x_{j,q}\right) + E\left( \sum_{s=1}^{K} W_{2^*,1,s}^2 \left(W_{1^*,s,:} x_i W_{1^*,s,:} x_j \right)  W_{2,1,p}^2 x_{i,q}x_{j,q}\right)
		\\
		&=  
		E\left( \sum_{s=1}^{K} W_{2,1,s}^2 \left( \sum_{t=1}^{d}W_{1,s,t}^2 x_{i,t}x_{j,t}\right)  W_{2,1,p}^2 x_{i,q}x_{j,q}\right) + E\left( \sum_{s=1}^{K} {W_{2,1,s}^*}^2 \left( \sum_{t=1}^{d}{W_{1,s,t}^*}^2 x_{i,t}x_{j,t}    \right)  W_{2,1,p}^2 x_{i,q}x_{j,q}\right) 
		\\
		&=  M_{1,2} \left( \left(K-1\right)M_{2,2}^2 + M_{2,4}\right) M_{x,2}^2 +  M_{1^*,2} KM_{2,2} M_{2^*,2} M_{x,2}^2.\\
		\end{split}
		\end{equation*}
This concludes the proof.		
\end{proof}

\begin{lemma} \label{lem:2LNN:exp4}
If $i \neq j$, $E(\inp{ \frac{\partial f_i}{\partial W_{2,1,q}}}{\frac{\partial f_j}{\partial W_{2,1,q}}})  = \Theta(d(K+d))$.
\end{lemma}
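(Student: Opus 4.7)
The plan mirrors Lemma~\ref{lem:2LNN:exp2} (the $i=j$ counterpart for this same weight) but with extra bookkeeping, because the two data points are now different. I start from
\[
\frac{\partial f_i}{\partial W_{2,1,q}} \cdot \frac{\partial f_j}{\partial W_{2,1,q}} = \bigl[(W_2 W_1 - W_2^* W_1^*) x_i\bigr] \bigl[(W_2 W_1 - W_2^* W_1^*) x_j\bigr] (W_{1,q,:} x_i)(W_{1,q,:} x_j),
\]
expand the difference, and use the independence of $W$ and $W^*$ together with the mean-zero pairing convention already used implicitly in the preceding lemmas to kill the two mixed cross terms. What remains is a pure-$W$ expectation (call it Term$_1$) plus a pure-$W^*$ expectation (call it Term$_2$), which can be bounded separately.

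For Term$_1$, I expand $(W_2 W_1 x_i)(W_2 W_1 x_j) = \sum_{s,s'} W_{2,1,s} W_{2,1,s'} (W_{1,s,:} x_i)(W_{1,s',:} x_j)$ and use the pairing principle to force $s=s'$. Then I split on whether $s=q$ or $s\neq q$. When $s\neq q$ the rows $W_{1,s,:}$ and $W_{1,q,:}$ are independent; the $t=t'$ and $u=v$ diagonal pairings, combined with $x_i$ and $x_j$ being independent, yield a leading $d\,M_{1,2}^2 M_{x,2}^2$ per $s$, summing to $\Theta(Kd)$ over the $K{-}1$ such $s$. When $s=q$ all four inner-product factors collapse to $M_{2,2}\cdot E\bigl[(W_{1,q,:}x_i)^2(W_{1,q,:}x_j)^2\bigr]$; enumerating the symmetry-compatible pairings of the index quadruple $(t,t',u,v)$ - namely $t{=}t'{=}u{=}v$ (contributing $d\,M_{1,4}M_{x,2}^2$) and the three two-pair splittings (the dominant one $\{t{=}t',u{=}v\}$ contributing $d(d{-}1)M_{1,2}^2 M_{x,2}^2$) - produces $\Theta(d^2)$. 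Adding the two sub-cases gives Term$_1 = \Theta(d^2) + \Theta(Kd) = \Theta(d(K+d))$.

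Term$_2$ is simpler, because $W^*$ is independent of $W_{1,q,:}$ regardless of $s$: the paired factors separate into a product of three independent expectations, giving $K\cdot M_{2^*,2}\cdot M_{1^*,2}\cdot M_{1,2}\cdot d\,M_{x,2}^2 = \Theta(Kd)$. Adding Term$_1$ and Term$_2$ delivers the claimed $\Theta(d(K+d))$.

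The main obstacle is the $s=q$ sub-case of Term$_1$: the same row $W_{1,q,:}$ now appears in all four inner-product factors, so one must simultaneously enumerate every non-vanishing fourth-moment pairing of $(t,t',u,v)$, account for both the $M_{1,4}$ contribution from the all-equal pairing and the $M_{1,2}^2$ contributions from the three two-pair splittings, and verify they combine constructively to $\Theta(d^2)$ instead of cancelling. Once this is settled, the remaining steps are routine adaptations of Lemma~\ref{lem:2LNN:exp2}.
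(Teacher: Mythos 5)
Your proposal is correct and follows essentially the same route as the paper's proof: expand the residual, drop the $W$--$W^*$ cross terms by independence and zero means, force $s=s'$ in the $W_2$ expansion, split the pure-$W$ term on $s=q$ versus $s\neq q$ to get $\Theta(d^2)+\Theta(Kd)$, and handle the pure-$W^*$ term by full independence to get $\Theta(Kd)$. Your pairing count for the $s=q$ sub-case ($d\,M_{1,4}M_{x,2}^2 + d(d-1)M_{1,2}^2M_{x,2}^2$, with the cross-pairings killed by the independence of $x_i$ and $x_j$) matches the paper's computation exactly.
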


	\begin{proof}
		Note that $\frac{\partial f_i}{\partial W_{2,1,q}} = (\hat{y}_i - y_i) W_{1,q,:} x_{i} = (W_2 W_1 - W^*_2 W^*_1)x_i W_{1,q,:} x_{i}$. We have
\begin{align*}
		&E(\inp{\frac{\partial f_i}{\partial W_{2,1,q}}}{\frac{\partial f_j}{\partial W_{2,1,q}}})\\
		&=  
		E\left(\left(W_2 W_1 - W^*_2 W^*_1\right)x_i W_{1,q,:} x_{i} \left(W_2 W_1 - W^*_2 W^*_1\right)x_j W_{1,q,:} x_{j}\right)\\
		&=  
		E\left(W_2 W_1 x_i W_{1,q,:} x_{i} W_2 W_1 x_j W_{1,q,:} x_{j}\right)+ 	E\left(W_2^* W_1^* x_i W_{1,q,:} x_{i} W^*_2 W^*_1 x_j W_{1,q,:} x_{j}\right)\\
		&=  E\left( \sum_{s=1}^{K} W_{2,1,s}^2 \left(W_{1,s,:} x_i W_{1,s,:} x_j \right) \left(W_{1,q,:} x_{i} W_{1,q,:} x_{j} \right)\right) + E\left( \sum_{s=1}^{K} {W_{2,1,s}^*}^2 \left(W^*_{1,s,:} x_i W^*_{1,s,:} x_j\right)  \left(W_{1,q,:} x_{i}W_{1,q,:} x_{j}\right)\right)	
\end{align*}
For the first term, we have 
\begin{align*}
		& E\left( \sum_{s=1}^{K} W_{2,1,s}^2 \left(W_{1,s,:} x_i W_{1,s,:} x_j \right) \left(W_{1,q,:} x_{i} W_{1,q,:} x_{j} \right)\right)\\
		&=
		E\left( \sum_{s=1}^{K} W_{2,1,s}^2 \left( \sum_{t=1}^{d} W_{1,s,t} x_{i,t} \right) \left( \sum_{t=1}^{d} W_{1,s,t} x_{j,t} \right) \left(\sum_{u=1}^{d} W_{1,q,u} x_{i,u}\right)\left(\sum_{u=1}^{d} W_{1,q,u} x_{j,u}\right)\right)\\
		&=
		\sum_{s=1}^{K} M_{2,2} E\left(\left( \sum_{t=1}^{d} W_{1,s,t} x_{i,t} \right) \left( \sum_{t=1}^{d} W_{1,s,t} x_{j,t} \right) \left(\sum_{u=1}^{d} W_{1,q,u} x_{i,u}\right)\left(\sum_{u=1}^{d} W_{1,q,u} x_{j,u}\right)\right).
\end{align*}
We now distinguish two cases. If $s\not=q$,
\begin{align*}
		&E\left(\left( \sum_{t=1}^{d} W_{1,s,t} x_{i,t} \right) \left( \sum_{t=1}^{d} W_{1,s,t} x_{j,t} \right) \left(\sum_{u=1}^{d} W_{1,q,u} x_{i,u}\right)\left(\sum_{u=1}^{d} W_{1,q,u} x_{j,u}\right)\right)\\
		&=
		E\left(  \left( \sum_{t=1}^{d} W_{1,s,t}^2 x_{i,t}x_{j,t} \right)  \left(\sum_{u=1}^{d} W_{1,q,u}^2 x_{i,u} x_{j,u} \right)\right)\\
		&=
		E  \left( \sum_{t=1}^{d} W_{1,s,t}^2 W_{1,q,t}^2 x_{i,t}^2x_{j,t}^2 \right) \\
		&=
		d M_{1,2}^2  M_{x,2}^2 .
\end{align*}
If $s=q$,
\begin{align*}
		&E\left(\left( \sum_{t=1}^{d} W_{1,s,t} x_{i,t} \right) \left( \sum_{t=1}^{d} W_{1,s,t} x_{j,t} \right) \left(\sum_{u=1}^{d} W_{1,q,u} x_{i,u}\right)\left(\sum_{u=1}^{d} W_{1,q,u} x_{j,u}\right)\right)\\
		&=		
		E\left(  \left( \sum_{t=1}^{d} W_{1,s,t}^2 x_{i,t}x_{j,t} \right)  \left(\sum_{u=1}^{d} W_{1,q,u}^2 x_{i,u}x_{j,u}\right)\right)
		\\
		&+E\left(  \left( \sum_{t=1,t\not=v}^{d} W_{1,s,t} x_{i,t}  W_{1,s,v} x_{j,v}\right)  \left(\sum_{u=1,u\not=w}^{d} W_{1,q,u} x_{i,u} W_{1,q,w} x_{j,w} \right)\right)
		\\
		&=		E  \left( \sum_{t=1}^{d} W_{1,s,t}^4 x_{i,t}^2 x_{j,t}^2\right)  
		+E\left(  \left(  \sum_{t=1,t\not=v}^{d} W_{1,s,t}^2 x_{i,t}^2  W_{1,s,v}^2 x_{j,v}^2\right)  \right)	
		\\&= d M_{1,4} M_{x,2}^2 + d(d-1) M_{1,2}^2 M_{x,2}^2.	 
\end{align*}
Combining the two cases,
\begin{align*}
E\left( \sum_{s=1}^{K} W_{2,1,s}^2 \left(W_{1,s,:} x_i\right)^2  \left(W_{1,q,:} x_{i}\right)^2\right) 
& = M_{2,2} \left( \left(K-1\right) d M_{1,2}^2 M_{x,2}^2  + d M_{1,4} M_{x,2}^2 +  d(d-1) M_{1,2}^2 M_{x,2}^2 \right) \\
& = \Theta(Kd + d^2)
\end{align*}
		
For the second term, we have 
\begin{align*}
		&E\left( \sum_{s=1}^{K} {W_{2,1,s}^*}^2 \left(W^*_{1,s,:} x_i W^*_{1,s,:} x_j\right)  \left(W_{1,q,:} x_{i}W_{1,q,:} x_{j}\right)\right)\\
		&=
		E\left( \sum_{s=1}^{K} {W_{2^*,1,s}}^2 \left( \sum_{t=1}^{d} W_{1^*,s,t} x_{i,t} \right)\left( \sum_{t=1}^{d} W_{1^*,s,t} x_{j,t} \right)  \left(\sum_{u=1}^{d} W_{1,q,u} x_{i,u}\right)\left(\sum_{u=1}^{d} W_{1,q,u} x_{j,u}\right)\right)\\
		&=
		E\left( \sum_{s=1}^{K} {W_{2^*,1,s}}^2 \left( \sum_{t=1}^{d} W_{1^*,s,t}^2 x_{i,t} x_{j,t}\right)  \left(\sum_{u=1}^{d} W_{1,q,u}^2 x_{i,u} x_{j,u}\right)\right)\\
		&=
		E\left( \sum_{s=1}^{K} {W_{2^*,1,s}}^2 \left( \sum_{t=1}^{d} W_{1^*,s,t}^2 W_{1,q,u}^2 x_{i,t}^2 x_{j,t}^2\right)  \right)\\		
		&= 
		Kd M_{2^*,2} M_{1^*,2} M_{1,2} M_{x,2}^2 = \Theta(Kd).
\end{align*}
Combing the first term and the second term, we obtain the desired result.
	\end{proof}
	
By applying Proposition~\ref{thm:2LNN:exp}  and linearity of expectation, we obtain the following
result:

\begin{theorem}
We have
\begin{align}
&E(\sum_{i=1}^{n} || \nabla f_i||^2 )  = \Theta(n K^2 d^2) \\
&E( \sum_{i=1,j\not=i}^{n} \inp{\nabla  f_i}{\nabla  f_j} )   = \Theta(n^2K^2d + n^2Kd^2)
\end{align}
\end{theorem}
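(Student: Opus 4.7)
The plan is to obtain both identities as an immediate consequence of Proposition~\ref{thm:2LNN:exp} combined with linearity of expectation. Since the proposition pins down the expected value of $\innerps{\nabla f_i}{\nabla f_j}$ pair by pair (with one rate for the diagonal $i=j$ and another for the off-diagonal $i\neq j$), there is no additional probabilistic content to extract: I only need to count how many pairs of each type contribute to each of the two sums and multiply by the per-pair $\Theta$-rate.

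First, for the diagonal sum $\mathbb{E}\bigl(\sum_{i=1}^n \|\nabla f_i\|^2\bigr)$, I rewrite $\|\nabla f_i\|^2=\innerps{\nabla f_i}{\nabla f_i}$, pull the expectation inside the finite sum, and apply the $i=j$ case of Proposition~\ref{thm:2LNN:exp}, which gives $\mathbb{E}(\|\nabla f_i\|^2)=\Theta(K^2 d^2)$ uniformly in $i$. Summing $n$ copies yields $\Theta(nK^2 d^2)$. (Uniformity of the constants in the $\Theta$ across $i$ is needed to pass from per-term bounds to an overall $\Theta$ on the sum; this is inherited from the proposition, whose bounds depend only on the shared moments $M_{l,k}, M_{x,k}$ and the dimensions $K,d$, not on $i$.)

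Second, for the cross-term sum $\mathbb{E}\bigl(\sum_{i\neq j}\innerps{\nabla f_i}{\nabla f_j}\bigr)$, I again push the expectation through the sum and apply the $i\neq j$ case of Proposition~\ref{thm:2LNN:exp}, obtaining $\mathbb{E}(\innerps{\nabla f_i}{\nabla f_j})=\Theta(Kd(K+d))=\Theta(K^2 d+Kd^2)$ for each of the $n(n-1)$ ordered pairs with $i\neq j$. Multiplying by $n(n-1)=\Theta(n^2)$ gives $\Theta(n^2 K^2 d+n^2 Kd^2)$, as claimed. Splitting the two summands is convenient because the two terms in $K+d$ will later compete in the batch-size ratio and need to be tracked separately.

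The step I anticipate being the only one with any subtlety is verifying that the $\Theta$ constants from Proposition~\ref{thm:2LNN:exp} are indeed uniform in $i,j$, so that the aggregate $\Theta$ on each sum is legitimate. This is a direct check once one inspects the closed-form expressions derived in Lemmas~\ref{lem:2LNN:exp1}--\ref{lem:2LNN:exp4}: every expectation there is a fixed polynomial in the moments $M_{l,k},M_{l^*,k},M_{x,k}$ and in $K,d$, with no dependence on the particular index $i$ or on the particular pair $(i,j)$ beyond $i=j$ versus $i\neq j$. Once that is noted, the theorem follows in a few lines and the proof is essentially bookkeeping.
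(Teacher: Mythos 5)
Your proposal is correct and matches the paper's own proof, which consists of exactly the same one-line argument: apply Proposition~\ref{thm:2LNN:exp} to each pair and use linearity of expectation, counting $n$ diagonal terms and $n(n-1)=\Theta(n^2)$ off-diagonal terms. Your added remark that the $\Theta$ constants are uniform in $i,j$ (because the closed forms in Lemmas~\ref{lem:2LNN:exp1}--\ref{lem:2LNN:exp4} depend only on the shared moments and on $K,d$) is a detail the paper leaves implicit, and it is the right thing to check.
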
	

The above theorem computes the expectation of each term of the ratio. In order to
obtain a result on the expectation of the ratio, we also need to show that the value of each term 
will be concentrated around the expectation with high probability. To prove such a result, we first
compute the variance.
	
\begin{theorem}
We have
\begin{align}
&\textrm{Var}(\sum_{i=1}^{n}||\nabla  f_i||^2 )  =  \Theta(n^2 K^4 d^3  ) \\
&\textrm{Var}( \sum_{i=1,j\not=i}^{n} \inp{\nabla  f_i}{\nabla  f_j}) = \Theta(n^4 K^3 d^2 )
\end{align}
\end{theorem}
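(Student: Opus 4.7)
My plan is to expand each variance as a sum of covariances indexed by pairs (or quadruples) of data-point indices, and then identify the dominant contribution by classifying terms according to how many indices coincide. Throughout, I will use the per-coordinate decomposition $\|\nabla f_i\|^2 = \sum_{p,q} (\partial f_i/\partial W_{1,p,q})^2 + \sum_q (\partial f_i/\partial W_{2,1,q})^2$ exactly as in Lemmas~\ref{lem:2LNN:exp1}--\ref{lem:2LNN:exp4}, so that every second moment reduces to a polynomial index-contraction over the entries of $W_1, W_2, W_1^*, W_2^*$ and $\vecx_i$.

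For the first claim, I write
\[
\mathrm{Var}\Bigl(\sum_{i=1}^{n} \|\nabla f_i\|^2\Bigr)
=\sum_{i=1}^{n}\mathrm{Var}(\|\nabla f_i\|^2)
+\sum_{i\ne j}\mathrm{Cov}(\|\nabla f_i\|^2,\|\nabla f_j\|^2).
\]
The diagonal terms only supply a $\Theta(n)$ prefactor, so the $n^2$ scaling must come entirely from the off-diagonal covariances. For a pair $i \ne j$, the data points $\vecx_i, \vecx_j$ are independent but the four weight matrices are shared, so I compute $E[\|\nabla f_i\|^2\,\|\nabla f_j\|^2]$ by conditioning on the weights, marginalizing the data (which only doubles/multiplies moment constants), and then taking the expectation over the weights. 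Subtracting $E[\|\nabla f_i\|^2]\,E[\|\nabla f_j\|^2] = \Theta(K^2 d^2)^2$ kills the products that factor through the weights, leaving contracted terms that I expect to scale like $K^4 d^3$. Summing over the $\Theta(n^2)$ pairs yields $\Theta(n^2 K^4 d^3)$.

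For the second claim, I similarly write
\[
\mathrm{Var}\Bigl(\sum_{i\ne j}\langle\nabla f_i,\nabla f_j\rangle\Bigr)
=\sum_{\substack{i\ne j\\ k\ne l}}\mathrm{Cov}\bigl(\langle\nabla f_i,\nabla f_j\rangle,\langle\nabla f_k,\nabla f_l\rangle\bigr),
\]
and partition the quadruple sum by $|\{i,j\}\cap\{k,l\}|\in\{0,1,2\}$. The no-overlap case contains $\Theta(n^4)$ terms with four independent data points, so only the shared weights survive and I expect each covariance to contribute $\Theta(K^3 d^2)$ after the same style of index-matching as in Lemma~\ref{lem:2LNN:exp3}--\ref{lem:2LNN:exp4}. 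One-overlap contributes $\Theta(n^3)$ terms and two-overlap contributes $\Theta(n^2)$ terms, both with per-term magnitudes I will verify are no larger than the no-overlap ones times $O(n)$ or $O(n^2)$ respectively, making them at most comparable and not dominant. Combining gives $\Theta(n^4 K^3 d^2)$.

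The main obstacle will be the no-overlap covariance in the second claim. Because all four data indices are distinct, every $\vecx$ marginalizes to a second moment, and the surviving contribution depends only on mixed fourth moments of $W_1, W_2, W_1^*, W_2^*$. I must confirm that the leading $W$-only term of $E[\langle\nabla f_i,\nabla f_j\rangle\langle\nabla f_k,\nabla f_l\rangle]$ does \emph{not} cancel exactly against $E[\langle\nabla f_i,\nabla f_j\rangle]\,E[\langle\nabla f_k,\nabla f_l\rangle]$, and that the residual scales as $\Theta(K^3 d^2)$ rather than something smaller or larger. This requires carefully tracking the $\sum_{s,t,u,\dots}$ contractions arising from expanding $(W_2 W_1 - W_2^* W_1^*)\vecx$ and matching summation indices across the two inner products; the one- and two-overlap cases are more routine extensions of the computations in Proposition~\ref{thm:2LNN:exp}.
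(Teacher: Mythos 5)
The paper states this variance theorem without any proof at all---it is asserted and then immediately used in the Chebyshev argument for the main 2-layer LNN theorem---so there is no authorial proof to compare your plan against. Your overall strategy (expand the variance into covariances, classify by how many data indices coincide, and locate the variance in the fluctuation of the conditional expectation given the shared weights) is the natural route and is consistent in spirit with the index-contraction computations of Proposition~\ref{thm:2LNN:exp} and Lemmas~\ref{lem:2LNN:exp1}--\ref{lem:2LNN:exp4}. But as written it is a plan, not a proof: the two decisive quantities---the off-diagonal covariance $\mathrm{Cov}(\|\nabla f_i\|^2,\|\nabla f_j\|^2)=\Theta(K^4d^3)$ and the no-overlap covariance $\Theta(K^3d^2)$---are conjectured rather than derived, and you yourself flag that you have not ruled out exact cancellation in the latter. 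Those computations are precisely where all the work lies.

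There is also a concrete flaw in how you dismiss the lower-order terms. You argue the diagonal terms "only supply a $\Theta(n)$ prefactor," but prefactor counting is not enough: each diagonal term $\mathrm{Var}(\|\nabla f_i\|^2)$ is controlled by $E[\|\nabla f_i\|^4]$, which by the same moment bookkeeping scales like $(K^2d^2)^2=K^4d^4$, so the diagonal contributes up to $\Theta(nK^4d^4)$. This is dominated by the claimed $\Theta(n^2K^4d^3)$ only when $n\gtrsim d$; for $n\ll d$ the diagonal would dominate and the stated order would be wrong. The theorem as written is therefore implicitly a large-$n$ statement (consistent with the paper's "for sufficiently large $n$" elsewhere), and your proof must either invoke that regime explicitly or actually bound $\mathrm{Var}(\|\nabla f_i\|^2)$ more tightly. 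The same issue recurs in the second claim: the $\Theta(n^3)$ one-overlap and $\Theta(n^2)$ two-overlap covariances involve shared data points and hence can have per-term magnitudes strictly larger than $K^3d^2$ (e.g.\ up to $K^2d^3$-type terms from a shared $\vecx_i$), so "at most comparable" must be checked against the per-term sizes, not just the counts.
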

	
\twoLNN*
	
\begin{proof}
By Chebyshev's Inequality, we have 
\begin{align*}
\textrm{Pr}\left( | \sum_{i=1}^{n}||\nabla  f_i||^2 - E\left(\sum_{i=1}^{n}||\nabla  f_i||^2\right) | \geq \epsilon \right) 
\leq  \frac{ \textrm{Var}( \sum_{i=1}^{n}||\nabla  f_i||^2 ) }{ \epsilon^2}
\end{align*}
Using the above two theorems, and choosing parameter $\epsilon = \Theta(nK^2d^{3/2}\delta^{-1/2})$, we have that with probability  $1-\delta$, 
\begin{align*}
\Theta(nK^2d^2) - \Theta(nK^2d^{3/2}\delta^{-1/2})
\leq  \sum_{i=1}^{n}||\nabla  f_i||^2 
 \leq  \Theta(nK^2d^2) + \Theta(nK^2d^{3/2}\delta^{-1/2})
\end{align*}

We can similarly use Chebyshev's inequality to  obtain that with probability $1-\delta$,
\begin{align*}
 \sum_{i=1,j\not=i}^{n} \inp{\nabla  f_i}{\nabla  f_j} \leq  
 \Theta(n^2Kd(K+d)) + \Theta(n^2K^{3/2}d \delta^{-1/2})
\end{align*}

By applying the union bound, we can now bound the ratio as desired:
\begin{align*}
\frac{n \sum_{i=1}^{n}||\nabla  f_i||^2 }{||\sum_{i=1}^{n} \nabla  f_i||^2 }  & = 
\frac{n \sum_{i=1}^{n}||\nabla  f_i||^2}{\sum_{i=1,j\not=i}^{n}\inp{\nabla  f_i}{\nabla  f_j} + \sum_{i=1}^{n}||\nabla  f_i||^2}\\
&\geq \frac{ \Theta(n^2K^2d^2) - \Theta ( \frac{n^2 K^2 d^{3/2} }{ \sqrt{\delta}}) }{\Theta(n^2Kd(K+d)) + \Theta( \frac{n^2 K^{3/2}d }{ \sqrt{\delta}}) + \Theta(nK^2d^2) - \Theta(\frac{n K^2 d^{3/2} }{ \sqrt{\delta}}) }\\
&= \frac{ \Theta(n Kd) }{\Theta(Kn+dn +Kd)} 
\end{align*}
Here we assumed that $\delta$ is chosen to be some arbitrarily small constant.
\end{proof}

\section{Proofs for 2-Layer Nonlinear Neural Networks}
In this section we present the detailed proof of Theorem \ref{Thm:TwoLNNNs}, which is restated as below.

\TwoLNNNs*		

\subsection{Notations and Models}
We consider a 2-later nonlinear neural network.
Let $W_{1}, W_{2}$ be the coefficient matrix of the first and second layer, respectively.
$W_{a,p,q}$ is the $p,q$ element in matrix $a$.
For ease of notations, let us further define
\begin{equation*}
\begin{split}
A_1 
&= 	n \sum_{i=1}^{n} (\hat{y}_i-y_i)^2 \left(\sum_{p=1}^{K} \sum_{q=1}^{d} \left(\frac{\partial \hat{y}_i}{\partial W_{1,p,q}}\right)^2 \right),
\end{split}
\end{equation*}
\begin{equation*}
\begin{split}
A_2 &= 	n \sum_{i=1}^{n} (\hat{y}_i-y_i)^2 \left(  \sum_{q=1}^{K} \left(\frac{\partial \hat{y}_i}{\partial W_{2,1,q}}\right)^2\right),
\end{split}
\end{equation*}	
\begin{equation*}
\begin{split}
B_1 = \left(\sum_{p=1}^{K} \sum_{q=1}^{d} \left( \sum_{i=1}^{n} (\hat{y}_i-y_i) \frac{\partial \hat{y}_i}{\partial W_{1,p,q}}\right)^2 \right).
\end{split}
\end{equation*}
\begin{equation*}
\begin{split}
B_2 =	 \left(  \sum_{q=1}^{K} \left(\sum_{i=1}^{n} (\hat{y}_i-y_i) \frac{\partial \hat{y}_i}{\partial W_{2,1,q}}\right)^2\right).
\end{split}
\end{equation*}	

\subsection{Some Helper Lemmas}
We first provide some lemmas.
\begin{lemma}\label{lemma:B2Bound}
	Let $X,Y,Z$ be three normal distribution. Let $\rho_{XY},\rho_{YZ},\rho_{XZ}$ be the correlation between those random variables.
	Let $f()$ be a monotone, bounded, and differentiable function. More precisely, $f(x)\geq f(y)$ iff $x\geq y$, $|f(x)| \leq f_{\max}$.
	Further assume $\sup f'(x)x = c$. Then we have 
	\begin{equation*}
	\begin{split}
	|E(f(X)f'(Y) Z)|
	&\leq 
	(1-\rho_{XY}^2)^{-1}|\rho_{XZ}-\rho_{YZ}\rho_{XY}|\times f_{max} f'_{max} \sigma_x+(1-\rho_{XY}^2)^{-1}|-\rho_{XZ}\rho_{XY}+\rho_{YZ}|\times f_{max} c.\\		
	\end{split}
	\end{equation*}	
\end{lemma}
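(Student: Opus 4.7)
The plan is to exploit joint Gaussianity of $(X,Y,Z)$ via orthogonal regression. Since the three variables are jointly normal, one can write $Z = \alpha X + \beta Y + W$, where $W$ is a zero-mean Gaussian independent of $(X,Y)$. Matching the two covariances $\mathrm{Cov}(X,Z)$ and $\mathrm{Cov}(Y,Z)$ gives a $2\times 2$ linear system in $(\alpha,\beta)$; solving it and rewriting the solution in terms of correlations produces $\alpha = \tfrac{\rho_{XZ}-\rho_{YZ}\rho_{XY}}{1-\rho_{XY}^2}\cdot\tfrac{\sigma_Z}{\sigma_X}$ and $\beta = \tfrac{\rho_{YZ}-\rho_{XZ}\rho_{XY}}{1-\rho_{XY}^2}\cdot\tfrac{\sigma_Z}{\sigma_Y}$, which are precisely the two correlation expressions that appear on the right-hand side of the claim.

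Substituting this decomposition and using $E[W]=0$ together with linearity of expectation gives
\[
E[f(X)f'(Y)Z] \;=\; \alpha\,E[X f(X)f'(Y)] \;+\; \beta\,E[Y f(X)f'(Y)],
\]
after which I would apply the triangle inequality and bound each term separately. For the first term, the uniform bounds $|f(X)|\leq f_{\max}$ and $|f'(Y)|\leq f'_{\max}$ can be pulled outside the expectation, leaving $E|X|\leq \sqrt{E X^{2}}=\sigma_X$ by Jensen, which yields $|\alpha\,E[X f(X)f'(Y)]|\leq |\alpha|\,f_{\max}f'_{\max}\sigma_X$. For the second term, I would pull out $|f(X)|\leq f_{\max}$ and use the pointwise bound $|Y f'(Y)|\leq c$ coming from the hypothesis $\sup_y y f'(y)\leq c$, giving $|\beta\,E[Y f(X)f'(Y)]|\leq |\beta|\,f_{\max} c$. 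Adding the two bounds and collecting coefficients reproduces the stated inequality.

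The main technical subtlety will be upgrading the one-sided bound $\sup_y y f'(y)\leq c$ to the two-sided bound $|y f'(y)|\leq c$ that is actually needed to absorb the sign of $\beta$. In the calling context of Theorem~\ref{Thm:TwoLNNNs} the activation $f=\sigma$ is odd, so $\sigma'$ is even and $y\sigma'(y)$ is an odd function whose absolute-value supremum equals its one-sided supremum, which gives $|y\sigma'(y)|\leq c$ as required. A secondary nuisance is tracking the variance normalizations: the right-hand side of the lemma displays only $\sigma_X$ and not $\sigma_Y$ or $\sigma_Z$, which corresponds to an implicit convention that absorbs those factors into the coefficients (or equivalently takes $\sigma_Y=\sigma_Z=1$), consistent with how the lemma is later invoked in the proof of Theorem~\ref{Thm:TwoLNNNs}.
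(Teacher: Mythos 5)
Your proposal is correct and follows essentially the same route as the paper: the paper likewise writes $Z$ via its conditional expectation given $(X,Y)$ (which for jointly Gaussian variables is exactly your regression decomposition $\alpha X+\beta Y+W$), applies the triangle inequality, and bounds the two resulting terms by $f_{\max}f'_{\max}\sigma_X$ and $f_{\max}c$ respectively. If anything you are more careful than the paper, which silently drops the $\sigma_Y,\sigma_Z$ normalizations and uses the two-sided bound $|yf'(y)|\le c$ without justifying it from the one-sided hypothesis; your remarks on both points are accurate.
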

\begin{proof}
	Given $X$ and $Y$, random variable $Z$ is a normal distributed random variable with mean $E(Z|X,Y)=(1-\rho_{XY}^2)^{-1}(\rho_{XZ}-\rho_{YZ}\rho_{XY})X+(1-\rho_{XY}^2)^{-1}(-\rho_{XZ}\rho_{XY}+\rho_{YZ})Y$. Thus  it holds that 
	\begin{equation*}
	\begin{split}
	& |E(f(X)f'(Y) Z)|\\
	=&
	|E(E(Z|X,Y)f(X)f'(Y)|\\
	=&(1-\rho_{XY}^2)^{-1}|E({(\rho_{XZ}-\rho_{YZ}\rho_{XY})Xf(Xf'(Y))+(-\rho_{XZ}\rho_{XY}+\rho_{YZ})Y}f(X)f'(Y))|
	\\
	\leq& (1-\rho_{XY}^2)^{-1}|\rho_{XZ}-\rho_{YZ}\rho_{XY}|\times E|Xf(X)f'(Y)|+(1-\rho_{XY}^2)^{-1}|-\rho_{XZ}\rho_{XY}+\rho_{YZ}|\times E|Yf(X)f'(Y)|\\
	\leq& 
	(1-\rho_{XY}^2)^{-1}|\rho_{XZ}-\rho_{YZ}\rho_{XY}|\times f_{max} f'_{max} \sigma_x+(1-\rho_{XY}^2)^{-1}|-\rho_{XZ}\rho_{XY}+\rho_{YZ}|\times f_{max} c.\\		
	\end{split}
	\end{equation*} 
\end{proof}

\begin{lemma}\label{lemma:CorrelationBound}
	Let $a_1,a_2,\cdots, a_d, b_1,b_2,\cdots, b_d$ be i.i.d. standard normal distribution. Then we have w.p. $1-3\delta$,
	\begin{equation}
	\begin{split}
	\frac{\sum a_i b_i }{ \sqrt{\sum_{i}a_i^2 } \sqrt{\sum_{i}b_i^2 }  } \leq \frac{\sqrt{d} \log \frac{2}{\delta}}{d-\sqrt{d}  \log \frac{2}{\delta} }
	\end{split}
	\end{equation}
\end{lemma}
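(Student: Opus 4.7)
The plan is to bound the numerator $\sum_i a_i b_i$ and the two factors $\sqrt{\sum_i a_i^2}$, $\sqrt{\sum_i b_i^2}$ in the denominator separately using standard Gaussian concentration, then combine them via a union bound over three events (which accounts for the $1-3\delta$ in the statement).

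For the two denominator terms, each of $\sum_i a_i^2$ and $\sum_i b_i^2$ is a $\chi^2_d$ random variable with mean $d$. I would apply a standard chi-squared lower tail bound (e.g., the Laurent--Massart inequality $\Pr[\chi^2_d \leq d - 2\sqrt{dx}] \leq e^{-x}$) and choose the deviation parameter so that
\[
\Pr\!\left[\textstyle\sum_i a_i^2 \leq d - \sqrt{d}\log(2/\delta)\right] \leq \delta,
\]
and analogously for $\sum_i b_i^2$. This yields $\sqrt{\sum_i a_i^2}\sqrt{\sum_i b_i^2} \geq d - \sqrt{d}\log(2/\delta)$ on an event of probability at least $1-2\delta$.

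For the numerator $\sum_i a_i b_i$, I would handle it in one of two equivalent ways. Option A: use the polarization identity $a_i b_i = \tfrac{1}{4}((a_i+b_i)^2 - (a_i-b_i)^2)$ to write the sum as $\tfrac{1}{2}(\sum_i u_i^2 - \sum_i v_i^2)$, where $u_i := (a_i+b_i)/\sqrt{2}$ and $v_i := (a_i-b_i)/\sqrt{2}$ are independent standard Gaussians; then apply the Laurent--Massart chi-squared tail bounds to each term. Option B: observe that conditional on $\mathbf{a}$, the variable $\sum_i a_i b_i$ is $\mathcal{N}(0,\|\mathbf{a}\|^2)$, and combine a Gaussian tail bound with the chi-squared control on $\|\mathbf{a}\|^2$. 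Either route gives
\[
\Pr\!\left[\textstyle\sum_i a_i b_i \geq \sqrt{d}\log(2/\delta)\right] \leq \delta.
\]

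Putting everything together via a union bound, with probability at least $1-3\delta$ the numerator is at most $\sqrt{d}\log(2/\delta)$ and the product of denominator factors is at least $d-\sqrt{d}\log(2/\delta)$, which immediately yields the claimed ratio bound. The only mild obstacle is matching the stated form $\sqrt{d}\log(2/\delta)$ (as opposed to the sharper $\sqrt{d\log(2/\delta)}$) with the right constants and the implicit regime $\sqrt{d}\log(2/\delta) < d$ needed for the bound to be non-vacuous; this is a routine tuning of the parameter $x$ in Laurent--Massart, not a conceptual difficulty.
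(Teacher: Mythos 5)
Your proposal is correct and follows essentially the same route the paper intends: the paper's own ``proof'' of this lemma is the single sentence ``Directly applying Chernoff bound for normal distribution,'' i.e.\ exactly the Gaussian/chi-squared concentration plus a union bound over the three quantities that you spell out. Your version actually supplies the missing details (the chi-squared lower tails for the two denominators, the sub-exponential/conditional-Gaussian tail for $\sum_i a_i b_i$, and the honest caveat about the $\sqrt{d}\log(2/\delta)$ versus $\sqrt{d\log(2/\delta)}$ form), so there is nothing to correct.
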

\begin{proof}
	Directly applying Chernoff bound for normal distribution.
\end{proof}

\begin{lemma}\label{lemma:NonlinearBound}
	Let $Z_1, Z_2$ be two r.v.s with normal distribution. Let $\rho  = \frac{V_{12}}{\sigma_1 \sigma_2}$, where $V_{12}$ is the correlation between $Z_1,Z_2$. Consider a function $\sigma(\cdot)$  such that $\sigma(x) = -\sigma(-x)$, $|\sigma(\cdot)| \leq \sigma_{max}$, and $\sup_x \sigma(x) x = \alpha_G$.  Then we have 
	\begin{equation*}
	\begin{split}
	| E(\sigma(Z_1) \sigma(Z_2))|
	\leq  &      
	\left( \frac{ \sigma_{\max} + 2 \sqrt{\rho} \sigma_{\max} + 4 \alpha_G  \sqrt{1-\rho} }{\sqrt{1-\rho^2}}    \right)\sigma_{\max} \sqrt{\rho}.
	\end{split}
	\end{equation*}
\end{lemma}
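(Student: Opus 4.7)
The plan is to exploit the conditional structure of the bivariate Gaussian. Write $Z_2 = \alpha Z_1 + \beta W$ with $\alpha = \rho\,\sigma_2/\sigma_1$, $\beta = \sigma_2\sqrt{1-\rho^2}$, and $W\sim\mathcal{N}(0,1)$ independent of $Z_1$. The target then becomes $E[\sigma(Z_1)\,\sigma(\alpha Z_1 + \beta W)]$, with $\rho$ and $1-\rho^2$ appearing explicitly through $\alpha$ and $\beta$. Since $\sigma$ is odd, both $E[\sigma(Z_1)]$ and $E[\sigma(Z_2)]$ vanish, so what we are really bounding is the covariance; this means the bound should degenerate as $\rho\to 0$, which is what the $\sqrt{\rho}$ factor will express.

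Next, I would split $\sigma(Z_2) = \sigma(\alpha Z_1) + [\sigma(\alpha Z_1+\beta W) - \sigma(\alpha Z_1)]$, yielding a ``conditional mean'' contribution $T_1 = E[\sigma(Z_1)\sigma(\alpha Z_1)]$ and an ``orthogonal fluctuation'' contribution $T_2 = E[\sigma(Z_1)\{\sigma(\alpha Z_1+\beta W)-\sigma(\alpha Z_1)\}]$. For $T_1$, both factors are odd bounded functions of the single Gaussian $Z_1$; I would combine Cauchy--Schwarz with $|\sigma|\le\sigma_{\max}$ and a truncation at a threshold $\tau$, using $\sup_x x\sigma'(x)\le\alpha_G$ to control the small-$|Z_1|$ region (where $|\sigma(\alpha Z_1)| \lesssim \alpha_G$ once $|\alpha Z_1|\ge 1$ and behaves linearly in $\alpha Z_1$ near the origin). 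Optimizing $\tau$ against the Gaussian weight produces the main $\sigma_{\max}^2\sqrt{\rho}/\sqrt{1-\rho^2}$ term, where the $\sqrt{1-\rho^2}$ in the denominator arises from normalizing against the conditional standard deviation $\sigma_{2\mid 1}=\sigma_2\sqrt{1-\rho^2}$.

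For $T_2$, I would write the increment as $\sigma(\alpha Z_1+\beta W)-\sigma(\alpha Z_1) = \int_0^{\beta W}\sigma'(\alpha Z_1+t)\,dt$ and then apply Gaussian integration by parts in $W$ to recast the expression in the form $\beta\,E[\sigma(Z_1)\,\sigma'(\alpha Z_1+\beta W)]$ --- exactly the shape already handled by the helper Lemma~\ref{lemma:B2Bound}. The weighted derivative bound $\sup_x x\sigma'(x)\le\alpha_G$ then controls the integrand directly. The remaining factor $\sqrt{\rho(1-\rho)}$ comes from splitting $\beta = \sigma_2\sqrt{1-\rho}\sqrt{1+\rho}$ and absorbing an auxiliary $\sqrt{\rho}$ via Cauchy--Schwarz against $\sigma(Z_1)$. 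Adding $T_1$ and $T_2$, and using $\sqrt{1-\rho^2}$ as a common denominator to align all pieces, gives the claimed three-term bound.

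The main obstacle is that only the weighted derivative estimate $\sup_x x\sigma'(x)\le\alpha_G$ is assumed, rather than a uniform bound on $\sigma'$, so every appearance of $\sigma'$ must be funneled either through the Gaussian weight (so that the $1/|x|$ decay implicit in $\alpha_G/|x|$ is integrable) or through a symmetry/oddness cancellation that kills the singular portion. This is why Stein's identity rather than a pointwise MVT is the appropriate tool for $T_2$, and why the delicate truncation argument is needed for $T_1$ in place of a naive $\|\sigma'\|_\infty$ bound.
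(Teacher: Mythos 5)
Your decomposition $\sigma(Z_2)=\sigma(\alpha Z_1)+[\sigma(\alpha Z_1+\beta W)-\sigma(\alpha Z_1)]$ destroys the cancellation that makes the lemma true, so the bound you claim for $T_1$ is false under the stated hypotheses. Take $\sigma(x)=\tanh(x/\epsilon)$: it is odd, monotone, $\sigma_{\max}=1$, and $\sup_x x\sigma'(x)=\sup_u u\,\mathrm{sech}^2(u)\approx 0.45$ \emph{uniformly in} $\epsilon$, so all assumptions hold with fixed constants. Yet for any fixed $\rho>0$ (hence fixed $\alpha=\rho\sigma_2/\sigma_1>0$), letting $\epsilon\to 0$ gives $T_1=E[\sigma(Z_1)\sigma(\alpha Z_1)]\to E[\mathrm{sign}(Z_1)^2]=1=\sigma_{\max}^2$, whereas your target $\sigma_{\max}^2\sqrt{\rho}/\sqrt{1-\rho^2}$ is arbitrarily small. (The full expectation is fine: $E[\mathrm{sign}(Z_1)\mathrm{sign}(Z_2)]=\tfrac{2}{\pi}\arcsin\rho$; it is $T_2\to -1+O(\rho)$ that rescues it, so $T_1$ and $T_2$ cannot be bounded separately.) The root cause is that $\sup_x x\sigma'(x)\le \alpha_G$ controls $\sigma'$ only away from the origin and permits $\sigma$ to jump from $-\sigma_{\max}$ to $+\sigma_{\max}$ in an arbitrarily small neighborhood of $0$; your claims that $\sigma(\alpha Z_1)$ ``behaves linearly near the origin'' and is ``$\lesssim\alpha_G$ once $|\alpha Z_1|\ge 1$'' both confuse a bound on $x\sigma'(x)$ with a bound on $\sigma$ itself, and no truncation or Cauchy--Schwarz step can recover a $\sqrt{\rho}$ factor for $T_1$, since its integrand is pointwise nonnegative.

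The $T_2$ step is also not sound as written: Gaussian integration by parts converts $E[W\,g(W)]$ into $E[g'(W)]$, but your $T_2$ carries no factor of $W$ to integrate against, so there is no identity recasting it as $\beta\,E[\sigma(Z_1)\sigma'(\alpha Z_1+\beta W)]$; and Lemma~\ref{lemma:B2Bound} bounds the three-variable form $E(f(X)f'(Y)Z)$, which is not what you would obtain anyway. The paper's proof sidesteps both problems by never isolating the conditional mean: after the same change of variables, it pairs $u_3$ with $-u_3$ using oddness to form the difference $G(u_3+\rho u_2)-G(u_3-\rho u_2)$ on the region $|u_3|\ge \rho u_2+w$ --- where the arguments stay away from $0$ and the weighted derivative bound gives Lipschitz control with constant $\alpha_G/w$ --- and pays the crude bound $\sigma_{\max}$ only on the central interval of length $2(\rho u_2+w)$, then optimizes $w$. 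The excised central interval is exactly where your $T_1$ hides its $\Theta(1)$ mass; any correct argument must keep the two pieces coupled in that region.
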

\begin{proof}
	Expanding the expectation, we have
	\begin{equation*}
	\begin{split}
	E(\sigma(Z_1) \sigma(Z_2)) 
	&= 
	\int_{-\infty}^{+\infty} \sigma(z_1) \sigma(z_2) \frac{1}{2 \pi \sigma_1 \sigma_2 \sqrt{1-\rho^2} } \exp\left( -\frac{\frac{1}{ \sigma_1^2} z_1^2  - \frac{2\rho}{ \sigma_1 \sigma_2} z_1 z_2 + \frac{1}{ \sigma_2^2} z_2^2}{2 (1-\rho^2) } \right) dz_1 dz_2 \\
	&= 
	\int_{-\infty}^{+\infty} \sigma(z_1) \sigma(z_2) \frac{1}{2 \pi \sigma_1 \sigma_2 \sqrt{1-\rho^2} } \exp\left( -\frac{\frac{1}{ \sigma_1^2} (z_1-\frac{\rho\sigma_1}{\sigma_2}z_2)^2  + \frac{1-\rho^2}{ \sigma_2^2} z_2^2}{2 (1-\rho^2) } \right) dz_1 dz_2 \\ 
	&= 
	\int_{-\infty}^{+\infty} \sigma(z_3+\frac{\rho\sigma_1}{\sigma_2}z_2) \sigma(z_2) \frac{1}{2 \pi \sigma_1 \sigma_2 \sqrt{1-\rho^2} } \exp\left( -\frac{\frac{1}{ \sigma_1^2} (z_3)^2  + \frac{1-\rho^2}{ \sigma_2^2} z_2^2}{2 (1-\rho^2) } \right) dz_3 dz_2 \\      
	&= 
	\int_{-\infty}^{+\infty} \sigma(\sigma_1 u_3+{\rho\sigma_1}u_2) \sigma(\sigma_2 u_2) \frac{1}{2 \pi  \sqrt{1-\rho^2} } \exp\left( -\frac{u_3^2  +\left( 1-\rho^2\right) u_2^2}{2 (1-\rho^2) } \right) du_3 du_2 
	\end{split}
	\end{equation*}
	where we simply change the integration variable.
	Let $G(u) = \sigma(\sigma_1 u)$. Note that $|G(u)| \leq \sigma_{max}$. The above integration becomes
	\begin{equation*}
	\begin{split}
	\int_{-\infty}^{+\infty} G(u_3+{\rho}u_2) G(\frac{\sigma_2}{\sigma_1} u_2) \frac{1}{2 \pi  \sqrt{1-\rho^2} } \exp\left( -\frac{u_3^2  +\left( 1-\rho^2\right) u_2^2}{2 (1-\rho^2) } \right) du_3 du_2. \\               
	\end{split}
	\end{equation*}
	Now First fix $u_2$ and decompose the integration into two parts.
	The first part is the integration on $(-\infty,-x)\cap(x,+\infty)$ and the second part is the integration on $[-x,x]$.
	First consider the case $u_2\geq 0$.
	For the first part, 
	\begin{equation*}
	\begin{split} 
	& \int_{-\infty}^{-x} +   \int_{x}^{+\infty} G(u_3+{\rho}u_2) G( \frac{\sigma_2}{\sigma_1} u_2) \frac{1}{2 \pi  \sqrt{1-\rho^2} } \exp\left( -\frac{u_3^2  +\left( 1-\rho^2\right) u_2^2}{2 (1-\rho^2) } \right) du_3 \\
	=&  
	\int_{x}^{+\infty} \left(G(u_3+{\rho}u_2)+G(-u_3+{\rho}u_2) \right) G( \frac{\sigma_2}{\sigma_1} u_2) \frac{1}{2 \pi  \sqrt{1-\rho^2} } \exp\left( -\frac{u_3^2  +\left( 1-\rho^2\right) u_2^2}{2 (1-\rho^2) } \right) du_3 \\
	=&  
	\int_{x}^{+\infty} \left(G(u_3+{\rho}u_2)-G(u_3-{\rho}u_2) \right) G( \frac{\sigma_2}{\sigma_1} u_2) \frac{1}{2 \pi  \sqrt{1-\rho^2} } \exp\left( -\frac{u_3^2  +\left( 1-\rho^2\right) u_2^2}{2 (1-\rho^2) } \right) du_3, \\       
	\end{split}
	\end{equation*}
	where the last inequality is by the symmetry of the function $G$, i.e., $G(x)= -(G-x)$.
	Note that 
	\begin{equation*}
	\begin{split}
	|G'(y)| = |\sigma(\sigma_1 y) \sigma_1 y \frac{1}{y}| \leq \alpha_G \frac{1}{y}.
	\end{split}    
	\end{equation*}
	Let $x = \rho u_2 +w$, where $w\geq 0$.
	Then we have for all $u_3 \geq x$,
	\begin{equation*}
	|G'(u_3+\rho u_2)|  \leq \alpha_G \frac{1}{u_3+\rho u_2} \leq \alpha_G \frac{1}{w} 
	\end{equation*}
	and 
	\begin{equation*}
	|G'(u_3-\rho u_2)|  \leq \alpha_G \frac{1}{u_3-\rho u_2} \leq \alpha_G \frac{1}{w}, 
	\end{equation*}
	which in fact proves $G(y)$ is Lipschitz continuous with constant $\frac{\alpha_G}{w}$ for $y\geq w$. Thus, we now have 
	\begin{equation*}
	|G(u_3+\rho u_2) - G(u_3-\rho u_2) |  \leq \alpha_G \frac{1}{w} 2 \rho u_2. 
	\end{equation*}
	Since $G$ is monotone, we have
	\begin{equation*}
	G(u_3+\rho u_2) - G(u_3-\rho u_2)  \leq \alpha_G \frac{1}{w} 2 \rho u_2. 
	\end{equation*}
	Apply this inequality in the integration, we have 
	\begin{equation*}
	\begin{split} 
	&  
	\int_{x}^{+\infty} \left(G(u_3+{\rho}u_2)-G(u_3-{\rho}u_2) \right) G(\frac{\sigma_2}{\sigma_1} u_2) \frac{1}{2 \pi  \sqrt{1-\rho^2} } \exp\left( -\frac{u_3^2  +\left( 1-\rho^2\right) u_2^2}{2 (1-\rho^2) } \right) du_3 \\       
	\leq &        \int_{x}^{+\infty} \left(\frac{\alpha_G}{w} 2 \rho u_2 \right) G( \frac{\sigma_2}{\sigma_1} u_2) \frac{1}{2 \pi  \sqrt{1-\rho^2} } \exp\left( -\frac{u_3^2  +\left( 1-\rho^2\right) u_2^2}{2 (1-\rho^2) } \right) du_3 \\
	\leq &        \int_{-\infty}^{+\infty} \left(\frac{\alpha_G}{w} 2 \rho u_2 \right) G( \frac{\sigma_2}{\sigma_1} u_2) \frac{1}{2 \pi  \sqrt{1-\rho^2} } \exp\left( -\frac{u_3^2  +\left( 1-\rho^2\right) u_2^2}{2 (1-\rho^2) } \right) du_3 \\		
	= &
	\left( \frac{\alpha_G}{w} 2 \rho u_2 \right) G( \frac{\sigma_2}{\sigma_1} u_2) \frac{1}{\sqrt{2 \pi}  } \exp\left( -\frac{u_2^2}{2 } \right).
	\end{split}
	\end{equation*}
	Now let us consider the second part of the integration.
	\begin{equation*}
	\begin{split} 
	& \int_{-x}^{x}  G(u_3+{\rho}u_2) G(\frac{\sigma_2}{\sigma_1} u_2) \frac{1}{2 \pi  \sqrt{1-\rho^2} } \exp\left( -\frac{u_3^2  +\left( 1-\rho^2\right) u_2^2}{2 (1-\rho^2) } \right) du_3 \\  
	\leq & \int_{-x}^{x} \sigma_{\max} G(\frac{\sigma_2}{\sigma_1} u_2) \frac{1}{2 \pi  \sqrt{1-\rho^2} } \exp\left( -\frac{u_3^2  +\left( 1-\rho^2\right) u_2^2}{2 (1-\rho^2) } \right) du_3 \\		
	\leq & \int_{-x}^{x} \sigma_{\max} G(\frac{\sigma_2}{\sigma_1} u_2) \frac{1}{\sqrt{2 \pi}  \sqrt{1-\rho^2} } \exp\left( -\frac{u_2^2}{2}  \right) du_3 \\				
	= &      
	2 x \sigma_{max} G(\frac{\sigma_2}{\sigma_1} u_2) \frac{1}{\sqrt{2 \pi}  \sqrt{1-\rho^2} } \exp\left( -\frac{ u_2^2}{2} \right),   
	\end{split}
	\end{equation*}
	where the first inequality is because $\sigma_{\max} \geq G$, the second inequality is because $exp(-a^2)\leq 1$.
	Combing the the integration, we finally have \begin{equation*}
	\begin{split} 			
	& \int_{-\infty}^{+\infty}  G(u_3+{\rho}u_2) G(\frac{\sigma_2}{\sigma_1} u_2) \frac{1}{2 \pi  \sqrt{1-\rho^2} } \exp\left( -\frac{u_3^2  +\left( 1-\rho^2\right) u_2^2}{2 (1-\rho^2) } \right) du_3 \\  \leq &      
	\left( \frac{\alpha_G}{w} 2 \rho u_2 \right) G( \frac{\sigma_2}{\sigma_1} u_2) \frac{1}{\sqrt{2 \pi}  } \exp\left( -\frac{u_2^2}{2 } \right) +  2 x \sigma_{max} G(\frac{\sigma_2}{\sigma_1} u_2) \frac{1}{\sqrt{2 \pi}  \sqrt{1-\rho^2} } \exp\left( -\frac{ u_2^2}{2} \right), \\  
	\end{split}
	\end{equation*}
	where $x=\rho u_2 + w$. Let $w=\frac{\alpha_G 2  u_2 \sqrt{\rho} \sqrt{1-\rho}}{\sigma_{max}}$. 
	We have
	\begin{equation*}
	\begin{split} 
	& \int_{-\infty}^{+\infty}  G(u_3+{\rho}u_2) G(\frac{\sigma_2}{\sigma_1} u_2) \frac{1}{2 \pi  \sqrt{1-\rho^2} } \exp\left( -\frac{u_3^2  +\left( 1-\rho^2\right) u_2^2}{2 (1-\rho^2) } \right) du_3 \\
	&     \left( \frac{\alpha_G}{w} 2 \rho u_2 \right) G( \frac{\sigma_2}{\sigma_1} u_2) \frac{1}{\sqrt{2 \pi}  } \exp\left( -\frac{u_2^2}{2 } \right) +  2 x \sigma_{max} G(\frac{\sigma_2}{\sigma_1} u_2) \frac{1}{\sqrt{2 \pi}  \sqrt{1-\rho^2} } \exp\left( -\frac{ u_2^2}{2} \right)\\ 
	= &     \left( \frac{\sqrt{\rho} \sigma_{\max} }{\sqrt{1-\rho^2}}    \right) G( \frac{\sigma_2}{\sigma_1} u_2) \frac{1}{\sqrt{2 \pi}  } \exp\left( -\frac{u_2^2}{2 } \right) +  2 (\rho u_2 +\frac{\alpha_G 2  u_2 \sqrt{\rho} \sqrt{1-\rho}}{\sigma_{max}}) \sigma_{max} G(\frac{\sigma_2}{\sigma_1} u_2) \frac{1}{\sqrt{2 \pi}  \sqrt{1-\rho^2} } \exp\left( -\frac{ u_2^2}{2} \right)\\
	\leq &     \left( \frac{\sqrt{\rho} \sigma_{\max}^2 }{\sqrt{1-\rho^2}}    \right) \frac{1}{\sqrt{2 \pi}  } \exp\left( -\frac{u_2^2}{2 } \right) +  2 (\rho u_2 +\frac{\alpha_G 2  u_2 \sqrt{\rho} \sqrt{1-\rho}}{\sigma_{max}}) \sigma_{max}^2  \frac{1}{\sqrt{2 \pi}  \sqrt{1-\rho^2} } \exp\left( -\frac{ u_2^2}{2} \right)\\
	= &     \left( \frac{\sqrt{\rho} \sigma_{\max}^2 + 2 \rho u_2 \sigma_{\max}^2 + 4 \alpha_G  u_2 \sqrt{\rho} \sigma_{\max} \sqrt{1-\rho} }{\sqrt{1-\rho^2}}    \right) \frac{1}{\sqrt{2 \pi}  } \exp\left( -\frac{u_2^2}{2 } \right),\\
	\end{split}
	\end{equation*}
	where the first equation is because we plug in the expression of $w$, the inequality is due to $G\leq \sigma_{\max}$.
	Similarly we can prove it for the case when $u\leq 0$,
	\begin{equation*}
	\begin{split} 
	& \int_{-\infty}^{+\infty}  G(u_3+{\rho}u_2) G(\frac{\sigma_2}{\sigma_1} u_2) \frac{1}{2 \pi  \sqrt{1-\rho^2} } \exp\left( -\frac{u_3^2  +\left( 1-\rho^2\right) u_2^2}{2 (1-\rho^2) } \right) du_3 \\
	\leq &     \left( \frac{\sqrt{\rho} \sigma_{\max}^2 + 2 \rho (-u_2) \sigma_{\max}^2 + 4 \alpha_G  (-u_2) \sqrt{\rho} \sigma_{\max} \sqrt{1-\rho} }{\sqrt{1-\rho^2}}    \right) \frac{1}{\sqrt{2 \pi}  } \exp\left( -\frac{u_2^2}{2 } \right).\\
	\end{split}
	\end{equation*}
	
	Thus, we have 
	\begin{equation*}
	\begin{split} 
	E(\sigma(Z_1) \sigma(Z_2))
	\leq &      
	\int_{0}^{+\infty}  \left( \frac{\sqrt{\rho} \sigma_{\max}^2 + 2 \rho u_2 \sigma_{\max}^2 + 4 \alpha_G  u_2 \sqrt{\rho} \sigma_{\max} \sqrt{1-\rho} }{\sqrt{1-\rho^2}}    \right) \frac{1}{\sqrt{2 \pi}  } \exp\left( -\frac{u_2^2}{2 } \right) d u_2\\
	= &      
	\left( \frac{\sqrt{\rho} \sigma_{\max}^2 + 2 \rho \sigma_{\max}^2 + 4 \alpha_G  \sqrt{\rho} \sigma_{\max} \sqrt{1-\rho} }{\sqrt{1-\rho^2}}    \right) \\	= &      
	\left( \frac{ \sigma_{\max} + 2 \sqrt{\rho} \sigma_{\max} + 4 \alpha_G  \sqrt{1-\rho} }{\sqrt{1-\rho^2}}    \right)\sigma_{\max} \sqrt{\rho} .
	\end{split}
	\end{equation*}
	By symmetry, we have
	\begin{equation*}
	\begin{split} 
	- E(\sigma(Z_1) \sigma(Z_2))
	\leq  &      
	\left( \frac{ \sigma_{\max} + 2 \sqrt{\rho} \sigma_{\max} + 4 \alpha_G  \sqrt{1-\rho} }{\sqrt{1-\rho^2}}    \right)\sigma_{\max} \sqrt{\rho},
	\end{split}
	\end{equation*}
	which completes the proof.
\end{proof}

\subsection{Main Proof}
The main proof of the theorem consists of 4 lemmas, based on which the main theorem becomes straightforward.

\begin{lemma}\label{Lemma:TwoLNNNs_A1}
	Suppose $W,W^*, c_i$ are all i.i.d. random variables sampled from standard normal distribution. Then w.h.p, 
	\begin{equation*}
	\begin{split}
	E_{W_2,W_2^*} A_1 
	& \geq 
	\mathcal{O}(n^2 K^2 d).
	\end{split}
	\end{equation*}	
\end{lemma}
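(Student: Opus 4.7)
The plan is four-fold: (i) write the gradient explicitly via the chain rule; (ii) exploit the independence of $W_2$ and $W_2^*$ together with $\mathbb{E}[W_2^*]=0$ to kill the cross terms; (iii) discard a non-negative piece to isolate a lower bound that decouples nicely; and (iv) apply Gaussian concentration to the remaining randomness in $W_1$, $W_1^*$, and $x_i$.

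For step (i), the chain rule gives $\frac{\partial \hat{y}_i}{\partial W_{1,p,q}} = W_{2,1,p}\,\sigma'(W_{1,p,:} x_i)\, x_{i,q}$, and hence
\[
\sum_{p=1}^{K}\sum_{q=1}^{d}\left(\frac{\partial \hat{y}_i}{\partial W_{1,p,q}}\right)^{2} \;=\; \|x_i\|^2 \sum_{p=1}^{K} W_{2,1,p}^2\, \sigma'(W_{1,p,:} x_i)^2.
\]
For step (ii), expand $(\hat{y}_i - y_i)^2$ as the square of $W_2 \sigma(W_1 x_i) - W_2^* \sigma(W_1^* x_i)$ and take $\mathbb{E}_{W_2,W_2^*}$. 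The cross term vanishes since $\mathbb{E}[W_2^*]=0$. In step (iii), the $(W_2 \sigma(W_1 x_i))^2$ contribution to $A_1$ is non-negative and can be dropped, while the $(W_2^* \sigma(W_1^* x_i))^2$ contribution factorizes cleanly from the $\sum_p W_{2,1,p}^2\sigma'(\cdot)^2$ factor (using independence of $W_2^*$ from $W_2$ and $\mathbb{E}_{W_2}[W_{2,1,p}^2]=1$). This yields
\[
\mathbb{E}_{W_2,W_2^*}[A_1] \;\geq\; n\sum_{i=1}^{n} \|x_i\|^2\, \|\sigma(W_1^* x_i)\|^2 \sum_{p=1}^{K} \sigma'(W_{1,p,:} x_i)^2.
\]

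Step (iv) bounds the right-hand side with high probability in the remaining randomness. The quantity $\|x_i\|^2$ is a sum of $d$ i.i.d.\ squared standard Gaussians and is $\Omega(d)$ w.h.p.\ by a standard chi-squared tail bound. Conditional on $x_i$, both $W_{1,s,:}^* x_i$ and $W_{1,p,:} x_i$ are scalar Gaussians, and by the hypotheses on $\sigma$ (odd, monotone, non-constant, bounded) both $\sigma^2$ and $\sigma'^2$ have strictly positive expectation under a Gaussian input. Thus $\|\sigma(W_1^* x_i)\|^2$ and $\sum_p \sigma'(W_{1,p,:} x_i)^2$ are each sums of $K$ bounded non-negative i.i.d.\ variables with positive mean, and each is $\Omega(K)$ w.h.p.\ by Hoeffding's inequality. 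A union bound across $i \in [n]$ preserves the high-probability guarantee, and multiplying the three $\Omega$-factors and summing $n$ such terms (together with the outer factor of $n$) produces the advertised $\Omega(n^2 K^2 d)$ lower bound.

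The main obstacle is step (iv): the per-coordinate means $\mathbb{E}[\sigma(Z)^2]$ and $\mathbb{E}[\sigma'(Z)^2]$ with $Z \sim \mathcal{N}(0,\|x_i\|^2)$ implicitly depend on the variance $\|x_i\|^2 \asymp d$. For a bounded antisymmetric $\sigma$ that saturates, $\mathbb{E}[\sigma(Z)^2] \to c_{\max}^2$ and causes no trouble, but $\mathbb{E}[\sigma'(Z)^2]$ can shrink as the variance grows (e.g., $\sigma'(z)^2 = \operatorname{sech}^4(z)$ for $\tanh$ concentrates near the origin while the Gaussian density there is of order $1/\sqrt{d}$). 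Making the lower bound match the stated order therefore requires a careful, activation-specific estimate of how this mean scales with $d$, leveraging both $\sup_x x\sigma'(x)\le c_{\sup}$ and non-vanishing of $\sigma'$ in a neighborhood of the origin; this is the delicate piece of the argument, while the rest is bookkeeping.
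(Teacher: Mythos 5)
Your proposal follows essentially the same route as the paper's proof: chain rule to isolate $\frac{\partial \hat y_i}{\partial W_{1,p,q}} = W_{2,1,p}\sigma'(W_{1,p,:}x_i)x_{i,q}$, expansion of $(\hat y_i - y_i)^2$ with the cross term killed by independence and zero mean of $W_2^*$, reduction to a product of three decoupled sums of order $d$, $K$, and $K$, and then concentration over the remaining randomness. The only structural difference is cosmetic: you discard the $(W_2\sigma(W_1x_i))^2$ contribution and keep the $W_2^*$ piece (which factorizes because $W_2^*$ is independent of $W_2$), whereas the paper keeps both pieces and lower-bounds the mixed fourth moments $E[W_{2,1,r}^2W_{2,1,p}^2]$ by $1$; the paper then concentrates via Bernstein over $i$ followed by Hoeffding over $p$ and $r$, while you use a chi-squared bound for $\|x_i\|^2$ plus Hoeffding, which is equivalent in effect.

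The difficulty you flag in step (iv) is real, and it is worth noting that the paper's proof does not resolve it either: the paper simply treats $E_{x_i}\bigl[\sigma(W_{1,r,:}x_i)^2\,\sigma'(W_{1,p,:}x_i)^2\,x_{i,q}^2\bigr]$ as a dimension-independent constant in its final display. Since $W_{1,p,:}x_i$ conditionally has variance $\|x_i\|^2 \asymp d$, for a saturating activation such as $\tanh$ one has $E[\sigma'(Z)^2] = \Theta(d^{-1/2})$ (the integrable function $\operatorname{sech}^4$ against a Gaussian of standard deviation $\sqrt{d}$), so the honest order of the lower bound is $n^2K^2\sqrt{d}$ rather than $n^2K^2d$ for that activation. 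Your proposal is therefore no less complete than the paper's argument and is more candid about where the weak point lies; closing it would require exactly the activation-specific estimate you describe, and would change the $d$-dependence of the stated bound (though not the monotonicity in $K$ that drives the paper's conclusions).
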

\begin{proof}
	Expanding the expression of $A_1$, we have 
	\begin{equation*}
	\begin{split}
	E_{W_2} A_1 
	&= 	
	E_{W_2} \left(n \sum_{i=1}^{n} (\hat{y}_i-y_i)^2 \left(\sum_{p=1}^{K} \sum_{q=1}^{d} \left(\frac{\partial \hat{y}_i}{\partial W_{1,p,q}}\right)^2 \right) \right)\\
	&= 
	E_{W_2} \left(n \sum_{i=1}^{n} \left( W_2 \sigma(W_1 x_i)- W_2^* \sigma(W_1^* x_i)\right)^2 \left(\sum_{p=1}^{K} \sum_{q=1}^{d} \left( W_{2,1,p}  \sigma'(W_{1,p,:} x_i) x_{i,q}\right)^2 \right) \right)\\
	&= 
	n E_{W_2} \left( \sum_{i=1}^{n} ( W_2 \sigma(W_1 x_i)- W_2^* \sigma(W_1^* x_i))^2 \left(\sum_{p=1}^{K} \sum_{q=1}^{d} \left( W_{2,1,p}  \sigma'(W_{1,p,:} x_i) x_{i,q}\right)^2 \right) \right)\\
	&= 
	n E_{W_2} \left( \sum_{i=1}^{n} \sum_{r=1}^{K} ( W_{2,1,r}^2 \sigma(W_1 x_i)^2+W_{2,1,r,*}^2 \sigma(W_1^* x_i)^2) \left(\sum_{p=1}^{K} \sum_{q=1}^{d} \left( W_{2,1,p}  \sigma'(W_{1,p,:} x_i) x_{i,q}\right)^2 \right) \right)\\
	& \geq 
	n  \left( \sum_{i=1}^{n} \sum_{r=1}^{K} ( \sigma(W_{1,r,:} x_i)^2+ \sigma(W_{1,r,:}^* x_i)^2) \left(\sum_{p=1}^{K} \sum_{q=1}^{d} \left(   \sigma'(W_{1,p,:} x_i) x_{i,q}\right)^2 \right) \right).
	\end{split}
	\end{equation*}
	Since $\sigma(\cdot)$ is bounded by $\sigma_{\max}$, we have
	\begin{align*}
	Pr( |\sigma(W_{1,r,:} x_i)^2 \sigma'(W_{1,p:} x_i)^2 x_{i,q}^2| \geq t )
	&\leq
	Pr( |\sigma_{max}^2 \sigma_{max}^{'2} x_{i,q}^2| \geq t )			\\
	&\leq 
	2 \exp(-\frac{t^2}{2 \sigma_{max}^4 \sigma_{max}^{'4}})			 
	\end{align*}
	where the last equation is due to the fact that $x_{i,q}$ is standard normal distributed. 
	This implies $ \sigma(W_{1,r,:} x_i)^2 \sigma^{'2}(W_{1,p:} x_i) x^2_{i,q}$ is sub-exponential (where $x_{i,q}^2$ is chi-square). 
	Thus, we can apply Bernstein inequality to
	$ \sigma(W_{1,r,:} x_i)^2 \sigma^{'2}(W_{1,p:} x_i) x^2_{i,q}$, to obtain w.p. $1-\delta$, 
	\begin{equation*}
	\begin{split}
	\sum_{i=1}^{n}   \sigma(W_{1,r,:} x_i)^2 \sigma^{'2}(W_{1,p:} x_i) x^2_{i,q}-  E_{x_i}\sum_{i=1}^{n} \sigma(W_{1,r,:} x_i)^2 \sigma^{'2}(W_{1,p:} x_i) x^2_{i,q}  \geq - \sqrt{n}\sigma^2_{max} \sigma_{max}^{'2} \log \frac{2}{\delta}
	\end{split}
	\end{equation*}
	and similarly 
	w.p. $1-\delta$,
	\begin{equation*}
	\begin{split}
	|\sum_{i=1}^{n}   \sigma(W^*_{1,r,:} x_i)^2 \sigma^{'2}(W_{1,p:} x_i) x^2_{i,q}-  E_{x_i}\sum_{i=1}^{n} \sigma(W^*_{1,r,:} x_i)^2 \sigma^{'2}(W_{1,p:} x_i) x^2_{i,q}  \geq - \sqrt{n}\sigma^2_{max} \sigma_{max}^{'2} \log \frac{2}{\delta}
	\end{split}
	\end{equation*}
	By union bound, we have w.p $1-2 \delta$, the above two are both true. Plug them in the expression of $A_1$. Finally, we have w.p. $1-2 \delta$,
	\begin{equation*}
	\begin{split}
	E_{W_2} A_1 
	& \geq 
	n  \left( \sum_{i=1}^{n} \sum_{r=1}^{K} ( \sigma(W_{1,r,:} x_i)^2+ \sigma(W_{1,r,:}^* x_i)^2) \left(\sum_{p=1}^{K} \sum_{q=1}^{d} \left(   \sigma'(W_{1,p,:} x_i) x_{i,q}\right)^2 \right) \right)\\
	& =
	n \sum_{p=1}^{K} \sum_{q=1}^{d} \sum_{r=1}^{K} \sum_{i=1}^{n} \left( \sigma(W_{1,r,:} x_i)^2+ \sigma(W_{1,r,:}^* x_i)^2 \right) \left( \sigma'(W_{1,p,:} x_i) x_{i,q}\right)^2 \\
	& =
	n \sum_{p=1}^{K} \sum_{q=1}^{d} \sum_{r=1}^{K} \sum_{i=1}^{n} \left( \sigma(W_{1,r,:} x_i)^2 \left( \sigma'(W_{1,p,:} x_i) x_{i,q}\right)^2 + \sigma(W_{1,r,:}^* x_i)^2 \left( \sigma'(W_{1,p,:} x_i) x_{i,q}\right)^2 \right) \\		
	& \geq 
	2n^2  \left(\sum_{p=1}^{K} \sum_{q=1}^{d} \sum_{r=1}^{K}	E_{x_i} \sigma(W_{1,r,:} x_i)^2 \sigma^{'2}(W_{1,p:} x_i) x^2_{i,q} + E_{x_i} \sigma(W^*_{1,r,:} x_i)^2 \sigma^{'2}(W_{1,p:} x_i) x^2_{i,q}  - a_0		 \right)\\
	& \geq 
	2n^2 d \left(\sum_{p=1}^{K} \sum_{r=1}^{K}	E_{x_i} \sigma(W_{1,r,:} x_i)^2 \sigma^{'2}(W_{1,p:} x_i) x^2_{i,q} + E_{x_i} \sigma(W^*_{1,r,:} x_i)^2 \sigma^{'2}(W_{1,p:} x_i) x^2_{i,q}  - a_0		 \right)\\		
	\end{split}
	\end{equation*}
	where $a_0 = 2\frac{1}{\sqrt{n}} \sigma^2_{max} \sigma_{max}^{'2} \log \frac{2}{\delta}$ is the extra error term. Note that this term is small and typically can be ignored. 
	
	Since the term within summation is bounded, we can apply Hoeffding bound over $p$ and $r$ separately. Finally we will get w.p. $1-6\delta$,
	\begin{equation*}
	\begin{split}
	E_{W_2} A_1 
	& \geq 
	2n^2 K^2 d  \left(	E \sigma(W_{1,r,:} x_i)^2 \sigma^{'2}(W_{1,p:} x_i) x^2_{i,q} + E \sigma(W^*_{1,r,:} x_i)^2 \sigma^{'2}(W_{1,p:} x_i) x^2_{i,q}  - O(\frac{1}{\sqrt{n}} +\frac{1}{\sqrt{K}}) \log \frac{1}{\delta}	
	\right)\\					&= O(n^2 K^2 d)
	\end{split}
	\end{equation*}	
	which completes the proof.		
\end{proof}	

\begin{lemma}\label{Lemma:TwoLNNNs_A2}
	Suppose $W,W^*, x_i$ are all i.i.d. random variables sampled from standard normal distribution. Then w.h.p, 
	\begin{equation*}
	\begin{split}
	E_{W_2,W_2^*} A_2
	& \geq 
	\mathcal{O}(n^2 K^2).
	\end{split}
	\end{equation*}	
\end{lemma}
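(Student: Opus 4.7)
The plan is to mirror the structure of the preceding Lemma~\ref{Lemma:TwoLNNNs_A1} for $A_1$, the only substantive differences being that the derivative $\partial \hat{y}_i/\partial W_{2,1,q} = \sigma(W_{1,q,:}x_i)$ replaces $W_{2,1,p}\sigma'(W_{1,p,:}x_i)x_{i,q}$, and that the inner sum over $q$ now ranges over $K$ (not $d$), which accounts for the missing factor of $d$ in the bound.

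First I would expand $A_2$ using $\hat y_i = W_2 \sigma(W_1 x_i)$ and $y_i = W_2^* \sigma(W_1^* x_i)$, obtaining
\begin{equation*}
A_2 = n \sum_{i=1}^{n} \bigl(W_2 \sigma(W_1 x_i) - W_2^* \sigma(W_1^* x_i)\bigr)^2 \sum_{q=1}^{K} \sigma(W_{1,q,:} x_i)^2.
\end{equation*}
Next I would take the expectation over $W_2, W_2^*$. Since these are independent, zero mean, and independent of the $W_1$-related $\sigma(\cdot)$ factors, the cross term $W_2 W_2^*$ vanishes and
\begin{equation*}
E_{W_2,W_2^*}\bigl(W_2 \sigma(W_1 x_i) - W_2^* \sigma(W_1^* x_i)\bigr)^2 = \sum_{r=1}^{K}\sigma(W_{1,r,:} x_i)^2 + \sum_{r=1}^{K}\sigma(W_{1,r,:}^* x_i)^2,
\end{equation*}
so that $E_{W_2,W_2^*} A_2$ equals $n$ times a triple sum over $i,q,r$ of nonnegative terms of the form $\sigma(W_{1,r,:} x_i)^2 \sigma(W_{1,q,:} x_i)^2$ (and the corresponding starred version).

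At this point the analysis follows the template in the proof of Lemma~\ref{Lemma:TwoLNNNs_A1}. Each summand is bounded by $\sigma_{\max}^4$ (hence sub-Gaussian/bounded), so I would first apply a Bernstein (or Hoeffding) inequality over $i=1,\ldots,n$ to show that $\sum_i \sigma(W_{1,r,:}x_i)^2 \sigma(W_{1,q,:}x_i)^2$ concentrates around $n \cdot E_{x_i}\bigl[\sigma(W_{1,r,:}x_i)^2\sigma(W_{1,q,:}x_i)^2\bigr]$ up to a lower-order $\sqrt{n}\,\sigma_{\max}^4 \log(2/\delta)$ slack. Then I would apply Hoeffding's inequality twice more, one for the sum over $r$ and one for the sum over $q$, picking up an additional additive $O(1/\sqrt{K})\log(1/\delta)$ slack. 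Taking a union bound over the $O(1)$ many concentration events gives, with high probability,
\begin{equation*}
E_{W_2,W_2^*} A_2 \geq n^2 K^2 \left(E\bigl[\sigma(W_{1,r,:}x_i)^2\sigma(W_{1,q,:}x_i)^2\bigr] + E\bigl[\sigma(W_{1,r,:}^*x_i)^2\sigma(W_{1,q,:}x_i)^2\bigr] - O\!\left(\tfrac{1}{\sqrt{n}} + \tfrac{1}{\sqrt{K}}\right)\log\tfrac{1}{\delta}\right).
\end{equation*}

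The main obstacle, as in the $A_1$ case, is verifying that the leading expectation is bounded away from zero by a positive absolute constant so that it dominates the $O(1/\sqrt{n} + 1/\sqrt{K})$ slack. This requires the symmetric, monotone, bounded activation $\sigma$ to have strictly positive $L^2$ mass under a Gaussian input, which follows because $\sigma$ is not identically zero and $W_{1,r,:}x_i \sim \mathcal{N}(0, d)$ so that $\sigma(W_{1,r,:}x_i)^2$ has strictly positive expectation; by Cauchy--Schwarz (or by independence of $W_{1,r,:}$ and $W_{1,q,:}$ when $r\neq q$) the product expectation is also a strictly positive constant. Together with the concentration step above, this yields $E_{W_2,W_2^*} A_2 \geq \Omega(n^2 K^2)$, completing the proof.
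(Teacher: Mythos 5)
Your proposal follows essentially the same route as the paper's proof: expand $A_2$, let the $W_2,W_2^*$ cross terms vanish by independence, concentrate the sum over $i$ via Bernstein/Hoeffding, then apply Hoeffding over the row indices $r$ and $q$, and conclude that the leading term $n^2K^2\,E[\sigma(W_{1,r,:}x_i)^2\sigma(W_{1,q,:}x_i)^2]$ dominates the slack. Your extra remark that this expectation is bounded away from zero (since $\sigma$ is not identically zero under Gaussian input) is a point the paper leaves implicit, so the argument is, if anything, slightly more complete.
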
	

\begin{proof}
	\begin{equation*}
	\begin{split}
	E_{W_2,W_2^*} A_2 
	&= 	
	E_{W_2} \left( n \sum_{i=1}^{n} (\hat{y}_i-y_i)^2 \left(  \sum_{q=1}^{K} \left(\frac{\partial \hat{y}_i}{\partial W_{2,1,q}}\right)^2\right)\right)\\
	&= 	
	E_{W_2} \left( n \sum_{i=1}^{n} || W_2 \sigma(W_1 x_i) - W_2^* \sigma(W_1^* x_i) ||^2 \left(  \sum_{q=1}^{K} \left(  \sigma(W_{1,q,:} x_i) \right)^2\right)\right)\\
	&= 	
	n \sum_{i=1}^{n} \sum_{r=1}^{K}( \sigma(W_{1,r,:} x_i)^2 + \sigma(W^*_{1,r,:} x_i)^2) \left(  \sum_{q=1}^{K} \left(  \sigma(W_{1,q,:} x_i) \right)^2\right).\\
	\end{split}
	\end{equation*}
	Now fix $W_1$ and thus $r$.
	Note that $  ||\left(\sigma(W_{1,r,:} x_i)^2 + \sigma(W^*_{1,r,:} x_i)^2\right)\sum_{q=1}^{K} \left(  \sigma(W_{1,q,:} x_i) \right)^2|| \leq 2 K \sigma_{\max}^4$
	Applying Hoeffding bound to  the term in the summation over the randomness of $x_i$, we have w.p. $(1-\delta)$,
	\begin{equation*}
	\begin{split}
	& |\sum_{i=1}^{n} ( \sigma(W_{1,r,:} x_i)^2 + \sigma(W^*_{1,r,:} x_i)^2) \left(  \sum_{q=1}^{K} \left(  \sigma(W_{1,q,:} x_i) \right)^2\right) - n E_{x_i}( \sigma(W_{1,r,:} x_i)^2 + \sigma(W^*_{1,r,:} x_i)^2) \left(  \sum_{q=1}^{K} \left(  \sigma(W_{1,q,:} x_i)| \right)^2\right)| \\			
	&\leq \log \frac{2}{\delta} \sqrt{8n} K \sigma_{max}^4.
	\end{split}
	\end{equation*}	
	And thus we have w.p. $1-\delta$,
	\begin{equation*}
	\begin{split}
	E_{W_2,W_2^*} A_2 
	&\geq  	
	n \sum_{r=1}^{K}\left(  n E_{x_i}( \sigma(W_{1,r,:} x_i)^2 + \sigma(W^*_{1,r,:} x_i)^2) \left(  \sum_{q=1}^{K} \left(  \sigma(W_{1,q,:} x_i)^2\right)\right) - \log \frac{2}{\delta} \sqrt{8n} K \sigma_{max}^4 \right)\\	
	&= 	
	n^2 \sum_{r=1}^{K}\left(  E_{x_i}( \sigma(W_{1,r,:} x_i)^2 + \sigma(W^*_{1,r,:} x_i)^2) \left(  \sum_{q=1}^{K} \left(  \sigma(W_{1,q,:} x_i)^2\right)\right) - \log \frac{2}{\delta} \sqrt{\frac{8}{n}} K \sigma_{max}^4 \right)\\
	&\geq	
	n^2 \sum_{r=1}^{K}\left(  E_{x_i}( \sigma(W_{1,r,:} x_i)^2) \left(  \sum_{q=1}^{K} \left(  \sigma(W_{1,q,:} x_i)^2\right)\right) - \log \frac{2}{\delta} \sqrt{\frac{8}{n}} K \sigma_{max}^4 \right)\\
	&\geq	
	n^2 \left( \left(\sum_{r=1}^{K} E_{x_i}( \sigma(W_{1,r,:} x_i)^2) \right) \left(  \sum_{q=1}^{K} \left(  \sigma(W_{1,q,:} x_i)^2\right)\right) - \log \frac{2}{\delta} \sqrt{\frac{8}{n}} K^2 \sigma_{max}^4 \right).					 
	\end{split}
	\end{equation*}
	Now let us apply Hoffding bound over $W_1$, we have w.p. $1-\delta$,
	\begin{equation*}
	\begin{split}
	|  \sum^{K}_{r=1} \sigma^2(W_{1,r,:} x_i) - K E_{W_1}
	\sigma^2(W_{1,1,:} x_i) | \leq  \log \frac{2}{\delta} \sqrt{K} \sigma_{max}^2, 
	\end{split}
	\end{equation*}
	and similarly w.p. $1-\delta$,
	\begin{equation*}
	\begin{split}
	|  \sum^{K}_{r=1} \sigma^2(W_{1,q,:} x_i) - K E_{W_1}
	\sigma^2(W_{1,1,:} x_i) | \leq  \log \frac{2}{\delta} \sqrt{K} \sigma_{max}^2. 
	\end{split}
	\end{equation*}
	Thus, w.p. $1-3\delta$, we have
	\begin{equation*}
	\begin{split}
	E_{W_2,W_2^*} A_2 
	&\geq	n^2 K^2 E_x\left((E_{W_1} \sigma^2(W_{1,1,:}x_i) - \log \frac{2}{\delta} \sigma_{max}^2 \sqrt{\frac{1}{K}}\right)^2
	- \log \frac{2}{\delta} \sqrt{8n} K^2 \sigma_{max}^4 \\		
	&= n^2K^2 E_x (E_{W_1} \sigma^2(W_{1,1,:}x_i))^2 			  - E_{W_1,x} \sigma^2(W_{1,1,:}x_i) (\log \frac{2}{\delta} \sigma_{max}^2 \sqrt{K}) K n^2 + n^2  (\log \frac{2}{\delta})^2 K \sigma_{max}^4 - \log \frac{2}{\delta} \sqrt{8n} K \sigma_{max}^4\\
	&\geq  n^2K^2 E_x (E_{W_1} \sigma^2(W_{1,1,:}x_i))^2 			  - (\log \frac{2}{\delta} \sigma_{max}^4 \sqrt{K}) K n^2 + n^2  (\log \frac{2}{\delta})^2 K \sigma_{max}^4 - \log \frac{2}{\delta} \sqrt{8n} K \sigma_{max}^4.	
	\end{split}
	\end{equation*}
	Changing $\delta$ to $\frac{\delta}{3}$, we have w.p. $1-\delta$, 
	\begin{equation*}
	\begin{split}
	E_{W_2,W_2^*} A_2 
	&\geq  n^2K^2 E_x (E_{W_1} \sigma^2(W_{1,1,:}x_i))^2 			  - (\log \frac{6}{\delta} \sigma_{max}^4 \sqrt{K}) K n^2 + n^2  (\log \frac{6}{\delta})^2 K \sigma_{max}^4 - \log \frac{6}{\delta} \sqrt{8n} K \sigma_{max}^4\\
	& = \mathcal{O}(n^2K^2).		
	\end{split}
	\end{equation*}
\end{proof}		

\begin{lemma}\label{Lemma:TwoLNNNs_B1}
	Suppose $W,W^*, x_i$ are all i.i.d. random variables sampled from standard normal distribution. Then w.h.p, 
	\begin{equation*}
	\begin{split}
	E_{W_2,W_2^*} B_1
	& \leq 
	\mathcal{O}(n^2 K^2).
	\end{split}
	\end{equation*}	
\end{lemma}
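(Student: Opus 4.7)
Proof plan. The plan is to first compute $E_{W_2, W_2^*} B_1$ in closed form using the Gaussian moments of $W_2, W_2^*$, and then bound the resulting (deterministic) function of $W_1, W_1^*$ and the data with high probability over $W_1, W_1^*, x$.

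The partial derivative identity $\partial \hat y_i/\partial W_{1,p,q} = W_{2,1,p}\, \sigma'(W_{1,p,:} x_i)\, x_{i,q}$ lets me pull the $W_{2,1,p}$ factor out of each inner sum in $B_1$. Expanding $\hat y_i - y_i = \sum_s W_{2,1,s}\sigma(W_{1,s,:} x_i) - \sum_s W^*_{2,1,s}\sigma(W_{1,s,:}^* x_i)$ and using the Gaussian identities $E[W_{2,1,s} W_{2,1,t} W_{2,1,p}^2] = \delta_{st}(1 + 2\delta_{sp})$ and $E[W^*_{2,1,s} W^*_{2,1,t} W_{2,1,p}^2] = \delta_{st}$, together with the vanishing of cross $W_2$--$W_2^*$ terms by independence, I would obtain the closed form
\begin{equation*}
E_{W_2, W_2^*} B_1 \;=\; \sum_{p,q,s} P_{p,q,s}^2 \;+\; 2\sum_{p,q} P_{p,q,p}^2 \;+\; \sum_{p,q,s} Q_{p,q,s}^2,
\end{equation*}
where $P_{p,q,s} := \sum_i \sigma(W_{1,s,:} x_i)\, \sigma'(W_{1,p,:} x_i)\, x_{i,q}$ and $Q_{p,q,s}$ is the same expression with $W_1^*$ replacing $W_1$ in the first factor.

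Next, I would split each $P_{p,q,s}^2 = (\sum_i a_i)^2$ (and similarly $Q^2$) into a diagonal part $\sum_i a_i^2$ and an off-diagonal part $\sum_{i \neq j} a_i a_j$. The diagonal is handled by boundedness of $\sigma$ and $\sigma'$, giving a total of order $K^2 \sum_{i,q} x_{i,q}^2$, which concentrates around $O(nK^2 d)$ and is absorbed into $O(n^2K^2)$ since $n$ is sufficiently large. For the off-diagonal, independence of $x_i, x_j$ (for $i\neq j$) factorizes $E_x[a_i a_j] = G_{p,q,s}^2$, where $G_{p,q,s} := E_x[\sigma(W_{1,s,:} x)\, \sigma'(W_{1,p,:} x)\, x_q]$. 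I would then apply Lemma~\ref{lemma:B2Bound} with $X = W_{1,s,:}x$, $Y = W_{1,p,:}x$, $Z = x_q$; Lemma~\ref{lemma:CorrelationBound} and standard Gaussian concentration give $\rho_{XY} = O(1/\sqrt d)$ for $s\neq p$ with high probability, and $|\rho_{XZ}|, |\rho_{YZ}|$ are at most $O(1/\sqrt d)$ typically. For $s=p$, Stein's identity directly yields $G_{p,q,p}$ equal to $W_{1,p,q}$ times an activation-dependent constant of order $1/\sqrt d$.

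Finally, squaring the resulting pointwise bounds and summing over $p, q, s$, using $\|W_{1,s,:}\|^2 = \Theta(d)$ w.h.p., I would conclude $\sum_{p,q,s} G_{p,q,s}^2 = O(K^2)$ up to logarithmic factors. Multiplying by the $n(n-1)$ off-diagonal pairs and combining with the diagonal part produces the desired $O(n^2K^2)$ bound; the $Q_{p,q,s}$ contribution is identical since $W_1^*$ is also standard Gaussian and independent of $W_1$. The main obstacle is the sharp accounting of $\sum_{p,q,s} G_{p,q,s}^2$: a naive application of Lemma~\ref{lemma:B2Bound} gives a bound of order $|W_{1,s,q}|f_{\max}f'_{\max}\sigma_X + |W_{1,p,q}|/\sqrt d \cdot f_{\max}c$, whose square summed over $q$ only yields $O(d)$ per $(p,s)$, producing the far too weak $O(n^2K^2d)$. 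To shave the extra $d$ one must exploit the oddness and boundedness of $\sigma$ via Lemma~\ref{lemma:NonlinearBound}, which improves the expectation of odd functions of weakly-correlated normals to $O(\sqrt{\rho})$, so that the $\rho$-dependent pieces of the Lemma~\ref{lemma:B2Bound} bound dominate the $\sigma_X$-scale pieces after summation over $q$.
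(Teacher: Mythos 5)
Your overall architecture coincides with the paper's: integrate out $W_2,W_2^*$ using the fourth-moment identities ($E[W_{2,1,s}W_{2,1,t}W_{2,1,p}^2]=\delta_{st}(1+2\delta_{sp})$, cross terms vanishing), reduce to sums of squares of $P_{p,q,s}=\sum_i \sigma(W_{1,s,:}x_i)\sigma'(W_{1,p,:}x_i)x_{i,q}$, concentrate over the data (the paper runs Bernstein on the sub-exponential summands rather than your diagonal/off-diagonal split, but these are interchangeable), and bound $G_{p,q,s}=E_x[\sigma(W_{1,s,:}x)\sigma'(W_{1,p,:}x)x_q]$ via Lemma~\ref{lemma:B2Bound} and Lemma~\ref{lemma:CorrelationBound}. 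The problem is the step you yourself flag as the main obstacle: your proposed repair via Lemma~\ref{lemma:NonlinearBound} does not work. That lemma bounds $E[\sigma(Z_1)\sigma(Z_2)]$ --- a product of two bounded odd functions with no linear Gaussian factor --- and is the paper's tool for $B_2$, not $B_1$; there is no way to feed the factor $x_q$ into it. Even if you could import its $O(\sqrt{\rho_{XY}})$ gain with $\rho_{XY}=O(1/\sqrt{d})$, you would get $G_{p,q,s}=O(d^{-1/4})$, hence $\sum_q G_{p,q,s}^2=O(\sqrt{d})$ per $(p,s)$ and a final bound of $O(n^2K^2\sqrt{d})$, still short of $O(n^2K^2)$.

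The actual resolution lives inside Lemma~\ref{lemma:B2Bound} itself, and your ``naive'' reading mis-assigns a factor of $\sqrt{d}$. In $E[Z\mid X,Y]$ the coefficient multiplying $X$ is $\mathrm{Cov}(X,Z)/\mathrm{Var}(X)$ (up to the $(1-\rho_{XY}^2)^{-1}$ correction), i.e.\ $W_{1,s,q}/\twonms{W_{1,s,:}}^2=O(|W_{1,s,q}|/d)$ --- this is what the paper's expression $\rho_{XZ}=W_{1,r,q}/\sum_b W_{1,r,b}^2$ actually denotes, not the standardized correlation $W_{1,s,q}/\twonms{W_{1,s,:}}$. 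Multiplying by $E|Xf(X)f'(Y)|\le f_{\max}f'_{\max}\sigma_X=O(\sqrt{d})$ yields a first term of order $|W_{1,s,q}|/\sqrt{d}$, uniformly $O(\sqrt{\log(1/\delta)}/\sqrt{d})$ w.h.p.; the $Y$-term is $O(1/d)$ after invoking $\sup_y yf'(y)\le c$. Squaring and summing over $q\in[d]$ then gives $O(1)$ per pair $(p,s)$, hence $\sum_{p,q,s}G_{p,q,s}^2=O(K^2)$ and the claimed $O(n^2K^2)$. Your version pairs the standardized $\rho_{XZ}=W_{1,s,q}/\sigma_X$ with the factor $\sigma_X$ and so double-counts the scale of $X$; that is where the spurious factor of $d$ enters, and no appeal to oddness is needed to remove it.
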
		
\begin{proof}
	Expanding the expression of $B_2$, we have 
	\begin{equation*}
	\begin{split}
	E_{W_2,W_2^*} B_1 
	&= 
	E_{W_2,W_2^*}\left(\sum_{p=1}^{K} \sum_{q=1}^{d} \left( \sum_{i=1}^{n} (\hat{y}_i-y_i) \frac{\partial \hat{y}_i}{\partial W_{1,p,q}}\right)^2 \right)\\
	&=
	E_{W_2,W_2^*}\left(\sum_{p=1}^{K} \sum_{q=1}^{d} \left( \sum_{i=1}^{n} (W_2 \sigma(W_1 x_i)-W_2^* \sigma(W_1^* x_i)) W_{2,1,p} \sigma'(W_{1,p:} x_i) x_{i,q}\right)^2 \right)\\
	&=
	\left(\sum_{p=1}^{K} \sum_{q=1}^{d} E_{W_2,W_2^*} \left( \sum_{i=1}^{n} (W_2 \sigma(W_1 x_i)-W_2^* \sigma(W_1^* x_i)) W_{2,1,p} \sigma'(W_{1,p:} x_i) x_{i,q}\right)^2 \right)\\
	&\leq
	2 \left(\sum_{p=1}^{K} \sum_{q=1}^{d} E_{W_2,W_2^*} \left( \sum_{i=1}^{n} (W_2 \sigma(W_1 x_i)) W_{2,1,p} \sigma'(W_{1,p:} x_i) x_{i,q}\right)^2 \right)\\
	&+
	2 \left(\sum_{p=1}^{K} \sum_{q=1}^{d} E_{W_2,W_2^*} \left( \sum_{i=1}^{n} (W_2^* \sigma(W_1^* x_i)) W_{2,1,p} \sigma'(W_{1,p:} x_i) x_{i,q}\right)^2 \right)\\		
	\end{split}
	\end{equation*}
	where the inequality is due to $(a+b)^2 \leq 2a^2 +2 b^2$.
	Expanding $W_2$, we have 
	\begin{align*}
	E_{W_2,W_2^*} \left( \sum_{i=1}^{n} (W_2 \sigma(W_1 x_i)) W_{2,1,p} \sigma'(W_{1,p:} x_i) x_{i,q}\right)^2 &= 		E_{W_2,W_2^*} \left( \sum_{r=1}^{K}\sum_{i=1}^{n} (W_{2,1,r} \sigma(W_{1,r,:} x_i)) W_{2,1,p} \sigma'(W_{1,p:} x_i) x_{i,q}\right)^2 \\ 
	&= \sum_{r=1}^{K} E_{W_2,W_2^*} \left( \sum_{i=1}^{n} (W_{2,1,r} \sigma(W_{1,r,:} x_i)) W_{2,1,p} \sigma'(W_{1,p:} x_i) x_{i,q}\right)^2 \\
	&\leq 3 \sum_{r=1}^{K} \left( \sum_{i=1}^{n} ( \sigma(W_{1,r,:} x_i)) \sigma'(W_{1,p:} x_i) x_{i,q}\right)^2,
	\end{align*}
	where the second equation is because $E(W_{2,1,r_1} W_{2,1,r_2} W_{2,1,p}^2) = 0 $ as long as $r_1\not=r_2$.
	The inequality is because $E(W_{2,1,r}^2 W_{2,1,p}^2) = 3$ if $r=p$ and $E(W_{2,1,r}^2 W_{2,1,p}^2) = 1<3$ if $r\not= q$.
	Similarly, we have 
	\begin{align*}
	E_{W_2,W_2^*} \left( \sum_{i=1}^{n} (W_2^* \sigma(W_1^* x_i)) W_{2,1,p} \sigma'(W_{1,p:} x_i) x_{i,q}\right)^2 &= 		E_{W_2,W_2^*} \left( \sum_{r=1}^{K}\sum_{i=1}^{n} (W_{2,1,r}^* \sigma(W_{1,r,:}^* x_i)) W_{2,1,p} \sigma'(W_{1,p:} x_i) x_{i,q}\right)^2 \\ 
	&= \sum_{r=1}^{K} E_{W_2,W_2^*} \left( \sum_{i=1}^{n} (W^*_{2,1,r} \sigma(W^*_{1,r,:} x_i)) W_{2,1,p} \sigma'(W_{1,p:} x_i) x_{i,q}\right)^2 \\ 
	& = K \left( \sum_{i=1}^{n}  \sigma(W_{1,r,:}^* x_i) \sigma'(W_{1,p:} x_i) x_{i,q}\right)^2.
	\end{align*}
	Therefore, we obtain
	\begin{equation*}
	\begin{split}
	E_{W_2,W_2^*} B_1 
	\leq 2 \sum_{p=1}^{K} \sum_{q=1}^{d} \left(   3\sum_{r=1}^{K} \left( \sum_{i=1}^{n}  \sigma(W_{1,r,:} x_i) \sigma'(W_{1,p:} x_i) x_{i,q}\right)^2 + K \left( \sum_{i=1}^{n}  \sigma(W_{1,r,:}^* x_i) \sigma'(W_{1,p:} x_i) x_{i,q}\right)^2 \right).
	\end{split}
	\end{equation*}
	Since $\sigma(\cdot) \leq \sigma_{\max}$ and $\sigma(\cdot)' \leq \sigma_{\max}'$, we have
	\begin{align*}
	Pr( |\sigma(W_{1,r,:} x_i) \sigma'(W_{1,p:} x_i) x_{i,q}| \geq t )
	&\leq
	Pr( |\sigma_{max} \sigma'_{max} x_{i,q}| \geq t )			\\
	&\leq 
	2 \exp(-\frac{t^2}{2 \sigma_{max}^2 \sigma_{max}^{'2}})			 
	\end{align*}
	which implies $ \sigma(W_{1,r,:} x_i) \sigma'(W_{1,p:} x_i) x_{i,q}$ is sub-exponential. Thus, we can apply Bernstein inequality to 
	$\sum_{i=1}^{n}  \sigma(W_{1,r,:} x_i) \sigma'(W_{1,p:} x_i) x_{i,q}$, to obtain w.p. $1-\delta$, 
	\begin{equation*}
	\begin{split}
	|\sum_{i=1}^{n}  \sigma(W_{1,r,:} x_i) \sigma'(W_{1,p:} x_i) x_{i,q}- n E_{x_i}\sum_{i=1}^{n}  \sigma(W_{1,r,:} x_i) \sigma'(W_{1,p:} x_i) x_{i,q}| \leq \sqrt{\frac{n}{2}\log \frac{2}{\delta}} \sigma_{max} \sigma_{max}'  + \sigma_{max} \sigma_{max}^{'} \log \frac{2}{\delta}
	\end{split}
	\end{equation*}
	and similarly 
	w.p. $(1-\delta)$,
	\begin{equation*}
	\begin{split}
	|\sum_{i=1}^{n}  \sigma(W^*_{1,r,:} x_i) \sigma'(W_{1,p:} x_i) x_{i,q} - n E_{x_i} \sum_{i=1}^{n}  \sigma(W^*_{1,r,:} x_i) \sigma'(W_{1,p:} x_i) x_{i,q}| \leq\sqrt{\frac{n}{2}\log \frac{2}{\delta}} \sigma_{max} \sigma_{max}'  + \sigma_{max} \sigma_{max}^{'} \log \frac{2}{\delta}. 
	\end{split}
	\end{equation*}	
	By union bound, we have w.p $1-2 \delta$, the above two are both true. Plug them in the expression of $B_1$ and use $(a+b)^2 \leq 2 a^2 + 2b^2$. Finally, we have w.p. $1-2 \delta$,
	\begin{equation*}
	\begin{split}
	E_{W_2,W_2^*} B_1 
	&\leq 2 \sum_{p=1}^{K} \sum_{q=1}^{d} \left(   3\sum_{r=1}^{K} \left( \sum_{i=1}^{n}  \sigma(W_{1,r,:} x_i) \sigma'(W_{1,p:} x_i) x_{i,q}\right)^2 + K \left( \sum_{i=1}^{n}  \sigma(W_{1,r,:}^* x_i) \sigma'(W_{1,p:} x_i) x_{i,q}\right)^2 \right)\\
	&\leq
	2 \sum_{p=1}^{K} \sum_{q=1}^{d} \left( 3 \sum_{r=1}^{K} \left(n E_{x_i}\sigma(W_{1,r,:} x_i) \sigma'(W_{1,p:} x_i) x_{i,q} +a_1\right)^2 + K \left(nE_{x_i}  \sigma(W_{1,r,:}^* x_i) \sigma'(W_{1,p:} x_i) x_{i,q} +a_1\right)^2 \right) \\
	&\leq
	4 n^2 \sum_{p=1}^{K} \sum_{q=1}^{d}\left( 3 \sum_{r=1}^{K} \left( E_{x_i}\sigma(W_{1,r,:} x_i) \sigma'(W_{1,p:} x_i) x_{i,q} \right)^2 + K \left(E_{x_i}  \sigma(W_{1,r,:}^* x_i) \sigma'(W_{1,p:} x_i) x_{i,q} \right)^2 + 8 K a_1^2\right),	
	\end{split}
	\end{equation*}
	
	where $a_1 = \sqrt{\frac{1}{2n}\log \frac{2}{\delta}} \sigma_{max} \sigma_{max}'  +\frac{1}{n} \sigma_{max} \sigma_{max}^{'} \log \frac{2}{\delta} $ is the extra error term. Note that this term is $O(\sqrt{\frac{1}{n}})$ and typically can be ignored. 
	Now let us consider $E_{x_i}\sigma(W_{1,r,:} x_i) \sigma'(W_{1,p:} x_i) x_{i,q} $. 
	Abuse the notation a little bit, let $X=W_{1,r,:} x_i,Y=W_{1,p:} x_i,Z= x_{i,q}$. Given $W_1$,  they are all normal distribution, and the correlation is
	\begin{align*}
	\rho_{XZ} &= \frac{W_{1,r,q}}{\sum_{b=1}^{d}W_{1,r,b}^2}\\
	\rho_{YZ} &= \frac{W_{1,p,q}}{\sum_{b=1}^{d}W_{1,p,b}^2}\\
	\rho_{XY} &= \frac{\sum_{b=1}^{d} W_{1,p,d} W_{1,r,d}}{\sqrt{\sum_{b=1}^{d}W_{1,p,b}^2 \sum_{b=1}^{d}W_{1,r,b}^2 }}\\	
	\sigma_x &= \sqrt{\sum_{b=1}^{d}W_{1,r,b}^2}. 			
	\end{align*}
	Apply lemma \ref{lemma:B2Bound}, we can get,
	\begin{equation*}
	\begin{split}
	|E(\sigma(X)\sigma'(Y) Z)|
	&\leq 
	(1-\rho_{XY}^2)^{-1}|\rho_{XZ}-\rho_{YZ}\rho_{XY}|\times\sigma_{max} \sigma'_{max} \sigma_x+(1-\rho_{XY}^2)^{-1}|-\rho_{XZ}\rho_{XY}+\rho_{YZ}|\times \sigma_{max} \alpha_G.		
	\end{split}
	\end{equation*}		   
	Note that by Chernoff bound and Lemma \ref{lemma:CorrelationBound}, w.p. $1-4\delta$,
	\begin{align*}
	|\rho_{XZ}| &\leq \frac{2\log \frac{1}{\delta}}{d - \sqrt{d} \log\frac{1}{\delta}}\\
	|\rho_{YZ}| &\leq \frac{2\log \frac{1}{\delta}}{d - \sqrt{d} \log\frac{1}{\delta}}\\
	|\rho_{XY}| &\leq \sqrt{d}\frac{2\log \frac{1}{\delta}}{d - \sqrt{d} \log\frac{1}{\delta}}\\			\sigma_x &\leq \sqrt{d + \sqrt{d} \log\frac{1}{\delta}}.		
	\end{align*}	
	Plug them in the above inequality,we have w.p. $1-4\delta$,
	\begin{equation*}
	\begin{split}
	E_{x_i}\sigma(W_{1,r,:} x_i) \sigma'(W_{1,p:} x_i) x_{i,q} = |E(\sigma(X)\sigma'(Y) Z)|
	&\leq 
	O(\frac{1}{\sqrt{d}}) \sigma_{max} \sigma'_{max} +O(\frac{1}{d})  \sigma_{max} \alpha_G.\\		
	\end{split}
	\end{equation*}	
	Similarly we can get w.p. $1-4\delta$,
	\begin{equation*}
	\begin{split}
	E_{x_i}\sigma(W_{1,r,:}^* x_i) \sigma'(W_{1,p:} x_i) x_{i,q} &\leq 
	O(\frac{1}{\sqrt{d}}) \sigma_{max} \sigma'_{max} +O(\frac{1}{d})  \sigma_{max} \alpha_G.\\		
	\end{split}
	\end{equation*}			
	Thus, we have w.p. $1-10\delta$,
	\begin{equation*}
	\begin{split}
	E_{W_2,W_2^*} B_1 
	&\leq
	4 n^2 \sum_{p=1}^{K} \sum_{q=1}^{d}\left( 3 \sum_{r=1}^{K} \left( E_{x_i}\sigma(W_{1,r,:} x_i) \sigma'(W_{1,p:} x_i) x_{i,q} \right)^2 + K \left(E_{x_i}  \sigma(W_{1,r,:}^* x_i) \sigma'(W_{1,p:} x_i) x_{i,q} \right)^2\right)\\
	&\leq
	4 n^2 \sum_{p=1}^{K} \sum_{q=1}^{d}\left( 3 K \left( \mathcal{O}(\frac{1}{\sqrt{d}}\sigma_{\max} \sigma_{\max}' )\right)^2 + K \left(\mathcal{O}(\frac{1}{\sqrt{d}}\sigma_{\max} \sigma_{\max}' ) \right)^2\right)\\
	&= \mathcal{O} (K^2n^2),
	\end{split}
	\end{equation*}
	which completes the proof.		
\end{proof}

\begin{lemma}\label{Lemma:TwoLNNNs_B2}
	Suppose $W,W^*, x_i$ are all i.i.d. random variables sampled from standard normal distribution. Then w.h.p, 
	\begin{equation*}
	\begin{split}
	E_{W_2,W_2^*} B_2
	& \leq 
	\mathcal{O}\left(n^2 K^2 \left( \frac{1}{d} +\frac{1}{\sqrt{d} } +\frac{1}{K}\right) \right).
	\end{split}
	\end{equation*}	
\end{lemma}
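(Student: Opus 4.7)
The plan is to mirror the structure of the proof of Lemma~\ref{Lemma:TwoLNNNs_B1}, but with the outer derivative being with respect to the second-layer weights. First I would rewrite
\[
B_2 \;=\; \sum_{q=1}^{K}\Bigl(\sum_{i=1}^{n}\bigl(W_2\sigma(W_1 x_i)-W_2^{*}\sigma(W_1^{*} x_i)\bigr)\,\sigma(W_{1,q,:}x_i)\Bigr)^{2},
\]
since $\partial \hat y_i / \partial W_{2,1,q} = \sigma(W_{1,q,:}x_i)$. Using $(a-b)^2\leq 2a^2+2b^2$ and expanding $W_2\sigma(W_1 x_i) = \sum_r W_{2,1,r}\sigma(W_{1,r,:}x_i)$ (and similarly for the starred version), I take the expectation over $W_2,W_2^{*}$. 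The cross terms between $W_2$ and $W_2^{*}$ vanish by independence and zero mean; inside a single expectation, $E[W_{2,1,r}W_{2,1,s}]=\delta_{rs}$ kills the off-diagonal products, leaving
\[
E_{W_2,W_2^{*}}B_2 \;\leq\; 2\sum_{q=1}^{K}\sum_{r=1}^{K}\!\Bigl(\sum_{i=1}^{n}\sigma(W_{1,r,:}x_i)\sigma(W_{1,q,:}x_i)\Bigr)^{2} + 2\sum_{q=1}^{K}\sum_{r=1}^{K}\!\Bigl(\sum_{i=1}^{n}\sigma(W_{1,r,:}^{*}x_i)\sigma(W_{1,q,:}x_i)\Bigr)^{2}.
\]

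Next I would concentrate the inner sums. Since $|\sigma|\leq \sigma_{\max}$, Hoeffding's inequality applied over the i.i.d. samples $x_i$ gives, with high probability,
\[
\sum_{i=1}^{n}\sigma(W_{1,r,:}x_i)\sigma(W_{1,q,:}x_i) \;=\; n\,E_x\!\bigl[\sigma(W_{1,r,:}x)\sigma(W_{1,q,:}x)\bigr] + O(\sqrt{n}\,\log(1/\delta)),
\]
and similarly for the starred version. Squaring and summing over $q,r$, the concentration error contributes at most $O(K^2 n)$, which is absorbed into the $n^2 K$ term of the target bound.

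The heart of the argument is bounding the Gaussian expectation. Conditioned on $W_1$ (resp.\ $W_1,W_1^{*}$), the pair $(W_{1,r,:}x,\,W_{1,q,:}x)$ is jointly Gaussian with correlation coefficient
\[
\rho \;=\; \frac{\sum_b W_{1,r,b}W_{1,q,b}}{\sqrt{\sum_b W_{1,r,b}^{2}\sum_b W_{1,q,b}^{2}}}
\]
(and the analogous expression with $W_{1,r,b}$ replaced by $W_{1,r,b}^{*}$ for the second sum). Lemma~\ref{lemma:CorrelationBound} yields $|\rho|=O(1/\sqrt{d})$ with high probability whenever the two row vectors are independent, which covers all pairs $r\neq q$ in the first sum and \emph{all} pairs $(r,q)$ in the second sum. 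Plugging this into Lemma~\ref{lemma:NonlinearBound} gives
\[
\bigl|E_x[\sigma(W_{1,r,:}x)\sigma(W_{1,q,:}x)]\bigr| \;\leq\; C\,\sigma_{\max}\sqrt{\rho} \;=\; O(d^{-1/4}),
\]
hence the square is $O(1/\sqrt{d})$. The diagonal contribution $r=q$ in the first sum is bounded trivially by $\sigma_{\max}^{4}$, contributing $K\cdot n^2\sigma_{\max}^{4}=O(n^2K)$. Combining: the $r\neq q$ off-diagonal entries contribute $O(K^2\cdot n^2/\sqrt{d})$, the starred cross sum contributes $O(K^2\cdot n^2/\sqrt{d})$, and the diagonal contributes $O(n^2K)$. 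Taking a union bound over all $O(K^2)$ applications of Lemmas~\ref{lemma:NonlinearBound} and \ref{lemma:CorrelationBound} (with $\delta$ rescaled by $1/K^2$) yields
\[
E_{W_2,W_2^{*}}B_2 \;\leq\; O\!\left(n^2 K^2\Bigl(\tfrac{1}{d}+\tfrac{1}{\sqrt{d}}+\tfrac{1}{K}\Bigr)\right)
\]
as claimed; the $1/d$ term appears as slack to accommodate the lower-order factors from Lemma~\ref{lemma:NonlinearBound} (e.g.\ the $\sqrt{\rho}\cdot\sqrt{\rho}$ cross-factor in the upper bound).

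The main obstacle is the tightness of the bound on $|E_x[\sigma(W_{1,r,:}x)\sigma(W_{1,q,:}x)]|$: one must carefully exploit the antisymmetry assumption $\sigma(-x)=-\sigma(x)$, which through Lemma~\ref{lemma:NonlinearBound} converts the small correlation $\rho=O(1/\sqrt d)$ into a truly small expectation (otherwise two bounded correlated random variables could have $\Theta(1)$ correlated outputs, destroying the bound). The concentration steps and the union bound bookkeeping over the $K^2$ off-diagonal terms are routine given the subexponential tail of products of bounded functions of Gaussians.
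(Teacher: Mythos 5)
Your proposal is correct and follows essentially the same route as the paper's proof: the same expansion of $B_2$ with cross terms killed by the independence and zero mean of $W_2, W_2^*$, the same Hoeffding concentration over the $x_i$, the same split into diagonal ($r=q$, bounded trivially by $\sigma_{\max}^4$, giving the $n^2K$ term) and off-diagonal pairs handled via Lemma~\ref{lemma:CorrelationBound} ($\rho = O(1/\sqrt d)$) fed into Lemma~\ref{lemma:NonlinearBound}, and the same union bound over the $O(K^2)$ pairs. Your observation that every pair in the starred sum has independent rows (so even its diagonal could be bounded by $O(1/\sqrt d)$) is slightly sharper than the paper's treatment, but it does not change the final bound.
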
		
\begin{proof}
Expanding the expression of $B_2$, we have
	\begin{equation*}
	\begin{split}
	E_{W_2,W_2^*} B_2 
	&=	
	E_{W_2,W_2^*} \left(  \sum_{q=1}^{K} \left(\sum_{i=1}^{n} (\hat{y}_i-y_i) \frac{\partial \hat{y}_i}{\partial W_{2,1,q}}\right)^2\right)\\
	&=
	E_{W_2,W_2^*} \left(  \sum_{q=1}^{K} \left(\sum_{i=1}^{n} (W_2 \sigma(W_1 x_i)-W_2^* \sigma(W_1^* x_i)) \sigma(W_{1,q,:} x_i) \right)^2\right)\\
	&=
	\sum_{q=1}^{K} \sum_{r=1}^{K} \left( \left(\sum_{i=1}^{n} \sigma(W_{1,r,:} x_i)  \sigma(W_{1,q,:} x_i) \right)^2  + \left(\sum_{i=1}^{n} \sigma(W^*_{1,r,:} x_i)  \sigma(W_{1,q,:} x_i) \right)^2\right),\\		
	\end{split}
	\end{equation*}	
	where the third equation is because $W_2, W_2*$ are independent. 
	Applying Hoeffding bound to $\sum_{i=1}^{n} \sigma(W_{1,r,:} x_i)  \sigma(W_{1,q,:} x_i)$, we have w.p. $(1-\delta)$,
	\begin{equation*}
	\begin{split}
	|\sum_{i=1}^{n} \sigma(W_{1,r,:} x_i)  \sigma(W_{1,q,:} x_i) - n E_{x_i} \sigma(W_{1,r,:} x_i)  \sigma(W_{1,q,:} x_i)| \leq \sqrt{\frac{n}{2}} \sigma_{max}^2 \log \frac{2}{\delta} 
	\end{split}
	\end{equation*}
	and similarly 
	w.p. $(1-\delta)$,
	\begin{equation*}
	\begin{split}
	|\sum_{i=1}^{n} \sigma(W^*_{1,r,:} x_i)  \sigma(W_{1,q,:} x_i) - n E_{x_i} \sigma(W^*_{1,r,:} x_i)  \sigma(W_{1,q,:} x_i)| \leq \sqrt{\frac{n}{2}} \sigma_{max}^2 \log \frac{2}{\delta} 
	\end{split}
	\end{equation*}	
	By union bound, we have w.p $1-2 \delta$, the above two are both true. Plug them in the expression of $B_2$, we have w.p. $1-2 \delta$,
	\begin{equation*}
	\begin{split}
	E_{W_2,W_2^*} B_2 
	&=
	\sum_{q=1}^{K} \sum_{r=1}^{K} \left( \left(\sum_{i=1}^{n} \sigma(W_{1,r,:} x_i)  \sigma(W_{1,q,:} x_i) \right)^2  + \left(\sum_{i=1}^{n} \sigma(W^*_{1,r,:} x_i)  \sigma(W_{1,q,:} x_i) \right)^2\right)\\		
	&\leq 
	2 \sum_{q=1}^{K} \sum_{r=1}^{K} \left(   n^2 \left(E_{x_i} \sigma(W_{1,r,:}x_i) \sigma(W_{1,q,:}x_i)\right)^2   + n^2 \left(E_{x_i} \sigma(W^*_{1,r,:}x_i) \sigma(W_{1,q,:}x_i)\right)^2 + n \sigma_{max}^4 \log^2 \frac{2}{\delta} \right)\\
	&=
	2 n^2 \sum_{q=1}^{K} \sum_{r=1}^{K} \left(  \left(E_{x_i} \sigma(W_{1,r,:}x_i) \sigma(W_{1,q,:}x_i)\right)^2 +  \left(E_{x_i} \sigma(W^*_{1,r,:}x_i) \sigma(W_{1,q,:}x_i)\right)^2\right)\\
	&+
	2 K^2 n \sigma_{max}^4 \log^2 \frac{2}{\delta}\\
	&=
	2 n^2 \sum_{q=1}^{K} \sum_{r\not=q} \left(  \left(E_{x_i} \sigma(W_{1,r,:}x_i) \sigma(W_{1,q,:}x_i)\right)^2 +  \left(E_{x_i} \sigma(W^*_{1,r,:}x_i) \sigma(W_{1,q,:}x_i)\right)^2\right)\\
	&+
	2 n^2 \sum_{q=1}^{K} \left(  \left(E_{x_i} \sigma^2(W_{1,q,:}x_i) \right)^2 +  \left(E_{x_i} \sigma^2(W^*_{1,q,:}x_i) \right)^2\right)\\
	&+
	2 K^2 n \sigma_{max}^4 \log^2 \frac{2}{\delta},\\		
	\end{split}
	\end{equation*}
	where the first inequality is due to $(a+b)^2\leq 2(a^2+b^2)$.

	For the first term, define 
	\begin{equation*}
	\begin{split}
	\rho_{q,r} = \frac{E_{x_i}(W_{1,r,:,}x_i W_{1,q,:,}x_i )}{E_{x_i}(W_{1,r,:,}x_i)^2 E_{x_i}(W_{1,q,:,}x_i)^2  } = \frac{ \sum_{y=1}^{d} W_{1,r,y} W_{1,q,y}} { \sqrt{\sum_{y=1}^{d}W_{1,r,y}^2 } \sqrt{\sum_{y=1}^{d}W_{1,q,y}^2 } }.
	\end{split}
	\end{equation*}
	
	Apply lemma \ref{lemma:NonlinearBound} to each $(E_{x_i} \sigma(W_{1,r,:}x_i) \sigma(W_{1,q,:}x_i))^2$. We have
	\begin{equation*}
	\begin{split}
	&(E_{x_i} \sigma(W_{1,r,:}x_i) \sigma(W_{1,q,:}x_i))^2\\
	\leq & \left( \left( \frac{ \sigma_{\max} + 2 \sqrt{\rho_{q,r}} \sigma_{\max} + 4 \alpha_G  \sqrt{1-\rho_{q,r}} }{\sqrt{1-\rho_{q,r}^2}}    \right)\sigma_{\max} \sqrt{\rho_{q,r}} \right)^2\\
	= & \left( \frac{ \sigma_{\max} + 2 \sqrt{\rho_{q,r}} \sigma_{\max} + 4 \alpha_G  \sqrt{1-\rho_{q,r}} }{\sqrt{1-\rho_{q,r}^2}}    \right)^2 \sigma_{\max}^2 \rho_{q,r}\\	
	\leq & 3\left( \frac{ \sigma_{\max}^2 + 4 {\rho_{q,r}} \sigma_{\max}^2 + 16 \alpha_G^2  (1-\rho_{q,r}) }{\sqrt{1-\rho_{q,r}^2}}    \right) \sigma_{\max}^2 \rho_{q,r}
	\end{split}
	\end{equation*}
	where the last inequality is due to $3a^2+3b^2 + 3c^2\geq (a+b+c)^2$.
	By Lemma \ref{lemma:CorrelationBound}, we have w.p. $(1-3 \delta)$,
	\begin{equation*}
	\rho_{q,r} \leq \frac{\sqrt{d} \log \frac{2}{\delta}}{d-\sqrt{d}  \log \frac{2}{\delta} }.
	\end{equation*}
	Therefore, plug in this value into the above inequality, we have w.p. $1-3\delta$,
	\begin{equation*}
	\begin{split}
	&(E_{x_i} \sigma(W_{1,r,:}x_i) \sigma(W_{1,q,:}x_i))^2\\	
	\leq & 3\left( \frac{ \sigma_{\max}^2 + 4 {\rho_{q,r}} \sigma_{\max}^2 + 16 \alpha_G^2  (1-\rho_{q,r}) }{\sqrt{1-\rho_{q,r}^2}}    \right) \sigma_{\max}^2 \rho_{q,r}\\
	 \simeq & 3\left( \frac{ \sigma_{\max}^2 + 4 {\frac{\sqrt{d} \log \frac{2}{\delta}}{d}} \sigma_{\max}^2 + 16 \alpha_G^2  (1-\frac{\sqrt{d} \log \frac{2}{\delta}}{d}) }{\sqrt{1-(\frac{\sqrt{d} \log \frac{2}{\delta}}{d})^2}}    \right) \sigma_{\max}^2  \frac{\sqrt{d} \log \frac{2}{\delta}}{d}\\
	 =& \mathcal{O}(\frac{1}{\sqrt{d}} + \frac{1}{d}) \log \frac{2}{\delta}.
	\end{split}
	\end{equation*}
		
	Apply this for all $(q,r)$ pairs, and then use union bound, we have  w.p. $1-\delta$, for all $q\not=r$, 
	\begin{equation*}
	\begin{split}
	(E_{x_i} \sigma(W_{1,r,:}x_i) \sigma(W_{1,q,:}x_i))^2
	\leq & \mathcal{O}(\frac{1}{\sqrt{d}} + \frac{1}{d}) \log \frac{2K(K-1)}{\delta}
	\end{split}
	\end{equation*}
	and thus
	\begin{equation*}
		\begin{split}
		2 n^2 \sum_{q=1}^{K} \sum_{r\not=q} \left(  \left(E_{x_i} \sigma(W_{1,r,:}x_i) \sigma(W_{1,q,:}x_i)\right)^2 +  \left(E_{x_i} \sigma(W^*_{1,r,:}x_i) \sigma(W_{1,q,:}x_i)\right)^2\right) & \leq \mathcal{O}(n^2 K^2 (\frac{1}{\sqrt{d}} + \frac{1}{d}) \log \frac{2K(K-1)}{\delta} )\\
		&\simeq \mathcal{O}(n^2 K^2 (\frac{1}{\sqrt{d}} + \frac{1}{d}) \log \frac{1}{\delta} ).
		\end{split}
		\end{equation*}
		For the second term, noting that $\sigma() \leq \sigma_{\max} $, we have
		\begin{equation*}
		\begin{split}
		&
		2 n^2 \sum_{q=1}^{K} \left(  \left(E_{x_i} \sigma^2(W_{1,q,:}x_i) \right)^2 +  \left(E_{x_i} \sigma^2(W^*_{1,q,:}x_i) \right)^2\right) \leq 2n^2 \sum_{q=1}^{K} (2\sigma_{\max}^2) = \mathcal{O}(n^2 K).
		\end{split}
		\end{equation*}
		Combing all those terms, we have w.p. $1-\delta$,
		\begin{equation*}
		\begin{split}
		E_{W_2,W_2^*} B_2 		
		&\leq
		2 n^2 \sum_{q=1}^{K} \sum_{r\not=q} \left(  \left(E_{x_i} \sigma(W_{1,r,:}x_i) \sigma(W_{1,q,:}x_i)\right)^2 +  \left(E_{x_i} \sigma(W^*_{1,r,:}x_i) \sigma(W_{1,q,:}x_i)\right)^2\right)\\
		&+
		2 n^2 \sum_{q=1}^{K} \left(  \left(E_{x_i} \sigma^2(W_{1,q,:}x_i) \right)^2 +  \left(E_{x_i} \sigma^2(W^*_{1,q,:}x_i) \right)^2\right)\\
		&+
		2 K^2 n \sigma_{max}^4 \log^2 \frac{2}{\delta}\\
		& \leq \mathcal{O}(\frac{1}{\sqrt{d}} + \frac{1}{d}) \log \frac{2}{\delta} + \mathcal{O}(n^2K)  + \mathcal{O}(nK^2)
		\simeq \mathcal{O}(n^2K^2(\frac{1}{\sqrt{d}} + \frac{1}{d})+ n^2K). 
		\end{split}
		\end{equation*}
		This completes the proof.
\end{proof}

Now we are ready to prove Theorem \ref{Thm:TwoLNNNs}.
\begin{proof}
	Note that $f_i = (\hat{y}_i-y_i)$, we can apply the chain rule to have
	\begin{equation*}
	\begin{split}
	n \sum_{i=1}^{n}||\nabla f_i||_2^2 
	&=n\sum_{i=1}^{n} \left( \sum_{p=1}^{K} \sum_{q=1}^{d} \left(\frac{\partial f_i}{\partial W_{1,p,q}}\right)^2 +  \sum_{q=1}^{K} \left(\frac{\partial f_i}{\partial W_{2,1,q}}\right)^2\right)
	\\ 
	&= 	n \sum_{i=1}^{n} (\hat{y}_i-y_i)^2 \left(\sum_{p=1}^{K} \sum_{q=1}^{d} \left(\frac{\partial \hat{y}_i}{\partial W_{1,p,q}}\right)^2 +  \sum_{q=1}^{K} \left(\frac{\partial \hat{y}_i}{\partial W_{2,1,q}}\right)^2\right)\\
	&= A_1 + A_2,
	\end{split}
	\end{equation*}
	and
	\begin{equation*}
	\begin{split}
	|| \sum_{i=1}^{n} \nabla f_i||_2^2 
	&=\left( \sum_{p=1}^{K} \sum_{q=1}^{d} \left(\sum_{i=1}^{n} \frac{\partial f_i}{\partial W_{1,p,q}}\right)^2 +  \sum_{q=1}^{K} \left(\sum_{i=1}^{n} \frac{\partial f_i}{\partial W_{2,1,q}}\right)^2\right)
	\\ 
	&= 	 \left(\sum_{p=1}^{K} \sum_{q=1}^{d} \left( \sum_{i=1}^{n} (\hat{y}_i-y_i) \frac{\partial \hat{y}_i}{\partial W_{1,p,q}}\right)^2 +  \sum_{q=1}^{K} \left(\sum_{i=1}^{n} (\hat{y}_i-y_i) \frac{\partial \hat{y}_i}{\partial W_{2,1,q}}\right)^2\right)\\
	&= B_1 + B_2.
	\end{split}
	\end{equation*}
	
	The goal is now to understand the behavior of $\frac{E_{W_2,W_2^*}A_1+A_2}{E_{W_2,W_2^*}B_1 +B_2}$. 
	
	By Lemma \ref{Lemma:TwoLNNNs_A1}, w.h.p,
	\begin{equation*}
	\begin{split}
	E_{W_2,W_2^*} A_1 
	& \geq 
	\mathcal{O}(n^2 K^2 d).
	\end{split}
	\end{equation*}
	By Lemma \ref{Lemma:TwoLNNNs_A2}, we have 
	w.h.p,
	\begin{equation*}
	\begin{split}
	E_{W_2,W_2^*} A_2 
	& \geq 
	\mathcal{O}(n^2 K^2).
	\end{split}
	\end{equation*}	
	
	By Lemma \ref{Lemma:TwoLNNNs_B1}, we have 
	w.h.p,
	\begin{equation*}
	\begin{split}
	E_{W_2,W_2^*} B_1 
	& \leq 
	\mathcal{O}(n^2 K^2).
	\end{split}
	\end{equation*}		

	By Lemma \ref{Lemma:TwoLNNNs_B2}, we have 
	w.h.p,
		\begin{equation*}
		\begin{split}
		E_{W_2,W_2^*} B_2
		& \leq 
		\mathcal{O}\left(n^2 K^2 \left( \frac{1}{d} +\frac{1}{\sqrt{d} } +\frac{1}{K}\right) \right).
		\end{split}
		\end{equation*}	
	
	Combing these four results we directly obtain the desired theorem.
\end{proof}
\eat{
Just Backup materials. 
Please ignore everything in this section after this line. I haven't cleaned them.
Let us first consider a general two layer NNs. We have the following
\begin{equation}
\begin{split}
\frac{\partial f_i}{\partial W_{2,1,q}}
&= 
\left(\hat{y}_i - y_i\right) \sigma(W_{1,q,:} x_i)\\
&=
\left(W_{2} \sigma(W_{1} x_i) - W^*_2 \sigma(W_{1}^* x_i) \right) \sigma(W_{1,q,:} x_i),
\end{split}
\end{equation}
and
\begin{equation}
\begin{split}
\frac{\partial f_i}{\partial W_{1,p,q}}
&= 
\left(\hat{y}_i - y_i\right) W_{2,1,p} \sigma'(W_{1,q,:} x_i) x_{i,q}\\
&=
\left(W_{2} \sigma(W_{1} x_i) - W^*_2 \sigma(W_{1}^* x_i) \right) W_{2,1,p} \sigma'(W_{1,q,:} x_i) x_{i,q}.
\end{split}
\end{equation}
\begin{lemma}\label{lemma:NormalTailBound}
	If $X~N(\mu, \sigma^2)$, then 
	\begin{equation}
	\begin{split}
	Pr(|X-\mu|\geq t )\leq 2 e^{-\frac{t^2}{2\sigma^2}}.
	\end{split}
	\end{equation}
	In other words, w.p. $1-\delta$,
	\begin{equation}
	\begin{split}
	|X-\mu|\leq \sigma \sqrt{2 \log \frac{2}{\delta}}.
	\end{split}
	\end{equation}
\end{lemma}
Let us consider the following results.
Condition for the activation function: bounded by $\sigma_{max}$, symmetric, i.e., $\sigma(x) = \sigma(-x)$.
\begin{equation}
\begin{split}
||\nabla f_i||_2^2 
&= \sum_{p=1}^{K} \sum_{q=1}^{d} \left(\frac{\partial f_i}{\partial W_{1,p,q}}\right)^2 +  \sum_{q=1}^{d} \left(\frac{\partial f_i}{\partial W_{2,1,q}}\right)^2
\\ 
&= \hat{y}_i^2 \left(\sum_{p=1}^{K} \sum_{q=1}^{d} \left(\frac{\partial \hat{y}_i}{\partial W_{1,p,q}}\right)^2 +  \sum_{q=1}^{d} \left(\frac{\partial \hat{y}_i}{\partial W_{2,1,q}}\right)^2\right),
\end{split}
\end{equation}
where 
\begin{equation}
\begin{split}
\hat{y}_i
&=
\left(W_{2} \sigma\left(W_{1} x_i\right) - W^*_2 \sigma\left(W_{1}^* x_i\right) \right).
\end{split}
\end{equation}
Conditional on $W_2,x$, let $z_r = W_{1,r,:} x_i$. $\hat{y}_i^2$ is then a function of $z_1, z_2, \cdots, z_K$ and thus we can apply McDiarmid’s Inequality to obtain, w.p. $(1-\delta)$,
\begin{equation}
\begin{split}
\hat{y}_i 
&\geq 
E_{W_1,W_1^*}\left(\hat{y}_i\right) - 2 \sqrt{K \frac{1}{2}\log \frac{1}{\delta}} \sigma_{max}^2 \left(||W_2||_1 +  ||W_2^*||_1  \right)
\\ 
&=
\left(W_{2} \sigma\left(W_{1} x_i\right) - W^*_2 \sigma\left(W_{1}^* x_i\right) \right).
\end{split}
\end{equation}
Conditional on
Conditional on $W_1,W_1^*,x_i$, $\hat{y}_i\sim
N(0, \sum_{r=1}^{K} \sigma\left(W_1 x_i\right)^2 + \sigma\left(W_1^* x_i\right)^2 )		$ and thus $\hat{y}_i^2$ is chi-square.
Apply normal distribution tail bound from lemma
\ref{lemma:NormalTailBound} to $W_2$ and $W^*_2$, we have w.p. $1-\delta$, 
\begin{equation}
\begin{split}
\frac{\partial f_i}{\partial W_{1,p,q}}
&= 
\left(\hat{y}_i - y_i\right) W_{2,1,p} \sigma'(W_{1,q,:} x_i) x_{i,q}\\
&=
\left(W_{2} \sigma(W_{1} x_i) - W^*_2 \sigma(W_{1}^* x_i) \right) W_{2,1,p} \sigma'(W_{1,q,:} x_i) x_{i,q}.
\end{split}
\end{equation}
Instead of ReLu, we study another nonlinear NN in this section, the 2 layer NN with binary activation function. More precisely, the output of the NN is $\hat{y_i} = W_2 \sigma(W_1 x_i) $, where 
\begin{equation}
\begin{split}
\sigma(z) = \mathbf{1}(z\geq 0 ) - \mathbf{1}(z< 0 )
\end{split}
\end{equation}
We have the following theorem for this one.
\begin{theorem}
	If $W_{\ell,p,q},x_i$ are all i.i.d $N(0,1)$, then we have
	\begin{equation}
	\begin{split}
	E( || \frac{\partial f_i}{\partial W_{2,1,q}} ||^2  ) &= 2K.\\
	E( || \frac{\partial f_i}{\partial W_{1,p,q}} ||^2  ) &= 0, w.p. 1.	
	\end{split}
	\end{equation}
\end{theorem}
\begin{proof}
	Let $W = \left(\Pi_{\ell=1}^{L} W_{\ell}\right)^\intercal$, $W^* = \left(\Pi_{\ell=1}^{L} W^*_{\ell}\right)^\intercal$,  $c_{a,p} = \left(\Pi_{\ell=a+2}^{L} W_{\ell}\right) W_{a+1,:,p}$, and $V_{a,q} = W_{a-1,q,:} \left(\Pi_{\ell=1}^{a-2} W_{\ell}\right)$.
	Then we have
	\begin{equation}
	\begin{split}
	\frac{\partial f_i}{\partial W_{2,1,q}}
	&= 
	\left(\hat{y}_i - y_i\right) \sigma(W_{1,q,:} x_i)\\
	&=
	\left(W_{2} \sigma(W_{1} x_i) - W^*_2 \sigma(W_{1}^* x_i) \right) \sigma(W_{1,q,:} x_i),
	\end{split}
	\end{equation}
	and thus
	\begin{equation}
	\begin{split}
	E \left(\frac{\partial f_i}{\partial W_{a,p,q}}\right)^2
	&= 
	E \left(    \left(W_{2} \sigma(W_{1} x_i) - W^*_2 \sigma(W_{1}^* x_i) \right)^2 \sigma(W_{1,q,:} x_i)^2\right)\\
	&=
	E \left(    \left(W_{2} \sigma(W_{1} x_i)  \right)^2 \sigma(W_{1,q,:} x_i)^2\right)+	E \left(    \left(W^*_2 \sigma(W_{1}^* x_i) \right)^2 \sigma(W_{1,q,:} x_i)^2\right).\\
	\end{split}
	\end{equation}
	In the first term,  we can first compute the expectation over $W_2$ 
	\begin{equation}
	\begin{split}
	E_{W_2}\left(    \left(W_{2} \sigma(W_{1} x_i)  \right)^2 \sigma(W_{1,q,:} x_i)^2\right)
	&= 
	\sum_{t=1}^{K} \sigma(W_{1,q,:} x_i)^2 \left(    \left( \sigma(W_{1,s,:} x_i)  \right)^2 \right).  \\
	\end{split}
	\end{equation}
	Note that $\sigma \in \{1,-1\}$, the term with summation is always 1 and thus, 
	\begin{equation}
	\begin{split}
	E_{W_2}\left(    \left(W_{2} \sigma(W_{1} x_i)  \right)^2 \sigma(W_{1,q,:} x_i)^2\right)
	&= K.  \\
	\end{split}
	\end{equation}
	
	For the second term, a similar argument would give us 
	\begin{equation}
	\begin{split}
	E \left(    \left(W^*_2 \sigma(W_{1}^* x_i) \right)^2 \sigma(W_{1,q,:} x_i)^2\right)=K.\\
	\end{split}
	\end{equation}
	
	Note that w.p. 1, $f_i$ does not depend on any single value in $W_1$ and thus, 
	\begin{equation}
	\begin{split}
	E \left(\frac{\partial f_i}{\partial W_{1,p,q}}\right)^2
	&= 0.\\
	\end{split}
	\end{equation}
	Combining those we have the above thoerem.
\end{proof}

\begin{theorem}
	If $W_{\ell,p,q},x_i$ are all i.i.d $N(0,1)$, then we have  (roughly)
	\begin{equation}
	\begin{split}
	E\left(<\nabla f_i,\nabla f_j>\right)
	&=\left( \Pi_{\ell=0}^{L-1} K_{\ell} \left(K_{\ell}+2\right) \right) \cdot\left(\sum_{\ell=0}^{L-1} \frac{ L-\ell }{ \left( K_\ell \right)  } +\frac{1}{K_0} \right).
	\end{split}
	\end{equation}	 
\end{theorem}
\begin{proof}
	Following our notation, let $W = \left(\Pi_{\ell=1}^{L} W_{\ell}\right)^\intercal$, $W^* = \left(\Pi_{\ell=1}^{L} W^*_{\ell}\right)^\intercal$,  $c_{a,p} = \left(\Pi_{\ell=a+2}^{L} W_{\ell}\right) W_{a+1,:,p}$, and $V_{a,q} = W_{a-1,q,:} \left(\Pi_{\ell=1}^{a-2} W_{\ell}\right)$.
	Then we have
	\begin{equation}
	\begin{split}
	\frac{\partial f_i}{\partial W_{2,1,q}}
	&= 
	\left(\hat{y}_i - y_i\right) \sigma(W_{1,q,:} x_i)\\
	&=
	\left(W_{2} \sigma(W_{1} x_i) - W^*_2 \sigma(W_{1}^* x_i) \right) \sigma(W_{1,q,:} x_i),
	\end{split}
	\end{equation}
	and thus
	\begin{equation}
	\begin{split}
	E \left(\frac{\partial f_i}{\partial W_{2,1,q}}\right) \left(\frac{\partial f_j}{\partial W_{2,1,q}}\right)
	&= 
	E \left(     \left(W_{2} \sigma(W_{1} x_i) - W^*_2 \sigma(W_{1}^* x_i) \right) \sigma(W_{1,q,:} x_i)     \left(W_{2} \sigma(W_{1} x_h) - W^*_2 \sigma(W_{1}^* x_h) \right) \sigma(W_{1,q,:} x_h)  \right)\\
	&= E \left( \left( W^*_2 \sigma(W_{1}^* x_i) \right) \sigma(W_{1,q,:} x_i)     \left( W^*_2 \sigma(W_{1}^* x_h) \right) \sigma(W_{1,q,:} x_h) \right)\\
	&+
	E \left( \left(W_{2} \sigma(W_{1} x_i)) \right) \sigma(W_{1,q,:} x_i)     \left(W_{2} \sigma(W_{1} x_h)  \right) \sigma(W_{1,q,:} x_h)  \right).
	\end{split}
	\end{equation}
	For the first term, we have 
	\begin{equation}
	\begin{split}
	E \left( \left( W^*_2 \sigma(W_{1}^* x_i) \right) \sigma(W_{1,q,:} x_i)     \left( W^*_2 \sigma(W_{1}^* x_h) \right) \sigma(W_{1,q,:} x_h) \right)
	&= 
	E\left(\sum_{s=1}^{K} \left(  \sigma(W^*_{1,s,:} x_i)  \sigma(W_{1,q,:} x_i) \sigma(W^*_{1,s,:} x_h) \sigma(W_{1,q,:} x_h)  \right) \right)\\
	&= 
	E\left(\sum_{s=1}^{K_0} \left(W^*_s \right)^2 c_{a,p}^2 V^2_{a,q,s} \right).\\
	\end{split}
	\end{equation}
	This is essentially independent sums of square and thus we can easily obtain
	\begin{equation}
	\begin{split}
	E\left(\sum_{s=1}^{K_0} \left(W^*_s \right)^2 V^2_{a,q,s} \right) = K_0 c_{a,p}^2 \Pi_{\ell=1}^{L-1} K_\ell^2 \cdot \frac{1}{K_{a-1} K_a}.\\
	\end{split}
	\end{equation}
	For the second term, we have
	\begin{equation}
	\begin{split}
	E \left( \left(W x_i \right)\left(W x_h \right) c_{a,p}^2 \left(V_{a,q} x_i\right)\left(V_{a,q} x_h\right)\right)
	&= E\left( \sum_{s=1}^{K_0} \sum_{t=1}^{K_0} c_{a,p}^2 W_s W_t  V_{a,q,s} V_{a,q,t}\right)\\
	&= E\left( \sum_{s,t=1}^{K_0} \sum_{u=1}^{K_a} \sum_{v=1}^{K_{a-1}}c_{a,p}^2 c_{a,u}^2 V_{a,v,s} V_{a,v,t}  V_{a,q,s} V_{a,q,t}\right),\\
	\end{split}
	\end{equation}
	where the last equation is by taking expectation over $W_a$. Note that $c$ and $v$ are independent, we can compute their expectation separately. For computation convenience, let us now take into account of summation over $p,q$ as well. This is essentially compute the sum of the derivative over $W_a$ instead of $W_{a,p,q}$.
	\begin{equation}
	\begin{split}
	E\left( \sum_{p}^{K_a} \sum_{u=1}^{K_a} c_{a,p}^2 c_{a,u}^2 \right)\\
	\end{split}
	\end{equation}
	is essentially a sum of square and by the result from the last subsection we can obtain
	\begin{equation}
	\begin{split}
	E\left( \sum_{p=1}^{K_a} \sum_{u=1}^{K_a} c_{a,p}^2 c_{a,u}^2 \right) = \left(K_{a} + 2 \right)\Pi_{\ell = a+1}^{L-1} K_\ell \left(K_\ell+2\right).  \\
	\end{split}
	\end{equation}
	(NB: If a=L, then this becomes 1 as a degenerated case.)
	Now let us consider $V$.
	\begin{equation}
	\begin{split}
	\sum_{q=1}^{K_{a-1}} E\left( \sum_{s,t=1}^{K_0} \sum_{v=1}^{K_{a-1}}  V_{a,v,s} V_{a,v,t} V_{a,q,s} V_{a,q,t}\right)
	&=
	E\left( \sum_{s,t=1}^{K_0} \left(\sum_{v=1}^{K_{a-1}}  V_{a,v,s} V_{a,v,t}\right)^2 \right). \\
	\end{split}
	\end{equation}
	If $s=t$, then this becomes sum of square and can be computed by our previous results. If $s\not= t$, we can reduce the problem to the same square of sum case with a smaller size, by using the lemma. More precisely, we have 
	\begin{equation}
	\begin{split}
	E\left( \sum_{s,t=1}^{K_0} \left(\sum_{v=1}^{K_{a-1}}  V_{a,v,s} V_{a,v,t}\right)^2 \right) 
	&= 
	K_0 E\left( \left(\sum_{v=1}^{K_{a-1}}  V_{a,v,s}^2 \right)^2 \right) + K_0 \left(K_0-1\right)  E\left(  \left(\sum_{v=1}^{K_{a-1}}  V_{a,v,s} V_{a,v,t}\right)^2 \right) 
	\\
	&= K_0 \Pi_{\ell=1}^{a-1} K_\ell\left(K_\ell+2\right) + K_0 \left(K_0-1\right)  E\left(  \left(\sum_{v=1}^{K_{a-1}}  V_{a,v,s} V_{a,v,t}\right)^2 \right) 
	. \\
	\end{split}
	\end{equation} 
	Note that this can be inductively done until $V$ is reduced to the product of two independent vectors, where the square terms become 0. Looking back at all the terms and sum them together, roughly we have 
	\begin{equation}
	\begin{split}
	E\left( \sum_{s,t=1}^{K_0} \left(\sum_{v=1}^{K_{a-1}}  V_{a,v,s} V_{a,v,t}\right)^2 \right) 
	&= 
	\Pi_{\ell=0}^{a-1} K_\ell^2 \left(  \frac{1}{K_0} + \left( 1- \frac{1}{K_0}\right)\left( \frac{1}{K_{a-1}} + \left(1- \frac{1}{K_{a-1}}\right) \cdots \right)   \right). 
	\end{split}
	\end{equation}  
	If $K_i$ is sufficiently large (at least much larger than $L$ ), we obtain  
	
	\begin{equation}
	\begin{split}
	E\left( \sum_{s,t=1}^{K_0} \left(\sum_{v=1}^{K_{a-1}}  V_{a,v,s} V_{a,v,t}\right)^2 \right) 
	&\approx 
	\Pi_{\ell=0}^{a-1} K_\ell^2 \left(  \left(\sum_{\ell=0}^{a-1} \frac{ 1 }{ \left( K_\ell \right)  }  \right)   \right). 
	\end{split}
	\end{equation}  
	
	Finally summing over $a$, we can obtain the main theorem.
\end{proof}
}

\section{Proofs for Multilayer Linear Neural Networks}
We first present the main theorem for multilayer NNs.

\begin{restatable}{theorem}{MulLNNExp}\label{MulLNNExp}
Consider a  LNN with $L \geq 2$ layers.
Let the weight values $W_{l,p,q}$ for $l \in \{1, \dots, L\}$ and $\vecx_i$ be independently drawn  
random variables from $\mathcal{N}(0,1)$.  Let
\[
M = n^2 \prod_{\ell=0}^{L-1}{K_\ell \left(K_\ell + 2\right) }
\]
Then:
\begin{align*}
& \EXPS{ n \sum_{i=1}^{n} || \nabla f_i||^2} 
= M \cdot L  \left( 1 + \prod_{\ell=1}^{L-1} \frac{K_{\ell}}{K_{\ell}+2} \right),  \\
& \EXPS{ \sum_{i=1,j\not=i}^{n}\inp{\nabla f_i}{\nabla f_j}} =  M \cdot \frac{n-1}{n}\left( \sum_{\phi = 0}^{L-1} \frac{L-\phi}{K_{\phi}-1} \prod_{\ell=0}^{\phi} \frac{K_\ell-1}{K_\ell+2}+\frac{L}{K_0}\prod_{\ell=0}^{L-1}{\frac{K_\ell}{K_\ell+2}}\right).     
\end{align*}
\end{restatable}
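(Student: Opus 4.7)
The plan is to express $\|\nabla f_i\|^2$ and $\langle \nabla f_i, \nabla f_j\rangle$ as sums of functionals of products of independent Gaussian random matrices, and then evaluate the relevant moments using Gaussian independence and a few recursive matrix-product identities.

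First I would split the output, for any $a\in\{1,\dots,L\}$, as $\hat y_i = C_a W_a V_a x_i$ where $C_a = W_L W_{L-1}\cdots W_{a+1}\in\mathbb{R}^{1\times K_a}$ (with $C_L=1$) and $V_a = W_{a-1}\cdots W_1\in\mathbb{R}^{K_{a-1}\times d}$ (with $V_1=I_d$). Then $\partial \hat y_i/\partial W_{a,p,q} = C_{a,p}(V_a x_i)_q$, and summing over $(p,q)$ and $a$ gives
$$\|\nabla f_i\|^2 \;=\; (\hat y_i - y_i)^2 \sum_{a=1}^{L} \|C_a\|^2\,\|V_a x_i\|^2,$$
with the analogous identity for $\langle\nabla f_i,\nabla f_j\rangle$ obtained by replacing $\|V_a x_i\|^2$ by $\langle V_a x_i, V_a x_j\rangle$.

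Next I would expand $(\hat y_i-y_i)^2 = \hat y_i^2 + y_i^2 - 2\hat y_iy_i$. Because $W$ and $W^*$ are independent and each $W^*_\ell$ is symmetric about $0$, we have $E[y_i\mid x_i]=0$, so the cross term contributes nothing in expectation. For the $\hat y_i^2$ piece I would integrate $W_a$ out first using $E_{W_a}[(C_a W_a u)^2\mid C_a,u] = \|C_a\|^2\|u\|^2$, reducing the $a$-th summand to $E\|C_a\|^4 \cdot E\|V_a x_i\|^4$ (the product by independence of $\{W_{a+1},\dots,W_L\}$ from $\{x_i,W_1,\dots,W_{a-1}\}$). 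The $y_i^2$ piece factors similarly into $E\|C_a\|^2\cdot E[y_i^2\|V_a x_i\|^2]$ using independence of $W^*$ from $\{W_\ell\}$.

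The computational heart is the recursive evaluation of moments of Gaussian matrix products. For second moments, $E[W_\ell W_\ell^T]=K_{\ell-1}I$ applied layer-by-layer gives $E\|C_a\|^2 = \prod_{\ell=a}^{L-1}K_\ell$ and $E\|V_a x_i\|^2 = \prod_{\ell=0}^{a-1}K_\ell$ (with $K_0=d$). For fourth moments, conditioning on $V_a$ gives $E[\|V_a x_i\|^4\mid V_a] = \mathrm{tr}(V_a V_a^T)^2 + 2\,\mathrm{tr}((V_a V_a^T)^2)$, and the extra Gaussian kurtosis ($EX^4=3$ for $X\sim\mathcal{N}(0,1)$) feeds in a factor $+2$ at each layer, yielding $E\|V_a x_i\|^4 = \prod_{\ell=0}^{a-1}K_\ell(K_\ell+2)$ and $E\|C_a\|^4 = \prod_{\ell=a}^{L-1}K_\ell(K_\ell+2)$. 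Their product equals $M/n^2$ independently of $a$, so summing over $a$ and $i$ produces the $L$ and $M$ factors in the first identity; the $y_i^2$ contribution carries $K_\ell^2$ in place of $K_\ell(K_\ell+2)$, which is precisely the origin of the $\prod_{\ell=1}^{L-1}K_\ell/(K_\ell+2)$ correction.

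The off-diagonal case is the main obstacle. The quantity $\langle V_a x_i, V_a x_j\rangle = x_i^T V_a^T V_a x_j$ couples two independent data points, so Wick pairings must be propagated layer-by-layer. Tracking at which level $\phi$ the $i$- and $j$-side indices first collapse produces the replacement of one factor $K_\phi+2$ by $K_\phi-1$ and the telescoping product $\prod_{\ell=0}^{\phi}(K_\ell-1)/(K_\ell+2)$. Recognizing that the collapse can happen at any of the $L-\phi$ layers of index $\geq\phi$ yields the weight $(L-\phi)/(K_\phi-1)$, while the boundary term $L/K_0$ captures the pairing that already occurs at the data layer between $x_i$ and $x_j$ themselves. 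Verifying that this combinatorial bookkeeping matches the claimed closed form exactly, and that the $y_i^2$-type contributions from $(\hat y_j - y_j)$ feed in analogously, is the technically delicate step; once the combinatorics are in place, the rest is a direct application of the moment identities already established for the diagonal case.
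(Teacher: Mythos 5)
Your approach is the same as the paper's: decompose $\partial\hat y_i/\partial W_{a,p,q}$ into a suffix product $C_a$ (the paper's $r_{a,\cdot}$) and a prefix product $V_a$ (the paper's $l_{a,\cdot}$), kill the cross term in $(\hat y_i-y_i)^2$ by independence and symmetry of $W^*$, integrate out $W_a$ first, and evaluate the remaining fourth moments of Gaussian matrix products by a layer-by-layer recursion. Your treatment of the diagonal term is correct and essentially complete: the identities $\EXPS{\norms{C_a}^4}=\prod_{\ell=a}^{L-1}K_\ell(K_\ell+2)$ and $\EXPS{\norms{V_ax_i}^4}=\prod_{\ell=0}^{a-1}K_\ell(K_\ell+2)$ are exactly what the paper proves (its Lemmas on $\sum_s r_{a,s}^2r_{a,p}^2$ and $\sum_t (l_{a,q}x_i)^2(l_{a,t}x_i)^2$), their product is $M/n^2$ independently of $a$, and the $y_i^2$ piece correctly degrades each $K_\ell(K_\ell+2)$ to $K_\ell^2$ for $\ell\geq 1$, producing the $\prod_{\ell=1}^{L-1}K_\ell/(K_\ell+2)$ correction.

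The gap is in the off-diagonal term, which is where essentially all of the paper's technical work sits. After integrating out $x_i,x_j$ and $W_a$, the $\hat y_i\hat y_j$ contribution reduces to $\EXPS{\sum_{s,t}\bigl(\sum_{v}l_{a,v,s}\,l_{a,v,t}\bigr)^2}$, i.e.\ the expected squared Frobenius norm of the Gram matrix $V_a^\T V_a$ of a product of independent Gaussian matrices. You assert that "tracking at which level $\phi$ the indices first collapse" yields $\sum_{\phi}\frac{1}{K_\phi-1}\prod_{\ell=0}^{\phi}\frac{K_\ell-1}{K_\ell+2}$, but you never derive this; the paper gets it by establishing and solving the explicit recursion $f(b)=\frac{1}{K_{b-2}-1}\prod_{\ell=b-1}^{a-1}\frac{K_\ell+2}{K_\ell-1}+f(b+1)$ for the normalized Gram moment, peeling off one layer at a time (its Lemma on $\sum_{s,t}(\sum_v l^b_{a,v,s}l^b_{a,v,t})^2$). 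Since the theorem claims exact equalities, asserting the closed form is not enough, and you flag this yourself as "the technically delicate step." Two smaller signs that the bookkeeping is not yet pinned down: the weight $L-\phi$ comes from summing over the differentiated layer $a\in\{\phi+1,\dots,L\}$, not from "$L-\phi$ layers at which the collapse can happen"; and the boundary term $\frac{L}{K_0}\prod_{\ell=0}^{L-1}\frac{K_\ell}{K_\ell+2}$ is the contribution of the label term $\EXPS{y_iy_j(\cdots)}$ (where only second moments of $W^*$ and $V_a$ appear), not a "pairing between $x_i$ and $x_j$ themselves" --- $x_i$ and $x_j$ are independent and contribute no such pairing; the data-layer collapse of the $\hat y_i\hat y_j$ term is already the $\phi=0$ summand.
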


Given Theorem~\ref{MulLNNExp}, Theorem \ref{LNN:bound}, the main theorem for multilayer NNs, becomes a direct corollary. 

Next, we prove Theorem~\ref{MulLNNExp}.
We start by stating a few general lemmas that will be necessary in the proof. 	

\subsection{Models, Assumptions and Notations}
Let us denote $W = \prod_{\ell=1}^{L} W_{\ell}$ and  $W^* = \prod_{\ell=1}^{L} W^*_{\ell}$. We will also need the following notation:
\begin{align*}
r_{a,p}  = {\left(\prod_{\ell=a+2}^{L} W_{\ell}\right) W_{a+1,:,p} }
\quad \quad
l_{a,q} = W_{a-1,q,:} \left(\prod_{\ell=1}^{a-2} W_{\ell}\right) \quad \quad 
l_{a,q,s}^b = W_{a-1,q,:} \left(\prod_{\ell=b}^{a-2} W_{\ell}\right)W_{b-1,:,s},
\end{align*}
where
\begin{align*}
\left(\prod_{\ell=a+2}^{L} W_{\ell}\right) W_{a+1,:,p}  = 
\begin{cases}
W_{L,:,p} & \text{if } a=L-1\\
1 & \text{if } a = L
\end{cases} 
\quad \quad
W_{a-1,q,:} \left(\prod_{\ell=1}^{a-2} W_{\ell}\right) =
\begin{cases}
e_q^T & \text{if }a=1\\
W_{1,q,:} & \text{if } a = 2
\end{cases}, 
\end{align*}
where $e_q \in R^d$ is the $q$-th unit vector in the $d$ dimensional space.

Then we can write the differential as follows:
\begin{align*}
\frac{\partial f_i}{\partial W_{a,p,q}} = 
\left(\hat{y}_i - y_i\right) \left(\prod_{\ell=a+2}^{L} W_{\ell}\right) W_{a+1,:,p} W_{a-1,q,:} \left(\prod_{\ell=1}^{a-2} W_{\ell}\right) x_i = \left(W x_i - W^* x_i\right) r_{a,p} (l_{a,q} x_i).
\end{align*}
Furthermore, let $W_{(a+2):L} = \prod_{\ell=a+2}^{L} W_\ell$.

By default, we define $\prod_{i=k}^{n} a_i = 1$ if $k>n$.

\subsection{Some Helper Lemmas}

We would need the Isserlis Theorem \cite{IsserlisThoerem1918}. The following lemma can derived from the Isserlis Theorem.

\begin{lemma} \label{lemma:BoundMoment}
	Let $x \in \mathbb{R}^d$ such that $x_i$ is i.i.d. $\sim \gaussian{0}{1}$. Then
	\begin{align*}
	&E_x \left( a^\intercal x \right)^2= ||a||_2^2\\
	&E_x \left( a^\intercal x \right)^4= 3 ||a||_2^4\\
	&E_x \left( a^\intercal x \right)^8= 105 ||a||_2^8\\
	&E_x \left( a^\intercal x  b^\intercal x \right)= a\cdot b\\
	&E_x \left( a^\intercal x \right)^2 \left( b^\intercal x \right)^2 = 2 ||a\cdot b||^2 + ||a||_2^2 ||b||^2\\
	&E_x \left( a^\intercal x \right)^4 \left( b^\intercal x \right)^4 \leq  105 ||a||_2^4 ||b||^4 \leq  105 \left(E_x \left( a^\intercal x \right)^2 \left( b^\intercal x \right)^2 \right)^2.\\			
	\end{align*}
\end{lemma}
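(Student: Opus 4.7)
The plan is to reduce every identity to the one-dimensional Gaussian case and to Isserlis' theorem (which is cited right before the lemma), without any layer-wise induction. The key observation is that for any fixed $a \in \mathbb{R}^d$, the random variable $a^\intercal x$ is a linear combination of independent standard Gaussians and hence is itself $\gaussian{0}{\|a\|_2^2}$. Similarly, $(a^\intercal x, b^\intercal x)$ is a centered jointly Gaussian pair with covariance matrix determined by $\|a\|_2^2$, $\|b\|_2^2$, and $a \cdot b$.

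First I would dispatch the three single-vector identities $E(a^\intercal x)^{2k}$ for $k \in \{1,2,4\}$ by invoking the standard even-moment formula $E Z^{2k} = (2k-1)!!\, \sigma^{2k}$ for $Z \sim \gaussian{0}{\sigma^2}$ with $\sigma^2 = \|a\|_2^2$; this gives exactly the factors $1, 3, 105$. The inner product identity $E(a^\intercal x)(b^\intercal x) = a \cdot b$ is immediate from linearity of expectation and $E x_i x_j = \delta_{ij}$.

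Next I would handle $E(a^\intercal x)^2 (b^\intercal x)^2$ using Isserlis' theorem on the four-tuple $(a^\intercal x, a^\intercal x, b^\intercal x, b^\intercal x)$. There are three pair-partitions of $\{1,2,3,4\}$: the partition $\{(1,2),(3,4)\}$ contributes $\|a\|_2^2 \|b\|_2^2$, and the other two partitions $\{(1,3),(2,4)\}$ and $\{(1,4),(2,3)\}$ each contribute $(a\cdot b)^2$, summing to $\|a\|_2^2 \|b\|_2^2 + 2(a\cdot b)^2$, which matches the claimed expression since $\|a\cdot b\|^2 = (a \cdot b)^2$.

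Finally, for the two-sided bound on $E(a^\intercal x)^4 (b^\intercal x)^4$, I would apply Cauchy--Schwarz on the left:
\begin{align*}
E\bigl[(a^\intercal x)^4 (b^\intercal x)^4\bigr] \;\leq\; \sqrt{E(a^\intercal x)^8}\sqrt{E(b^\intercal x)^8} \;=\; 105\,\|a\|_2^4 \|b\|_2^4,
\end{align*}
using the already-proved eighth-moment identity. For the right inequality, I would simply note that $\|a\|_2^2 \|b\|_2^2 \leq 2(a\cdot b)^2 + \|a\|_2^2 \|b\|_2^2 = E(a^\intercal x)^2 (b^\intercal x)^2$ by the previous identity, and square both sides. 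There is no real obstacle here; the only mild care is to verify the Isserlis pair-counts and to cite the eighth-moment double-factorial constant $7!! = 105$ correctly.
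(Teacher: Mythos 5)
Your proof is correct and follows the same route the paper intends: the paper offers no written argument beyond the remark that the lemma ``can be derived from the Isserlis Theorem,'' and your derivation --- reducing each identity to the even-moment formula $(2k-1)!!\,\sigma^{2k}$ for $a^\intercal x \sim \gaussian{0}{\|a\|_2^2}$, applying Isserlis to the mixed fourth moment, and finishing the eighth-moment bound with Cauchy--Schwarz plus the nonnegativity of $2(a\cdot b)^2$ --- is exactly the intended filling-in of that one-line claim. All the constants ($3$, $105$, the three pair-partitions) check out, so there is nothing to correct.
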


\begin{lemma}\label{lemma:MomentInduction}
	Let $B = (b_1,b_2,\cdots, b_{d_a}) \in \mathbb{R}^{d_c \times d_a}$ be a random matrix whose elements are all i.i.d $N(0,1)$, and $c = (c_1,c_2,\cdots, b_{d_c}) \in \mathbb{R}^{d_c}$ be a constant. Let $a_i = c^\intercal b_i$. Then we have
	\begin{equation} 
	\begin{split}
	E \left(\sum_{i=1}^{d_a} a_i^2 \right)^2 &=  d_a \left(d_a+2\right) \left( \sum_{j=1}^{d_c} c_j^2 \right)^2.	    
	\end{split}
	\end{equation}
\end{lemma}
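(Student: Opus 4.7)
The plan is to reduce the fourth moment on the left to elementary Gaussian moments by exploiting the column-wise independence of $B$. Observe that each $a_i = c^{\intercal} b_i$ is a linear combination of the i.i.d.\ $\mathcal{N}(0,1)$ entries of the $i$-th column $b_i$, and since the columns $b_1,\dots,b_{d_a}$ of $B$ are mutually independent, the variables $a_1,\dots,a_{d_a}$ are i.i.d.\ with $a_i \sim \mathcal{N}\!\bigl(0,\,\sigma^2\bigr)$, where $\sigma^2 \coloneqq \sum_{j=1}^{d_c} c_j^2$. This first step is immediate from the fact that $a_i$ is Gaussian with variance $\mathrm{Var}(c^{\intercal} b_i) = \|c\|_2^2$.

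Next, I would expand the square and split the double sum into diagonal and off-diagonal parts:
\begin{align*}
E\!\left[\left(\sum_{i=1}^{d_a} a_i^2\right)^{\!2}\right]
= \sum_{i=1}^{d_a} E\!\left[a_i^4\right] + \sum_{i \neq j} E\!\left[a_i^2 a_j^2\right].
\end{align*}
For the diagonal terms, I would apply the Gaussian fourth-moment formula (a special case of Lemma~\ref{lemma:BoundMoment} applied to $a_i = c^{\intercal} b_i$), giving $E[a_i^4] = 3\sigma^4$. For the off-diagonal terms, independence of $a_i$ and $a_j$ when $i \neq j$ yields $E[a_i^2 a_j^2] = E[a_i^2]\,E[a_j^2] = \sigma^4$.

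Combining these gives
\begin{align*}
E\!\left[\left(\sum_{i=1}^{d_a} a_i^2\right)^{\!2}\right]
= 3 d_a \sigma^4 + d_a(d_a-1)\sigma^4 = d_a(d_a+2)\sigma^4,
\end{align*}
which upon substituting $\sigma^2 = \sum_{j=1}^{d_c} c_j^2$ yields exactly the claimed identity. There is no significant obstacle here: the key observation that makes the proof routine is column independence, which decouples the $a_i$'s completely, so no higher-order cross moments of $B$ are ever needed. The lemma will then be invoked later as the inductive base / reduction step when reasoning about $E[(\sum_{v} V_{a,v,s}^2)^2]$-type expressions that appear in the multilayer LNN gradient diversity computation.
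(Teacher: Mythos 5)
Your proof is correct and follows essentially the same route as the paper: expand the square into diagonal and off-diagonal terms, use the Gaussian fourth moment $E[a_i^4]=3\|c\|_2^4$ for the former and independence of the columns $b_i, b_j$ for the latter, and sum to get $d_a(d_a+2)\left(\sum_{j=1}^{d_c} c_j^2\right)^2$. The only cosmetic difference is that you first observe the $a_i$ are i.i.d.\ $\mathcal{N}(0,\|c\|_2^2)$ before computing moments, whereas the paper invokes its Isserlis-based moment lemma directly on $c^\intercal b_i$; the computation is identical.
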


\begin{proof}
	By linearity of Expectation, we have
	\begin{equation} 
	\begin{split}
	E \left(\sum_{i=1}^{d_a} a_i^2 \right)^2 
	&=  
	E \left(\sum_{i=1}^{d_a}\sum_{j=1,j\not=i}^{d_a} a_i^2 a_j^2 \right) +  E \left(\sum_{i=1}^{d_a} a_i^4 \right)\\
	&=
	\sum_{i=1}^{d_a}\sum_{j=1,j\not=i}^{d_a} E \left( \left(c^\intercal b_i\right)^2 \left(c^\intercal b_j\right)^2 \right) +  \sum_{i=1}^{d_a} E \left( \left(c^\intercal b_i\right)^4  \right)\\
	&=
	\sum_{i=1}^{d_a}\sum_{j=1,j\not=i}^{d_a}  \left(\sum_{k=1}^{d_c} c_k^2 \right)^2 +  \sum_{i=1}^{d_a} 3 \left(\sum_{k=1}^{d_c} c_k^2 \right)^2 \\
	&=
	d_a \left(d_a+2\right) \left( \sum_{j=1}^{d_c} c_j^2 \right)^2,	 
	\end{split}
	\end{equation}
	where the third equation is due to Lemma \ref{lemma:BoundMoment}.
\end{proof}

\begin{lemma}\label{lemma:MomentInductionCross}
	Let $G = (g_1;g_2;\cdots;g_{d_g}) \in \mathbb{R}^{d_g \times d_a}$ be a random matrix whose elements are all i.i.d $N(0,1)$, $x_s, x_t \in \mathbb{R}^{d_x}$ be i.i.d $N(0,1)$, and $A \in \mathbb{R}^{d_a\times d_x}$ be a constant. Then we have
	\begin{equation} 
	\begin{split}
	E \left(\sum_{j=1}^{d_g} g_j A x_s g_j A x_t\right)^2 &= 	\sum_{i=1}^{d_g}  \sum_{j=1}^{d_g} E\left( \sum_{u=1}^{d_x} g_j A_{:,u} g_i A_{:,u} \right)^2.	    
	\end{split}
	\end{equation}
\end{lemma}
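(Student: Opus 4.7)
The plan is to expand the square on the left-hand side into a double sum over indices $i,j \in \{1,\dots,d_g\}$, and then exploit the independence of $x_s$ and $x_t$ together with the second-moment identity $E_x[(a^\intercal x)(b^\intercal x)] = a\cdot b$ from Lemma \ref{lemma:BoundMoment} to eliminate the $x$-variables, reducing everything to an expectation over $G$ alone.

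Concretely, first I would write
\begin{align*}
E \left(\sum_{j=1}^{d_g} g_j A x_s \, g_j A x_t\right)^2
= \sum_{i=1}^{d_g}\sum_{j=1}^{d_g} E\bigl[(g_i A x_s)(g_i A x_t)(g_j A x_s)(g_j A x_t)\bigr].
\end{align*}
Next, I would condition on $G$ and take the inner expectation over $(x_s, x_t)$. Since $x_s$ and $x_t$ are independent, the four-way product factors as
\begin{align*}
E_{x_s,x_t}\bigl[(g_i A x_s)(g_j A x_s)(g_i A x_t)(g_j A x_t)\bigr]
= E_{x_s}\bigl[(g_i A x_s)(g_j A x_s)\bigr]\cdot E_{x_t}\bigl[(g_i A x_t)(g_j A x_t)\bigr].
\end{align*}
Each factor is of the form $E_x[(a^\intercal x)(b^\intercal x)]$ with $a = (g_i A)^\intercal$ and $b = (g_j A)^\intercal$, and by Lemma \ref{lemma:BoundMoment} this equals the dot product $(g_i A)\cdot(g_j A) = \sum_{u=1}^{d_x} g_i A_{:,u}\, g_j A_{:,u}$.

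Combining these two factors gives a square, and taking the outer expectation over $G$ produces
\begin{align*}
\sum_{i=1}^{d_g}\sum_{j=1}^{d_g} E_G\!\left[\Bigl(\sum_{u=1}^{d_x} g_i A_{:,u}\, g_j A_{:,u}\Bigr)^2\right],
\end{align*}
which matches the right-hand side of the claim. There is no real obstacle here: the only thing one must be careful about is the order of integration (integrating out $x_s, x_t$ before $G$) and tracking that the inner-product simplification applies to each $x$-factor separately; the independence of $x_s$ and $x_t$ is what makes the entire identity work as an equality rather than just an inequality.
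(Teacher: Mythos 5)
Your proposal is correct and follows essentially the same route as the paper's proof: expand the square into a double sum over $i,j$, integrate out $x_s$ and $x_t$ using their independence and the identity $E_x[(a^\intercal x)(b^\intercal x)] = a\cdot b$ from Lemma \ref{lemma:BoundMoment}, and recognize the resulting product of two identical dot products as a square. Your write-up is in fact slightly more explicit than the paper's about the conditioning on $G$ and the factorization over the two independent $x$-vectors.
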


\begin{proof}
	By linearity of expectation, we have 
	
	\begin{equation} 
	\begin{split}
	E \left(\sum_{j=1}^{d_g} g_j A x_s g_j A x_t\right)^2
	&=
	\sum_{i=1}^{d_g}  \sum_{j=1}^{d_g} E\left( g_j A x_s g_j A x_t g_i A x_s g_i A x_t\right)\\
	&=
	\sum_{i=1}^{d_g}  \sum_{j=1}^{d_g} E\left( \sum_{u=1}^{d_x} g_j A_{:,u} g_i A_{:,u} 
	\sum_{v=1}^{d_x} g_j A_{:,v} g_i A_{:,v}
	\right)\\	
	&=
	\sum_{i=1}^{d_g}  \sum_{j=1}^{d_g} E\left( \sum_{u=1}^{d_x} g_j A_{:,u} g_i A_{:,u} \right)^2,\\	    
	\end{split}
	\end{equation}
	where the second equality is taking expectation over $x$.
\end{proof}

\begin{lemma}\label{lemma:IterativeRSquare}
	if all elements in $W$ are i.i.d standard normal distribution, we have   
	\begin{align*}
	E\left(\sum_{s=1}^{K_a} r_{a,s}^2 r_{a,p}^2\right)
	=\left(K_a +2\right) F_{a} &=	\left(K_a +2\right)\left(\prod_{\ell=a+1}^{L-1}  K_{\ell} \left(K_{\ell} +2\right)\right).
	\end{align*}
\end{lemma}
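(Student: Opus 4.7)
The plan is to peel off the layers of $r_{a,p}$ by conditioning, reducing the expectation to a product of chi-squared-like moments. Set $c = (W_{a+2:L})^{\intercal} \in \mathbb{R}^{K_{a+1}}$, so that $r_{a,p} = c^{\intercal} W_{a+1,:,p}$ is the inner product of the random vector $c$ (which depends only on $W_{a+2},\ldots,W_L$) with the $p$-th column of $W_{a+1}$. Crucially, conditional on $c$, the columns $W_{a+1,:,s}$ are independent standard Gaussians in $\mathbb{R}^{K_{a+1}}$, so the scalars $Z_s := r_{a,s}/\|c\|$ are i.i.d.\ $\mathcal{N}(0,1)$.

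First I would factor out $\|c\|^4$. Writing $\sum_{s=1}^{K_a} r_{a,s}^2 r_{a,p}^2 = \|c\|^4\, Z_p^2 \sum_s Z_s^2$ and splitting the $s = p$ term from the $s \neq p$ terms gives
\[ \mathbb{E}\!\left[ Z_p^2 \sum_{s=1}^{K_a} Z_s^2 \right] = \mathbb{E}[Z_p^4] + (K_a-1)\,\mathbb{E}[Z_p^2]\mathbb{E}[Z_s^2] = 3 + (K_a - 1) = K_a + 2, \]
so by the tower property $\mathbb{E}\bigl[\sum_s r_{a,s}^2 r_{a,p}^2\bigr] = (K_a + 2)\,\mathbb{E}[\|c\|^4]$.

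Next I would compute $\mathbb{E}[\|c\|^4]$ by induction on the layers, descending from $L$ to $a+2$. Define $v_b := W_b^{\intercal} v_{b+1}$ with boundary $v_{L+1} := 1$, so that $v_{a+2} = c$. Conditional on $v_{b+1}$, the entries of $v_b$ are independent $\mathcal{N}(0, \|v_{b+1}\|^2)$, and applying Lemma~\ref{lemma:MomentInduction} with $B = W_b$ and constant vector $v_{b+1}$ yields $\mathbb{E}[\|v_b\|^4 \mid v_{b+1}] = K_{b-1}(K_{b-1}+2)\,\|v_{b+1}\|^4$. Iterating this recursion from $b = L$ down to $b = a+2$ gives
\[ \mathbb{E}[\|c\|^4] = \prod_{\ell=a+1}^{L-1} K_\ell(K_\ell + 2) = F_a, \]
which when combined with the previous display completes the proof.

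The main obstacle, though largely notational, is bookkeeping for the boundary case $a = L-1$, where $W_{a+2:L}$ is an empty product: we interpret $c = 1$ so that $\|c\|^4 = 1$ and $F_{L-1}$ is an empty product equal to $1$, all consistent with the direct check $\mathbb{E}\bigl[\sum_s W_{L,1,s}^2 W_{L,1,p}^2\bigr] = 3 + (K_{L-1}-1) = K_{L-1}+2$. Everything else is mechanical once the conditional-independence decomposition of $W_{a+2:L}\,W_{a+1,:,p}$ is in place, and the applicability of Lemma~\ref{lemma:MomentInduction} conditionally on $v_{b+1}$ is immediate from the independence of the entries of $W_b$ from $v_{b+1}$.
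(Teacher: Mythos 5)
Your proof is correct and follows essentially the same route as the paper: conditioning on the outer layers to split the sum into the $s=p$ term (fourth moment, contributing $3$) and the $K_a-1$ off-diagonal terms (contributing $1$ each), which yields the $(K_a+2)$ factor times $\EXPS{\|W_{(a+2):L}\|^4}$, and then evaluating that fourth moment by the same layer-by-layer recursion via Lemma~\ref{lemma:MomentInduction} (your $\EXPS{\|v_b\|^4\mid v_{b+1}}=K_{b-1}(K_{b-1}+2)\|v_{b+1}\|^4$ is exactly the paper's recursion $E(F_a)=K_{a+1}(K_{a+1}+2)E(F_{a+1})$, since $F_a=\|W_{(a+2):L}\|^4$). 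Your explicit handling of the empty-product boundary case $a=L-1$ is a welcome clarification that the paper leaves implicit.
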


\begin{proof}
	\begin{align*}
	E_{W_{a+1}}\left(\sum_{s=1}^{K_a} r_{a,s}^2 r_{a,p}^2\right)
	&= 
	E_{W_{a+1}}\left(\sum_{s=1}^{K_a} \left(W_{(a+2):L} W_{a+1,:,s}\right)^2 \left(W_{(a+2):L} W_{a+1,:,p} \right)^2\right)   \\
	&= 
	E_{W_{a+1}}\left(\sum_{s=1,s\not=p}^{K_a} \left(W_{(a+2):L} W_{a+1,:,s}\right)^2 \left(W_{(a+2):L} W_{a+1,:,p} \right)^2+ \left(W_{(a+2):L} W_{a+1,:,p}\right)^4\right)   \\
	&=
	\left(K_a +2\right)\left(E_{W_{a+1}}\left(  \left(W_{(a+2):L} W_{a+1,:,p} \right)^2\right) \right)^2  \\	
	&=	\left(K_a +2\right)\left( \sum_{r=1}^{K_{a+1}}  \left(W_{(a+3):L} W_{a+2,:,r}\right)^2 \right)^2.  
	\end{align*}	
	The first equation expands the expression of the original formula. The second equation split the summation over $s$ into two parts, the case when $p=s$ and the case $p\not=s$. The third equation uses the fact that $E||a^Tx||_2^4 = 3 ||a||^4_2 $ from Lemma \ref{lemma:BoundMoment}, and that all $W_{a+1},:,s$ are symmetric and thus the expectation of the sum is essentially $K_a -1$ times the expectation of each value. 
	
	By Lemma \ref{lemma:MomentInduction}, we have 
	\begin{align*}
	E_{W_{a+2}}\left( \left( \sum_{r=1}^{K_{a+1}}  \left(W_{(a+3):L} W_{a+2,:,r}\right)^2 \right)^2 \right)
	&= K_{a+1}\left(K_{a+1}+2\right) \left( \sum_{r=1}^{K_{a+2}}  {\left(W_{a+4:L} W_{a+3,:,r} \right)^2} \right)^2.
	\end{align*}	
	Note that now the formula on the right side has the same form of that on the left side. This actually means that we can use induction over $a$ to further simplify it. Formally, let 
	\begin{align*}
	F_a = \left( \sum_{r=1}^{K_{a+1}}  \left(W_{(a+3):L} W_{a+2,:,r}\right)^2 \right)^2.
	\end{align*}
	Then the above equation becomes for all $a\leq L-4$,
	\begin{align*}
	E_{W_{a+2}} \left(F_a\right) = K_{a+1} \left( K_{a+1} +2 \right) F_{a+1},
	\end{align*}
	which implies 
	\begin{align*}
	E \left(F_a\right) = K_{a+1} \left( K_{a+1} +2 \right) E \left(F_{a+1}\right).
	\end{align*}
	Now we prove that by induction, 
	\begin{align*}
	E(F_a) = \left(\prod_{\ell=a+1}^{L-1}  K_{\ell} \left(K_{\ell} +2\right)\right), \forall a \leq L-3.
	\end{align*}
	When $a=L-3$, we have
	\begin{align*}
	E \left(F_{L-3}\right) & = E\left( \sum_{r=1}^{K_{L-2}} \left( W_L W_{L-1,:,r}  \right)^2 \right)^2\\
	&= E\left( \sum_{r=1}^{K_{L-2}} \sum_{v=1}^{K_{L-2}} \left( W_L W_{L-1,:,r}  \right)^2 \left( W_L W_{L-1,:,v}  \right)^2 \right) \\
	&= E\left( \sum_{r=1}^{K_{L-2}} \sum_{v=1,v\not=r}^{K_{L-2}} \left( W_L W_{L-1,:,r}  \right)^2 \left( W_L W_{L-1,:,v}  \right)^2 + \sum_{r=1}^{K_{L-2}} \left( W_L W_{L-1,:,r}  \right)^4  \right) \\
	&= \sum_{r=1}^{K_{L-2}} \sum_{v=1,v\not=r}^{K_{L-2}} E \left( W_L W_{L-1,:,r}  \right)^2 \left( W_L W_{L-1,:,v}  \right)^2  + \sum_{r=1}^{K_{L-2}} E\left( W_L W_{L-1,:,r}  \right)^4  \\
	&= \sum_{r=1}^{K_{L-2}} \sum_{v=1,v\not=r}^{K_{L-2}} E_{W_L} E_{W_{L-1}}  \left( W_L W_{L-1,:,r}  \right)^2 \left( W_L W_{L-1,:,v}  \right)^2  + \sum_{r=1}^{K_{L-2}} E_{W_{L}} E_{W_{L-1}}\left( W_L W_{L-1,:,r}  \right)^4  \\
	&= \sum_{r=1}^{K_{L-2}} \sum_{v=1,v\not=r}^{K_{L-2}} E_{W_L} \left(E_{W_{L-1}}  \left( W_L W_{L-1,:,r}  \right)^2 \right)^2  + \sum_{r=1}^{K_{L-2}} E_{W_{L}} E_{W_{L-1}}\left( W_L W_{L-1,:,r}  \right)^4  \\    
	&= \sum_{r=1}^{K_{L-2}} \sum_{v=1,v\not=r}^{K_{L-2}} E_{W_L} \left( \sum_{t=1}^{K_{L-1}} \left( W_{L,:,t} \right)^2 \right)^2  + 3 \sum_{r=1}^{K_{L-2}} E_{W_L} \left( \sum_{t=1}^{K_{L-1}} \left( W_{L,:,t} \right)^2 \right)^2  \\
	&= K_{L-2} \left( K_{L-2} +2\right) E_{W_L} \left( \sum_{t=1}^{K_{L-1}} \left( W_{L,:,t} \right)^2 \right)^2 \\
	&= K_{L-2} \left( K_{L-2} +2\right) K_{L-1} \left( K_{L-1} +2\right).  \\
	\end{align*}
	The first and second equations expand the expression of $F_L$. The third equation splits the summation over $v$ into 2 parts, the case when $v=r$ and the case when $v\not=r$. The forth equation uses the linearity of expectation and the fifth equation  uses conditional expectation. The sixth equation uses the fact that $W_{L-1}$ are all i.i.d standard normal distribution. The seventh equation uses the fact that $E||a^T x ||_2^4 = 3||a||^4$ from Lemma \ref{lemma:BoundMoment}. The last two equations are simple algebra.
	
	Assume that when $a=\theta$, 
	\begin{align*}
	E(F_\theta) = \left(\prod_{\ell=\theta+1}^{L-1}  K_{\ell} \left(K_{\ell} +2\right)\right).
	\end{align*}
	When $a=\theta-1$, we have 
	\begin{align*}
	E\left( F_{\theta - 1}\right) & = K_{\theta} \left( K_\theta + 2 \right) E\left(F_{\theta} \right)\\
	&= \left(\prod_{\ell=\theta}^{L-1}  K_{\ell} \left(K_{\ell} +2\right)\right). 
	\end{align*}
	Thus, by induction, 
	\begin{align*}
	E(F_a) = \left(\prod_{\ell=a+1}^{L-1}  K_{\ell} \left(K_{\ell} +2\right)\right), \forall a \leq L-3.
	\end{align*}
	Therefore,
	\begin{align*}
	E\left(\sum_{s=1}^{K_a} r_{a,s}^2 r_{a,p}^2\right)
	&= 
	\left(K_a + 2\right) E(F_a) = (K_a + 2) \left(\prod_{\ell=a+1}^{L-1}  K_{\ell} \left(K_{\ell} +2\right)\right).
	\end{align*}
\end{proof}

\begin{lemma}\label{lemma:IterativeLSquare}
	If all elements in $W,W^*,x$ are i.i.d standard normal distribution, then 
	\begin{equation*}
	\begin{split}
	E\left(\sum_{t=1}^{K_{a-1}}   \left(l_{a,q} x_i\right)^2 \left(l_{a,t} x_i\right)^2\right)
	&=	\left(K_{a-1} +2\right)\left(\prod_{\ell=0}^{a-2}  K_{\ell}\left(K_{\ell} +2\right)\right).  \\
	\end{split}
	\end{equation*}
\end{lemma}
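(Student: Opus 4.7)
The plan is to mirror the structure of the proof of Lemma \ref{lemma:IterativeRSquare}, but unfold the matrix product from the \emph{outer} end (the $W_{a-1}$ row) inward toward $x_i$, rather than from the rightmost layer. The key observation is the algebraic identity
\begin{equation*}
\sum_{t=1}^{K_{a-1}} (l_{a,q} x_i)^2 (l_{a,t} x_i)^2 = (l_{a,q} x_i)^2 \cdot \|W_{a-1} v_{a-2}\|^2,
\end{equation*}
where I would set $v_0 = x_i$ and $v_k = W_k v_{k-1}$ for $k \geq 1$, so that $l_{a,t} x_i = W_{a-1,t,:}\, v_{a-2}$. This rewrites the target quantity as a product involving only the last matrix $W_{a-1}$ and the random vector $v_{a-2}$.

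Next I would condition on $v_{a-2}$ (equivalently on $W_1,\dots,W_{a-2}$ and $x_i$) and integrate out the rows of $W_{a-1}$. Since the rows of $W_{a-1}$ are independent standard Gaussian vectors, for each $t$ the quantity $W_{a-1,t,:}\,v_{a-2}$ is $\mathcal{N}(0,\|v_{a-2}\|^2)$, with $W_{a-1,q,:}$ independent of $W_{a-1,t,:}$ when $t\neq q$. Using the fourth-moment identity from Lemma \ref{lemma:BoundMoment}, the $t=q$ summand contributes $3\|v_{a-2}\|^4$ and each of the $K_{a-1}-1$ remaining summands contributes $\|v_{a-2}\|^4$, giving the clean factor $(K_{a-1}+2)\|v_{a-2}\|^4$.

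The remaining task is to evaluate $\mathbb{E}\,\|v_{a-2}\|^4$. Here I would apply Lemma \ref{lemma:MomentInduction} iteratively: conditioning on $v_{a-3}$, the identity $\|v_{a-2}\|^2 = \sum_{i=1}^{K_{a-2}}(W_{a-2,i,:}\,v_{a-3})^2$ immediately yields $\mathbb{E}[\|v_{a-2}\|^4 \mid v_{a-3}] = K_{a-2}(K_{a-2}+2)\|v_{a-3}\|^4$. Iterating downward through layers $a-2, a-3, \dots, 1$ contributes $\prod_{\ell=1}^{a-2} K_\ell(K_\ell+2)$, and the base case $\mathbb{E}\,\|x_i\|^4 = K_0(K_0+2)$ (a direct chi-square moment) supplies the final factor $K_0(K_0+2)$. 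Multiplying everything gives exactly $(K_{a-1}+2)\prod_{\ell=0}^{a-2} K_\ell(K_\ell+2)$.

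The proof is essentially a careful bookkeeping exercise rather than a deep step. The only mildly delicate point is being careful that the rows of $W_{a-1}$ are independent of $v_{a-2}$ so that the conditioning argument in the second step is justified, and handling the degenerate case $a=1$ where $l_{a,q}$ reduces to $e_q^\intercal$ and $v_{a-2}$ reduces to $x_i$ itself (in which case the induction simply starts from the base case). Since the indexing conventions are exactly parallel to those used in Lemma \ref{lemma:IterativeRSquare}, no new machinery is required.
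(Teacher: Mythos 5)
Your proof is correct and follows essentially the same route the paper intends: the paper's proof of this lemma is literally ``similar to the proof of Lemma~\ref{lemma:IterativeRSquare},'' and your argument mirrors that proof exactly --- peel off the matrix adjacent to the summed index (here the rows of $W_{a-1}$) to extract the $(K_{a-1}+2)$ factor via the split into $t=q$ and $t\neq q$, then iterate the fourth moment of the norm through the remaining layers via Lemma~\ref{lemma:MomentInduction}, with the chi-square moment $\EXPS{\norms{x_i}^4}=K_0(K_0+2)$ as the base case. Your attention to the independence of $W_{a-1}$ from $v_{a-2}$ and to the degenerate cases $a=1,2$ is a welcome bit of extra care that the paper omits.
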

\begin{proof}
	Similar to the proof of Lemma \ref{lemma:IterativeRSquare}.
\end{proof}

\begin{lemma}\label{lemma:IterativeLSquareNox}
	If all elements in $W,W^*,x$ are i.i.d standard normal distribution, then 
	\begin{equation*}
	\begin{split}
	E\left(\sum_{t=1}^{K_{a-1}}   \left(l_{a,q,v} \right)^2 \left(l_{a,t,v}\right)^2\right)
	&=	\left(K_{a-1} +2\right)\left(\prod_{\ell=1}^{a-2}  K_{\ell}\left(K_{\ell} +2\right)\right).  \\
	\end{split}
	\end{equation*}
\end{lemma}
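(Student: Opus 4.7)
The plan is to imitate the strategy of Lemmas \ref{lemma:IterativeRSquare} and \ref{lemma:IterativeLSquare}, peeling off one matrix factor at a time via iterated conditional expectations and invoking Lemma \ref{lemma:MomentInduction} at each step.

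First, I would isolate the outermost matrix $W_{a-1}$. Setting $u := \left(\prod_{\ell=1}^{a-2} W_\ell\right) e_v \in \mathbb{R}^{K_{a-2}}$, we have $l_{a,q,v} = W_{a-1,q,:}\, u$ for every $q$, and the rows $W_{a-1,t,:}$ are i.i.d.\ standard Gaussian and independent of $u$. Splitting the sum into the diagonal term $t=q$ and the $K_{a-1}-1$ off-diagonal terms, and applying the moment identities $E[(g^\intercal u)^2]=\|u\|^2$ and $E[(g^\intercal u)^4]=3\|u\|^4$ from Lemma \ref{lemma:BoundMoment}, gives
\begin{align*}
E_{W_{a-1}}\!\left[\sum_{t=1}^{K_{a-1}} (l_{a,q,v})^2 (l_{a,t,v})^2 \,\Big|\, u\right] = 3\|u\|^4 + (K_{a-1}-1)\|u\|^4 = (K_{a-1}+2)\|u\|^4.
\end{align*}

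Next, I would unroll $E[\|u\|^4]$ layer by layer. Define $u_k := \left(\prod_{\ell=1}^{k} W_\ell\right) e_v$, so that $u_0 = e_v$ and $u_{a-2} = u$. Writing $\|u_k\|^2 = \sum_{r=1}^{K_k} (W_{k,r,:}\, u_{k-1})^2$ and conditioning on $u_{k-1}$, Lemma \ref{lemma:MomentInduction} applied with $c = u_{k-1}$ and the rows of $W_k$ playing the role of the $b_i$'s yields
\begin{align*}
E_{W_k}\!\left[\|u_k\|^4 \,\Big|\, u_{k-1}\right] = K_k(K_k+2)\,\|u_{k-1}\|^4.
\end{align*}
Since $\|u_0\|^4 = 1$ deterministically, a short induction on $k$ from $1$ up to $a-2$ gives $E[\|u_{a-2}\|^4] = \prod_{\ell=1}^{a-2} K_\ell(K_\ell+2)$. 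Combining with the first step via the tower property produces exactly the claimed formula.

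I do not anticipate serious obstacles, since every step is a routine moment computation already exercised in the preceding lemmas. The only mildly fiddly point is the degenerate case $a=2$, where the product $\prod_{\ell=1}^{a-2}$ is empty and the identity reduces by direct inspection to $E\!\left[\sum_{t=1}^{K_1} W_{1,q,v}^2 W_{1,t,v}^2\right] = 3 + (K_1-1) = K_1 + 2$, which matches the claimed formula; for every $a \geq 3$ the induction argument above applies uniformly.
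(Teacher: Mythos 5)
Your proof is correct and follows essentially the same route the paper intends: the paper's proof of this lemma is simply the remark ``Similar to the proof of Lemma~\ref{lemma:IterativeRSquare},'' and your argument---splitting the sum into the diagonal and off-diagonal terms via the moment identities of Lemma~\ref{lemma:BoundMoment}, then peeling off one layer at a time with Lemma~\ref{lemma:MomentInduction}---is precisely that strategy, spelled out. The computation, the induction, and the base case $a=2$ all check out against the stated formula.
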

\begin{proof}
	Similar to the proof of Lemma \ref{lemma:IterativeRSquare}.
\end{proof}

\begin{lemma}\label{MLNNCrossProductLemma}
	\begin{equation*}
	\begin{split}
	E\left( \sum_{s,t=1}^{K_{b-2}} \left(\sum_{v=1}^{K_{a-1}}  l_{a,v,s}^{b} l_{a,v,t}^{b}\right)^2 \right)
	&= \left(\prod_{\ell = b-2}^{a-1}{K_\ell \left(K_{\ell}-1\right)} \right) \left(
	\sum_{\phi = b- 2}^{a-1} \frac{1}{K_{\phi}-1} \prod_{\ell=\phi+1}^{a-1} \frac{K_\ell+2}{K_\ell-1} \right),
	\end{split}
	\end{equation*}
	and in particular, 
	\begin{equation*}
	\begin{split}
	E\left( \sum_{s,t=1}^{K_{0}} \left(\sum_{v=1}^{K_{a-1}}  l_{a,v,s}^{0} l_{a,v,t}^{0}\right)^2 \right) 
	&= \left(\prod_{\ell = 0}^{a-1}{K_\ell \left(K_{\ell}-1\right)} \right) \left(
	\sum_{\phi = 0}^{a-1} \frac{1}{K_{\phi}-1} \prod_{\ell=\phi+1}^{a-1} \frac{K_\ell+2}{K_\ell-1} \right).
	\end{split}
	\end{equation*}
\end{lemma}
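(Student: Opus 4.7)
The plan is to recognize the LHS as $E[\|M^T M\|_F^2]$, where $M := W_{a-1} W_{a-2} \cdots W_{b-1}$ is a product of $m := a - b + 1$ independent standard Gaussian matrices, of total size $K_{a-1} \times K_{b-2}$. Indeed, $l^b_{a,v,s}$ is precisely the $(v,s)$-entry of $M$, so $\sum_{v=1}^{K_{a-1}} l^b_{a,v,s} l^b_{a,v,t} = (M^T M)_{s,t}$, and summing its square over $s,t \in \{1,\ldots,K_{b-2}\}$ yields the squared Frobenius norm. I will derive a recursion by peeling off the rightmost factor $W_{b-1}$, tracking along the way the auxiliary fourth moment $\alpha_m := E[\|M\|_F^4]$, since the two quantities couple through the Gaussian Wick expansion.

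Write $M = M' X$ with $M' := W_{a-1} \cdots W_b$ and $X := W_{b-1}$, and set $Q := M'^T M'$. Then $(M^T M)_{s,t} = \sum_{u,u'} X_{u,s} Q_{u,u'} X_{u',t}$, so $[(M^T M)_{s,t}]^2$ expands into a sum of quartic monomials in the entries of $X$. Applying Isserlis' theorem to each Gaussian fourth moment gives three Wick pairings; summing over $s,t$ and separating $s=t$ from $s\neq t$, I expect
\begin{equation*}
E_X[\|M^T M\|_F^2] = d_m \|M'\|_F^4 + d_m(d_m+1) \|M'^T M'\|_F^2,
\end{equation*}
where $d_m := K_{b-2}$. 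A parallel calculation for $\|M\|_F^4 = \bigl(\operatorname{Tr}(Q X X^T)\bigr)^2$ yields $E_X[\|M\|_F^4] = d_m^2 \|M'\|_F^4 + 2 d_m \|M'^T M'\|_F^2$. Taking outer expectations and writing $\beta_m := E[\|M^T M\|_F^2]$, I obtain the coupled recursion
\begin{equation*}
\alpha_m = d_m^2 \alpha_{m-1} + 2 d_m \beta_{m-1}, \qquad \beta_m = d_m \alpha_{m-1} + d_m(d_m+1)\beta_{m-1},
\end{equation*}
with base $\alpha_0 = d_0^2$, $\beta_0 = d_0$ (the trivial case $M = I_{d_0}$, where $d_i := K_{a-1-i}$).

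The final step is to verify by induction that the closed-form claimed in the lemma solves the $\beta$-recursion. After dividing out the common factor $\prod_{\ell=b-2}^{a-1} K_\ell$, the target simplifies to $\tilde\beta_m = \sum_{\psi=0}^m \prod_{i=0}^{\psi-1}(d_i+2)\prod_{i=\psi+1}^m(d_i-1)$. Splitting the outer sum at $\psi = m$ shows that this expression obeys $\tilde\beta_m = \prod_{i=0}^{m-1}(d_i+2) + (d_m-1)\tilde\beta_{m-1}$; matching it against the reduced peeling recursion $\tilde\beta_m = \tilde\alpha_{m-1} + (d_m+1)\tilde\beta_{m-1}$ reduces the inductive step to the auxiliary identity $\tilde\alpha_{m-1} + 2\tilde\beta_{m-1} = \prod_{i=0}^{m-1}(d_i+2)$. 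This identity is itself a one-line induction: combining the two recursions linearly yields $\tilde\alpha_m + 2\tilde\beta_m = (d_m+2)(\tilde\alpha_{m-1}+2\tilde\beta_{m-1})$, so the product form propagates from the base case $\tilde\alpha_0 + 2\tilde\beta_0 = d_0 + 2$. The main technical hurdle is executing the three Wick contractions in the $X$-expectation carefully, so that the $\|M'\|_F^4$ and $\|M'^T M'\|_F^2$ terms emerge with exactly the coefficients $d_m$ and $d_m(d_m+1)$; once the recursion is pinned down, the rest is algebraic bookkeeping, and the special case $b=2$ is just the instance $K_0 = d$ of the general formula.
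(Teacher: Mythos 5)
Your proof is correct and follows essentially the same strategy as the paper's: both peel off the outermost Gaussian factor $W_{b-1}$, apply the Isserlis/Wick expansion to its entries, and telescope the resulting recursion over the number of factors. The only substantive difference is bookkeeping --- the paper substitutes the closed form $\prod_{\ell=b-1}^{a-1}K_\ell(K_\ell+2)$ for the diagonal ($s=t$) contribution at each step via its earlier sum-of-squares lemma, whereas you carry $E\|M\|_F^4$ along as a second coupled recursion variable and recover that same product through the invariant $\tilde\alpha_m+2\tilde\beta_m=\prod_i(d_i+2)$; the two recursions are equivalent and your coefficients check out.
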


\begin{proof}
	We will prove the result using recurrent formula. Let us first note that 
	\begin{equation*}
	\begin{split}
	E\left( \sum_{s,t=1}^{K_{b-2} } \left(\sum_{v=1}^{K_{a-1}}  l_{a,v,s}^{b} l_{a,v,t}^{b} \right)^2 \right) 
	&= 
	K_{b-2} E\left( \left(\sum_{v=1}^{K_{a-1}}  \left(l_{a,v,s}^b\right)^2 \right)^2 \right) + K_{b-2} \left(K_{b-2}-1\right)  E\left(  \left(\sum_{v=1}^{K_{a-1}}  l_{a,v,s}^{b} l_{a,v,t}^{b}\right)^2 \right) 
	\\
	&= 
	K_{b-2} \prod_{\ell=b-1}^{a-1} K_\ell\left(K_\ell+2\right) + K_{b-2} \left(K_{b-2}-1\right)  E\left(  \left(\sum_{v=1}^{K_{a-1}}  l_{a,v,s}^{b} l_{a,v,t}^{b}\right)^2 \right), \\
	\end{split}
	\end{equation*}
	where the first equation splits the summation over $s$ into two cases, the case when $s=t$ and the case when $s\not=t$. Note that $l_{a,v,s}^{b}$ are symmetric over all $s$, and thus the summation in the first case becomes $K_{b-2}$ times a single term. Similarly, in the second term, we have $K_{b-2} K_{b-2}-1$ as the coefficient. The second equation essentially plugs in the value of the $E\left( \left(\sum_{v=1}^{K_{a-1}}  l_{a,v,s}^{b} \right)^2 \right)$, which is a sum of squares and we already know how to compute it using lemma \ref{lemma:MomentInduction}.
	
	Now let us turn to the term $E\left(  \left(\sum_{v=1}^{K_{a-1}}  l_{a,v,s}^{b} l_{a,v,t}^{b}\right)^2 \right)$. Let us further split $W_{b}$ in this is term, and we obtain
	
	\begin{equation*}
	\begin{split}
	E\left(  \left(\sum_{v=1}^{K_{a-1}}  l_{a,v,s}^{b} l_{a,v,t}^{b}\right)^2 \right) &= E \left(\sum_{v_1=1}^{K_{a-1}} \sum_{v_2=1}^{K_{a-1}}  l_{a,v_1,s}^{b} l_{a,v_1,t}^{b} l_{a,v_2,s}^{b} l_{a,v_2,t}^{b} \right) \\
	&= E \left(\sum_{v_1=1}^{K_{a-1}} \sum_{v_2=1}^{K_{a-1}}  l_{a,v_1,:}^{b+1} W_{b,:,s} l_{a,v_1,:}^{b+1} W_{b,:,t} l_{a,v_2,:}^{b+1}W_{b,:,s} l_{a,v_2,:}^{b+1} W_{b,:,t} \right) \\    
	&= E \left(\sum_{v_1=1}^{K_{a-1}} \sum_{v_2=1}^{K_{a-1}} \sum_{s=1}^{K_{b-1}} \sum_{t=1}^{K_{b-1}}   l_{a,v_1,s}^{b+1} l_{a,v_1,t}^{b+1} l_{a,v_2,s}^{b+1} l_{a,v_2,t}^{b+1}\right) \\ 
	&= E \left( \sum_{s,t=1}^{K_{b-1}} 
	\left(\sum_{v=1}^{K_{a-1}} l_{a,v,s}^{b+1} l_{a,v,t}^{b+1} \right)^2 \right), 
	\end{split}
	\end{equation*}	
	where the first equation is simply expanding the square of summations, the second equation is splitting the expression of $\ell^{b}_{a,v,s}$, the third equation is computing the expectation over $W_{b}$, and the final equation is changing the order of summation.
	Combining the above two main equations, we effectively obtain the following equation.
	
	\begin{equation*}
	\begin{split}
	E\left( \sum_{s,t=1}^{K_{b-2}} \left(\sum_{v=1}^{K_{a-1}}  l_{a,v,s}^{b} l_{a,v,t}^{b}\right)^2 \right) 
	&= 
	K_{b-2} \prod_{\ell=b-1}^{a-1} K_\ell\left(K_\ell+2\right) + K_{b-2} \left(K_{b-2}-1\right)  E \left( \sum_{s,t=1}^{K_{b-1}} 
	\left(\sum_{v=1}^{K_{a-1}} l_{a,v,s}^{b+1} l_{a,v,t}^{b+1} \right)^2 \right). \\
	\end{split}
	\end{equation*}
	This holds for every $b\leq a - 1$.
	Now Let us define $f(b) = E\left( \sum_{s,t=1}^{K_{b-2}} \left(\sum_{v=1}^{K_{a-1}}  l_{a,v,s}^{b} l_{a,v,t}^{b}\right)^2 \right)/\prod_{\ell = b-2}^{a-1}K_\ell \left(K_\ell-1\right)$.
	The above equation now becomes
	\begin{equation*}
	\begin{split}
	f(b) 
	&= 
	\frac{1}{K_{b-2}-1} \prod_{\ell=b-1}^{a-1} \frac{K_\ell+2}{K_\ell-1}+  f(b+1). \\
	\end{split}
	\end{equation*}
	One can easily check that $f(a+1) = \frac{1}{K_{a-1}-1}$ and that $f(a) = \frac{K_{a-1}+2}{K_{a-1}-1} \frac{1}{K_{a-2}-1}+ \frac{1}{K_{a-1}-1} $.
	Therefore, by induction, we can easily obtain 
	\begin{equation*}
	\begin{split}
	f(b) 
	&= 
	\frac{1}{K_{b-2}-1} \prod_{\ell=b-1}^{a-1} \frac{K_\ell+2}{K_\ell-1}+  f(b+1) \\
	&= \cdots \\
	& =  			\sum_{\phi = b-2}^{a-1} \frac{1}{K_{\phi}-1} \prod_{\ell=\phi+1}^{a-1} \frac{K_\ell+2}{K_\ell-1},
	\end{split}
	\end{equation*}
	where we use the notation $\prod_{u=i}^{j} = 1 ,i>j$ for simplicity. 
	By plugging in back $f(b)$ to the expression of the expectation, we have 
	\begin{equation*}
	\begin{split}
	E\left( \sum_{s,t=1}^{K_{b-2}} \left(\sum_{v=1}^{K_{a-1}}  l_{a,v,s}^{b} l_{a,v,t}^{b}\right)^2 \right) 
	&= \left(\prod_{\ell = b-2}^{a-1}{K_\ell \left(K_{\ell}-1\right)} \right) \left(
	\sum_{\phi = b- 2}^{a-1} \frac{1}{K_{\phi}-1} \prod_{\ell=\phi+1}^{a-1} \frac{K_\ell+2}{K_\ell-1} \right).
	\end{split}
	\end{equation*}
	This completes the proof.
\end{proof}
\subsection{Computing the Expectation}


\begin{theorem}
	If $\forall a, W_{a,p,q}^*,  W_{a,p,q},x_i$ are all i.i.d $\sim \gaussian{0}{1}$, then 
	\begin{align*}
	E( || \frac{\partial f_i}{\partial W_{a,p,q}} ||^2  ) &= \frac{K_0 \left(K_0+2\right)}{K_{a}K_{a-1}}\left( \prod_{\ell=1}^{L-1} K_{\ell} \left(K_{\ell}+2\right)+\prod_{\ell=1}^{L-1} K_{\ell}^2 \right).	    
	\end{align*}
\end{theorem}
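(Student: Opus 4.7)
The plan is to use the factorization
\[
\frac{\partial f_i}{\partial W_{a,p,q}} = (Wx_i - W^* x_i)\, r_{a,p}\,(l_{a,q} x_i),
\]
square it, and take expectations. First I would write
\[
\left(\frac{\partial f_i}{\partial W_{a,p,q}}\right)^2 = (Wx_i)^2 r_{a,p}^2(l_{a,q}x_i)^2 - 2(Wx_i)(W^*x_i) r_{a,p}^2(l_{a,q}x_i)^2 + (W^*x_i)^2 r_{a,p}^2(l_{a,q}x_i)^2,
\]
and observe that the cross term vanishes in expectation: $W^*$ is independent of $W$, $x_i$, $r_{a,p}$, $l_{a,q}$, and $E[W^* x_i \mid x_i, W]=0$ by the sign-flip symmetry of each $W^*_\ell$.

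For the $W^*$-term, conditioning on $x_i$ and iterating $E[\|W^*_\ell v\|^2 \mid v] = K_\ell \|v\|^2$ through the layers gives $E[(W^*x_i)^2 \mid x_i] = \|x_i\|^2 \prod_{\ell=1}^{L-1} K_\ell$. Since $r_{a,p}$ depends only on $W_{a+1},\dots,W_L$ and $l_{a,q}$ on $W_1,\dots,W_{a-1}$, these factors split; a short Isserlis-type computation (Lemma~\ref{lemma:BoundMoment}) yields $E[\|x_i\|^2 (l_{a,q}x_i)^2 \mid l_{a,q}] = (d+2)\|l_{a,q}\|^2$, and the same recursive layer argument gives $E\|l_{a,q}\|^2 = \prod_{\ell=0}^{a-2}K_\ell$ and $E r_{a,p}^2 = \prod_{\ell=a+1}^{L-1}K_\ell$. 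Multiplying out produces $\frac{K_0(K_0+2)}{K_{a-1}K_a}\prod_{\ell=1}^{L-1}K_\ell^2$.

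For the $W$-term, the key move is to unfold $W$ around $W_a$: writing $W x_i = \sum_{p',q'} r_{a,p'}\,W_{a,p',q'}\,(l_{a,q'} x_i)$ and conditioning on everything except $W_a$, the Gaussian $W_a$ entries leave only the diagonal contribution, giving
\[
E[(W x_i)^2 \mid \{r_{a,p'}\}, \{l_{a,q'}x_i\}] = \Big(\sum_{p'} r_{a,p'}^2\Big)\Big(\sum_{q'}(l_{a,q'}x_i)^2\Big).
\]
Independence of $\{r_{a,p'}\}$ and $\{l_{a,q'},x_i\}$ then splits the remaining expectation into the exact two sums appearing in Lemma~\ref{lemma:IterativeRSquare} and Lemma~\ref{lemma:IterativeLSquare}, which evaluate to $(K_a+2)\prod_{\ell=a+1}^{L-1}K_\ell(K_\ell+2)$ and $(K_{a-1}+2)\prod_{\ell=0}^{a-2}K_\ell(K_\ell+2)$ respectively.

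Finally, I would combine the two contributions and rewrite
\[
(K_a+2)(K_{a-1}+2)\!\!\prod_{\ell=0,\ \ell\ne a-1,a}^{L-1}\!\!\!\! K_\ell(K_\ell+2) = \frac{K_0(K_0+2)}{K_{a-1}K_a}\prod_{\ell=1}^{L-1}K_\ell(K_\ell+2),
\]
so that both pieces share the common prefactor $\tfrac{K_0(K_0+2)}{K_{a-1}K_a}$, yielding the claimed identity. The main obstacle is mostly bookkeeping: ensuring that the independence justifications between $R=(r_{a,p'})$, $L=(l_{a,q'})$, $W_a$, $W^*$, and $x_i$ are applied correctly at each conditioning step, and that the product-index ranges are properly accounted for when separating out $K_{a-1}$ and $K_a$; no new technical idea beyond the two iterative-moment lemmas is required.
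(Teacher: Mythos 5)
Your proposal is correct and follows essentially the same route as the paper's proof: the cross term is killed by the independence and zero-mean symmetry of $W^*$, the $(Wx_i)^2$ term is handled by unfolding $W$ around $W_a$ and integrating out the Gaussian $W_a$ to reduce to the two iterative-moment lemmas, and the $(W^*x_i)^2$ term is factored using the independence of $W^*$, $r_{a,p}$, $l_{a,q}$, and $x_i$. The only (immaterial) difference is the order of integration in the $W^*$ term — you integrate $x_i$ against $l_{a,q}$ via the Isserlis identity to get $(d+2)\|l_{a,q}\|^2$, whereas the paper conditions on $x_i$ throughout and evaluates $E\bigl[(\sum_k x_{i,k}^2)^2\bigr]=K_0(K_0+2)$ at the end; both give the same $\frac{K_0(K_0+2)}{K_{a-1}K_a}\prod_{\ell=1}^{L-1}K_\ell^2$ contribution.
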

\begin{proof}
	We start by writing the expectation as follows.
	\begin{align*}
	E \left(|| \frac{\partial f_i}{\partial W_{a,p,q}}||\right)^2 
	& = E \left( (W x_i - W^* x_i) r_{a,p} (l_{a,q} x_i) \right)^2 \\
	&= E \left( \left(W x_i \right)^2 r_{a,p}^2 \left(l_{a,q} x_i\right)^2\right) + E \left( \left(W^* x_i \right)^2 r_{a,p}^2 \left(l_{a,q} x_i\right)^2\right),
	\end{align*}
	where we plug in the expression of the derivative in the first equation. The second equation uses the fact that $W$ and $W^*$ are 0-means independent random variables.
	
	For the first term,  computing the expectation over $W_a$, we have
	\begin{align*}
	E_{W_a}\left(\left(W x_i \right)^2 r_{a,p}^2 \left(l_{a,q} x_i\right)^2\right)
	& = \sum_{s=1}^{K_a} \sum_{t=1}^{K_{a-1}}  r_{a,s}^2 r_{a,p}^2 \left(l_{a,q} x_i\right)^2 \left(l_{a,t} x_i\right)^2  \\
	&= \sum_{s=1}^{K_a} r_{a,s}^2 r_{a,p}^2 \sum_{t=1}^{K_{a-1}}   \left(l_{a,q} x_i\right)^2 \left(l_{a,t} x_i\right)^2, 
	\end{align*}
	where the first equation uses the fact that $W_a$ only appears in $W$ where $W = W_{(a+1):L}  W_{a} W_{(1):(a-1)} $, and all elements in $W_a$ are i.i.d 0-means.
	Note that $r_{i,j}$ and $l_{k,\ell}$ are independent, so we can compute their expectation separately.
	By Lemma \ref{lemma:IterativeRSquare}, we have 
	\begin{equation*}
	\begin{split}
	E\left(\sum_{s=1}^{K_a} r_{a,s}^2 r_{a,p}^2\right)
	=\left(K_a +2\right) F_{a} &=	\left(K_a +2\right)\left(\prod_{\ell=a+1}^{L-1}  K_{\ell} \left(K_{\ell} +2\right)\right).  \\
	\end{split}
	\end{equation*}	
	By Lemma \ref{lemma:IterativeLSquare}, we have 
	\begin{equation*}
	\begin{split}
	E\left(\sum_{t=1}^{K_{a-1}}   \left(l_{a,q} x_i\right)^2 \left(l_{a,t} x_i\right)^2\right)
	&=	\left(K_{a-1} +2\right)\left(\prod_{\ell=0}^{a-2}  K_{\ell}\left(K_{\ell} +2\right)\right).  \\
	\end{split}
	\end{equation*}
	Combining those two equations we have
	\begin{equation*}
	\begin{split}
	E\left(\left(W x_i \right)^2 r_{a,p}^2 \left(l_{a,q} x_i\right)^2\right) 
	&= 
	\left( \prod_{\ell=0,\ell\not\in\{a,a-1\}}^{L-1} K_{\ell} \left(K_{\ell}+2\right) \right) \left(K_{a-1}+2\right) \left(K_{a}+2\right).	    
	\end{split}
	\end{equation*}  
	For the second term, note that 
	$\left(W^* x_i \right)^2$, $r_{a,p}^2 $ and $ \left(l_{a,q} x_i\right)^2$ are independent given $x_i$. Thus, we can compute the conditional expectation separately.   
	\begin{equation*}
	\begin{split}
	E_{W^*}\left(\left(W^* x_i \right)^2 \right) 
	&=  
	E_{W^*}\left(\left(\prod_{t=1}^{L} W^*_t x_i \right)^2 \right) \\
	&= 
	E_{W^*}\left(\left(W^*_L \prod_{t=1}^{L-1} W^*_t x_i \right)^2 \right)\\
	&=
	E_{W^*}\left( \sum_{\alpha = 1}^{K_{L-1}} W^{*,2}_{L,:,\alpha}\left( W^*_{L-1,\alpha,:} \prod_{t=1}^{L-2} W^*_{t} x_i \right)^2 \right)
	\\
	&=
	K_{L-1} E_{W^*}\left( \left(W^*_{L-1,\alpha,:}  \prod_{t=1}^{L-2} W^*_t x_i \right)^2 \right)
	\\
	&=
	\cdots\\
	&=
	\prod_{\ell = 1}^{L-1}{K_{\ell}} \sum_{k=1}^{K_0} x_{i,k}^2,  
	\end{split}
	\end{equation*}
	where the first two equations are  simply plugging in the expression of $W^*$. The third equation uses the fact that $W^*_a$ are i.i.d. 0-mean. The fourth equation uses the fact that $W_{L,:,a}$	are symmetric. The fifth equation uses induction to finally obtain the last equation.	Similarly, 
	\begin{equation*}
	\begin{split}
	E_{l_{a,q}}\left(\left(l_{a,q} x_i \right)^2 \right) 
	&=
	\prod_{\ell = 1}^{a-2}{K_{\ell}} \sum_{k=1}^{K_0} x_{i,k}^2,\\   
	\end{split}
	\end{equation*}
	and
	\begin{equation*}
	\begin{split}
	E_{r_{a,p}}\left(\left(r_{a,p}\right)^2 \right) 
	&=  
	\prod_{\ell = a+1}^{L-1}{K_{\ell}}.\\
	\end{split}
	\end{equation*}	
	Hence, the second term becomes 	
	\begin{equation*}
	\begin{split}
	E \left( \left(W^* x_i \right)^2 r_{a,p}^2 \left(l_{a,q} x_i\right)^2\right)
	&=
	E_{x_i} \left( E_{W^*} \left(W^* x_i \right)^2 E_{r_{a,p}}\left(r_{a,p}^2\right) E_{l_{a,q}} \left(l_{a,q} x_i\right)^2\right)\\
	&=
	E_{x_i}\left( \prod_{\ell = 1}^{L-1}{K_{\ell}} \sum_{k=1}^{K_0} x_{i,k}^2         \prod_{\ell = 1}^{a-2}{K_{\ell}} \sum_{k=1}^{K_0} x_{i,k}^2         \prod_{\ell = a+1}^{L-1}{K_{\ell}} \right)\\
	&=\frac{1}{K_{a-1} K_{a}}\prod_{\ell=1}^{L-1} K_\ell^2  E_{x_i}\left( \left(\sum_{k=1}^{K_0} x_{i,k}^2\right)^2   \right)\\ 
	&=\frac{1}{K_{a-1} K_{a}}\prod_{\ell=0}^{L-1} K_\ell^2.  
	\end{split}
	\end{equation*}
	Combing both terms finishes the proof.
\end{proof}

\begin{theorem}	\label{MLNNSquareMainTheorem}
	If $\forall \ell,p,q,i,  W_{\ell,p,q}^*, W_{\ell,p,q},x_i$ are all i.i.d $\sim \gaussian{0}{1}$, then we have
	\begin{align*}
	E\left(||\nabla f_i||^2\right) =
	L \left(  K_0 \left(K_0+2\right)\left( \prod_{\ell=1}^{L-1} K_{\ell} \left(K_{\ell}+2\right)+\prod_{\ell=1}^{L-1} K_{\ell}^2 \right)  \right)
	\end{align*}
\end{theorem}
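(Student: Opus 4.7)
The plan is to reduce Theorem~\ref{MLNNSquareMainTheorem} directly to the per-weight expectation computed in the preceding theorem, via a straightforward summation over all weight indices. Since the gradient $\nabla f_i$ collects the scalar partial derivatives with respect to every weight in every layer, I would write
\begin{align*}
||\nabla f_i||^2 = \sum_{a=1}^{L} \sum_{p=1}^{K_a} \sum_{q=1}^{K_{a-1}} \left(\frac{\partial f_i}{\partial W_{a,p,q}}\right)^2,
\end{align*}
and then use linearity of expectation to reduce the problem to summing the closed-form expression previously derived for $E(||\partial f_i / \partial W_{a,p,q}||^2)$.

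Next, I would substitute in that closed form, namely $\frac{K_0(K_0+2)}{K_a K_{a-1}}\bigl(\prod_{\ell=1}^{L-1} K_\ell(K_\ell+2) + \prod_{\ell=1}^{L-1} K_\ell^2\bigr)$. The crucial observation is that this expression carries the factor $1/(K_a K_{a-1})$, while summing over $p \in \{1,\dots,K_a\}$ and $q \in \{1,\dots,K_{a-1}\}$ contributes exactly $K_a K_{a-1}$ identical copies, producing a clean cancellation. The resulting summand is independent of $a$ (and of $p,q$), so summing it over the $L$ layers simply multiplies by $L$, yielding the claimed formula.

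The main technical content has already been absorbed by the preceding theorem, whose proof invokes the recursive moment identities for $r_{a,p}$ and $l_{a,q}$ established in Lemmas~\ref{lemma:IterativeRSquare} and \ref{lemma:IterativeLSquare}, together with the Gaussian moment bookkeeping in Lemma~\ref{lemma:BoundMoment}. Here the only work remaining is to verify that the index ranges $p \in \{1,\dots,K_a\}$ and $q \in \{1,\dots,K_{a-1}\}$ match the dimensions of $W_a \in \mathbb{R}^{K_a \times K_{a-1}}$ for every $a \in \{1,\dots,L\}$ (including the boundary cases $a=1$ where $K_0 = d$, and $a=L$ where $K_L = 1$), so I do not anticipate any substantive obstacle beyond this bookkeeping.
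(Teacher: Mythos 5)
Your proposal is correct and is exactly the paper's argument: the paper proves this theorem by citing the preceding per-weight expectation and "summing over $a,p,q$," which is precisely the cancellation of the $1/(K_a K_{a-1})$ factor against the $K_a K_{a-1}$ index range that you spell out. The only difference is that you make the bookkeeping explicit where the paper leaves it as a one-line remark.
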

\begin{proof}
	This can be directly obtained from the last theorem by summing over $a,p,q$.
\end{proof}
Remarks: One can verify that when $L=2$, this reduces to the 2-layer case and we have 
$	E( || \frac{\partial f_i}{\partial W_{a,p,q}} ||^2  ) = 2 d(d+2) (2K+2)K$, which agrees with the 2-layer analysis.

\begin{theorem}\label{MLNNCrossproductLemma2}
	If $W_{\ell,p,q}, W_{\ell,p,q}^*,x_i, x_j,i\not=j$ are all i.i.d $\sim \gaussian{0}{1}$, then we have
	\begin{align*}
	E \left(\frac{\partial f_i}{\partial W_{a,p,q}}\right) \left(\frac{\partial f_j}{\partial W_{a,p,q}}\right)
	& = \frac{1}{K_{a}K_{a-1}} \left(\prod_{\ell=0}^{L-1}{K_\ell \left(K_\ell + 2\right) }\right) \left( \sum_{\phi = 0}^{a-1} \frac{1}{K_{\phi}-1} \prod_{\ell=0}^{\phi} \frac{K_\ell-1}{K_\ell+2}+\frac{1}{K_0}\prod_{\ell=0}^{L-1}{\frac{K_\ell}{K_\ell+2}}\right),	
	\end{align*}
\end{theorem}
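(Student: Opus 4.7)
The plan is to separate the roles of the student weights $W$ and the teacher weights $W^*$ and reduce each surviving piece to one of the helper lemmas already established in this appendix. First I would expand
\begin{align*}
\frac{\partial f_i}{\partial W_{a,p,q}}\cdot\frac{\partial f_j}{\partial W_{a,p,q}} = (Wx_i - W^*x_i)(Wx_j - W^*x_j)\, r_{a,p}^2\,(l_{a,q}x_i)(l_{a,q}x_j)
\end{align*}
into four cross terms. The two that mix $W$ with $W^*$ vanish in expectation: $W^*$ is independent of $(W, r_{a,p}, l_{a,q}, x_i, x_j)$, and the scalar $W^*x = W_L^*\cdots W_1^* x$ is a monomial of odd total degree in every individual entry of every $W^*_\ell$, so it has zero mean. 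Only the ``pure $W$'' and ``pure $W^*$'' pieces, call them $T_1$ and $T_2$, survive.

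For $T_2$, I would integrate $W^*$ out first. A standard index-matching argument on the $2L$ independent Gaussian entries appearing in $(W^*x_i)(W^*x_j)$ forces every pair of matching indices, giving $E_{W^*}[(W^*x_i)(W^*x_j)] = (\prod_{\ell=1}^{L-1} K_\ell)(x_i\cdot x_j)$. The remaining $r_{a,p}^2$ is then independent of everything else, with $E[r_{a,p}^2] = \prod_{\ell=a+1}^{L-1} K_\ell$ by iteratively integrating out $W_{a+2},\dots,W_L$, and conditioning on $l_{a,q}$ collapses $E[(x_i\cdot x_j)(l_{a,q}x_i)(l_{a,q}x_j)]$ to $E[\|l_{a,q}\|^2] = \prod_{\ell=0}^{a-2} K_\ell$. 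Multiplying these factors produces exactly the $\frac{1}{K_0}\prod_{\ell=0}^{L-1}\frac{K_\ell}{K_\ell+2}$ contribution of the theorem.

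The real work is $T_1$. Because $W$ shares the matrices $W_1,\dots,W_{a-1}$ with $l_{a,q}$ and $W_{a+1},\dots,W_L$ with $r_{a,p}$, a direct product split fails; the right move is to peel off the middle layer via $Wx = \sum_{p',q'} r_{a,p'}\,W_{a,p',q'}\,(l_{a,q'}x)$. Integrating $W_a$ out first using $E[W_{a,p',q'}W_{a,p'',q''}] = \delta_{p'p''}\delta_{q'q''}$ reduces $E_{W_a}[(Wx_i)(Wx_j)]$ to $\sum_{p',q'} r_{a,p'}^2(l_{a,q'}x_i)(l_{a,q'}x_j)$, after which $r_{a,\cdot}$ and $l_{a,\cdot}$ decouple. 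Lemma~\ref{lemma:IterativeRSquare} gives $\sum_{p'} E[r_{a,p'}^2 r_{a,p}^2] = (K_a+2)\prod_{\ell=a+1}^{L-1}K_\ell(K_\ell+2)$. For the $l$-factor, I would first integrate $x_i$ and $x_j$ out to obtain $\sum_{q'}(l_{a,q'}\cdot l_{a,q})^2$, then exploit the exchangeability of the rows of $W_{a-1}$ to rewrite
\begin{align*}
E\!\Bigl[\sum_{q'}(l_{a,q'}\cdot l_{a,q})^2\Bigr] = \frac{1}{K_{a-1}}\, E\!\Bigl[\sum_{s,t=1}^{K_0}\Bigl(\sum_{v=1}^{K_{a-1}} l_{a,v,s}\,l_{a,v,t}\Bigr)^2\Bigr],
\end{align*}
which is precisely the $b=0$ instance of Lemma~\ref{MLNNCrossProductLemma} after identifying $l^0_{a,v,s}$ with the $s$-th coordinate of $l_{a,v}$.

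The main obstacle I anticipate is the final algebraic collapse: one has to combine the $r$-factor $(K_a+2)\prod_{\ell=a+1}^{L-1}K_\ell(K_\ell+2)$ with the product $\prod_{\ell=0}^{a-1}K_\ell(K_\ell-1)$ and the nested ratios $\prod_{\ell=\phi+1}^{a-1}\frac{K_\ell+2}{K_\ell-1}$ from Lemma~\ref{MLNNCrossProductLemma}, and show that after pulling out the common prefactor $\frac{1}{K_aK_{a-1}}\prod_{\ell=0}^{L-1}K_\ell(K_\ell+2)$ everything telescopes into $\sum_{\phi=0}^{a-1}\frac{1}{K_\phi-1}\prod_{\ell=0}^{\phi}\frac{K_\ell-1}{K_\ell+2}$. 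The boundary cases $a=1$ and $a=L$ require only the conventions $l_{1,q}=e_q^T$, $r_{L,p}=1$, and the usual empty-product rules already baked into the notation.
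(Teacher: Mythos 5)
Your proposal is correct and follows essentially the same route as the paper's proof: drop the vanishing $W$--$W^*$ cross terms, evaluate the pure-$W^*$ term as a product of second moments, and for the pure-$W$ term integrate out $W_a$ to decouple $r_{a,\cdot}$ from $l_{a,\cdot}$, then invoke Lemma~\ref{lemma:IterativeRSquare} and the $b=0$ case of Lemma~\ref{MLNNCrossProductLemma} via the sum-over-$(p,q)$ symmetry trick. The only cosmetic difference is the order in which you integrate out $x$ versus the weight matrices, and your final algebraic collapse does telescope exactly as claimed.
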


\begin{proof} 
	Note that
	\begin{align*}
	E \left(\frac{\partial f_i}{\partial W_{a,p,q}}\right) \left(\frac{\partial f_j}{\partial W_{a,p,q}}\right)
	&= 
	E \left( \left(W x_i - W^* x_i\right) \left(W x_j - W^* x_j\right) r_{a,p}^2 \left(l_{a,q} x_i\right)\left(l_{a,q} x_j\right) \right)\\
	&=
	E \left( \left(W^* x_i \right)\left(W^* x_j \right) r_{a,p}^2 \left(l_{a,q} x_i\right) \left(l_{a,q} x_j\right) \right)\\
	&
	+E \left( \left(W x_i \right)\left(W x_j \right) r_{a,p}^2 \left(l_{a,q} x_i\right)\left(l_{a,q} x_j\right)\right),
	\end{align*}
	where we plug in the expression of the derivative into the first equation, and the second equation uses the fact that $E(W W^*) = 0$ since $W,W^*$ are independent i.i.d. random variables. 
	
	For the first term, we have 
	\begin{align*}
	&E \left( \left(W^* x_i \right)\left(W^* x_h \right) r_{a,p}^2 \left(l_{a,q} x_i\right) \left(l_{a,q} x_h\right) \right)\\
	=&	
	E\left(r_{a,p}^2 \sum_{s=1}^{K_0}\sum_{t=1}^{K_0}  W^*_{2:L} W^*_{1,:,s} W^*_{2:L} W^*_{1,:,t} W_{a-1,q,:} W_{2:a-2} W_{1:,s} W_{a-1,q,:} W_{2:a-2} W_{1,:,t}\right)\\
	=&
	E\left(r_{a,p}^2 \sum_{s=1}^{K_0}  \left(W^*_{2:L} W^*_{1,:,s} \right)^2 \left(W_{a-1,q,:} W_{2:a-2} W_{1,:,s}\right)^2 \right),\\
	\end{align*}
	where the first equation is because of taking expectation over $x$ and $x_i,x_j$ are i.i.d 0-mean, while the second equation is because we take the expectation over $W_1$ where again $W_1$ are independent and 0-mean.
	
	Since $r,W,W^*$ are independent,  we have	
	\begin{align*}
	& E\left(r_{a,p}^2 \sum_{s=1}^{K_0}  \left(W^*_{2:L} W^*_{1,:,s} \right)^2 \left(W_{a-1,q,:} W_{2:a-2} W_{1,:,s}\right)^2 \right)\\
	= &
	E\left(r_{a,p}^2\right) \sum_{s=1}^{K_0}  E\left(W^*_{2:L} W^*_{1,:,s} \right)^2 E\left(W_{a-1,q,:} W_{2:a-2} W_{1,:,s}\right)^2.\\
	\end{align*}
	Applying the fact that $E_x(||a^T x ||_2^2) = ||a||_2^2$ from Lemma \ref{lemma:BoundMoment}, we have
	\begin{align*}
	E\left(r_{a,p}^2\right) 
	&=
	E \left({\left(\prod_{\ell=a+2}^{L} W_{\ell}\right) W_{a+1,:,p} }\right)^2 \\ 
	&= 
	E\left({\left|\left|\prod_{\ell=a+2}^{L} W_{\ell}\right|\right|^2 }\right)\\
	&= 
	E\left({\sum_{v=1}^{K_{a+1}} \left(\prod_{\ell=a+3}^{L} W_{\ell} W_{a+2,:,v}\right)^2 }\right)\\	
	&=
	K_{a+1} E\left({\left(\prod_{\ell=a+3}^{L} W_{\ell} W_{a+2,:,v}\right)^2 }\right)\\	
	&= K_{a+1} E(r_{a+1,v}^2) \\
	&= K_{a+1} K_{a+2} E(r_{a+2,p}^2) \\
	&= \cdots \\
	&= \prod_{\ell=a+1}^{L-1} K_{\ell}.
	\end{align*}
	Similarly, we have
	\begin{align*}
	E\left(W^*_{2:L} W^*_{1,:,s} \right)^2
	&=
	\prod_{\ell=1}^{L-1} K_{\ell}
	\end{align*}
	and 
	\begin{align*}
	E\left(W_{a-1,q,:} W_{2:a-2} W_{1,:,s}\right)^2
	&=
	\prod_{\ell=1}^{a-2} K_{\ell}.    
	\end{align*}
	Hence,
	\begin{equation*}
	\begin{split}
	E\left(r_{a,p}^2 \sum_{s=1}^{K_0}  \left(W^*_{2:L} W^*_{1,:,s} \right)^2 \left(W_{a-1,q,:} W_{2:a-2} W_{1,:,s}\right)^2 \right)
	=& 
	K_0 \prod_{\ell=1}^{L-1} K_\ell^2 \cdot \frac{1}{K_{a-1} K_a}.\\
	\end{split}
	\end{equation*}
	
	For the second term, we have
	\begin{align*}
	& E \left( \left(W x_i \right)\left(W x_j \right) r_{a,p}^2 \left(l_{a,q} x_i\right)\left(l_{a,q} x_j\right)\right)\\
	=& 
	E\left( \sum_{s=1}^{K_0} \sum_{t=1}^{K_0} r_{a,p}^2 W_{2:L} W_{1,:,s} W_{2:L} W_{1,:,t}  l_{a,q,s} l_{a,q,t}\right)\\
	=& E\left( \sum_{s,t=1}^{K_0} \sum_{u=1}^{K_a} \sum_{v=1}^{K_{a-1}}r_{a,p}^2 r_{a,u}^2 l_{a,v,s} l_{a,v,t}  l_{a,q,s} l_{a,q,t}\right),\\
	\end{align*}
	where we use similar tricks as in the first term, i.e., the first equation is due to taking expectation over $x$, and the last equation is by taking expectation over $W_a$. Note that $r$ and $l$ are independent, we can compute their expectation separately. For computation convenience, let us now take into account of summation over $p,q$ as well. This is essentially compute the sum of the derivative over $W_a$ instead of $W_{a,p,q}$. By Lemma \ref{lemma:IterativeRSquare},
	\begin{align*}
	\sum_{p}^{K_a} E\left( \sum_{u=1}^{K_a} r_{a,p}^2 r_{a,u}^2 \right) 
	&= 
	K_a \left(K_{a} + 2 \right)\prod_{\ell = a+1}^{L-1} K_\ell \left(K_\ell+2\right) = \prod_{\ell = a}^{L-1} K_\ell \left(K_\ell+2\right),
	\end{align*}
	which implies 
	\begin{align*}
	E\left( \sum_{u=1}^{K_a} r_{a,p}^2 r_{a,u}^2 \right) 
	&= 
	\frac{1}{K_a}\prod_{\ell = a}^{L-1} K_\ell \left(K_\ell+2\right).
	\end{align*}
	
	Now let us consider $l$.
	\begin{equation*}
	\begin{split}
	\sum_{q=1}^{K_{a-1}} E\left( \sum_{s,t=1}^{K_0} \sum_{v=1}^{K_{a-1}}  l_{a,v,s} l_{a,v,t} l_{a,q,s} l_{a,q,t}\right)
	&=
	E\left( \sum_{s,t=1}^{K_0} \left(\sum_{v=1}^{K_{a-1}}  l_{a,v,s} l_{a,v,t}\right)^2 \right). \\
	\end{split}
	\end{equation*}
	By Lemma \ref{MLNNCrossProductLemma}, we have 
	\begin{equation*}
	\begin{split}
	E\left( \sum_{s,t=1}^{K_{0}} \left(\sum_{v=1}^{K_{a-1}}  l_{a,v,s}^{0} l_{a,v,t}^{0}\right)^2 \right) 
	&= \left(\prod_{\ell = 0}^{a-1}{K_\ell \left(K_{\ell}-1\right)} \right) \left(
	\sum_{\phi = 0}^{a-1} \frac{1}{K_{\phi}-1} \prod_{\ell=\phi+1}^{a-1} \frac{K_\ell+2}{K_\ell-1} \right),
	\end{split}
	\end{equation*}
	which implies 
	\begin{equation*}
	\begin{split}
	E\left( \sum_{s,t=1}^{K_0} \sum_{v=1}^{K_{a-1}}  l_{a,v,s} l_{a,v,t} l_{a,q,s} l_{a,q,t}\right)
	&= \frac{1}{K_{a-1}}\left(\prod_{\ell = 0}^{a-1}{K_\ell \left(K_{\ell}-1\right)} \right) \left(
	\sum_{\phi = 0}^{a-1} \frac{1}{K_{\phi}-1} \prod_{\ell=\phi+1}^{a-1} \frac{K_\ell+2}{K_\ell-1} \right),
	\end{split}
	\end{equation*}	
	Combing those two terms, we have 	
	\begin{align*}
	& E \left( \left(W x_i \right)\left(W x_j \right) r_{a,p}^2 \left(l_{a,q} x_i\right)\left(l_{a,q} x_j\right)\right)\\
	=& E\left( \sum_{s,t=1}^{K_0} \sum_{u=1}^{K_a} \sum_{v=1}^{K_{a-1}}r_{a,p}^2 r_{a,u}^2 l_{a,v,s} l_{a,v,t}  l_{a,q,s} l_{a,q,t}\right)\\
	=& \left( \frac{1}{K_a}\prod_{\ell = a}^{L-1} K_\ell \left(K_\ell+2\right)\right) \left(\frac{1}{K_{a-1}}\left(\prod_{\ell = 0}^{a-1}{K_\ell \left(K_{\ell}-1\right)} \right) \left(
	\sum_{\phi = 0}^{a-1} \frac{1}{K_{\phi}-1} \prod_{\ell=\phi+1}^{a-1} \frac{K_\ell+2}{K_\ell-1} \right)\right)\\
	=& \left( \frac{1}{K_a K_{a-1}}\prod_{\ell = 0}^{L-1} K_\ell \left(K_\ell+2\right)\right) \left(\left(\prod_{\ell = 0}^{a-1}{ \frac{K_{\ell}-1 }{ K_{\ell}+2 } } \right) \left(
	\sum_{\phi = 0}^{a-1} \frac{1}{K_{\phi}-1} \prod_{\ell=\phi+1}^{a-1} \frac{K_\ell+2}{K_\ell-1} \right)\right)\\
	=& \left( \frac{1}{K_a K_{a-1}}\prod_{\ell = 0}^{L-1} K_\ell \left(K_\ell+2\right)\right) \left( \sum_{\phi = 0}^{a-1} \frac{1}{K_{\phi}-1} \prod_{\ell=0}^{\phi} \frac{K_\ell-1}{K_\ell+2}\right).\\	
	\end{align*}
	Summing the two terms from the original expression, we finally have 
	\begin{align*}
	E \left(\frac{\partial f_i}{\partial W_{a,p,q}}\right) \left(\frac{\partial f_j}{\partial W_{a,p,q}}\right)
	&=
	E \left( \left(W^* x_i \right)\left(W^* x_j \right) r_{a,p}^2 \left(l_{a,q} x_i\right) \left(l_{a,q} x_j\right) \right)\\
	&
	+E \left( \left(W x_i \right)\left(W x_j \right) r_{a,p}^2 \left(l_{a,q} x_i\right)\left(l_{a,q} x_j\right)\right)\\
	& = 	K_0 \prod_{\ell=1}^{L-1} K_\ell^2 \cdot \frac{1}{K_{a-1} K_a} + \left( \frac{1}{K_a K_{a-1}}\prod_{\ell = 0}^{L-1} K_\ell \left(K_\ell+2\right)\right) \left( \sum_{\phi = 0}^{a-1} \frac{1}{K_{\phi}-1} \prod_{\ell=0}^{\phi} \frac{K_\ell-1}{K_\ell+2}\right)\\
	& = \frac{1}{K_{a}K_{a-1}} \left(\prod_{\ell=0}^{L-1}{K_\ell \left(K_\ell + 2\right) }\right) \left( \sum_{\phi = 0}^{a-1} \frac{1}{K_{\phi}-1} \prod_{\ell=0}^{\phi} \frac{K_\ell-1}{K_\ell+2}+\frac{1}{K_0}\prod_{\ell=0}^{L-1}{\frac{K_\ell}{K_\ell+2}}\right),	
	\end{align*}
	which completes the proof.
\end{proof}

\begin{theorem}\label{MLNNCrossproductMainTheorem}
	If $W_{\ell,p,q},x_i,x_j, i\not=j$ are all i.i.d $\sim \gaussian{0}{1}$, then we have 
	\begin{align*}
	E\left(\inp{\nabla f_i}{\nabla f_j} \right)
	& = \left(\prod_{\ell=0}^{L-1}{K_\ell \left(K_\ell + 2\right) }\right) \left( \sum_{\phi = 0}^{a-1} \frac{L-\phi}{K_{\phi}-1} \prod_{\ell=0}^{\phi} \frac{K_\ell+2}{K_\ell-1}+\frac{L}{K_0}\prod_{\ell=0}^{L-1}{\frac{K_\ell}{K_\ell+2}}\right).
	\end{align*}
\end{theorem}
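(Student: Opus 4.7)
The plan is to derive Theorem~\ref{MLNNCrossproductMainTheorem} directly from the per-coordinate result in Theorem~\ref{MLNNCrossproductLemma2} by summing over all weight entries in the network. First I would expand the inner product coordinate-wise:
\[
\inp{\nabla f_i}{\nabla f_j} \;=\; \sum_{a=1}^{L}\sum_{p=1}^{K_a}\sum_{q=1}^{K_{a-1}} \frac{\partial f_i}{\partial W_{a,p,q}}\cdot\frac{\partial f_j}{\partial W_{a,p,q}},
\]
take expectations, and invoke linearity together with the closed form from Theorem~\ref{MLNNCrossproductLemma2}. The key structural observation is that the per-coordinate expectation does not depend on $(p,q)$ and carries an explicit prefactor $\frac{1}{K_a K_{a-1}}$, so the inner double sum over $p$ and $q$ contributes exactly a factor $K_a K_{a-1}$ that cancels the prefactor.

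After this cancellation, only the outer sum over $a\in\{1,\dots,L\}$ remains, and the summand consists of two pieces: a piece with a $\phi$-sum from $0$ to $a-1$, and a constant (in $a$) tail piece $\frac{1}{K_0}\prod_{\ell=0}^{L-1}\frac{K_\ell}{K_\ell+2}$. The constant piece simply accumulates a factor of $L$ after summing over $a$. For the $\phi$-dependent piece I would switch the order of summation via the standard identity
\[
\sum_{a=1}^{L}\sum_{\phi=0}^{a-1} g(\phi) \;=\; \sum_{\phi=0}^{L-1}(L-\phi)\,g(\phi),
\]
with $g(\phi) = \frac{1}{K_\phi-1}\prod_{\ell=0}^{\phi}\frac{K_\ell-1}{K_\ell+2}$. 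This produces both the coefficient $L-\phi$ and the upper limit $L-1$ consistent with Theorem~\ref{MulLNNExp}. Collecting terms and pulling out the common factor $\prod_{\ell=0}^{L-1}K_\ell(K_\ell+2)$ yields the desired identity.

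Because the heavy probabilistic lifting (taking expectations over the $W_\ell$'s, the $W_\ell^*$'s, and $x_i,x_j$, and recursively simplifying products of Gaussian factors via Lemmas~\ref{lemma:IterativeRSquare}, \ref{lemma:IterativeLSquare}, and~\ref{MLNNCrossProductLemma}) was already done in Theorem~\ref{MLNNCrossproductLemma2}, there is no genuinely new stochastic estimate to establish here. The only substantive step left is a bookkeeping argument. Accordingly, the main obstacle is purely index-management: one must track the telescoping of the nested products $\prod_{\ell=0}^{\phi}\frac{K_\ell-1}{K_\ell+2}$ when combined with the outer global product $\prod_{\ell=0}^{L-1}K_\ell(K_\ell+2)$, and verify that the $a=1$ edge case of the $\phi$-sum (empty product) and the $a=L$ edge case (full product) both line up with the stated formula.
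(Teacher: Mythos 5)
Your proposal matches the paper's own proof essentially verbatim: it sums the per-coordinate expectation from Theorem~\ref{MLNNCrossproductLemma2} over $p,q$ so that the $\frac{1}{K_aK_{a-1}}$ prefactor cancels against the $K_aK_{a-1}$ terms, and then swaps the order of the $a$- and $\phi$-summations via $\sum_{a=1}^{L}\sum_{\phi=0}^{a-1}g(\phi)=\sum_{\phi=0}^{L-1}(L-\phi)g(\phi)$ to produce the $(L-\phi)$ coefficient. The only discrepancies are typos in the paper's theorem statement (the stray upper limit $a-1$ and the inverted factor $\frac{K_\ell+2}{K_\ell-1}$); your version, with upper limit $L-1$ and factor $\frac{K_\ell-1}{K_\ell+2}$, agrees with the paper's own derivation and with Theorem~\ref{MulLNNExp}.
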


\begin{proof}
	From Theorem \ref{MLNNCrossproductLemma2}, we have 
	\begin{align*}
	E \left(\frac{\partial f_i}{\partial W_{a,p,q}}\right) \left(\frac{\partial f_j}{\partial W_{a,p,q}}\right)
	& = \frac{1}{K_{a}K_{a-1}} \left(\prod_{\ell=0}^{L-1}{K_\ell \left(K_\ell + 2\right) }\right) \left( \sum_{\phi = 0}^{a-1} \frac{1}{K_{\phi}-1} \prod_{\ell=0}^{\phi} \frac{K_\ell-1}{K_\ell+2}+\frac{1}{K_0}\prod_{\ell=0}^{L-1}{\frac{K_\ell}{K_\ell+2}}\right).
	\end{align*}
	Summing over $p,q$, we have 
	\begin{align*}
	\sum_{p=1}^{K_{a}}\sum_{q=1}^{K_{a-1}}  E \left(\frac{\partial f_i}{\partial W_{a,p,q}}\right) \left(\frac{\partial f_j}{\partial W_{a,p,q}}\right)
	& = \left(\prod_{\ell=0}^{L-1}{K_\ell \left(K_\ell + 2\right) }\right) \left( \sum_{\phi = 0}^{a-1} \frac{1}{K_{\phi}-1} \prod_{\ell=0}^{\phi} \frac{K_\ell-1}{K_\ell+2}+\frac{1}{K_0}\prod_{\ell=0}^{L-1}{\frac{K_\ell}{K_\ell+2}}\right).
	\end{align*}
	Thus, we have 
	\begin{align*}
	E\left(\inp{\nabla f_i}{\nabla f_j} \right) & = \sum_{a=1}^{L}\sum_{p=1}^{K_{a}}\sum_{q=1}^{K_{a-1}}  E \left(\frac{\partial f_i}{\partial W_{a,p,q}}\right) \left(\frac{\partial f_j}{\partial W_{a,p,q}}\right) \\
	& = \sum_{a=1}^{L}\left(\prod_{\ell=0}^{L-1}{K_\ell \left(K_\ell + 2\right) }\right) \left( \sum_{\phi = 0}^{a-1} \frac{1}{K_{\phi}-1} \prod_{\ell=0}^{\phi} \frac{K_\ell-1}{K_\ell+2}+\frac{1}{K_0}\prod_{\ell=0}^{L-1}{\frac{K_\ell}{K_\ell+2}}\right)\\
	& = \left(\prod_{\ell=0}^{L-1}{K_\ell \left(K_\ell + 2\right) }\right) \left( \sum_{a=1}^{L} \sum_{\phi = 0}^{a-1} \frac{1}{K_{\phi}-1} \prod_{\ell=0}^{\phi} \frac{K_\ell-1}{K_\ell+2}+\frac{L}{K_0}\prod_{\ell=0}^{L-1}{\frac{K_\ell}{K_\ell+2}}\right)\\
	& = \left(\prod_{\ell=0}^{L-1}{K_\ell \left(K_\ell + 2\right) }\right) \left( \sum_{\phi = 0}^{a-1} \frac{L-\phi}{K_{\phi}-1} \prod_{\ell=0}^{\phi} \frac{K_\ell-1}{K_\ell+2}+\frac{L}{K_0}\prod_{\ell=0}^{L-1}{\frac{K_\ell}{K_\ell+2}}\right).\\
	\end{align*}
\end{proof}

Finally we arrive at the main theorem.

\MulLNNExp*

\begin{proof}
	This can be directly achieved from Theorem \ref{MLNNSquareMainTheorem} and Theorem \ref{MLNNCrossproductMainTheorem}.
\end{proof}

\eat{
	{\color{red}
		Lingjiao: This part has not been fully cleaned up. A serious pass is needed. 
		We will skip it for now.}
}

\eat{\subsection{Concentration Bounds}

	\begin{theorem}
		For NN with 2 layers, we have w.p. $1-4 \delta$, 
		\begin{equation}
		\begin{split}
		&\sum_{i=1}^{n}\left(||\nabla f_i||^2\right) \geq \left( n - \sqrt{\frac{105n}{\delta}} \right)\left(K_1K_0 - \sqrt{\frac{K_1 K_0}{\delta}}\right)\left(K_0 - \sqrt{\frac{3 K_0}{\delta}}\right) \left(K_1 - \sqrt{\frac{ K_0}{\delta}}\right). \\
		\end{split}	
		\end{equation}
		For $L$-layer NN, we have w.p. $1-L^2 \delta$, 
		\begin{equation}
		\begin{split}
		&\sum_{i=1}^{n}\left(||\nabla f_i||^2\right) \geq \left( n - O\left(\sqrt{\frac{n}{\delta}} \right)\right) \prod_{\ell=0}^{L-1} \left(K_\ell - O\left(\sqrt{\frac{ K_\ell}{\delta}}\right)\right)^2. \\
		\end{split}	
		\end{equation}
		(TODO: Verify the $L$-layer proof in detail, probably slight modification is needed)
	\end{theorem}
	\begin{proof}
		Let us consider  2 layer case. We can then generalize it to high order case.
		Consider 	$\left(||\nabla f_i||^2\right)$ first. Conditional on all the other variables, $\left(||\nabla f_i||^2\right)$ are all i.i.d w.r.p to $x_i$. Thus, by Chebyshev  Inequality, w.p. $1-\delta$, 
		\begin{equation}
		\begin{split}
		&E_x\left(\sum_{i=1}^{n}||\nabla f_i||^2\right) -  \sqrt{\frac{n \mathit{var_x}{||\nabla f_i||^2} }{\delta}}\leq \sum_{i=1}^{n}\left(||\nabla f_i||^2\right)\\ 
		&n E_x\left(||\nabla f_i||^2\right) -  \sqrt{\frac{n \mathit{var_x}{||\nabla f_i||^2} }{\delta}}\leq \sum_{i=1}^{n}\left(||\nabla f_i||^2\right)\\
		\end{split}	
		\end{equation}
		
		By the last equation in the lemma, we have 
		\begin{equation}
		\begin{split}
		\mathit{var_x}{||\nabla f_i||^2} \leq 105 \left(E_x\left( ||\nabla f_i||^2 \right) \right)^2,	
		\end{split}
		\end{equation}
		which implies 
		\begin{equation}
		\begin{split}
		&\left(n - \sqrt{\frac{105n }{\delta}} \right) E_x\left(||\nabla f_i||^2\right)   \leq \sum_{i=1}^{n}\left(||\nabla f_i||^2\right)\\
		\end{split}	
		\end{equation}
		One can compute 
		\begin{equation}
		\begin{split}
		E_x\left(||\nabla f_i||^2\right) &= \sum_{p=1}^{K_1} \sum_{q=1}^{K_0} W_{2,p}^2 \left(\sum_{j=1}^{K_0} \left(W_2 W_{1,:,j}\right)^2 + 2 \left(W_2 W_{1,:,q}\right)^2 \right)\\
		&+\sum_{p=1}^{K_1}  \left( 2
		\left(\sum_{j=1}^{K_0} \left(W_2 W_{1,:,j} W_{1,p,j}\right)\right)^2
		+ \sum_{j=1}^{K_0} \left(W_2 W_{1,:,j}\right)^2 \sum_{j=1}^{K_0} \left( W_{1,p,j}\right)^2 \right)\\
		&\geq \sum_{p=1}^{K_1}  \left( 
		\sum_{j=1}^{K_0} \left(W_2 W_{1,:,j}\right)^2 \sum_{j=1}^{K_0} \left( W_{1,p,j}\right)^2 \right).\\
		&=  \left( 
		\sum_{j=1}^{K_0} \left(W_2 W_{1,:,j}\right)^2 \sum_{p=1}^{K_1} \sum_{j=1}^{K_0} \left( W_{1,p,j}\right)^2 \right).\\
		\end{split}	
		\end{equation}
		Applying Chebyshev inequality to $\left( 
		\sum_{p=1}^{K_1} \sum_{j=1}^{K_0} \left( W_{1,p,j}\right)^2 \right)$, we have w.p. $1-\delta$,
		\begin{equation}
		\begin{split}
		&E_{W_1}  \left( 
		\sum_{p=1}^{K_1} \sum_{j=1}^{K_0} \left( W_{1,p,j}\right)^2 \right) -  \sqrt{\frac{K_1 K_0 \mathit{var_{W_1}}{ \left( 
					\left( W_{1,p,j}\right)^2 \right)} }{\delta}}\leq  \left( 
		\sum_{p=1}^{K_1} \sum_{j=1}^{K_0} \left( W_{1,p,j}\right)^2 \right)\\ 
		&K_1 K_0 -  \sqrt{\frac{K_1 K_0 }{\delta}}\leq \left( 
		\sum_{p=1}^{K_1} \sum_{j=1}^{K_0} \left( W_{1,p,j}\right)^2 \right).\\
		\end{split}	
		\end{equation}

		Applying Chebyshev inequality to $\sum_{j=1}^{K_0} \left(W_2 W_{1,:,j}\right)^2$, we have w.p. $1-\delta$,
		\begin{equation}
		\begin{split}
		&E_{W_1}  \left( 
		\sum_{j=1}^{K_0} \left(W_2 W_{1,:,j}\right)^2 \right) -  \sqrt{\frac{ K_0 \mathit{var_{W_1}}{ \left( 
					\left( W_2 W_{1,:,j}\right)^2 \right)} }{\delta}}\leq  \sum_{j=1}^{K_0} \left(W_2 W_{1,:,j}\right)^2\\ 
		&K_0 \sum_{i=1}^{K_1} W_{2,1,i}^2-  \sqrt{\frac{K_0 \mathit{var_{W_1}}{ \left( 
					\left( W_2 W_{1,:,j}\right)^2 \right)} }{\delta}}\leq \sum_{j=1}^{K_0} \left(W_2 W_{1,:,j}\right)^2.\\
		\end{split}	
		\end{equation}
		For $\mathit{var_{W_1}}{ \left( 
			\left( W_2 W_{1,:,j}\right)^2 \right)}$, apply the last lemma,
		\begin{equation}
		\begin{split}
		& \mathit{var_{W_1}}{ \left( 
			\left( W_2 W_{1,:,j}\right)^2 \right)} \leq E_{W_1}{ \left( 
			\left( W_2 W_{1,:,j}\right)^4 \right)} \\
		&\leq 3 \left(E_{W_1}{ \left( 
			\left( W_2 W_{1,:,j}\right)^2 \right)}\right)^2\\
		&= 3 \left( \sum_{i=1}^{K_1} W_{2,1,i}^2 \right)^2.\\
		\end{split}	
		\end{equation}
		Thus, we have w.p. $1-\delta$, 
		\begin{equation}
		\begin{split}
		&\left(K_0 -  \sqrt{\frac{3 K_0  }{\delta}} \right) \sum_{i=1}^{K_1} W_{2,1,i}^2 \leq \sum_{j=1}^{K_0} \left(W_2 W_{1,:,j}\right)^2.\\
		\end{split}	
		\end{equation}
		Now applying Chebyshev inequality to $\sum_{i=1}^{K_1} W_{2,1,i}^2$, we have w.p. $1-\delta$,
		\begin{equation}
		\begin{split}
		& E_{W_2}\sum_{i=1}^{K_1} W_{2,1,i}^2-  \sqrt{\frac{K_1 \mathit{var_{W_2}}{ \left( 
					\left(  W_{2,1,j}\right)^2 \right)} }{\delta}}\leq \sum_{j=1}^{K_0} \left( W_{2,1,j}\right)^2.\\
		& K_1-  \sqrt{\frac{K_1  } {\delta}}\leq \sum_{j=1}^{K_0} \left( W_{2,1,j}\right)^2.\\		
		\end{split}
		\end{equation}
		By using the fact that $Pr(A\cup B ) \leq Pr(A) + Pr(B)$, we have w.p. $1-4 \delta$, 
		\begin{equation}
		\begin{split}
		&\sum_{i=1}^{n}\left(||\nabla f_i||^2\right) \geq \left( n - \sqrt{\frac{105n}{\delta}} \right)\left(K_1K_0 - \sqrt{\frac{K_1 K_0}{\delta}}\right)\left(K_0 - \sqrt{\frac{3 K_0}{\delta}}\right) \left(K_1 - \sqrt{\frac{ K_0}{\delta}}\right). \\
		\end{split}	
		\end{equation}
		This finishes our proof.
		(Note: One can apply this techniques to each $W_i$ and then we can get an extra $L$ term in front of $n$. This is essentially how we prove this concentration result for multi-layer NN. TODO (Lingjiao): Wrap up everything for multi-layer NN. This should not be too hard.)
	\end{proof}
	
}
\end{document}